\newcommand{\rk}{}%
\definecolor{darkgreen}{rgb}{0.0, 0.5, 0.0}
\newcommand{\gk}{}%
\newcommand{\beginofrk}{}%
\newcommand{\owarirk}{}%
\theoremstyle{plain}
\newtheorem{theorem}{Theorem}[section]
\newtheorem{lemma}[theorem]{Lemma}
\newtheorem{corollary}[theorem]{Corollary}
\theoremstyle{definition}
\newtheorem{definition}[theorem]{Definition}
\newtheorem{assumption}[theorem]{Assumption}
\newtheorem{example}[theorem]{Example}
\theoremstyle{remark}
\newtheorem{remark}[theorem]{Remark}
\icmltitlerunning{Mixture of Experts Provably Detect and Learn the Latent Cluster Structure in Gradient-Based Learning}
\begin{document}
\allowdisplaybreaks

\twocolumn[
\icmltitle{Mixture of Experts Provably Detect and Learn the Latent Cluster Structure\\
in Gradient-Based Learning}

\icmlsetsymbol{equal}{*}

\begin{icmlauthorlist}
\icmlauthor{Ryotaro Kawata}{equal,utokyo,riken}
\icmlauthor{Kohsei Matsutani}{equal,utokyo,riken}
\icmlauthor{Yuri Kinoshita}{utokyo}
\icmlauthor{Naoki Nishikawa}{utokyo,riken}
\icmlauthor{Taiji Suzuki}{utokyo,riken}
\end{icmlauthorlist}

\icmlaffiliation{utokyo}{The University of Tokyo, Japan}
\icmlaffiliation{riken}{Center for Advanced Intelligence Project, RIKEN, Japan}

\icmlcorrespondingauthor{Ryotaro Kawata}{kawata-ryotaro725@g.ecc.u-tokyo.ac.jp}
\icmlcorrespondingauthor{Kohsei Matsutani}{matsutani-kohsei140@g.ecc.u-tokyo.ac.jp}

\icmlkeywords{Machine Learning, ICML}

\vskip 0.3in
]

\printAffiliationsAndNotice{\icmlEqualContribution} %

\begin{abstract}
Mixture of Experts (MoE), an ensemble of specialized models equipped with a router that dynamically distributes each input to appropriate experts, has achieved successful results in the field of machine learning. However, theoretical understanding of this architecture is falling behind due to its inherent complexity. In this paper, we theoretically study the sample and runtime complexity of MoE following the stochastic gradient descent (SGD) when learning a regression task with an underlying cluster structure of single index models. On the one hand, we prove that a vanilla neural network fails in detecting such a latent organization as it can only process the problem as a whole. This is intrinsically related to the concept of \emph{information exponent} which is low for each cluster, but increases when we consider the entire task. On the other hand, we show that a MoE succeeds in dividing this problem into easier subproblems by leveraging the ability of each expert to weakly recover the simpler function corresponding to an individual cluster. To the best of our knowledge, this work is among the first to explore the benefits of the MoE framework by examining its SGD dynamics in the context of nonlinear regression.
\end{abstract}

\section{Introduction}
Mixture of Experts (MoE)~\citep{jacobs91, jordan93}, an ensemble of specialized models equipped with a router that dynamically distributes each input to appropriate experts, has been extensively studied and successfully deployed in a wide range of scenarios over the past few years. A key milestone was the development of sparsely-gated MoE~\citep{shazeer17}, which was later integrated into transformer-based large language models (LLMs) and further refined in subsequent works~\citep{fedus22, achiam23, georgive24, jiang24, liu24}. This kind of MoE enables the activation of only a limited number of trained experts in one forward pass, drastically reducing the inference cost while maintaining performance competitive with other successful architectures of the same order of parameters. 
\par However, theoretical understanding of this architecture is falling behind due to its inherent complexity. Especially, while the mechanism of the initialization, the optimization procedure and the behavior of the router are essentially the same for each expert, it has been repeatedly reported that each expert ultimately specializes in its own way, each contributing to different aspects of the learned task. It is still unclear why such phenomenon happens and why the router can learn to fairly distribute an input to appropriate experts without collapsing to a single expert.
\par To address these fundamental questions, prior work mathematically studied the mechanism of MoE from the perspectives of approximation theory on MoE for multi-level data~\citep{fung22}, statistical learning in Gaussian MoE models~\citep{ho22,nguyen2023demystifying,nguyen2024statistical} and nonlinear regression~\citep{nguyen2024on,nguyen2024sigmoid}, as well as optimization in both classification~\citep{chen22,chowdhury23} and linear regression, especially for continual learning~\citep{li24}. However, a clear explanation of the success of the MoE is lacking in the context of \emph{optimization} in \emph{nonlinear regression} which is a more general problem than optimization in classification.
\par Therefore, in this paper, we focus on such a broader problem setting of optimization in nonlinear regression. We will theoretically study the sample and runtime complexity of MoE optimized with the stochastic gradient descent (SGD) when learning a regression task with an underlying cluster structure.%

\paragraph{Contributions}
Our contributions are summarized as follows. On the one hand, we prove that a vanilla neural network fails in detecting such a latent organization as it can only process the problem as a whole. This is intrinsically related to the concept of \emph{information exponent} which is low for each cluster, but increases when we consider the entire task. On the other hand, we show that a MoE succeeds in dividing this problem into easier subproblems by leveraging the ability of each expert to weakly recover the simpler function corresponding to an individual cluster. To the best of our knowledge, this work is among the first to explore the benefits of the MoE framework by examining its SGD dynamics in the context of nonlinear regression.
\paragraph{Notation}
$\Prob[\cdot]$ and $\E_{x}[\cdot]$ denote the probability of an event and the expectation over the randomness of a random variable $x$.
$O(\cdot)$ and $o(\cdot)$ stand for the big-O and little-o notations with respect to $d$.
$\Omega(\cdot)$ and $\Theta(\cdot)$ represent the lower and tight bounds. $\tilde{O}(\cdot),\, \tilde{\Omega}(\cdot)$ and $\tilde{\Theta}(\cdot)$ denotes the upper, lower, and tight bound ignoring any poly-logarithmic constant. We call a probabilistic event $A$ happens \emph{with high probability} (or w.h.p.) if $\Prob[A] \geq 1 - d^{-a}$ with a sufficiently large constant $a>0$; the high probability events are closed under union bounds over sets of size $\mathrm{poly} d$.

\section{Related Works}

\paragraph{Theory of Mixture of Experts.}
Various aspects of MoE have been theoretically studied in the context of deep learning so far.~\citet{ho22} and ~\citet{nguyen2023demystifying, nguyen2024statistical} studied the convergence rate of expert estimation in Gaussian MoE models for classification, and~\citet{nguyen2024on,nguyen2024sigmoid} led similar investigation for MoE with a softmax gating for regression problems. \citet{chen22} pioneered studies 
of feature learning with MoE and analyzed the training of nonlinear MoE under a mixture of classification problems. Building on this, \citet{chowdhury23} extended the analysis to patch-level routing, addressing binary classification problems within nonlinear MoE settings. \citet{li24} focused on continual learning scenarios, but the analysis was limited to linear regression problems and linear MoE. In this work, we consider the broader and more practical problem setting of nonlinear regression problem with a nonlinear MoE model following a gradient-based optimization.

\paragraph{Gradient-based Feature Learning}
Gradient-based feature learning of low-dimensional functions using neural networks has garnered significant attention. Subjects of research encompasses functions such as single-index models \citep{dudeja18, ba22, bietti22, abbe22on, mousavi-hosseini23gradient, ba23} and multi-index models \citep{damian22, arous22, mousavi-hosseini23neural, bietti23, collinswoodfin23, dandi24how}. The \emph{information exponent} $k^*$, or \emph{leap complexity} \citep{abbe23}, of the target is known to govern its difficulty of learning it, generally requiring a sample complexity of $n = \widetilde{O}{(d^{{k^*} - 1})}$~\citep{arous21}, where $d$ is the input dimension. \citet{damian23} improved this rate to $\widetilde{\Theta}{(d^{\frac{k^*}{2}})}$ by smoothing the landscape. Subsequently, techniques such as reusing batches \citep{dandi24the, lee24, arnaboldi24} or altering loss function~\citep{joshi24} enabled to surpass the CSQ lower bound~\citep{damian22, abbe23}. These approaches improve the sample complexity near information-theoretic limit $n \asymp d$, which is associated with the generative exponent~\citep{damian24}. This approach based on the information exponent contributes to deepening our general understanding about the complexity of a task and has been applied to specific architectures or techniques, such as pruning~\citep{mert24}, pretrained transformer~\citep{oko24pretrained}, adversarially robust learning~\citep{mousavi-hosseini24} and LoRA~\citep{dayi24}. However, the application of this framework to MoE has not been explored yet, and it may hold promise for elucidating the intricate mechanism of MoE. Recently, \citet{oko24learning} has conducted an extensive theoretical study on additive models, where several single-index models form a ridge combination. Our setting is analogous to this work, where the data exhibit an additive structure derived from diverse clusters.

\section{Problem Setting and Preliminaries}
In this section, we clarify the problem setting, including the data generation procedure, the formulation of the MoE, and the mathematical description of the training algorithm.
\subsection{Data Generation}
Let us first formally introduce the notion of information exponent.
\begin{definition}[Information Exponent]
    Let $\{\hermite_j\}$ be the normalized Hermite polynomials.
    The Hermite expansion of a square-integrable function $f$ is given as $f(z) = \sum_j \frac{\alpha_j}{\sqrt{j!}} \hermite_j(z)$.
    The information exponent is defined as $\mathrm{IE}(f) \coloneq k^* = \inf_{j\geq 0}\left\{ j \mid \alpha_j \neq 0\right\}$.
\end{definition}
The information exponent is defined as the index of the first non-zero coefficient in the Hermite expansion of the nonlinear target function\citep{arous21}. The complexity of learning a nonlinear function via two-layer neural network optimized by SGD is closely associated with this value \citep{arous21, ge18, dudeja18, bietti22, damian22, oko24learning}.

The generation process is defined as follows.
\begin{assumption}[Teacher Models]\label{assumption:teacher models}
    Let $C = O_d(1)$ be the number of clusters. Let $f^*_{c}\ (c=1,\ldots,C)$ represent the local task specific to each cluster , and let $g^*$ denote the global task shared among clusters. A data pair $(x_c,y_c)$ in the cluster $c$, where $c\sim \mathrm{Unif}[1,\dots,C]$, is generated as $x_c \sim \mathcal{N}(\rho v_c, I_d)$, $$y_c = f^*_{c}({w_{c}^*}^\top x_c) + s_{c}g^*({w_{g}^*}^\top x_c) + \nu,$$ where $\nu\sim \mathcal{N}(0,\zeta^2)$ denotes an additive Gaussian noise that accounts for observation uncertainty. It is assumed to be sampled independently of the input $x_c$. The scalar $\rho \in \mathbb{R}$ represents a scaling factor that modulates the magnitude of the cluster mean vectors $v_c \in \mathbb{S}^{d-1}$, and is assumed to satisfy $\rho \simeq A_\rho$, where $A_\rho$ is a sufficiently large constant upper bounded by $\mathrm{poly} \log d$. The coefficient $s_c \in \mathbb{R}$ encodes the influence of the global task within cluster $c$, and is constrained such that $\sum_c s_c = 0$ and $s_c = \Theta(1)$ for all $c$. $f_{c}^*$ and $g^*$ are univariate polynomials with information exponent $k^* > 2$ and degree $p^*$, and feature indices ${w_{c}^*}, {w_{g}^*} \in \mathbb{S}^{d-1}$.
    We write the Hermite expansion of $f_{c}^*$ and $g^*$ as $f_{c}^* = \sum_{i={k^*}}^{p^*} \frac{\beta_{c,i}}{\sqrt{i!}}\hermite_i$ and $g^* = \sum_{i={k^*}}^{p^*} \frac{\gamma_{i}}{\sqrt{i!}}\hermite_i$, respectively. \rk{The Hermite coefficients satisfy $|\beta_{c,k^*}| = |\gamma_{k^*}|$ for all $c$.} The link functions and the index features are normalized as $\mathbb{E}_{z}[f^*_c({w^*_c}^\top z)^2] = 1$, $\mathbb{E}_{z}[g^*({w^*_g}^\top z)^2] = 1$, $\|w^*_c\|=1$, and $\|w^*_g\|=1$, where $z \sim \mathcal{N}(0 , I_d)$.
\end{assumption}
This model is designed to introduce task interference by $\sum_{c}s_c=0$, making the learning process more challenging as gradients from different clusters conflict, hindering effective learning. This scenario is closely related to a line of work on \gk{gradient interference} in multi-task learning~\citep{yu20,liu21conflict,shi23recon,zhang2024proactive} \gk{and recent work on MoE has also addressed this issue \citep{liu24, yang25}}. Note that all $f^*_c$ and $g^*$ have the same information exponent $k^* > 2$ and \rk{their $k^*$th coefficients have the same absolute values}, which implies that all $f^*_c$ and $g^*$ have the same difficulty, making it even more difficult to distinguish each component from the others.  We will show that a vanilla neural network is incapable of handling such tasks, whereas the MoE can. %
We also suppose that $k^*$ is even, instead of assuming that the products of the Hermite coefficients of the teacher and the student models are positive in~\citet{simsek24}. We also impose a condition for each feature vectors $w^\ast_i$ and $v^\ast_c$ as follows.
\begin{assumption}[Task Correlation]\label{assumption:task correlation} For all $i, i' \in [C] \cup \{g\}$ such that $i \neq i'$, ${w^*_i}^\top {w^*_{i'}} = \tilde{O}(d^{-\frac{1}{2}})$. Moreover, for all $i \in [C] \cup \{g\}$ and $c \in [C]$, ${v_c}^\top {w^*_{i}} = 0$.

\end{assumption}
This condition indicates that the tasks for each cluster are diverse. The correlation between two vectors ${w^*_i}$ can be satisfied, for instance, when the vectors are randomly drawn from $\operatorname{Unif}{(\mathbb{S}^{d-1})}$. The cluster signal $v_c$ is assumed to be orthogonal to all feature indices for analytic tractability, and we believe this is not a necessary condition of our subsequent result. \gk{While \citet{chen22} assume mutual orthogonality among the feature indices, we relax this assumption in our analysis.} Indeed, a randomized correlation ${v^*_c}^\top {w^*_{i}}\simeq \tilde{O}(d^{-1/2})$ should only introduce a negligible perturbation $\simeq d^{-1/2}$ to the Hermite coefficients $\beta_{c,i}$ and $\gamma_{i}$ of the teacher models.

\subsection{Structure of the MoE}\label{subsection:structure}
A MoE consists of its $M$ experts $f_1,\dots,f_M$, a gating function $h(x;\Theta)= \Theta^\top x$ where $\Theta = (\theta_1,\dots,\theta_M)\in \mathbb{R}^{d\times M}$ and a routing strategy that uses the output of the gating function $h=(h_1,\dots,h_M)^\top$ to distribute the input to the appropriate experts. For example, for a top-1 routing, the index of the assigned expert is chosen as $m(x) \coloneq \mathrm{argmax}_m h_m(x)$. We also define $\pi_m(x;\Theta) = \exp(h_m(x)) / \sum_{m'}\exp(h_{m'}(x))$ as the softmax gating functions. 
\par In this paper, we will focus on two routing strategies. On the one hand, we define  $F_1(x; W, \Theta)$, the output of MoE following a top-1 routing weighted by the corresponding softmax gating value, i.e., $F_1(x; W, \Theta) = \pi_{m(x)}(x) f_{m(x)}(x; W_m)$. This weighting enables to track the gradient of the gating function which is technically impossible with the simple top-1 routing. On the other hand, we introduce the (adaptive) top-$k$ routing. We adaptively choose $k$ experts for each input $x$ based on the value of each $h_m$ and a threshold. Here, we set the threshold to 0 and define the output as $\hat{F}_M(x; W,\Theta) \coloneq \sum_{m=1}^M \mathbbm{1}[h_m(x) \geq 0] f_m(x;W_m)$ \gk{in Phase II; formally defined in \cref{subsec:alg}}. 
This choice of router is related to a recurrent problem that the router may fail to determine the appropriate expert when there are several models playing similar roles. \gk{After Phase II with top-1 routing, a data point $x_c$ is no longer routed to experts outside the set of professional experts (formally defined in \cref{def:professionalexperts}) for cluster $c$; however, competition among the professional experts may still occur.} In general, this can result in issues related to load imbalance or the emergence of redundant experts, which may result from top-$k$ routing with a fixed $k\in\mathbb{N}$~\citep{zhou22moeexpertchoice}. To address this, expert choice routing~\citep{zhou22moeexpertchoice}, soft MoE~\citep{puigcerver24}, and several auxiliary losses, such as load balancing loss, importance loss~\citep{shazeer17}, and z-loss~\citep{zoph22}, were heuristically introduced to promote the even distribution of data and encourage diversity among experts. \gk{There have indeed been prior attempts to vary the number of activated experts depending on each token~\citep{huang24, zeng24}.} With our adaptive top-$k$ routing \gk{in Phase III and IV}, we can also avoid this phenomenon without changing the loss as we will show that it prevents the data from being routed to non-corresponding experts and ensures that the experts that could be activated during inference are trained evenly. 

\gk{Importantly, our adoption of adaptive top-$k$ routing is compelled by theoretical and technical considerations, and stands in contrast to the classification setting studied in \citet{chen22, chowdhury23}. This phenomenon arises specifically from the challenge of estimating a continuous function in the regression setting.}

\par For each expert, we consider a two-layer neural network $f_m(x;W_m) = \frac{1}{J}\sum_{j=1}^J a_{m,j}\sigma_m(w_{m,j}^\top x+b_{m,j})$. The Hermite expansions are given as $a_{m,j}\sigma_m(z + w_{m,j}^\top v_c +b_{m,j}) = \sum_{i=0}^{\infty} \frac{\alpha_{m,j,i,c}}{\sqrt{i!}} \mathrm{He}_i(z)$. 
\par Moreover, we assume that the activation function $\sigma_m$ satisfies the following assumption used in~\citet{oko24learning} to ensure that a fraction of neurons can align with the feature index even though the target functions are unknown.

\begin{assumption}[Student Activation Functions]\label{assumption:student activation functions}
The activation functions $\sigma_m$ of the student model and the link function $f^*_c$ and $g^*$ of the teacher models satisfy one of the following conditions: (A) $\sigma_{m}$ is a randomized polynomial activation of degree at most $O(1)$ as defined in Appendix~\ref{subsection:activatefunction} and $f^*_c$ satisfies Assumption~\ref{assumption:teacher models} for all $c=1,\dots,C$, or (B) $\sigma_{m}$ is the ReLU activation function, with the additional requirement that for each $f^*_i$, the absolute values of the all non-zero Hermite coefficients $|\alpha_{m,j,i}|$ are $\Theta(1)$. 
\rk{In addition to the conditions (A) and (B), we technically assume that the sign the Hermite coefficient $\alpha_{m,j,i,c}$ of $\sigma_m(\cdot + \rho{w_{m,j}^t}^\top v_c + b_{m,j})$ is invariant during the optimization since we cannot evaluate the contribution of higher-order Hermite coefficients.}
\end{assumption}

\subsection{Training Algorithm}\label{subsec:alg}
\begin{algorithm}[tb]
\caption{Gradient-based training of MoE}
\label{alg:main}
\begin{algorithmic}[0]
\REQUIRE Learning rates $\eta^t$, regularization parameter $\lambda$, sample sizes $T_1$, $T_2$, $T_3$, $T_4$, initialization scale $C_b$.
\STATE Initialize $w^0_{m,j} \sim \mathrm{Unif}(\mathbb{S}^{d-1}(1))$ and $a_{m,j} \sim \mathrm{Unif}\{\pm 1\}$.

\STATE \textbf{Phase I: Normalized SGD on first-layer of experts}

\FOR{$t = 0$ to $T_1 - 1$}
    \STATE Draw new sample $(x_c^t, y_c^t)$.
    \STATE $w^{t+1}_{m,j} \gets w^t_{m,j} + \eta^t y_c^t \nabla^S_{w_{m,j}} F_1(x_c^t)$.
    \STATE $w^{t+1}_{m,j} \gets w^{t+1}_{m,j} / \|w^{t+1}_{m,j}\|$ for $j = 1, \dots, J$.
\ENDFOR

\STATE \textbf{Phase II: SGD on gating network of router}
\FOR{$t = T_1$ to $T_1 + T_2 - 1 (=T_1)$}
    \STATE Draw new samples ${(x_c^i, y_c^i)}_{i=1}^{n}$.
    \STATE $\theta^{t+1}_m \gets \theta^t_m + \frac{\eta^t}{n} \sum_{i=1}^{n}y_c^i \nabla_{\theta_m} F_1(x_c^i)$.
\ENDFOR

\STATE \textbf{Phase III: Normalized SGD on first-layer of experts}
\STATE Reinitialize $w^0_{m,j} \sim \mathrm{Unif}(\mathbb{S}^{d-1}(1))$ and $a_{m,j} \sim \mathrm{Unif}\{\pm 1\}$
\FOR{$t = T_1 + T_2$ to $T_1 + T_2 + T_3 - 1$}
    \STATE Draw new sample $(x_c^t, y_c^t)$.
    \STATE $w^{t+1}_{m,j} \gets w^t_{m,j} + \eta^t y_c^t \nabla^S_{w_{m,j}} \hat{F}_M (x_c^t)$.
    \STATE $w^{t+1}_{m,j} \gets w^{t+1}_{m,j} / \|w^{t+1}_{m,j}\|$ for $j = 1, \dots, J$.
\ENDFOR

\STATE \textbf{Phase IV: Convex optimization for second-layer of experts}
\STATE Initialize $b_{m,j} \sim \mathrm{Unif}([-C_b, C_b])$ and set $\hat{w}_j \gets \delta_{m,j} w_{m,j}^{T_1+T_2+T_3}$, where $\delta_{m,j} \sim \mathrm{Unif}\{\pm 1\}$.
\STATE Draw new samples $(x_c^t, y_c^t)_{t=T_1+T_2+T_3}^{T_1+T_2+T_3+T_4-1}$.
\STATE Solve:
\[
\{\hat{a}_m\}_m 
\gets \underset{a_m \in \mathbb{R}^J}{\mathrm{argmin}} \frac{1}{T_4} \sum_{t=\tau_5}^{\tau_6-1} \big(\hat{F}_M(x^t) - y^t\big)^2 + \bar{\lambda}\sum_m \|a_m\|_2^2
\]
where $\tau_5 = \sum_{i=1}^3 T_i$ and $\tau_6 = \sum_{i=1}^4 T_i$.
\end{algorithmic}
\end{algorithm} 

To precisely track the model's evolution, we divide the training algorithm into layer-by-layer, similar to previous studies that have researched feature learning in neural networks~\citep{damian22, ba22, bietti23, abbe23, mousavi-hosseini23gradient, oko24learning, lee24}.
Specifically, we consider an algorithm separated into four phases.
See Algorithm~\ref{alg:main} for the outline.
\par Concretely, we start by initializing the weights of the experts as $w^{0}_{m,j} \sim \operatorname{Unif}(\mathbb{S}^{d-1})$ and $a_{m,j} \sim \mathrm{Unif}\{-1, +1\}$ following~\citet{oko24learning} and~\citet{lee24} and the weights of the router's gating network $\theta_m$ to zero. In Phase I, the first layer of the expert is optimized using a correlation loss, a technique supported by prior studies \citep{bietti22, damian22, abbe23, oko24learning, lee24}. For optimization, the spherical gradient, defined as $\nabla^S_{w_{m,j}}\mathcal{L}\coloneq(I_d - w_{m,j} w_{m,j}^\top) \nabla_{w_{m,j}} \mathcal{L}$, is employed, as explored in \citet{arous21, damian23, oko24learning, lee24}. Phase II is devoted to the router's gating network which is trained via gradient descent. In this stage, we train the gating network in a single step with a batch size of $n=\tilde{\Theta}(d)$. Since the gradient of the gating network $h(x;\Theta)$ contains differences in the alignment ${w_{m,j}}^\top w^*_c$ between experts after the exploration stage, it becomes aligned with the cluster mean vector $v_c$. Before entering Phase III, the expert weights $w_{m,j}$ and $a_{m,j}$ are reinitialized. While not strictly necessary, this reinitialization helps ensure that the early learning of $w^\ast_c$  does not interfere with or disrupt the effective learning of $w^\ast_g$, particularly when $f^*$ and $g^*$ are similar functions. Note that if the activation function is ReLU, a random sign flip of $w_{m,j}$, as described in \citet{oko24learning}, becomes necessary. See Appendix~\ref{subsection:activatefunction} for details. Finally, we conclude the training with Phase IV, where convex optimization with $L_2$-regularization is executed to the second layer using noise terms $b_{m,j}$ to facilitate the estimation of the polynomial of the ReLU activation function. Note that we use different routing strategy for Phases I and II ($F_1$) and Phases III and IV ($\hat F_M$). Refer to Remark~\ref{remark:routing_alg} for details.
\par We will show that, at the end of each phase, a specific representation of the complex task under consideration is learned, which is possible thanks to the idiosyncratic architecture of a MoE. More precisely, at the end of Phase I, some neurons within each expert \emph{weakly recover} certain vectors and specialize to the corresponding cluster. In the next stage, the router learns to successfully dispatch data to the appropriate expert based on the weak recovery of clusters assigned to each expert. As for the second half of the algorithm, we prove that each expert successfully recovers both the local task and the global task associated with its assigned cluster.

\section{Main Results}
In this section, we provide our main results. We first prove that a vanilla neural network fails in detecting the latent structure of our task as it can only process the problem as a whole. Next, we show that the MoE, on the contrary, succeeds in dividing this problem into easier subproblems by leveraging the ability of each expert to weakly recover the simpler function corresponding to an individual cluster.

\subsection{Limitations of the Vanilla Neural Network}\label{subsection:vanilann}

\subsubsection{Main Theorem}

Here, we consider the vanilla neural network
\begin{equation}
    f_m(x;W_m) = \frac{1}{J}\sum_{j=1}^J a_{m,j}\sigma_m(w_{m,j}^\top x+b_{m,j})
\end{equation}
as the student model. This also corresponds to the special case of a MoE with only one expert. The size of the vanilla neural network is at most $J = O(\mathrm{poly} d)$.

We will demonstrate that there are some scenarios of our teacher model that a single expert cannot solve. For example, consider the following. See \cref{appendix-section-vanilla} for further details. 
\begin{example}\label{ex:limit_nn}
    We construct a specific problem of our teach model~\ref{assumption:teacher models} as follows. Assume all feature vectors $w^*_{c}$ and $w^*_{g}$ are completely orthogonal for simplicity. Moreover, functions are defined as $f_c^*({w_{c}^*}^\top x_c) = \beta_{c, k^*} \hermite_{k^*}({w_{c}^*}^\top x_c)$, for all $c$, and $s_c g^*(x_c) = (-1)^{c+1} \hermite_{k^*}({w_{g}^*}^\top x_c)$ for $c=1,2$ and otherwise $0$. We assume that there exists at least one pair $(c,c')$ such that $\mathrm{sgn}\ \beta_{c, k^*} \neq \mathrm{sgn}\ \beta_{c', k^*}$. \rk{We additionally assume $k^* \geq 5$ to prove the result in \cref{lemma-one-expert-dead-lock}.}
    $(s_{c})_c$ are defined as $s_1 = 1$, $s_2=-1$ and $s_c=0$ (otherwise), to satisfy $\sum_c s_{c} = 0$. This means the signal $w^*_g$ is hard to recover. In short, the data $y_c$ are generated as
    \begin{align*}
     \begin{cases}
        \beta_{1, k^*} \hermite_{k^*}({w_{1}^*}^\top x_c) + \hermite_{k^*}({w_g^*}^\top x_c) + \nu & \text{if}\ c=1,\\
        \beta_{2, k^*} \hermite_{k^*}({w_{2}^*}^\top x_c) - \hermite_{k^*}({w_g^*}^\top x_c) + \nu & \text{if}\ c=2,\\
        \beta_{c, k^*} \hermite_{k^*}({w_{c}^*}^\top x_c) + \nu & \text{if} \ c>2.
    \end{cases}
    \end{align*}
\end{example}

\begin{remark}
    It may be possible that $\mathrm{sgn}\ \alpha_{m, j, k^*, c}\neq \mathrm{sgn}\ \alpha_{m, j', k^*, c'}$ when $j\neq j'$ because we randomly initialize $a_{mj}$ and $b_{m, j}$. 
\end{remark}

The next theorem shows that ${w_{m, j}^t}$ never catches the signal $w^*_{g}$ for all $t$ because the gradient for $w^*_{g}$ is erased by those for $w^*_{c}$.

\begin{theorem}[Difficulty of finding the ``hidden" signal $w^*_{g}$]
    \label{theorem-main-vanilla}
    During the population spherical gradient flow of ${w_{mj}^t}$,  for all $j=1,\dots,J$, we have
    \begin{equation}
        \sup_{t\geq 0}|{w_{m,j}^t}^\top {w_g^*}|\lesssim \tilde{O}(d^{-1/2})
    \end{equation}
    with high probability.
\end{theorem}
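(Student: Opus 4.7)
The plan is to derive an ODE for $u_g(t) := {w_{m,j}^t}^\top w_g^*$ under the population spherical gradient flow, bound the magnitude of $\dot u_g$ by $\tilde O(d^{-1/2})$-scale quantities (provided companion projections remain small), and then propagate $|u_g(t)|\lesssim \tilde O(d^{-1/2})$ for all $t\geq 0$ by a bootstrap/Gr\"onwall-type argument.

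I would first exploit the orthogonality supplied by \cref{assumption:task correlation} and \cref{ex:limit_nn} to decompose $w_{m,j}^t$ in the orthonormal basis $\{w_g^*,\,w_1^*,\ldots,w_C^*,\,v_1,\ldots,v_C,\,\mathrm{complement}\}$ and write $u_c = w_{m,j}^\top w_c^*$ and $\xi_c = w_{m,j}^\top v_c$. Using $x_c=\rho v_c + z$ with $z\sim\mathcal{N}(0,I_d)$, the Hermite expansion of the shifted activation $\sigma_m(\cdot + \mu_c)$ with $\mu_c = \rho\xi_c + b_{m,j}$, and Stein's lemma applied to $\mathbb{E}[y\nabla_{w_{m,j}}f_m]$, the projection onto $w_g^*$ (after absorbing the spherical correction $-u_g\,(w_{m,j}^\top g)$) yields an ODE of the schematic form
\begin{equation*}
\dot u_g \;\propto\; \Big(\textstyle\sum_c \tfrac{s_c}{C}\,\tilde\alpha_{k^*}(\mu_c)\Big)\,u_g^{k^*-1} \;+\; u_g\,\Big(\textstyle\sum_c \tfrac{\beta_{c,k^*}}{C}\,\tilde\alpha_{k^*+2}(\mu_c)\,u_c^{k^*}\Big) \;+\; (\text{higher order in }u_g),
\end{equation*}
where $\tilde\alpha_k(\mu)$ denotes the $k$-th Hermite coefficient of $\sigma_m(\cdot+\mu)$.

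The heart of the argument is the \emph{deadlock} driven by $\sum_c s_c = 0$. A first-order Taylor expansion gives $\sum_c s_c\,\tilde\alpha_{k^*}(\mu_c) \;=\; \rho\,\tilde\alpha_{k^*}'(b_{m,j})\sum_c s_c\,\xi_c + O(\rho^2\max_c \xi_c^2)$, which is $\tilde O(\max_c|\xi_c|)$ in magnitude. To convert this into the stated bound I need the auxiliary invariant $|\xi_c(t)|\lesssim \tilde O(d^{-1/2})$ for every $c$ and every $t\geq 0$. This is obtained by rerunning the Stein calculation to project the population gradient onto $v_c$: the $g^*$-contribution inherits the same $\sum_c s_c = 0$ cancellation (because $v_c$ enters only through the $\rho v_c$ piece of $x_c$), while the $f_{c'}^*$-contribution is of order $\rho\,u_{c'}^{k^*}/C$. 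A simultaneous bootstrap in $(u_g,\xi_1,\ldots,\xi_C)$ keeps all of these at the $\tilde O(d^{-1/2})$ scale.

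For the cross-coupling term $u_g\sum_c\beta_{c,k^*}\,\tilde\alpha_{k^*+2}(\mu_c)\,u_c^{k^*}$, the sign-mixing of $\{\beta_{c,k^*}\}$ imposed by \cref{ex:limit_nn}, combined with the parity $k^*$ even (so $u_c^{k^*}\geq 0$) and the fact that $u_c(t)$ inherits $\mathrm{sgn}(\beta_{c,k^*})$ from its own self-driven dynamics, produces at least partial cancellation; the residual is absorbed by the matching piece of the spherical correction $-u_g\,(w_{m,j}^\top g)$, and the assumption $k^*\geq 5$ ensures that the leftover higher-order remainders in $u_g$ are dominated by the controlled $u_g^{k^*-1}$ term. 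Combining the pieces yields $|\dot u_g|\lesssim \tilde O(d^{-1/2})\bigl(|u_g|^{k^*-1}+|u_g|\bigr)$, and a Gr\"onwall argument with $|u_g(0)|\lesssim \tilde O(d^{-1/2})$ closes the estimate. The main obstacle is maintaining the joint invariant $|\xi_c(t)|,|u_g(t)|\lesssim \tilde O(d^{-1/2})$ for \emph{all} $t\geq 0$, not merely for $t\lesssim\mathrm{poly}(d)$, since any drift of $\xi_c$ erodes the $\sum_c s_c$-cancellation controlling the dominant $u_g^{k^*-1}$ term, and any drift of $u_g$ amplifies the cross-coupling; closing this bootstrap globally in time is the delicate technical step.
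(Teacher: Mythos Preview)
Your proposal correctly isolates the cancellation $\sum_c s_c\,\tilde\alpha_{k^*}(\mu_c)=\tilde O(d^{-1/2})$ as the reason the self-driving term for $u_g$ is weak, but the closing argument has a genuine gap. The Gr\"onwall bound you sketch, $|\dot u_g|\lesssim \tilde O(d^{-1/2})\bigl(|u_g|^{k^*-1}+|u_g|\bigr)$, only controls $|u_g|$ for times $t\lesssim\tilde O(d^{1/2})$: the linear-in-$u_g$ piece yields growth $\sim e^{\tilde O(d^{-1/2})t}$, which escapes the $d^{-1/2}$ scale for large $t$. You flag this as ``the delicate technical step'' but do not resolve it, and it cannot be resolved by a pure bootstrap on $(u_g,\xi_1,\ldots,\xi_C)$ alone.

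The paper's mechanism is fundamentally different and hinges on the \emph{growing} alignments $u_c=w_{m,j}^\top w_c^*$. One first identifies $\mathcal{C}_j=\{c:\beta_{c,k^*}\alpha_{m,j,k^*}>0\}$; for these $c$, $|u_c|$ escapes to scale $\gtrsim d^{-1/4-1/(8k^*)}$ in time $\tilde O(d^{(k^*-2)/2})$, while $u_g$ and $u_c$ for $c\notin\mathcal{C}_j$ remain $\tilde O(d^{-1/2})$ throughout this phase (the $d^{-1/2}$ slowdown on $u_g$ is used only to win this race, not to close a global Gr\"onwall). Once the $c\in\mathcal{C}_j$ alignments are this large, the spherical-correction term in $\tfrac{\mathrm{d}}{\mathrm{d}t}u_g$, namely $-u_g\sum_c\beta_{c,k^*}\alpha_{m,j,k^*,c}\,u_c^{k^*}$, is dominated by the $c\in\mathcal{C}_j$ summands (each with $\beta_{c,k^*}\alpha>0$ and $u_c^{k^*}\geq 0$ since $k^*$ is even), and this makes $\tfrac{\mathrm{d}}{\mathrm{d}t}|u_g|$ strictly \emph{negative}, locking $|u_g|$ at $\tilde O(d^{-1/2})$ for all subsequent $t$; this ``dead-lock'' is where $k^*\geq 5$ enters, ensuring the blocking term beats the residual self-driving term. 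Your treatment of this cross-coupling is also off: there is no sign cancellation among the $\beta_{c,k^*}\alpha\,u_c^{k^*}$, because the $u_c$ are not comparable across $c$ (they grow only for $c\in\mathcal{C}_j$), and $u_c(t)$ does not inherit $\mathrm{sgn}(\beta_{c,k^*})$---its sign is set at initialization, while whether $|u_c|$ grows depends on $\mathrm{sgn}(\beta_{c,k^*}\alpha_{m,j,k^*})$. The cross-coupling is precisely the blocking mechanism that closes the argument globally in time, not a residual to be cancelled.
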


\cref{theorem-main-vanilla} indicates that there is an insufficient number of neurons that can align the feature vector ${w_{g}^*}$. As a result, it becomes difficult to estimate the function $s_c g^*({w_g^*}^\top \cdot)$. This phenomenon is due to the condition $\sum_c s_c = 0$ and that the naive neural network (with polynomial width) cannot utilize the vectors $v_c$ to detect the cluster structure.
Interestingly, such difficulty of SGD for the naive neural network has not been shown in prior works studying the optimization dynamics of the MoE~\citep{chen22,chowdhury23,li24}. This was possible thanks to our theoretical analysis based on the information exponent, which appears in the context of nonlinear regression.

\subsubsection{Proof Sketch}

We provide a sketch of the proof for the theorem. We demonstrate that a vanilla neural network predominantly aligns with simple tasks $w_c^*$, which prevents it from aligning with the more subtle one $w_g^*$. For a comprehensive explanation, please check \cref{appendix-section-vanilla}.

The spherical gradient flows of $|{w_{m,j}^t}^\top w^*_{c}|$ are approximately evaluated as
\begin{equation}
    \frac{\mathrm{d}}{\mathrm{d}t}|{w_{m,j}^t}^\top w^*_{c} |
    \simeq \tilde{\Theta}(\eta J^{-1} |{w_{m,j}^t}^\top w^*_{c}|^{k^*-1})
\end{equation}
where $\eta$ is a learning rate, under the condition that $\beta_{c, k^*} \alpha_{m, j, k^*, c} > 0$. By integration, it takes $\tilde{\Theta}(\eta^{-1}J d^{(k^*-2)/2})$ time for the weak recovery.

Now, the most important observation is that the signals of $w_g^*$ are canceled out:
\begin{lemma}
    \label{lemma-main-one-expert-coefficient}
    Recall that the inputs are generated as $x_c = \rho v_c + z$, where $z\sim \mathcal{N}(0,I_d)$. The Hermite coefficients $\alpha_{m,j,i,c}$ of $\sigma_m(\cdot + {w_{m,j}^t}^\top v_c + b_{mj})$ are close up to $\tilde{O}(d^{-1/2})$ at the initialization. Please refer to \cref{lemma-one-expert-coefficient} for the proof.
\end{lemma}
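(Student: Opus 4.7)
The goal is to show that at initialization ($t=0$), the Hermite coefficient $\alpha_{m,j,i,c}$ is essentially independent of the cluster index $c$, with deviation $\tilde O(d^{-1/2})$. The key observation is that the dependence on $c$ enters only through the scalar shift ${w_{m,j}^{0}}^\top v_c$, and this shift is small by spherical concentration. I would therefore split the argument into a concentration step (on the shift) and a smoothness step (on the Hermite coefficient as a function of the shift).

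\textbf{Step 1 (concentration of the shift).} Because $w_{m,j}^{0}\sim\mathrm{Unif}(\mathbb{S}^{d-1})$ and $v_c\in\mathbb{S}^{d-1}$ is fixed, a standard concentration inequality on the sphere gives $|{w_{m,j}^0}^\top v_c|\lesssim \sqrt{\log d/d}=\tilde O(d^{-1/2})$ with high probability for any fixed $(m,j,c)$. A union bound over $j\in[J]=O(\mathrm{poly}\,d)$, $m\in[M]$ and $c\in[C]=O_d(1)$ preserves this bound uniformly. Consequently, for any two clusters $c,c'$, $|{w_{m,j}^0}^\top(v_c - v_{c'})|=\tilde O(d^{-1/2})$ w.h.p.

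\textbf{Step 2 (smoothness in the shift).} By definition of the Hermite expansion and orthonormality,
\begin{equation}
\alpha_{m,j,i,c}=\sqrt{i!}\,a_{m,j}\,\mathbb{E}_{z\sim\mathcal N(0,1)}\!\left[\sigma_m\!\bigl(z+{w_{m,j}^0}^\top v_c+b_{m,j}\bigr)\hermite_i(z)\right].
\end{equation}
Writing $\epsilon_{c}:={w_{m,j}^0}^\top v_c$ and $G_{i}(\epsilon):=\sqrt{i!}\,a_{m,j}\,\mathbb{E}_z[\sigma_m(z+\epsilon+b_{m,j})\hermite_i(z)]$, I would bound
\begin{equation}
|\alpha_{m,j,i,c}-\alpha_{m,j,i,c'}|=|G_i(\epsilon_c)-G_i(\epsilon_{c'})|\leq \|G_i'\|_\infty\,|\epsilon_c-\epsilon_{c'}|.
\end{equation}
Under Assumption~\ref{assumption:student activation functions}(A) the activation $\sigma_m$ is a polynomial of degree $O(1)$, so $\sigma_m$ is locally Lipschitz and $G_i$ is smooth with derivative bounded by a constant times a Gaussian moment of $\hermite_i$, which is $O(1)$ for $i\le p^\ast=O(1)$. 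Under Assumption~\ref{assumption:student activation functions}(B) with ReLU, $\sigma_m$ is only $1$-Lipschitz, but this suffices: applying the Lipschitz bound inside the expectation gives $|G_i(\epsilon_c)-G_i(\epsilon_{c'})|\le\sqrt{i!}\,\mathbb{E}|\hermite_i(z)|\cdot|\epsilon_c-\epsilon_{c'}|$, and $\sqrt{i!}\,\mathbb{E}|\hermite_i(z)|$ is bounded by a constant for $i\le p^\ast=O(1)$. (Alternatively one uses the explicit closed-form Hermite coefficients of shifted ReLU, which are real-analytic in the shift.)

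\textbf{Combining.} Multiplying the $O(1)$ Lipschitz constant from Step 2 with the $\tilde O(d^{-1/2})$ bound on $|\epsilon_c-\epsilon_{c'}|$ from Step 1 yields $|\alpha_{m,j,i,c}-\alpha_{m,j,i,c'}|=\tilde O(d^{-1/2})$ w.h.p., uniformly in $i\le p^\ast$, $j\in[J]$, $m\in[M]$ and $c,c'\in[C]$. This is exactly the quantitative closeness asserted by the lemma, and it is the analytical input required for the cancellation $\sum_c s_c=0$ to wipe out the $w_g^*$ signal in \cref{theorem-main-vanilla}.

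\textbf{Main obstacle.} The only non-routine point is controlling $\|G_i'\|_\infty$ uniformly in $i$ when $\sigma_m$ is ReLU, since a pointwise derivative of $\sigma_m$ does not exist. The cleanest remedy is to move the increment inside the expectation and use Lipschitzness of $\sigma_m$ directly, rather than differentiating under the integral; this avoids any subtlety from the non-smoothness of ReLU and still produces the desired $\tilde O(d^{-1/2})$ bound because only finitely many Hermite indices $i\le p^\ast$ are relevant.
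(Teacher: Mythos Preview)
Your proof is correct and follows essentially the same route as the paper's proof of \cref{lemma-one-expert-coefficient}: both combine spherical concentration of ${w_{m,j}^0}^\top v_c$ with Lipschitz continuity of the activation (ReLU case) or polynomial smoothness (case (A)) to bound the variation of the Hermite coefficient in the shift. The only cosmetic differences are that the paper triangulates through the zero-shift coefficient $\alpha_{m,j,i}$ rather than comparing two clusters directly, and that the shift in the appendix carries the factor $\rho=\tilde O(1)$ (omitted in the main-text statement you followed), which does not affect the $\tilde O(d^{-1/2})$ conclusion.
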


Therefore, when initialized as $|{w_{m,j}^0}^\top w^*_g| \simeq d^{-1/2}$, 
\begin{align}
    &\frac{\mathrm{d}}{\mathrm{d}t}|{w_{m,j}^t}^\top w^*_g | \\
    \lesssim& \eta J^{-1} \Big|\sum_{c}s_c\alpha_{m, j, k^*, c}\Big||{w_{m,j}^t}^\top w^*_g|^{k^*-1} \\
    \lesssim& \eta J^{-1} d^{-1/2} |{w_{m,j}^t}^\top w^*_g|^{k^*-1} \\
    \lesssim& \eta J^{-1} |{w_{m,j}^t}^\top w^*_g|^{(k^*+1)-1},
\end{align}
which implies that the information exponent of $g^*$ increases. It takes at least $\tilde{\Omega}(\eta^{-1}J d^{((k^*+1)-2)/2})$ time for the weak recovery. Therefore, for all $j$, detecting one of the signals $\{w_{c}^*\}_c$ becomes easier than $w_g^*$ and each $w_{m,j}$ tends to align with $w_{c}^*$ rather than $w_g^*$.

Next, we identify the subset of tasks $\mathcal{C}_j$ from the entire set $[C]$ that neuron $w_{mj}$ can align with based on the necessary condition: for all $j$, there exists $c$ such that
$\beta_{c, k^*} \alpha_{m,j,k^*,c} > 0$
with high probability. Let $\mathcal{C}_j$ be such a subset of $C$, then we can show that $w_{m,j}$ can only detect $w^*_{c}$ where $c\in\mathcal{C}_j$.

Based on the above argument, we obtain that for all $j$, ${w_{m,j}}$ aligns some feature vectors among $\{w^*_{c}\}_c$, not hidden ${w^*_{g}}$:

\beginofrk
\begin{lemma}
    \label{lemma-main-one-expert-exist-t-delta}
    For all $j$, there exists $t_\Delta \lesssim \tilde{O}(J\eta^{-1} d^{(k^*-2)/2})$ such that for all $t \geq t_\Delta$,
    \begin{enumerate}
        \item $|{w^{t}_{m,j}}^\top w^*_{c}|\gtrsim \tilde{\Omega}(d^{-1/4-1/(8k^*)})$ for some $c\in\mathcal{C}_j$,
        \item $|{w^{t}_{m,j}}^\top w^*_{c}|\lesssim \tilde{O}(d^{-1/2})$ for all $c \notin \mathcal{C}_j$,
        \item $|{w^{t}_{m,j}}^\top w^*_g|\lesssim \tilde{O}(d^{-1/2})$
    \end{enumerate}
    hold with high probability.
\end{lemma}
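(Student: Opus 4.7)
The plan is to track, for a fixed neuron index $j$, the coupled system of overlaps $m_c^t \coloneq {w^{t}_{m,j}}^\top w^*_c$ for $c\in[C]$ together with $m_g^t \coloneq {w^{t}_{m,j}}^\top w^*_g$ under the population spherical gradient flow of Phase I. Expanding $y_c$ into Hermite polynomials of $\sigma_m(\cdot+\rho{w^{t}_{m,j}}^\top v_c+b_{m,j})$ and using the orthogonality conditions of \cref{assumption:task correlation}, each $\dot m_c^t$ decomposes into a ``diagonal'' driving term $\simeq \tilde\Theta(\eta J^{-1})\,\beta_{c,k^*}\alpha_{m,j,k^*,c}\,(m_c^t)^{k^*-1}$ plus cross-terms of order $\tilde O(d^{-1/2})$ times the other overlaps. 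Concentration on $\mathbb{S}^{d-1}$ at initialization gives $|m_c^0|,|m_g^0| = \tilde\Theta(d^{-1/2})$ w.h.p., and a simple sign argument on the random $a_{m,j},b_{m,j}$ shows $\mathcal{C}_j\neq\emptyset$ w.h.p., providing a uniform starting point for the comparison arguments below.

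For $c\in\mathcal{C}_j$, the driving coefficient is strictly positive, so $|m_c^t|$ dominates the solution of the autonomous ODE $\dot u = \tilde\Theta(\eta J^{-1})u^{k^*-1}$ with $u(0)\asymp d^{-1/2}$. Separation of variables gives escape to the level $d^{-1/4-1/(8k^*)}$ in time at most $\tilde O(J\eta^{-1}d^{(k^*-2)/2})$; I would take $t_\Delta$ to be the first time any $c\in\mathcal{C}_j$ reaches this level, which establishes part~1. The intermediate threshold $d^{-1/4-1/(8k^*)}$, rather than the full weak-recovery scale $\Theta(1)$, is chosen so that throughout the escape window the diagonal term still dominates the cross-terms coming from the $\tilde O(d^{-1/2})$ non-orthogonality of the $w^*_i$'s and the spherical-projection contribution $-m_c^t\,{w^{t}_{m,j}}^\top \nabla_w \mathcal{L}$.

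For part~2, take $c\notin\mathcal{C}_j$: the diagonal term now has the wrong sign and acts as a contraction, while the cross-terms contribute at scale $\tilde O(d^{-1/2})$ up to $t_\Delta$, so $|m_c^t|\lesssim\tilde O(d^{-1/2})$. For part~3, I would invoke \cref{lemma-main-one-expert-coefficient} together with $\sum_c s_c=0$ to replace the naive driving coefficient $\sum_c s_c\alpha_{m,j,k^*,c}$ by a residual of size $\tilde O(d^{-1/2})$, yielding the improved bound $\dot{|m_g^t|}\lesssim \tilde O(\eta J^{-1})|m_g^t|^{k^*}$ as in the chain of inequalities preceding the lemma. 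Integrating this against $|m_g^0|\asymp d^{-1/2}$ shows the escape time for $w^*_g$ is at least $\tilde\Omega(J\eta^{-1}d^{(k^*-1)/2})$, which strictly exceeds $t_\Delta$, so $|m_g^t|$ remains at $\tilde O(d^{-1/2})$ on the entire interval $[0,t_\Delta]$.

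The main technical obstacle is controlling the cross-terms in $\dot m_c^t$ and $\dot m_g^t$ stemming from (i) the non-exact orthogonality ${w^*_i}^\top w^*_{i'}=\tilde O(d^{-1/2})$, (ii) the higher-order Hermite coefficients beyond $k^*$, whose sign is not controlled a priori, and (iii) the spherical projection $(I_d - w w^\top)$ coupling every direction together. These can in principle transfer the fast growth of $m_{c^*}^t$ for the winning cluster $c^*\in\mathcal{C}_j$ into the slow $m_g^t$ dynamics once $m_{c^*}^t$ leaves the noise floor, which is precisely why $t_\Delta$ is stopped at the intermediate threshold $d^{-1/4-1/(8k^*)}$ rather than at true weak recovery. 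Bounding these contributions requires a Hermite-order-by-Hermite-order bookkeeping together with the sign-invariance clause of \cref{assumption:student activation functions}, and a union bound over a $\mathrm{poly}(d)$-sized time grid to pass from the ODE estimates to the discrete normalized SGD iterates actually used by Algorithm~\ref{alg:main}.
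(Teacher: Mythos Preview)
Your proposal establishes only the \emph{existence} of a time $t_\Delta$ at which the three conditions hold, and in fact you explicitly restrict attention to the interval $[0,t_\Delta]$ (``$|m_g^t|$ remains at $\tilde O(d^{-1/2})$ on the entire interval $[0,t_\Delta]$''). But the lemma asserts that the conditions persist \emph{for all} $t\ge t_\Delta$. This is the substantive part of the statement, and it is what makes the lemma useful for Theorem~\ref{theorem-main-vanilla}: one must show that $|{w^{t}_{m,j}}^\top w^*_g|$ can \emph{never} escape the $\tilde O(d^{-1/2})$ band. After $t_\Delta$ the winning overlap $|\kappa_{m,j,c}^t|$ for $c\in\mathcal{C}_j$ continues to grow, so $|{w^{t}_{m,j}}^\top v_c|$ may leave the $\tilde O(d^{-1/2})$ regime and the cancellation $|\alpha_{m,j,k^*,1}-\alpha_{m,j,k^*,2}|\lesssim d^{-1/2}$ you rely on for part~3 is no longer available. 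The paper handles this via a separate ``dead-lock'' argument (Lemma~\ref{lemma-one-expert-dead-lock}): once $\sum_{c\in\mathcal{C}_j}|\kappa_{m,j,c}^t|\gtrsim d^{-1/4-1/(8k^*)}$, the spherical-projection contribution $-\sum_{c\in\mathcal{C}_j}\beta_{c,k^*}\alpha_{m,j,k^*,c}(\kappa_{m,j,c}^t)^{k^*}|\xi_{m,j,g}^t|$ dominates the (now possibly $\tilde O(1)$) driving term $|\alpha_{m,j,k^*,1}-\alpha_{m,j,k^*,2}|\,|\xi_{m,j,g}^t|^{k^*-1}$, forcing $\frac{d}{dt}(|\xi_{m,j,g}^t|+\sum_{c\notin\mathcal{C}_j}|\kappa_{m,j,c}^t|)\le 0$. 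This comparison is exactly where the additional assumption $k^*\ge 5$ enters; your proposal neither uses nor mentions it.

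A secondary gap concerns reaching the level $d^{-1/4-1/(8k^*)}$ itself. You solve the scalar ODE $\dot u\asymp \eta J^{-1}u^{k^*-1}$ in one shot, but the Hermite-coefficient control $|\alpha_{m,j,k^*,c}^t-\alpha_{m,j,k^*}|\lesssim d^{-1/2}$ (Lemma~\ref{lemma-one-expert-coefficient-concrete}) is stated only under $\sup_c|\kappa_{m,j,c}^s|\lesssim d^{-1/2+1/(2k^*)}$. The paper therefore bootstraps via the recursion $A_{l+1}=\frac{k^*-2}{k^*}A_l+\frac{1}{2k^*}$ (Lemma~\ref{lemma-one-expert-Al-recurrence-formula}), re-establishing coefficient control at each stage; the fixed point $A_*=1/4$ explains the threshold $d^{-1/4-\Delta}$. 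Finally, note that this lemma is stated for the population spherical gradient \emph{flow}, so the SGD-discretization union bound you mention at the end is not needed here.
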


See \cref{lemma-one-expert-dead-lock} for formal proof. %
Intuitively, when the inequalities in \cref{lemma-main-one-expert-exist-t-delta} hold, then the alignment in the following inequality does not grow because the derivative continues to be negative:
Let $\xi_{m,j,g}^t\coloneq {w^{t}_{m,j}}^\top w^*_g$ and $\kappa_{m,j,c}^t \coloneq {w^{t}_{m,j}}^\top w^*_{c}$. 
For all $t'\in[t_\Delta,t]$,
\begin{align*}
    &\frac{\mathrm{d}}{\mathrm{d}t}(|\xi_{m,j,g}^t|+|\kappa_{m,j,c}^t|)|_{t=t'}\\
        \lesssim& \frac{\eta}{CJ} %
            \tilde{O} (
            |\xi_{m,j,g}^{t'}|^{k^*-1}%
            \\
            +& \sum_{c\notin \mathcal{C}_j} \underbrace{\left(-\beta_{c,k^*} \alpha_{m,j,k^*{c}}\right)}_{\leq \tilde{O}(1)} (\kappa_{m,j,c}^{t'})^{k^*} |\xi_{m,j,g}^{t'}| \\
            &- \sum_{c\in \mathcal{C}_j} \underbrace{\beta_{c,k^*} \alpha_{m,j,k^*{c}}}_{\geq \tilde{\Omega}(1)}\underbrace{(\kappa_{m,j,c}^{t'})^{k^*}}_{\gtrsim d^{-k^*(1/4+1/(8k^*))}}|\xi_{m,j,g}^{t'}|)\\
        \lesssim& \frac{\eta}{CJ} |\xi_{m,j,g}^{t'}| \tilde{O}\left(
        d^{-(k^*-2)/2} - \tilde{\Omega}(d^{-k^*/4-1/8}) \right)\\
        \lesssim&0,
\end{align*}
where we used the additional assumption $k^* \geq 5$ %
, and the definition of $\mathcal{C}_j$. Therefore, the alignment $|{w^{t}_{m,j}}^\top w^*_g| + \sum_{c\notin\mathcal{C}_j}|{w^{t}_{m,j}}^\top w^*_{c}|$ is bounded by $\tilde{O}(d^{-1/2})$ for all $t$. %
\owarirk
See \cref{theorem-one-expert-negative} in Appendix for more rigorous discussions

\subsection{Learning Dynamics of MoE}

\subsubsection{Main Theorem}

On the contrary, the MoE successfully learns the teacher model defined in Assumption \ref{assumption:teacher models} by enabling the router to appropriately partition the data among the teacher models for each cluster. This is  stated formally in the following theorem. We further characterize the sample complexity of this learning process under Algorithm~\ref{alg:main}.

\begin{theorem}\label{thm:moe-main}
    Under Assumptions~\ref{assumption:teacher models}, ~\ref{assumption:task correlation}, and ~\ref{assumption:student activation functions}, set \rk{$J = O(\epsilon^{-1})$} as the number of neurons, $T_{1} = \tilde{\Theta}({d}^{k^*-1})$ as the number of training steps for \text{Phase I}, $T_{2} = 1$ as the number of training steps and $n=\tilde{\Theta}(d)$ as the batch size for \text{Phase II}, $T_{3} = \tilde{\Theta}({d}^{k^*-1} \vee d{\epsilon}^{-2} \vee {\epsilon}^{-3})$ as the number of training steps for \text{Phase III}, and $T_{4} = \tilde{\Theta}({\epsilon}^{-2})$ as the number of training steps for \text{Phase IV}. Then, under the suitable choices of ${\eta^t}$ and $\lambda$, with probability at least $0.99$ over the randomness of the dataset and initialization,
    \begin{align*}
        \mathbb{E}_{x_c} \big[\big| \hat{F}_M(x_c;\{\hat{a}_m\}_{m=1}^M) - f^*_c(x_c) - s_c g^*(x_c) \big|\big] \leq \epsilon.
    \end{align*}
\end{theorem}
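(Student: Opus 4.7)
The plan is to propagate a state-of-the-system invariant through the four phases of Algorithm~\ref{alg:main}, with each phase establishing exactly the precondition the next one needs. At the end of Phase~I each expert must have neurons weakly aligned with at least one local direction $w^*_c$ but not with $w^*_g$; at the end of Phase~II the router gates $\theta_m$ must separate the clusters $\{v_c\}$; at the end of Phase~III, having broken the cancellation $\sum_c s_c=0$ by restricting each expert to a subset of clusters, every activated expert must have simultaneously recovered its assigned $w^*_c$ and the global $w^*_g$; Phase~IV then reduces to ridge regression in a good feature basis. All probabilistic statements would be obtained by a union bound over a polynomial number of high-probability events.

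For Phase~I, since the correlation-loss dynamics of distinct experts are independent, I would reuse the spherical-flow analysis leading to \cref{lemma-main-one-expert-exist-t-delta}: after $T_1 = \tilde\Theta(d^{k^*-1})$ steps every neuron is either weakly aligned with some $w^*_c$ to order $d^{-1/4-1/(8k^*)}$ or stays at scale $\tilde O(d^{-1/2})$, and $w^*_g$ is never recovered. Using the randomness in $a_{m,j}$ and $b_{m,j}$, I would designate expert $m$ as \emph{professional} for cluster $c$ when some neuron of expert $m$ has a correct-sign alignment with $w^*_c$; a coupon-collector argument over the $MJ$ neurons then shows that every cluster has at least one professional expert while none is professional for every cluster. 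For Phase~II, the router gradient on $\theta_m$ driven by $y_c \nabla_{\theta_m} F_1$ decomposes cleanly because $x_c = \rho v_c + z$ with $v_c \perp w^*_i$: its expected component along the cluster-signal subspace is $\rho v_c$ weighted by the sign of the correlation between expert $m$'s output and $y_c$ on cluster $c$. After $T_2 = \Theta(d)$ noisy SGD steps with the perturbations $r_m^t$ of \citet{chen22}, standard concentration yields $\theta_m \propto \sum_{c \in \mathcal{P}_m} v_c$, where $\mathcal{P}_m$ collects the clusters for which expert $m$ is professional, so that on a sample from cluster $c$ the adaptive mask $\{m : h_m(x_c) \geq 0\}$ selects exactly the professional experts for $c$.

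For Phase~III the router is frozen, so after re-initialization each expert $m$ under $\hat F_M$ is trained only on clusters in $\mathcal{P}_m$. The critical point is that the cancellation blocking $w^*_g$ in Theorem~\ref{theorem-main-vanilla} now becomes $\sum_{c \in \mathcal{P}_m} s_c$, which is generically nonzero when $\mathcal{P}_m \subsetneq [C]$; reapplying the information-exponent analysis with this residual coefficient yields, after $T_3 = \tilde\Theta(d^{k^*-1} \vee d\epsilon^{-2} \vee \epsilon^{-3})$ steps, weak recovery of every $w^*_c$ with $c \in \mathcal{P}_m$ together with $w^*_g$. The $\epsilon$-dependent summands arise from strengthening weak alignment into an $\Omega(1)$-scale alignment on an $\Omega(1/J)$ fraction of the $J = O(\epsilon^{-1})$ neurons, which is what Phase~IV needs. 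Phase~IV is then a convex ridge regression on the second layer: conditional on the $\hat w_j$ pointing along $\{w^*_c, w^*_g\}$, the target on cluster $c$ is a bounded-degree univariate polynomial in $(w^*_c, w^*_g)^\top x$, which by Assumption~\ref{assumption:student activation functions} admits an $\epsilon$-accurate approximation as a ridge combination of the $\sigma_m(\hat w_j^\top x + b_{m,j})$ with bounded coefficient norm; a Rademacher-complexity bound following \citet{oko24learning} then delivers the $\epsilon$-error guarantee with $T_4 = \tilde\Theta(\epsilon^{-2})$ fresh samples.

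The main obstacle is Phase~II. Unlike the binary-classification settings of \citet{chen22, chowdhury23}, the continuous regression targets inject noise into the router gradient at a scale comparable to the cluster-separating signal, and the softmax-weighted top-$1$ output $F_1$ couples the updates of $\theta_m$ across experts through the partition function. Extracting the $\rho v_c$ component cleanly requires exploiting both the weak recovery from Phase~I and the orthogonality $v_c \perp w^*_i$, and is what forces the change of routing rule from $F_1$ (differentiable, necessary to track $\theta$-gradients) in Phases~I--II to the adaptive top-$k$ router $\hat F_M$ (necessary to prevent collapse among the several professional experts that share a cluster) in Phases~III--IV.
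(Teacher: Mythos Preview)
Your high-level phase structure matches the paper, but the mechanism you propose for Phases~I--III has a genuine gap.

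The paper does not define professional experts via sign conditions on $(a_{m,j}, b_{m,j})$. Instead (Definition~\ref{def:professionalexperts}), $c^*_m$ is the second coordinate of $\mathrm{argmax}_{j,c}\, {w_{m,j}^0}^\top w^*_c$, so each expert is professional for \emph{exactly one} cluster and $\{\mathcal{M}_c\}_c$ partitions $[M]$. The Phase~I result the paper needs (\cref{lemma-main-moe-weak-recovery}, formally Lemma~\ref{lemma:phase1-main}) is a winner-take-all statement: the single neuron $j^*_m$ reaches \emph{constant} alignment $\kappa^{t_1}_{c^*_m,m,j^*_m} \geq a_2 = \tilde\Omega(1)$, while every other alignment---other neurons, other directions $w^*_{c'}$, and $w^*_g$---stays at $\tilde O(d^{-1/2})$. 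This is driven by the constant-factor initialization gap of Corollary~\ref{corollary:phase1-init}, which the $(\cdot)^{k^*-1}$ nonlinearity amplifies into an order-of-magnitude separation by time $T_1$. The lemma you reuse, \cref{lemma-main-one-expert-exist-t-delta}, is part of the \emph{negative} result for the vanilla network: it gives only $d^{-1/4-1/(8k^*)}$-scale alignment, does not single out one neuron per expert, and allows each neuron to align with any $c$ in its sign-set $\mathcal{C}_j$, so it cannot furnish the per-expert specialization Phase~II relies on.

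This propagates. In Phase~II the population router gradient along $v_c$ for expert $m$ scales like $\sum_j (\kappa_{c,m,j})^{k^*}$; the paper's $\tilde\Omega(1)$-vs-$\tilde O(d^{-k^*/2})$ dichotomy is precisely what makes the $v_{c^*_m}$-signal dominate and, together with $\sum_m\theta_m=0$ (Lemma~\ref{lemma:phase2-thetazero}), yields $h_m(x_c)<0<\max_{m'} h_{m'}(x_c)$ for every $m\notin\mathcal{M}_c$. Under your sign-based scheme an expert will typically have neurons aligned with several different $w^*_c$'s (all $c$ sharing the sign of $\beta_{c,k^*}$ that matches $\alpha_{m,j,k^*}$), so your $\mathcal{P}_m$ is not a singleton and $\theta_m$ picks up several $v_c$-components. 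Then in Phase~III expert $m$ is routed a mixture of clusters, the effective coefficient of $g^*$ is $\sum_{c\in\mathcal{P}_m} s_c$, and your ``generically nonzero'' claim is both unproven and can fail on instances the theorem must cover: nothing in Assumption~\ref{assumption:teacher models} prevents $\mathcal{P}_m\supseteq\{1,2\}$ with $s_1+s_2=0$ (cf.\ Example~\ref{ex:limit_nn}), in which case the cancellation you broke in Phase~II reappears. The paper avoids this entirely because each expert is tied to a single cluster $c^*_m$, sees only $s_{c^*_m}=\Theta(1)$ after re-initialization, and Phase~III reduces to the two-direction additive-model recovery of \citet{oko24learning} with no cancellation to worry about.
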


We considered the case where each cluster possesses its own single-index model while collectively sharing a global single-index model across all clusters. This global task induces interference, which attenuates the signal of the shared model. This setting is potentially difficult for a vanilla neural network to learn as shown in Subsection~\ref{subsection:vanilann}. The total sample complexity is $\tilde{O}(d^{k^*-1})$ and the time complexity is polynomial in $d$. This complexity is the same as learning single-index model by a vanilla neural network~\citep{arous21} while kernel ridge regression requires $\tilde{O}(d^{p^*})$~\citep{ghorbani21,donhauser21} \rk{with respect to $d$.} After the weak recovery of Phase I, the router successfully divides the clusters and enables the expert to learn their target functions.

\paragraph{Experiments}
To illustrate the dynamics of the MoE following Algorithm~\ref{alg:main}, we focus on a synthetic problem where $C=2$ in the problem setting of Assumption~\ref{assumption:teacher models}. We define $f^*_1 = \hermite_3+\hermite_5$ and $f^*_2 = \hermite_3+\hermite_4$ for the local tasks, and $g^* = \hermite_3$ for the global task. The vectors $w^*_1, w^*_2$ and $w^*_g$ were of dimension 200, generated randomly and applied Gram-Schmidt orthogonalization to satisfy Assumption~\ref{assumption:task correlation}. As for the student model, the number of experts was set to 8, and the hidden dimension of each expert to 500. The learning rate was set to 1 for all optimization schemes, and $T_1=3.5\times 10^6$, $T_2=300$, $T_3=10^7$. 

The alignments of the experts and router at the end of Phase I, II and III are shown in Figures~\ref{fig:synthetic_exp} and~\ref{fig:synthetic_exp2}. As we can observe, in Phase I, differences among experts arise due to initialization, resulting in variations in the degree of weak recovery for local tasks. In Phase II, the router leverages these differences in recovery, which are reflected in the gradients, as a signal to learn to dispatch the data from each cluster to the corresponding expert. In Phase III, once the router has learned to appropriately allocate the data, each expert can effectively learn both its assigned local task and the global task without signal interference across clusters.
\begin{figure}
    \centering
        \begin{subfigure}[b]{0.20\textwidth}
            \centering
            \includegraphics[width=25 mm]{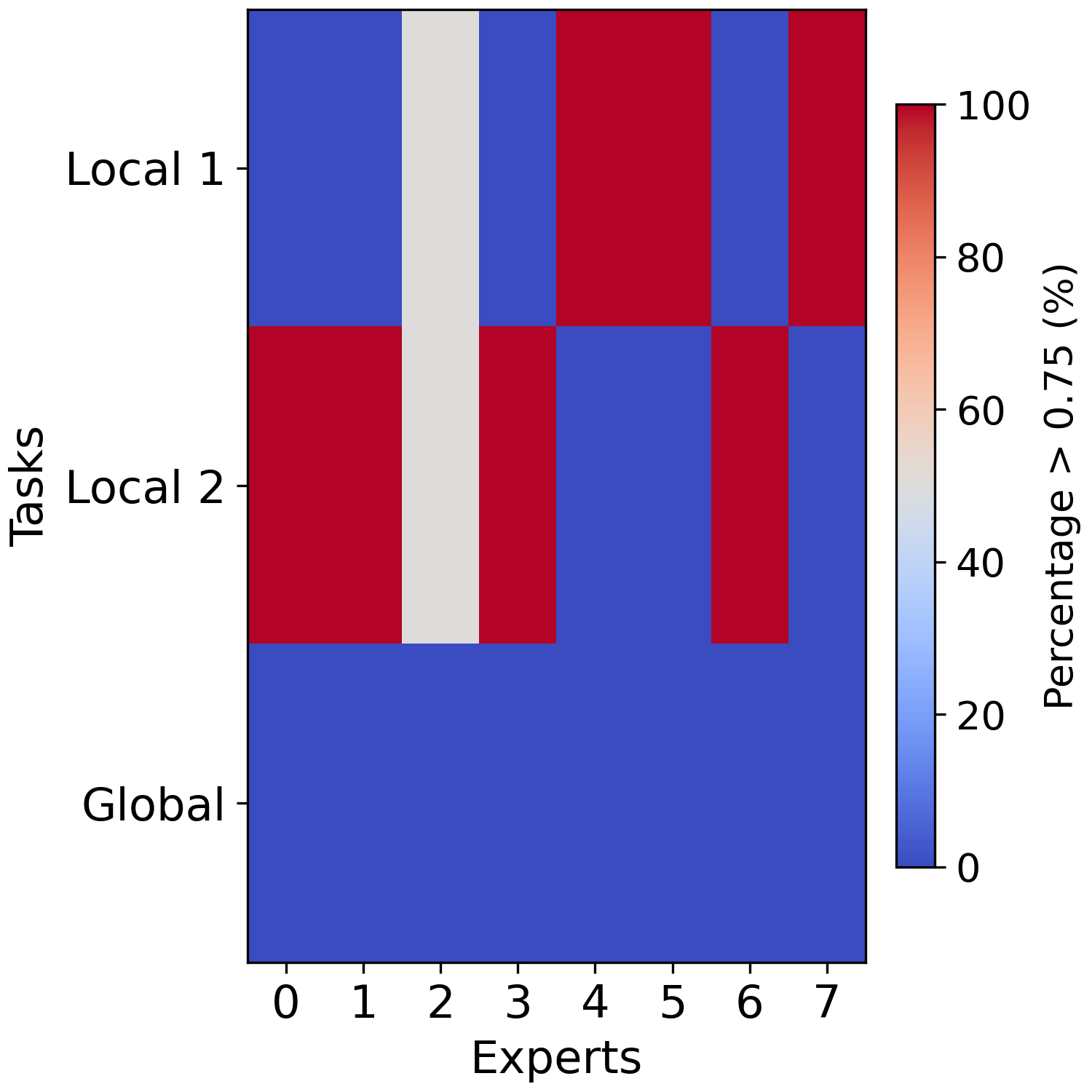}
            \caption{Phase I}
            \label{fig:syn_subfig1}
        \end{subfigure}
        \begin{subfigure}[b]{0.20\textwidth}
            \centering
            \includegraphics[width=25 mm]{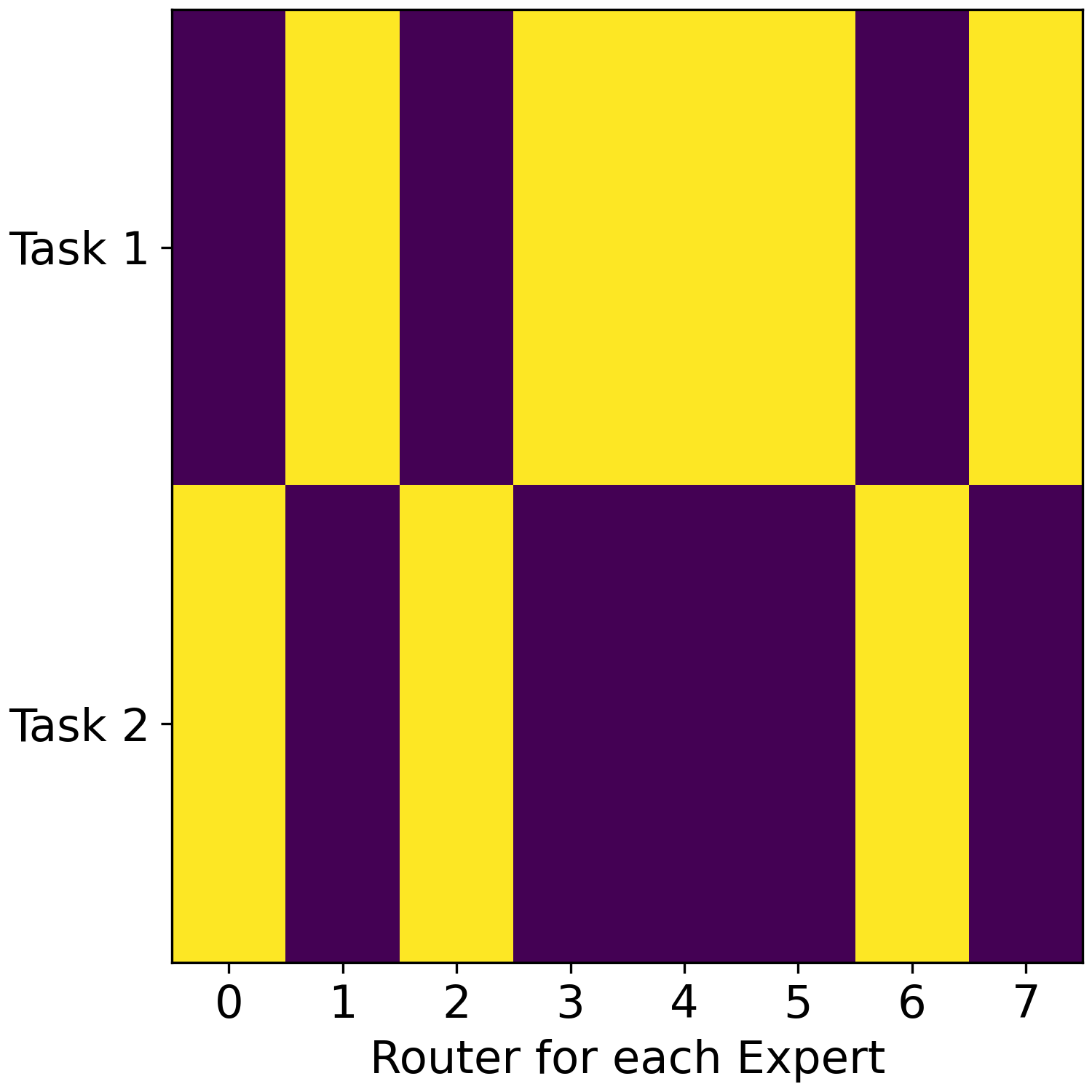}
            \caption{Phase II}
            \label{fig:syn_subfig1}
        \end{subfigure}
    \caption{The alignment of the experts after Phase I (a) and router after Phase II (b) with the respective feature vectors of each task. In Figure (a), the alignment of $w_{m,j}$ and $w^*_c\ (c=1,\ldots,C)$ or $w^*_g$ (vertical axis) is computed, and for each expert, the distribution of the number of $w_{m,j}$ with larger alignment than $\max_{j,c}w_{m,j}^\top w^*_c$ is reported. In Figure (b), we visualize for each router $h_m$ the task with the best alignment in yellow.}
    \label{fig:synthetic_exp}
\end{figure}
\begin{figure}
    \centering
    \includegraphics[width=80 mm]{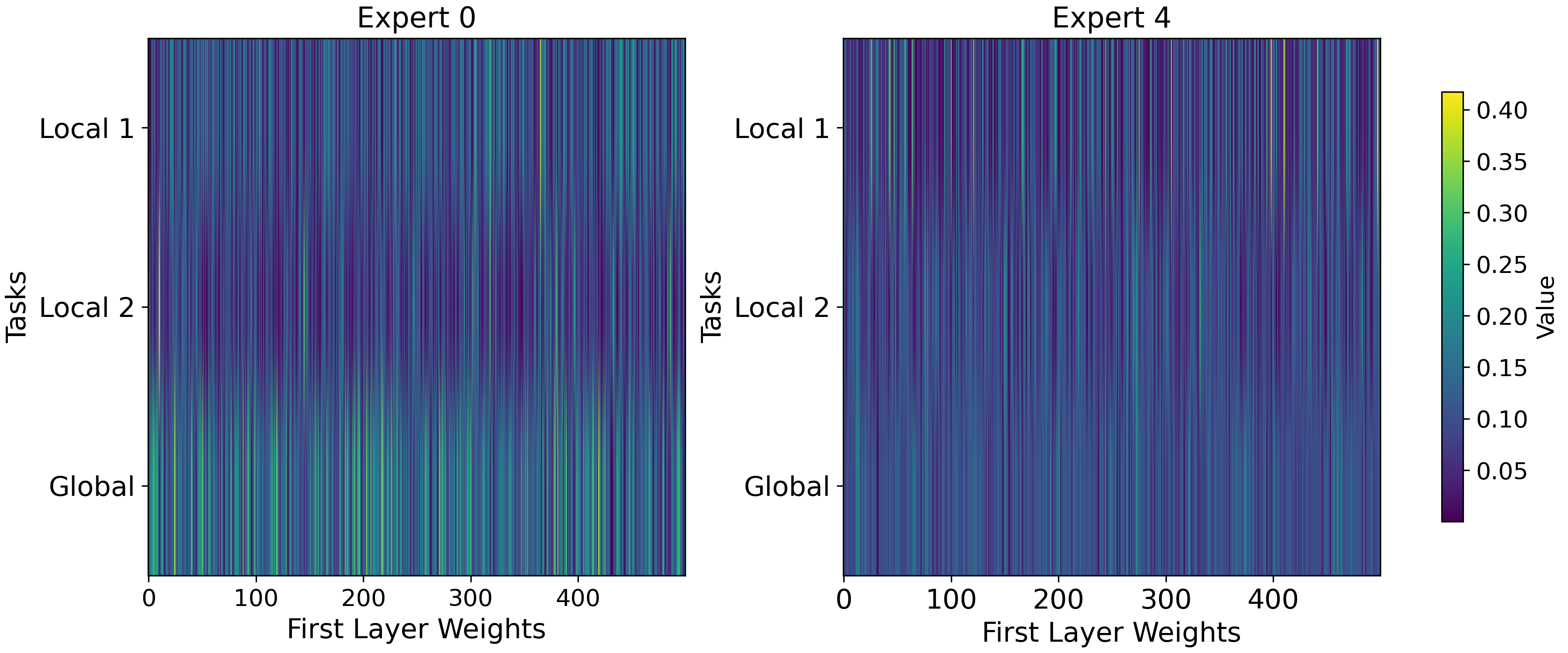}
    \caption{The alignment of two experts after Phase III. The alignment of $w_{m,j}$ (horizontal axis) and $w^*_c\ (c=1,\ldots,C)$ (vertical axis) is computed. The last row is the alignment between $w_{m,j}$ and $w^*_g$.}
    \label{fig:synthetic_exp2}
\end{figure}
\subsubsection{Proof Sketch}

In this section, we will provide an overview of how the MoE can detect and learn the latent cluster structure using population gradient flow, and how our intuition can be extended to the SGD. We proceed in five steps: initialization, Phase I, Phase II, Phase III, and Phase IV. Please refer to \cref{appendix-section-moe} for further details in empirical and discretized dynamics.

\paragraph{Initialization.}

At the initial state, experts are divided based on the task of the cluster with which they exhibit the highest alignment.
We define (for each task) an expert that will eventually specialize in that task as follows:

\begin{definition}[The set of the professional experts for class $c$]\label{def:professionalexperts}
    \begin{equation}
        (j_m^*, c_m^*) \coloneq \mathrm{argmax}_{j,c} {w_{m,j}^0}^\top w_{c}^*,
    \end{equation}
    \begin{equation}
        \mathcal{M}_c \coloneq \lbrace m \mid c = c_m^* \rbrace.
    \end{equation}
\end{definition}

\rk{${w^t_{m,j_m^*}}^\top w_{c_m^*}^*$} has a larger value by a constant factor with probability at least $0.999$, which divides the experts into the exclusive subsets that are specialized to each cluster $c$:
\rk{
\begin{lemma}[Following \citet{chen22,oko24learning}]
    \label{lemma-main-init}
    If $M \gtrsim C \log C$, it holds that
    \begin{equation}
        \Prob[|\mathcal{M}_c| \geq 1,\; \forall c] \geq 0.999.
    \end{equation}
    For all $m$, if $\sqrt{\log d} \gtrsim J \gtrsim C^{-1}\log M$,
    there are one neuron $w_{mj_m^*}$
    \begin{equation}
        {w_{mj}}^\top w_{c_m^*} \gtrsim \underset{c'\neq c_m^* \,\mathrm{or}\, j'\neq j_m^*}{\max}|{w_{mj'}}^\top w_{c'}| + \tilde{\Omega}(d^{-1/2}),
    \end{equation}
    with probability at least $0.999$.
\end{lemma}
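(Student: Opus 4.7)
The plan is to establish the two claims of the lemma separately, leveraging the spherical symmetry of the uniform initialization and the approximate orthogonality of the feature vectors $\{w_c^*\}_c$ provided by Assumption~\ref{assumption:task correlation}.

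For the first claim (every cluster has at least one specialized expert), I would argue that the random assignment $m \mapsto c_m^*$ behaves essentially like $M$ independent uniform draws on $[C]$. Indeed, for each expert $m$, the joint law of the family $\{{w_{m,j}^0}^\top w_c^*\}_{j,c}$ is, to leading order, invariant under permutations of the cluster index $c$, because the $w_c^*$ are approximately orthonormal and each $w_{m,j}^0$ is spherically distributed; hence $\Prob[c_m^* = c] = (1 \pm o(1))/C$ for every $c$, and the labels $(c_m^*)_{m=1}^M$ are mutually independent by the independence of the expert initializations. A union bound over the $C$ clusters then gives
\begin{equation}
    \Prob\bigl[\exists c : |\mathcal{M}_c| = 0\bigr] \leq C\left(1 - \tfrac{1}{C}\right)^M \leq C e^{-M/C} \leq 0.001,
\end{equation}
whenever $M \gtrsim C \log C$, which is the desired bound.

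For the second claim (strict gap at the argmax), I would, for each fixed $m$, analyze the joint distribution of the $JC$ inner products $X_{j,c} \coloneq {w_{m,j}^0}^\top w_c^*$. Up to covariance perturbations of order $\tilde O(d^{-1})$ coming from the $\tilde O(d^{-1/2})$ cross-correlations of the $w_c^*$, these form a centered Gaussian vector with covariance $d^{-1} I_{JC}$, so the largest entry is $\Theta(\sqrt{\log(JC)/d})$. By Gaussian anti-concentration, any two such coordinates satisfy $\Prob[|X_{j,c} - X_{j',c'}| \leq \delta] \leq O(\delta \sqrt{d})$ for $\delta \ll d^{-1/2}$. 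A union bound over the $\binom{JC}{2}$ pairs inside a single expert and then over the $M$ experts yields that the minimum pairwise gap inside every expert is at least $\delta$ simultaneously, with probability at least $1 - O(J^2 C^2 M \delta \sqrt{d})$. The hypotheses $\sqrt{\log d} \gtrsim J \gtrsim C^{-1}\log M$ keep $J^2 C^2 M = e^{O(\sqrt{\log d})}$ sub-polynomial in $d$, so choosing $\delta = \tilde\Theta(d^{-1/2})$ makes this probability at most $0.001$, and the argmax then beats its nearest competitor by $\tilde\Omega(d^{-1/2})$.

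The main obstacle is the careful bookkeeping of two non-idealities. First, the $w_c^*$ are only approximately orthogonal, so the Gaussian approximation for the vector $(X_{j,c})_{j,c}$ is not exactly isotropic; I would pass from its exact law (a scaled projection of a uniform vector on $\mathbb{S}^{d-1}$) to a truly isotropic Gaussian via a standard comparison argument and check that the $\tilde O(d^{-1/2})$ off-diagonal correlations perturb pairwise differences only by a multiplicative $(1+o(1))$ factor, which is absorbed into $\tilde\Omega(d^{-1/2})$. Second, the argmax in Definition~\ref{def:professionalexperts} is over signed inner products whereas the conclusion compares against absolute values; this mismatch is reconciled by absorbing the random sign $a_{m,j} \sim \mathrm{Unif}\{\pm 1\}$ into the effective alignment $a_{m,j} {w_{m,j}^0}^\top w_c^*$, which by sign symmetry is distributed as $|{w_{m,j}^0}^\top w_c^*|$ and brings the problem back to the same anti-concentration analysis on an enlarged collection of $2JC$ signed variables.
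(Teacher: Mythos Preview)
Your coupon-collector argument for $|\mathcal{M}_c|\geq 1$ is exactly the paper's. For the gap, your pairwise anti-concentration is a valid and more self-contained alternative to the paper's route, which instead (i) orthogonalizes the $\{w_c^*\}$ to obtain genuinely i.i.d.\ $\bar\kappa_{m,j,c}\sim\mathcal{N}(0,d^{-1})$, (ii) uses the hypothesis $J \gtrsim C^{-1}\log M$---which your argument never invokes---to force $\max_{j,c}\bar\kappa_{m,j,c}\geq 2d^{-1/2}$, and then (iii) imports from \citet{chen22} a multiplicative separation $(1-\epsilon)\cdot\max \geq \text{second-max-abs}$, which combined with the lower bound in (ii) yields the additive $\tilde\Omega(d^{-1/2})$ gap.

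Your resolution of the signed-versus-absolute mismatch is wrong, however. The second-layer signs $a_{m,j}$ do not enter Definition~\ref{def:professionalexperts} at all, and more basically, since $X_{j,c}={w_{m,j}^0}^\top w_c^*$ is already symmetric about zero, $a_{m,j}X_{j,c}\stackrel{d}{=}X_{j,c}$, \emph{not} $|X_{j,c}|$; multiplying by an independent Rademacher sign cannot fold a symmetric variable onto its absolute value. Anti-concentration on the $2JC$ values $\{\pm X_{j,c}\}$ controls the gap between the two largest \emph{absolute} values but says nothing about whether the entry of largest magnitude is positive, which the literal statement (signed max on the left, absolute values on the right) requires---and that parity event has probability $1/2$ per expert, incompatible with $0.999$. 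The paper does not spell this out either, deferring to \citet{chen22}; since downstream only the even power $(\kappa)^{k^*-2}$ is used (Corollary~\ref{corollary:phase1-init}), the intended reading is effectively $\mathrm{argmax}_{j,c}|{w_{m,j}}^\top w_c^*|$, under which your anti-concentration argument already goes through and no appeal to $a_{m,j}$ is needed.
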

}
At the initialization, the inner products only differ by a constant. 
However, when two sequences have initial values that differ by a constant factor, this can cause differences in their growth rates, ultimately placing them in different asymptotic orders. Such a technique has been employed in various contexts~\citep{arous22,chen22,oko24learning}. See Appendix~\ref{subsection:initialization} for details.

\paragraph{Phase I (Exploration Stage).}

From this phase, we will take for granted that the conditions of \cref{lemma-main-init} are satisfied and will use the term \emph{with high probability} withing this scenario (i.e., conditional probability).
In the exploration stage, one of the neurons in each cluster achieves faster weak recovery for its assigned cluster compared to other neurons, due to the alignment differences introduced during initialization.
Now, for $m \in \expertsforc$, each ${w_{m,j}^t}^\top w_{c}^*$ follows a gradient flow as
\begin{equation}
    \frac{\mathrm{d}}{\mathrm{d}t}|{w_{m,j}^t}^\top w_{c}^*| \simeq \eta |{w_{m,j}^t}^\top w_{c}^*|^{k^*-1}.
\end{equation}
Then we have the following result:
\begin{lemma}[\rk{Informal}]
    \label{lemma-main-moe-weak-recovery}
    For all $m\in\mathcal{M}_c$, there exists some time $t_1 \leq T_1 = \tilde{O}(d^{k^*-1})$ such that
    \beginofrk
    \begin{enumerate}
        \item $|{w_{m,j_m^*}^{t_1}}^\top w_{c}^*| = \tilde{\Omega}(1)$,
        \item $|{w_{m,j'}^{t_1}}^\top w_{c'}^*| = \tilde{O}(d^{-1/2})$ for all $(c',j')\neq (c_m^*,j_m^*)$,
        \item $|{w_{m,j}^{t_1}}^\top w_{g}^*| = \tilde{O}(d^{-1/2}) $ for all $j$. 
    \end{enumerate}
    \owarirk
\end{lemma}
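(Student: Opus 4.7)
The plan is to track the coupled ODE system for the alignments $u_{m,j,c}^t \coloneq \langle w_{m,j}^t, w_c^* \rangle$ and $v_{m,j}^t \coloneq \langle w_{m,j}^t, w_g^* \rangle$ under the population spherical gradient flow, and then to pass to SGD by a Freedman-type martingale argument. Since in Phase I the router is at initialization (zero gating weights plus the $\mathrm{Unif}[0,1]$ noise), each expert receives approximately a $1/M$ fraction of data from each cluster, so the population spherical gradient for $w_{m,j}$ reduces to $\frac{1}{CM}\sum_c \mathbb{E}_{x_c}[y_c \nabla^S_{w_{m,j}} f_m(x_c;W_m)]$ up to lower-order terms from the routing randomness.

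First I would expand this expectation via Hermite orthogonality. For the local alignment $u_{m,j,c}^t$, the leading contribution from cluster-$c$ data is $\tilde\Theta(\alpha_{m,j,k^*,c}\beta_{c,k^*})\,(u_{m,j,c}^t)^{k^*-1}$, while contributions from cluster $c'\neq c$ couple through $\langle w_c^*, w_{c'}^*\rangle = \tilde O(d^{-1/2})$ by \cref{assumption:task correlation} and are subleading. For the global alignment $v_{m,j}^t$, the cluster contributions aggregate as $\gamma_{k^*}(v_{m,j}^t)^{k^*-1}\cdot \tfrac{1}{C}\sum_c s_c\,\alpha_{m,j,k^*,c}$; by \cref{lemma-main-one-expert-coefficient} the coefficients $\alpha_{m,j,k^*,c}$ agree across $c$ up to $\tilde O(d^{-1/2})$, so together with $\sum_c s_c = 0$ the leading term cancels --- exactly the mechanism exploited in \cref{subsection:vanilann}. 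This promotes the effective information exponent of $v_{m,j}^t$ to $k^*+1$ and locks $v_{m,j}^t = \tilde O(d^{-1/2})$ throughout Phase I.

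Next I would exploit the constant-factor head start from \cref{lemma-main-init}: by the definition of $\mathcal{M}_c$ and $j_m^*$, the initial alignment $|u_{m,j_m^*,c_m^*}^0|$ exceeds $|u_{m,j',c'}^0|$ by an additive $\tilde\Omega(d^{-1/2})$ (hence by a multiplicative constant factor) for every other $(j',c')$. Plugging into the ODE $\dot u \simeq \eta u^{k^*-1}$ and integrating $\int u^{-(k^*-1)}\,du$ yields an escape time of order $u_0^{-(k^*-2)}$, so the constant head start is amplified into a genuine polynomial separation: the professional neuron reaches $|u_{m,j_m^*,c_m^*}^{t_1}| = \tilde\Omega(1)$ by some $t_1 = \tilde O(d^{(k^*-2)/2})$, while every other alignment grows by at most a sub-polynomial factor and stays $\tilde O(d^{-1/2})$. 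The passage from gradient flow to SGD along the lines of \citet{arous21} absorbs the gradient noise via a Freedman-type bound, at the price of the sample-complexity overhead that yields the stated budget $T_1 = \tilde O(d^{k^*-1})$.

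The main obstacle will be twofold. First, once $u_{m,j_m^*,c_m^*}^t$ grows beyond roughly $d^{-1/4}$ the linearization $\dot u \simeq \eta u^{k^*-1}$ becomes inaccurate and higher-order Hermite tails start to contribute; a priori the cancellation in the $v$-equation could be destroyed by a drifting $\sum_c s_c\,\alpha_{m,j,k^*,c}$. This is controlled by the sign-invariance clause of \cref{assumption:student activation functions}, which freezes the signs of the Hermite coefficients during training and prevents any sign-flip that would break the cancellation. Second, one has to rule out that any non-professional alignment $u_{m,j',c'}^t$ catches up through cross-cluster interference; here the multiplicative $\tilde O(d^{-1/2})$ factor from \cref{assumption:task correlation} propagates through every cross term and keeps the interference strictly below the separation gap. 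The remaining SGD-to-flow passage is a routine application of martingale concentration given a sufficiently small step size.
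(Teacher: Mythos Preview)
Your plan matches the paper's strategy: Hermite expansion gives a drift $\sim (\kappa_{c,m,j})^{k^*-1}$ for each local alignment, the constant-factor head start from \cref{lemma-main-init} is amplified into a polynomial gap via the Bihari--LaSalle escape-time scaling, the $\sum_c s_c=0$ cancellation kills the leading drift in the $w_g^*$-direction, and a martingale bound transfers the flow analysis to SGD at the $T_1=\tilde O(d^{k^*-1})$ budget. The paper executes this with discrete auxiliary comparison sequences $(P^s_{\mathrm I},Q^s_{\mathrm I})$ and an inductive argument (their Lemmas~C.10--C.11) rather than flow-then-discretize, but the content is the same.

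One point in your ``main obstacle'' paragraph is imprecise. You invoke the sign-invariance clause of \cref{assumption:student activation functions} to control the drift of $\sum_c s_c\,\alpha_{m,j,k^*,c}$. Sign invariance alone does not preserve the cancellation: what you need is that the \emph{magnitudes} $\alpha_{m,j,k^*,c}$ across different $c$ stay within $\tilde O(d^{-1/2})$ of each other. The paper obtains this not from sign invariance but by tracking $|w_{m,j}^t\cdot v_c|$ and showing it remains $\tilde O(d^{-1/2})$ throughout Phase~I---the $v_c$-component of the population gradient scales like $\rho(\kappa_{c,m,j}^t)^{k^*}$, which is $\tilde O(d^{-k^*/2})$ as long as the alignments are small, and summing over $t\le T_1$ keeps the total displacement at $\tilde O(d^{-1/2})$ (their Lemma~C.7). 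This in turn pins the shifted Hermite coefficients $\tilde\alpha_{m,j,k^*,c}$ close to the unshifted $\alpha_{m,j,k^*}$, preserving the cancellation. The sign-invariance assumption is used for a different purpose (to guarantee that higher-order Hermite contributions $i>k^*$ help rather than hurt the growth of the professional alignment), not to protect the cancellation.
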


\cref{lemma-main-moe-weak-recovery} shows that the expert $m\in\mathcal{M}_c$ weakly specialize to the cluster $c$, \gk{enabling the router to identify experts via weak recovery. This result highlights that, in order for the router to effectively distinguish among experts, a weak recovery of the feature index is required. This, in turn, implies that a sample complexity of $\tilde{\Theta}(d^{k^*-1})$ may be required during the exploration phase, implying that a sufficiently long exploration stage is warranted before the router can engage in meaningful learning. This contrasts with the linear expert setting of \citet{li24} and the classification framework in \citet{chen22,chowdhury23}, as this finding is rooted in non-convex optimization in linear regression.}
To prove \cref{lemma-main-moe-weak-recovery}, we leverage the information exponent of the teacher models instead of using the cubic activation in \citet{chen22}. Compared to the results for additive models in \citet{oko24learning}, we evaluated the growth of ${w_{m,j}^t}^\top w_{c}^*$ for all $j$.

\paragraph{Phase II (Router Learning Stage).}

Here, we discuss how the router extracts the cluster mean vector $v_c$ corresponding the cluster $c$ from the weak recovery of the experts. 
We show that the parameters $\theta_m$, for some $m \in M_c$, become positively correlated with $v_c$, while, on the other hand, $\theta_{m'}$, for all $m' \notin M_c$, become negatively correlated with it. \gk{This is enabled by the fact that the gradients of the gating network encode informative signals elicited by the weak recovery of the experts.}

\begin{lemma}
    For all $c$, $m^*_c \coloneq \mathrm{argmax}_m h_m(v_c) \in\mathcal{M}_c$ and $m'\notin \mathcal{M}_c$,
    \begin{equation}
        {\theta_{m'}^{T_1+T_2}}^\top v_c \leq -\tilde{\Omega}(1) \leq 0  \leq \tilde{\Omega}(1) \leq {\theta_{m^*_c}^{T_1+T_2}}^\top v_c.
    \end{equation}    
\end{lemma}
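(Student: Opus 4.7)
The plan is to compute the population gradient of $\theta_m$ under Phase II's correlation loss and track the evolution of $\theta_m^\top v_c$ via a gradient-flow argument, then deduce the claimed sign pattern from the weak recovery established in Phase I. Assumption~\ref{assumption:task correlation} is pivotal here: since ${v^*_c}^\top {w^*_{i}} = 0$ for every cluster-task pair, writing $x_c = \rho v_c + z$ with $z \sim \mathcal{N}(0, I_d)$ and then projecting the gradient onto $v_c$ isolates the cluster-mean factor $\rho$ while decoupling from the learned feature directions.

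First, since $\theta_m$ enters $F_1(x) = \pi_{m(x)}(x) f_{m(x)}(x; W_{m(x)})$ only through the softmax weight $\pi_{m(x)}$, and the noisy top-1 index $m(x_c)$ is treated as a discrete variable determined by the external noise $\{r^t_m\}$, we obtain
\[
\nabla_{\theta_m} F_1(x_c) = \pi_{m(x_c)}(x_c)\bigl(\mathbbm{1}[m(x_c)=m] - \pi_m(x_c)\bigr) x_c\, f_{m(x_c)}(x_c).
\]
Taking inner product with $v_c$ and using $v_c^\top x_c = \rho + v_c^\top z$, the leading contribution to the update of $\theta_m^\top v_c$ is proportional to $\rho \cdot \mathbb{E}[y_c\, \pi_{m(x_c)}(\mathbbm{1}[m(x_c)=m] - \pi_m(x_c)) f_{m(x_c)}(x_c)]$, which splits into a \emph{selection} term favoring $m$ when its output correlates with $y_c$ and a \emph{competition} term subtracting the average correlation among experts. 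By \cref{lemma-main-moe-weak-recovery}, for $m \in \mathcal{M}_c$ the $j_m^*$-th neuron gives $\mathbb{E}[y_c f_m(x_c)] \geq \tilde{\Omega}(1)$ via the $k^*$-th Hermite term of $f^*_c$, whereas for $m \notin \mathcal{M}_c$ this correlation is smaller by a constant factor. The injected noise $r^t_m$ symmetrizes the selection so that $\pi_m \approx 1/M$ at initialization, so the selection term dominates with positive sign for at least one $m \in \mathcal{M}_c$, while the competition term (which averages in good experts) yields a strictly negative net drift for every $m' \notin \mathcal{M}_c$. Contributions from clusters $c' \neq c$ are of lower order because $\sum_c s_c = 0$, the feature indices are nearly orthogonal by Assumption~\ref{assumption:task correlation}, and the $v_{c'}$'s are generically nearly orthogonal to $v_c$. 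Integrating this $\tilde{\Theta}(\rho/M^2)$ drift over $T_2 = \Theta(d)$ steps with the prescribed learning rate reaches $|\theta_m^{T_2\top} v_c| = \tilde{\Omega}(1)$ with the claimed signs, and since every $m' \notin \mathcal{M}_c$ ends up with negative correlation, $\mathrm{argmax}_m h_m(v_c)$ must lie in $\mathcal{M}_c$.

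The main obstacle is the nonlinear softmax coupling: as some $\theta_m^\top v_c$ for $m \in \mathcal{M}_c$ grows, the routing probabilities $\pi_m(x)$ shift and could in principle flip the sign of the drift for other experts or collapse all mass onto a single expert in $\mathcal{M}_c$. To overcome this, the proof needs an inductive invariant showing that the sign structure identified at initialization persists throughout Phase II, leveraging (i) the uniform noise $r^t_m$ that stabilizes the top-1 selection and keeps $\pi_m$ away from $0$ and $1$, (ii) the short horizon $T_2 = \Theta(d)$ that bounds the total distortion of the softmax, and (iii) the fact that the expert first-layer weights are frozen in Phase II, so $\mathbb{E}[y_c f_m(x_c)]$ is a fixed quantity determined by Phase I. Finally, the transfer from population gradient flow to SGD follows from a standard martingale-concentration argument over the $T_2$ steps, since the stochastic gradient is bounded in the $v_c$ directions with high probability by a polylogarithmic factor.
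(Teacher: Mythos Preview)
Your gradient computation and the selection/competition decomposition are correct, and the route to the \emph{negative} bound $\theta_{m'}^{T_2\top} v_c \leq -\tilde{\Omega}(1)$ for $m' \notin \mathcal{M}_c$ matches the paper: the dominant contribution to $v_c^\top \nabla_{\theta_m}\mathbb{E}[\mathcal L]$ from cluster-$c$ data is the term $\rho\,\tilde{\alpha}_{m(x_c),j,k^*,c}\beta_{c,k^*}(\kappa_{c,m(x_c),j})^{k^*}$, which is $\tilde{\Omega}(1)$ when the routed expert lies in $\mathcal{M}_c$ (Phase~I weak recovery) and $\tilde{O}(d^{-k^*/2})$ otherwise, so the competition term drives $\iota_{c,m'}$ down for every $m'\notin\mathcal{M}_c$.

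The gap is in how you obtain the \emph{positive} bound $\theta_{m^*_c}^{T_2\top} v_c \geq \tilde{\Omega}(1)$. You assert that ``the selection term dominates with positive sign for at least one $m \in \mathcal{M}_c$'', but this is not established: when $|\mathcal{M}_c|\geq 2$, the competition term for any $m\in\mathcal{M}_c$ also contains the large correlations of the \emph{other} professional experts, and whether the net drift is positive for a given $m$ depends on whether its correlation exceeds the weighted average over $\mathcal{M}_c$, which you never control. The paper sidesteps this entirely via the softmax identity $\sum_m \theta_m^t = 0$, preserved for all $t$ because $\sum_m \nabla_{\theta_m}\mathcal{L}=0$. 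From the negative bound one then gets
\[
\sum_{m\in\mathcal{M}_c}\iota_{c,m}^{T_2}=-\sum_{m'\notin\mathcal{M}_c}\iota_{c,m'}^{T_2}\;\geq\;(M-|\mathcal{M}_c|)\cdot\tilde{\Omega}(1),
\]
so $\max_m \iota_{c,m}^{T_2}\geq \tilde{\Omega}(1)$ and the argmax must lie in $\mathcal{M}_c$. This is the missing ingredient in your sketch.

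Two smaller corrections. First, your invocation of $\sum_c s_c=0$ for suppressing cross-cluster contributions is misplaced: that condition concerns the global task $g^*$ and plays no role in the router gradient here; what makes the ``bad'' terms small is the Phase~I bound $\kappa_{c,m',j}=\tilde{O}(d^{-1/2})$ for $m'\notin\mathcal{M}_c$. Second, the paper does not assume near-orthogonality of the $v_c$'s, so you should not lean on that. The softmax-stability invariant the paper actually maintains is $|h_m(x_c)|\leq 1$ throughout Phase~II, obtained by bounding $\|\theta_m^t\|_2\lesssim \eta_r(T_2+\sqrt{T_2 d})=\tilde{O}(1)$ and $\rho|\iota_{c,m}^t|\leq \tfrac12$; this yields $\pi_m\geq \Omega(1/M)$ and, together with the routing-smoothness lemma for the uniform noise $r_m$, $\mathbb{P}(m(x_c)\in\mathcal{M}_c)\geq \Omega(1/M)$, which is what makes the dominant term in (B) uniformly large.
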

\begin{proof} (Sketch).
    Take $m\notin\mathcal{M}_c$. The population gradient for the gating network of the router is evaluated as
    \begin{align}
        &- v_c^\top \nabla_{\theta_m}\E[\mathcal{L}]\\
        \simeq& - \tilde{\Omega}\Big(\sum_{m''\in\mathcal{M}_c, \forall j}\underbrace{|{w_{m'',j}^t}^\top w_{c}^*|^{k^*}}_{=\tilde{\Omega}(1), \; \text{due to weak recovery}}\Big) \\
        &+ \tilde{O}\Big(\sum_{m'\notin\mathcal{M}_c,\forall j,c}\underbrace{|{w_{m',j}^t}^\top w_{c}^*|^{k^*}}_{\simeq d^{-k^*/2}}\Big)\\
        \simeq& - \tilde{\Omega}(1).
    \end{align}
    Therefore, $v_c^\top \theta_m^{T_1+T_2} < -\tilde{\Omega}(1)$ and lastly we use $\sum_m \theta_m=0$ to bound $h_{m_c^*}(x_c) = \theta_{m_c^*}^\top x_c$.
\end{proof}
This lemma implies that, for $x_c = \rho v_c + z$, 
\begin{equation}
    h_m(x_c)
    \begin{cases}
        \geq 0 & \text{if}\quad m = m^*_c,\\
        < 0 & \text{if}\quad m\notin \mathcal{M}_c
    \end{cases}
\end{equation}
with high probability based on the assumption that $\rho = \mathrm{poly}\log d$ is sufficiently large. Interestingly, the concept of the information exponent and the weak recovery had essential roles in the router learning.
\begin{remark}
    In \citet{chen22} and \citet{li24}, the norm of the cluster signal $\rho v_c$ is as large as the norm of the noise independent of $v_c$. However, in our setting, $\|\rho v_c\|_2 = \mathrm{poly}\log d$ and $\|z\|_2 \simeq d^{1/2} \gg \|\rho v_c\|_2$ with high probability. Due to this setup, we had to employ a much more subtle argument than theirs. Specifically, we carefully bounded $\|\nabla_m\E[\Loss]\|_2$ and $\theta_m^\top z = \tilde{O}(\|\theta_m\|_2 d^{1/2})$. Using Stein's lemma, $\|\nabla_m\E[\Loss]\|_2$ is bounded as $\tilde{O}(1)$.
\end{remark}
\begin{remark}~\label{remark:routing_alg}
 We use different router algorithms in Phases I and II compared to Phases III and IV because the size of the set $\mathcal{M}_c$ is not fixed. Since there is a variation in $|\mathcal{M}_c|$ for each cluster $c$, employing a fixed-$k$ top-$k$ algorithm may fail in routing the data to the corresponding experts. On the one hand, if we set the $k$ of top-$k$ as $k > |\mathcal{M}_c|$ for some $c$, there might be some $c'\neq c$ such that the corresponding input $x_c$ is routed to $m\in\mathcal{M}_{c'}$. On the other hand, if we have the $k$ of top-$k$ as $k < |\mathcal{M}_c|$ for some $c$, then there may be no expert in the corresponding set \(\mathcal{M}_c\) that is always selected (routed) when $x_c$ arrives. 
\end{remark}

\textbf{Phase III (Expert Learning Stage).} In this phase, as the router has learned to dispatch data appropriately, each expert receives and trains only on its designated cluster. Each expert first weakly recovers and then strongly recovers both the local and global tasks of its corresponding cluster.
At this point, there exists at least one $m\in\mathcal{M}_c$ such that $h_m(x_c) \geq 0$ and for all $m\notin\mathcal{M}_c$, $h_m(x_c) < 0$ with high probability. Therefore, the teacher polynomials $s_c g^*({w_g^*}^\top \cdot)$, where $\sum_c s_c = 0$, are successfully decomposed into $\tilde{\Omega}(1)$ functions and it enables the experts to learn $w_g^*$ and $g^*$.
As for the MoE model, when the input $x_c$ is from the cluster $c$, the MoE model
\begin{equation}
    \hat{F}_M(x_c;\{\hat{a}_m\}_{m=1}^M) \coloneq \sum_{m=1}^M\mathbbm{1}\left[h_m(x_c)\geq 0\right]f_{m}(x_c)
\end{equation}
is equivalent to
\begin{equation}
    \hat{F}_{\mathcal{M}_c}(x_c;\{\hat{a}_m\}_{m\in\mathcal{M}_c})\coloneq \sum_{m\in\mathcal{M}_c}\mathbbm{1}\left[h_m(x_c)\geq 0\right]f_{m}(x_c)
\end{equation}
with high probability.
Thus, the MoE model was decomposed into $\{\hat{F}_{\mathcal{M}_c}\}_c$ which do not share the parameters because $\mathcal{M}_c \cap \mathcal{M}_{c'} = \emptyset,\;\forall c\neq c'$. Additionally, using $h_{m_c^*}(x_c)\geq 0$ with high probability where $m_c^* = \mathrm{argmax}_m h_m(v_c) \in \mathcal{M}_c$, it holds that
\begin{equation}
    \nabla_{w_{m_c^*, j }}\hat{F}_{\mathcal{M}_c}(x_c;\{\hat{a}_m\}_{m\in\mathcal{M}_c}) = \nabla_{w_{m_c^*, j }} f_{m_c^*}(x_c;a_m)
\end{equation}
with high probability. Hence, Phase III can be completely decomposed into the subproblem of the weak (to strong) recovery of $w_{m_c^*, j}^\top w_{c}^*$ and $w_{m_c^*, j}^\top w_{g}^*$ given the inputs $\{x_c^{t_c}\}_{t_c}$ in each cluster $c$. We show the strong recovery of neurons, in parallel with \citet{oko24learning}.

\textbf{Phase IV (Second Layer Optimization Stage).} In Phase IV, the experts with aligned vectors estimate the link functions $f_c^*$ and $g^*$ through second-layer optimization. 

First, with some expert $m\in\mathcal{M}_c$ and $w_{mj} \simeq w_c^*$ and $w_{mj'} \simeq w_g^*$, we construct $\hat{a}_m$ such that
\begin{align}
    f_m(x_c) \simeq f_{c}^*({w_c^*}^\top x_c) + s_c g^*({w_g^*}^\top x_c)
\end{align}
as a feasible solution.

Next, we decompose the whole convex optimization problem into $c$ individual subproblems that do not share the experts to employ the results in the standard analysis for additive models in prior work~\citep{oko24learning}.

\section{Conclusion}

In this paper, we theoretically showed that a MoE can learn the latent cluster structure of a problem with a sample complexity that depends not on the information exponent of the whole task but on the local information exponent of each cluster. In addition, we have demonstrated that the vanilla neural network with polynomial time complexity fails to detect such a structure. While this work contributes to the further understanding of the underlying mechanism of MoE and its success, it is still unknown whether the MoE architecture is indeed effective to pursue the information-theoretic limit. We believe this constitutes a promising direction for future work.
\paragraph{\gk{Implications and Future Directions}}
\gk{Our findings offer several insights for designing more effective MoE architectures. First, while our analysis demonstrates that MoEs mitigate gradient interference through explicit partitioning, the number of experts is typically chosen heuristically in practice. This raises the possibility that incorporating gradient-aware routing mechanisms could lead to more principled and efficient expert allocation strategies, as recently explored in \citet{liu24, yang25}. Second, to prevent competition among professional experts, we employed top-$k$ routing to reduce potential load imbalance. This motivates the design of adaptive routing schemes that dynamically adjust $k$ during training—a perspective supported by our theoretical analysis in nonlinear regression and recent findings in NLP that adapt $k$ per token \citep{huang24, zeng24}. Third, freezing or pruning redundant experts may further alleviate competition and reduce deployment cost, aligning with recent proposals on expert merging \citep{zhang24diversifying}.}

\gk{Beyond architectural design, our analysis also informs the training process of MoE systems. In particular, we showed that learning a meaningful router relies on observable differences in the experts' weak recovery, which in turn requires a sufficiently long exploration stage due to the non-convex nature of the objective. This suggests that upcycling dense checkpoints pretrained on diverse domains may offer a practical means of accelerating convergence—an approach that has gained traction in recent large language models \citep{komatsuzaki23, wei24}. Finally, our analysis highlights that different phases of training pose distinct challenges. Specifically, the noise introduced during Phase II serves to ensure uniform gradient flow and provide sufficient learning signals for all experts, whereas the adaptive top-$k$ routing employed in Phases III and IV is designed to mitigate competition among professional experts. These observations point to the potential of stage-specific routing strategies tailored to the evolving dynamics of MoE training.
}

\section*{Impact Statement}

This paper presents work whose goal is to advance the field of 
Machine Learning. There are many potential societal consequences 
of our work, none which we feel must be specifically highlighted here.

\section*{Acknowledgment}
RK and NN were supported by the FY 2024 Self-directed Research Activity Grant of the University of Tokyo’s International Graduate Program ``Innovation for Intelligent World" (IIW).
KM was partially supported by JST CREST (JPMJCR2015). 
NN was partially supported by JST ACT-X (JPMJAX24CK) and JST BOOST (JPMJBS2418).
TS was partially supported by JSPS KAKENHI (24K02905) and JST CREST (JPMJCR2115). 
YK was supported by JST BOOST, Japan Grant Number JPMJBS2418. 
This research is supported by the National Research Foundation, Singapore and the Ministry of Digital Development and Information under the AI Visiting Professorship Programme (award number AIVP-2024-004). Any opinions, findings and conclusions or recommendations expressed in this material are those of the author(s) and do not reflect the views of National Research Foundation, Singapore and the Ministry of Digital Development and Information.

\bibliography{moe2025}
\bibliographystyle{icml2025}

\newpage
\appendix
\onecolumn

\section{Preliminaries}

\subsection{Hermite Polynomials}
\rk{In this subsection, we present key properties of the probabilists’ Hermite polynomials that are essential for analyzing functions under the Gaussian measure. For a more detailed treatment, we refer the reader to Section 11.2 of \citet{odonnell21}.}

Let $\mu$ be the standard Gaussian measure and $L^2(\mu)$ the corresponding square-integrable function space with respect to $\mu$. For $f, g \in L^2(\mu)$, the inner product is defined as $\langle f, g \rangle_\mu \coloneq \E_{z\sim \mu}[f(z)g(z)]$.

\begin{definition}
    The $i$th \textit{Hermite polynomial} $\hermite_i : \mathbb{R}\to \mathbb{R}$, $i\in\mathbb{N}$ is defined as
    \begin{equation}
        \hermite_i(z) = (-1)^i \exp\left(\frac{z^2}{2}\right)\frac{\mathrm{d}^i}{\mathrm{d}z^i}\exp\left(-\frac{z^2}{2}\right).
    \end{equation}
\end{definition}

\begin{lemma}
    The normalized Hermite polynomials $\{\hermite_i / \sqrt{i!}\}_i$ form a complete orthonormal basis for $L^2(\mu)$.
\end{lemma}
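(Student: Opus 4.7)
The plan is to establish the two defining properties of a complete orthonormal basis separately: first orthonormality of the normalized family $\{\hermite_i/\sqrt{i!}\}_i$, and then completeness in $L^2(\mu)$.

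For orthonormality, I would exploit the Rodrigues-type formula implicit in the definition, namely $\hermite_i(z)\,e^{-z^2/2}=(-1)^i \frac{\mathrm{d}^i}{\mathrm{d}z^i} e^{-z^2/2}$. Given $i\le j$, the inner product $\langle \hermite_i,\hermite_j\rangle_\mu$ becomes, after absorbing $e^{-z^2/2}/\sqrt{2\pi}$, an integral of $\hermite_i(z)$ against the $j$th derivative of $e^{-z^2/2}$. Integrating by parts $j$ times, with boundary terms vanishing because the Gaussian density and its derivatives decay at infinity, transfers all derivatives onto $\hermite_i$. Since $\hermite_i$ is a polynomial of degree $i<j$, the resulting integrand is identically zero; when $i=j$, the top coefficient survives and yields $i!$, giving $\langle \hermite_i,\hermite_j\rangle_\mu = i!\,\delta_{ij}$. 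Dividing by $\sqrt{i!j!}$ gives orthonormality of the normalized family.

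For completeness, the main task is to show that if $f\in L^2(\mu)$ satisfies $\langle f,\hermite_i\rangle_\mu=0$ for all $i\ge 0$, then $f=0$ in $L^2(\mu)$. Since $\hermite_0,\ldots,\hermite_n$ span the space of polynomials of degree at most $n$, this is equivalent to $\int f(z)z^i e^{-z^2/2}\,\mathrm{d}z=0$ for every $i$. I would then argue via the Fourier/Laplace transform of the tempered measure $\mathrm{d}\nu(z)=f(z)e^{-z^2/2}\mathrm{d}z$. Concretely, consider $\Phi(\zeta)\coloneq \int f(z) e^{\zeta z-z^2/2}\,\mathrm{d}z$ for $\zeta\in\mathbb{C}$. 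By Cauchy--Schwarz in $L^2(\mu)$ together with the Gaussian estimate on $e^{\zeta z}$, $\Phi$ is entire in $\zeta$; differentiating under the integral at $\zeta=0$ yields the moments $\int f(z)z^i e^{-z^2/2}\mathrm{d}z=0$ for all $i$, so all Taylor coefficients of $\Phi$ at $0$ vanish and $\Phi\equiv 0$. Specializing to $\zeta=\mathrm{i}t$ for $t\in\mathbb{R}$ gives that the Fourier transform of $z\mapsto f(z)e^{-z^2/2}$ is identically zero, hence $f(z)e^{-z^2/2}=0$ a.e., hence $f=0$ in $L^2(\mu)$.

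The main obstacle is the completeness step: orthonormality is a bookkeeping exercise via integration by parts, but completeness genuinely uses a property special to the Gaussian measure, namely that it is determined by its moments (equivalently, that polynomials are dense in $L^2(\mu)$). A subtlety worth handling carefully is the justification for differentiating $\Phi(\zeta)$ under the integral sign and for the interchange of limits needed to identify the Taylor coefficients with the vanishing moments; this is routine via dominated convergence using the bound $|e^{\zeta z}|\le e^{|\zeta||z|}$ and the fact that $e^{|\zeta||z|}e^{-z^2/4}\in L^2(\mu)$, but it is the one place where some care is required. Everything else reduces to the orthogonality computation and the elementary fact that an entire function with all Taylor coefficients vanishing at the origin is identically zero.
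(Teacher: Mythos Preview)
Your proof plan is correct and follows the standard textbook argument. However, the paper does not actually prove this lemma: it is stated without proof in the preliminaries as a classical fact, with the reader referred to Section~11.2 of \citet{odonnell21} for details. So there is no ``paper's own proof'' to compare against; your approach (Rodrigues formula plus integration by parts for orthonormality, and the entire-function/Fourier-transform argument for completeness) is precisely the standard one that such references contain.
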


\begin{lemma}
    The Hermite polynomials satisfy the following properties:
    \begin{enumerate}
        \item \textbf{Derivatives:}
        \begin{equation}
            \frac{\mathrm{d}}{\mathrm{d}z}\hermite_i(z) = i\hermite_{i-1}(z),
        \end{equation}
        \item \textbf{Integration by Parts:} For $f\in L^2(\mu)$ and $z, z' \sim \mathcal{N}(0,I_d)$ such that $\mathrm{Cov}(z,z') = \rho \in [-1,1]$,
        \begin{equation}
            \E_{z,z'} [\hermite_i(z)f(z')] = \rho \E_{z,z'}[ \hermite_{i-1}(z)f'(z')],
        \end{equation}
        \item \textbf{Orthogonality:}  For $z, z' \sim \mathcal{N}(0,I_d)$ such that $\mathrm{Cov}(z,z') = \rho \in [-1,1]$,
        \begin{equation}
            \E_{z,z'}[\hermite_i(z)\hermite_j(z')] = 
            \begin{cases}
                (i!)\rho^i & \text{if}\quad i=j\\
                0 & \text{otherwise},
            \end{cases}
        \end{equation}
        \item \textbf{Hermite expansion:} For $f\in L^2(\mu)$,
        \begin{equation}
            f(z) \overset{L^2}{=} \sum_{i=0}^\infty \frac{\alpha_i}{i!}\hermite_i(z),\quad \alpha_i = \langle f, \hermite_i \rangle_\mu.
        \end{equation}
    \end{enumerate}
\end{lemma}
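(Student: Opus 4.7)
The plan is to prove the four properties in the order listed, deriving each from the previous ones together with the Rodrigues-type definition of $\hermite_i$ and the completeness of $\{\hermite_i/\sqrt{i!}\}_i$ already asserted in the preceding lemma. The overall strategy is classical: obtain (1) directly from the definition, use (1) together with a one-dimensional integration by parts to get (2), iterate (2) to get (3), and invoke completeness for (4).

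For (1), I would pass through the generating function $G(z,t) = e^{zt - t^2/2}$, which coincides with $\sum_{i \geq 0} \hermite_i(z)\,t^i/i!$ via the Taylor expansion of $e^{-(z-t)^2/2}$ around $t=0$ combined with the Rodrigues definition. Differentiating $G$ in $z$ gives $\partial_z G = t G$, and matching coefficients of $t^i$ on both sides yields $\hermite_i'(z) = i\hermite_{i-1}(z)$. Alternatively one can differentiate the Rodrigues formula and cancel the cross term, but the generating-function route is cleaner.

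For (2), the key ingredient is the identity
\begin{equation}
\hermite_i(z)\phi(z) = -\frac{d}{dz}\bigl[\hermite_{i-1}(z)\phi(z)\bigr], \qquad \phi(z) = \frac{1}{\sqrt{2\pi}}e^{-z^2/2},
\end{equation}
which follows by peeling one derivative off the Rodrigues formula. I would first reduce to a one-dimensional integral by writing $z' = \rho z + \sqrt{1-\rho^2}\,\xi$ with $\xi \sim \mathcal{N}(0,1)$ independent of $z$, so that $\E_{z,z'}[\hermite_i(z)f(z')] = \E_z[\hermite_i(z)g(z)]$ with $g(z) \coloneq \E_\xi[f(\rho z + \sqrt{1-\rho^2}\,\xi)]$. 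Substituting the operator identity and integrating by parts in $z$ (the boundary terms vanish by the Gaussian decay of $\phi$ against any polynomially-growing $g$) yields $\E_z[\hermite_i(z)g(z)] = \E_z[\hermite_{i-1}(z)g'(z)]$. Differentiating under the expectation gives $g'(z) = \rho\,\E[f'(z')\mid z]$, which delivers (2) after reassembling the joint expectation.

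For (3), I would iterate (2) with $f = \hermite_j$, using (1) at each step to reduce the degree. Assuming without loss of generality that $i \leq j$, applying (2) exactly $i$ times produces
\begin{equation}
\E[\hermite_i(z)\hermite_j(z')] = \frac{j!}{(j-i)!}\,\rho^i\,\E[\hermite_0(z)\hermite_{j-i}(z')] = \frac{j!}{(j-i)!}\,\rho^i\,\E[\hermite_{j-i}(z')],
\end{equation}
which equals $i!\,\rho^i$ when $i = j$ (using $\hermite_0 \equiv 1$) and vanishes when $i < j$ (since $\E[\hermite_k] = 0$ for $k \geq 1$, as can be read off from the generating function at $t = 0$). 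The case $i > j$ follows by symmetry in $(z, z')$. Finally, (4) is a direct consequence of the completeness of $\{\hermite_i/\sqrt{i!}\}_i$ from the preceding lemma: expanding $f \in L^2(\mu)$ in this orthonormal basis and collecting the factors of $1/\sqrt{i!}$ gives $f = \sum_i (\alpha_i/i!)\hermite_i$ with $\alpha_i = \langle f, \hermite_i\rangle_\mu$.

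The main technical subtlety, and the only step requiring real care, is the justification of the integration by parts in (2) for $f$ merely in $L^2(\mu)$ rather than a smooth, polynomially-growing subclass. I would address this by first establishing the identity on the dense subspace spanned by Hermite polynomials, where both sides are finite sums and the manipulation is purely algebraic, and then extending to general $f \in L^2(\mu)$ (with $f'$ interpreted in the appropriate weighted Sobolev sense) by boundedness of the linear functionals on both sides together with the Hermite expansion from (4).
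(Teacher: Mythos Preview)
Your proof is correct and follows the standard classical route (generating function for the derivative identity, the operator identity $\hermite_i\phi = -(\hermite_{i-1}\phi)'$ plus integration by parts for property~(2), iteration for~(3), and completeness for~(4)). The paper itself does not prove this lemma: it states the four properties as background and refers the reader to Section~11.2 of O'Donnell for details, so there is no argument in the paper to compare against. Your treatment is more self-contained than what the paper provides, and the care you take with the density argument for general $f\in L^2(\mu)$ in property~(2) is appropriate.
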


\subsection{Bihari-LaSalle Inequality and Gronwall Inequality.}
\rk{In this subsection, we present the discrete version of Bihari-LaSalle Inequality and Gronwall Inequality, which serve as tools for analyzing the growth behavior of nonlinear recurrence relations. These inequalities will be used repeatedly throughout our analysis. The derivation is adapted from \citet{arous22}.}

Let us consider the sequence $\{A_t\}_{t=0}^\infty$ defined as
\begin{equation}
    A_{t+1} = A_t + B (A_t)^{k-1}
\end{equation}
where $k>3$ and $B>0$. Then we have the following evaluations:
\begin{lemma}\label{lemma:biharigronwall}
    We have
    \begin{equation}
        A_{t}\geq \frac{A_0}{\left(1 - B(k-2)(A_0)^{k-2} t \right)^{\frac{1}{k-2}}}.
    \end{equation}
    Moreover, if $A_t \geq 1 \; \forall t\leq T$, we have
    \begin{equation}
        A_{t} \leq \frac{A_0}{\left(1 - B(1+B)^{k-1}(k-2)(A_0)^{k-2} t \right)^{\frac{1}{k-2}}}.
    \end{equation}
\end{lemma}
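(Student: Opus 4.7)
The plan is to prove both bounds by a standard discrete-to-continuous comparison, based on the observation that the recursion $A_{t+1} = A_t + B A_t^{k-1}$ is precisely the forward Euler discretization, with unit step size, of the ODE $\dot{A} = B A^{k-1}$. Separation of variables gives the continuous solution $A(t) = A_0 / (1 - B(k-2) A_0^{k-2} t)^{1/(k-2)}$, which already exhibits exactly the blow-up profile appearing in the stated bounds.

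To make this comparison rigorous I would introduce the antiderivative $\phi(x) = -1/((k-2) x^{k-2})$, whose derivative $\phi'(x) = 1/x^{k-1}$ is strictly positive and strictly decreasing on $x > 0$. Writing
\begin{equation}
    \phi(A_{t+1}) - \phi(A_t) \;=\; \int_{A_t}^{A_{t+1}} \frac{dx}{x^{k-1}}
\end{equation}
and sandwiching this integral between its right- and left-endpoint Riemann estimates yields
\begin{equation}
    \frac{A_{t+1} - A_t}{A_{t+1}^{k-1}} \;\leq\; \phi(A_{t+1}) - \phi(A_t) \;\leq\; \frac{A_{t+1} - A_t}{A_t^{k-1}}.
\end{equation}
Substituting the recursion $A_{t+1} - A_t = B A_t^{k-1}$ together with $A_{t+1} = A_t(1 + B A_t^{k-2})$ collapses the upper estimate to the clean constant $B$ and the lower estimate to $B / (1 + B A_t^{k-2})^{k-1}$.

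For the first claim I would telescope the per-step bound $\phi(A_{t+1}) - \phi(A_t) \leq B$ over $s = 0, \ldots, t-1$, giving $\phi(A_t) - \phi(A_0) \leq B t$; plugging in the explicit form of $\phi$ and algebraically inverting then produces the claimed bound on $A_t$ in terms of $A_0$ and $B(k-2)A_0^{k-2}t$. For the second claim I would instead telescope the lower per-step estimate; the hypothesis that $A_t$ stays at a bounded scale for all $t \leq T$ is precisely what lets me replace the $A_t$-dependent factor $(1 + B A_t^{k-2})^{k-1}$ by the uniform constant $(1 + B)^{k-1}$, after which the same algebraic inversion produces the second stated bound with the extra $(1+B)^{k-1}$ factor.

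The main technical care is tracking the regime of validity, namely the strict positivity of the expression inside the $(k-2)$-th root, outside of which the algebraic inversion breaks and the bound must be interpreted as trivial. A secondary check is that the argument genuinely uses $k > 2$ via the finiteness and monotonicity of $\phi$; the strictness $k > 3$ stated in the lemma comes in only to make the lower per-step estimate meaningful when combining with the scale assumption on $A_t$. Apart from these book-keeping issues, everything reduces to telescoping and algebra of the standard Bihari--LaSalle/Gronwall type, so I do not anticipate any substantial obstacle.
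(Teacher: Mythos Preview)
Your overall approach—introducing $\phi(x)=-1/((k-2)x^{k-2})$, sandwiching $\phi(A_{t+1})-\phi(A_t)$ between the endpoint Riemann estimates of $\int_{A_t}^{A_{t+1}}x^{-(k-1)}\,dx$, telescoping, and inverting—is exactly the standard discrete Bihari--LaSalle comparison the paper has in mind (the paper gives no proof of its own, only a citation). However, you have the two per-step bounds assigned to the wrong claims, and actually carrying out the inversion would reveal this. Telescoping $\phi(A_{t+1})-\phi(A_t)\le B$ gives $\phi(A_t)\le\phi(A_0)+Bt$; since $\phi$ is \emph{increasing} on $(0,\infty)$, inverting produces the \emph{upper} bound
\[
A_t \;\le\; \frac{A_0}{\bigl(1-B(k-2)A_0^{k-2}t\bigr)^{1/(k-2)}},
\]
not the lower bound appearing as the first display. (A quick sanity check: forward Euler on $\dot A=BA^{k-1}$ with an increasing right-hand side always undershoots the continuous trajectory, so the discrete sequence lies \emph{below} the continuous profile.) Symmetrically, telescoping the lower per-step estimate under $A_s\le 1$—so that $(1+BA_s^{k-2})^{k-1}\le(1+B)^{k-1}$—yields the \emph{lower} bound
\[
A_t \;\ge\; \frac{A_0}{\bigl(1-\tfrac{B}{(1+B)^{k-1}}(k-2)A_0^{k-2}\,t\bigr)^{1/(k-2)}},
\]
with $(1+B)^{k-1}$ dividing $B$ rather than multiplying it.

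In short, the lemma as printed has the two inequality signs swapped, and ``$A_t\ge 1$'' is almost certainly a typo for ``$A_t\le 1$'' (which you correctly intuited; with $A_t\ge 1$ one cannot uniformly bound $(1+BA_t^{k-2})^{k-1}$ from above). Your machinery is sound; executing the inversion with care would lead you to the corrected pair of inequalities, and those are what the paper actually needs downstream, where $A_t\le 1$ and $B=o_d(1)$ so that the $(1+B)^{k-1}$ factor is $1+o_d(1)$ anyway.
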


Please note that if two sequences start off differing by a constant factor, their subsequent growth rates can diverge, leading them to differ in order of magnitude:
Let us take two sequences as
\begin{align}
    A_{t+1} =& A_t + B (A_t)^{k-1},\quad A_0 = B_0 = o_d (1),\\
\tilde{A}_{t+1} =& \tilde{A}_t + B (\tilde{A}_t)^{k-1},\quad \tilde{A}_0 = \lambda      B_0, \quad 0 < \lambda < \left(\frac{1}{1+B}\right)^{\frac{k-1}{k-2}}.
\end{align}
Then, it takes at most $B^{-1}(k-2)^{-1}(B_0)^{-(k-2)} \eqcolon t_1$ time to obtain $A_t \geq \Omega_d(1)$. Let $t_0 \leq t_1$ be the first time s.t. $A_t \geq \Omega_d(1)$. On the other hand, 
\begin{equation}
    \tilde{A}_{t_0} \leq \tilde{A}_{t_1}\leq \frac{\tilde{A}_0}{\left(1 - B(1+B)^{k-1}(k-2)(\lambda B_0)^{k-2} t_1 \right)^{\frac{1}{k-2}}} \leq \frac{\tilde{A}_0}{\left(1 - \underbrace{(1+B)^{k-1} \lambda^{k-2}}_{<1} \right)^{\frac{1}{k-2}}} = o_d(1).
\end{equation}

\subsection{Activation Functions}\label{subsection:activatefunction}
In this study, we consider the misspecified setting, where the target function and the activation function are different. However, in order to ensure the alignment \rk{between a neuron and a corresponding target task}, we expect that the sign of the Hermite coefficients of the target function and the activation function to be the same. Remember that the Hermite expansion of the neuron $j$ is expressed as $a_{m,j}\sigma_m({w_{m,j}}^\top z+b_{m,j}) = \sum_{i=0}^{\infty}\frac{\alpha_{m,j,i}}{\sqrt{i!}}\hermite_i({w_{m,j}}^\top z)$ and the Hermite expansion of the target function of cluster $c$ of the local task is expressed as $f^*_c({w^*_c}^\top z) = \sum_{i=k^*}^{p^*}\frac{\beta_{c,i}}{\sqrt{i!}} \hermite_i({w^*_c}^\top z)$.
We assume that at least a $\Omega(1)$ fraction of neurons \rk{($j \in [J]$)} satisfy ${\alpha_{m,j,i}}{\beta_{c,i}} \geq 0$ for $i=k^*$ and ${\alpha_{m,j,i}}{\beta_{c,i}} > 0$ for $k^*<i \leq p^*$. Similarly, for the global task $g$, we have $s_c{\alpha_{m,j,i}}{\gamma_{i}} \geq 0$ for $i=k^*$ and $s_c{\alpha_{m,j,i}}{\gamma_{i}} > 0$ for $k^*<i \leq p^*$. This condition is satisfied under certain activation functions.

For ReLU activations, the following lemma shows that $\alpha_{m,j,i}$ is positive for all $i$ with probability at least $\frac{1}{4}$ and the desired condition holds with probability at least $\frac{1}{8}$ over the randomness of the initialization of $a_{m,j}$.
\begin{lemma}[Lemma 15 of \citet{ba23} and Lemma 17 of \citet{oko24learning}]
Given degree $p^* \in \mathbb{N}$ and $b \sim [-C_b, C_b]$, the $i$-th Hermite coefficient of $\operatorname{ReLU}(z + b_{m,j})$ is positive with probability $\frac{1}{4}$ for all $k^* \leq i \leq p^*$, if $C_b$ is larger than some constant that only depends on $p^*$.
\end{lemma}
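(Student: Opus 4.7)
The plan is to derive a closed-form expression for the $i$-th Hermite coefficient of $\operatorname{ReLU}(z+b)$, identify an interval of $b$-values on which every coefficient in the target range is simultaneously positive, and then choose $C_b$ large enough that this interval occupies at least a quarter of $[-C_b, C_b]$.

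The first step is to compute $\alpha_i(b) \coloneq \E_{z\sim\mathcal{N}(0,1)}[\operatorname{ReLU}(z+b)\,\mathrm{He}_i(z)]$ by integrating by parts $i$ times against the Gaussian density, using the Rodrigues-type identity $\mathrm{He}_i\,\phi = (-1)^i \phi^{(i)}$ for the standard normal density $\phi$. Boundary terms vanish because $\operatorname{ReLU}$ grows at most linearly while the derivatives of $\phi$ decay super-polynomially. For $i\geq 2$ the $i$-th distributional derivative of $\operatorname{ReLU}(\cdot+b)$ is $\delta^{(i-2)}(\cdot+b)$, and after evaluation against $\phi$ and use of the symmetries $\mathrm{He}_k(-b)=(-1)^k \mathrm{He}_k(b)$ and $\phi(-b)=\phi(b)$, one obtains the clean identity $\alpha_i(b) = (-1)^i \mathrm{He}_{i-2}(b)\phi(b)$. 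The cases $i=0,1$ are handled by direct computation and yield $\alpha_i(b)>0$ for all $b$, so only the range $i\geq 2$ needs further analysis.

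The second step is to observe that for $b$ sufficiently negative, say $b\leq -B$ with $B = B(p^*)$ depending only on $p^*$, each Hermite polynomial $\mathrm{He}_j(b)$ with $j\leq p^*-2$ is dominated by its leading monomial $b^j$, so that $\mathrm{sgn}\,\mathrm{He}_j(b) = (-1)^j$. Substituting into the formula for $\alpha_i$, the two sign factors $(-1)^i$ and $(-1)^{i-2}$ combine to $+1$, and hence $\alpha_i(b)>0$ simultaneously for every $k^*\leq i\leq p^*$. A concrete value of $B$ follows from any quantitative control on the sub-leading terms, e.g.\ a crude bound of the form $|\mathrm{He}_j(b)|\geq |b|^j - c_j|b|^{j-2}$ valid for $|b|$ large, with $c_j$ depending only on $j\leq p^*-2$.

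Finally, taking $C_b\geq 2B$ ensures that the subinterval $[-C_b,-B]$ has length at least $C_b/2$ inside $[-C_b,C_b]$, so that $\Prob[b\in[-C_b,-B]]\geq 1/4$ under the uniform law, and on this event every target coefficient is positive simultaneously. I expect the main technical point to be the sign bookkeeping — carefully tracking parities through the integration-by-parts derivation and the Hermite symmetry, and then matching them against $\mathrm{sgn}\,\mathrm{He}_j(b)$ for $b\ll 0$ — since a single miscounted $(-1)$ would reverse the conclusion. Once the closed form $\alpha_i(b)=(-1)^i\mathrm{He}_{i-2}(b)\phi(b)$ is in hand, the remaining dominance argument and probability estimate are elementary.
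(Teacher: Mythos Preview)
Your proposal is correct. The paper itself does not supply a proof of this lemma; it is stated with attribution to \citet{ba23} and \citet{oko24learning} and used as a black box. Your argument is a valid self-contained proof: the identity $\alpha_i(b)=(-1)^i\,\mathrm{He}_{i-2}(b)\,\phi(b)=\mathrm{He}_{i-2}(-b)\,\phi(b)$ for $i\ge 2$ follows exactly from the Rodrigues formula and repeated integration by parts as you describe, the cases $i=0,1$ indeed give $\alpha_0(b)=\E[\operatorname{ReLU}(z+b)]>0$ and $\alpha_1(b)=\Phi(b)>0$ for all $b$, and the sign analysis for $b\le -B(p^*)$ together with the choice $C_b\ge 2B$ yields the claimed probability $\ge 1/4$. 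The sign bookkeeping you flag as the delicate point is handled correctly.
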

For polynomial functions, we randomize the activation functions as $\sigma_{m,j}(z)=\sum_{i=k^*}^{p^*}\frac{\epsilon_{i,j}}{\sqrt{i!}}\hermite_i(z)$, where $\epsilon_{i,j}$ are independent Rademacher variables. The following lemma shows that the randomization of the activation functions ensure this condition.
\begin{lemma}[Lemma 18 of \citet{oko24learning}]
Given degree $p^* \in \mathbb{N}$ and $b \sim [-C_b, C_b]$, for each $k^*_{\min} \leq {k^*}' \leq k^*_{\max}$, the $i$-th Hermite coefficient
of $a_{m,j} \sigma_m(z + b_{m,j})$ is non-zero with probability $\Omega(C_b^{-1})$, for all ${k^*}' \leq i \leq p^*$. Here, $\Omega$ hides constants only depending
on $p^*$.
\end{lemma}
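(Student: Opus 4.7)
The plan is to express the $i$-th Hermite coefficient $\alpha_{m,j,i}$ of $a_{m,j}\sigma_m(\cdot+b_{m,j})$ as an explicit polynomial in $b_{m,j}$ whose coefficients are simple functions of the Rademacher signs $\{\epsilon_{k,j}\}$, and then combine a polynomial-zero-set argument with a Remez-type anti-concentration estimate to lower-bound the probability over $b_{m,j}\sim\mathrm{Unif}[-C_b,C_b]$.

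First, using the shift identity $\hermite_k(z+b)=\sum_{\ell=0}^{k}\binom{k}{\ell}b^{\,k-\ell}\hermite_\ell(z)$ (a direct consequence of Rodrigues' formula and the recursion $\hermite_k'=k\hermite_{k-1}$) together with the orthonormality of $\{\hermite_\ell/\sqrt{\ell!}\}$ in $L^2(\mu)$, I would derive
\[
\alpha_{m,j,i}=a_{m,j}\sqrt{i!}\,P_i(b_{m,j}),\qquad
P_i(b):=\sum_{k=\max(i,k^*)}^{p^*}\frac{\epsilon_{k,j}}{\sqrt{k!}}\binom{k}{i}\,b^{\,k-i}.
\]
Since $a_{m,j}\in\{\pm1\}$, the event $\alpha_{m,j,i}\neq 0$ is exactly $P_i(b_{m,j})\neq 0$, and the joint event in the lemma reduces to controlling the simultaneous non-vanishing of the polynomials $P_{{k^*}'},\dots,P_{p^*}$ at the random argument $b_{m,j}$.

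Next, because $\epsilon_{p^*,j}\in\{\pm1\}$, the leading coefficient $\epsilon_{p^*,j}\binom{p^*}{i}/\sqrt{p^*!}$ of $P_i$ is never zero, so for every realization of the random signs $P_i$ is a non-trivial real polynomial of degree exactly $p^*-i$, and therefore has at most $p^*-i\leq p^*$ real roots. Consequently the bad set $B=\{b\in[-C_b,C_b]:P_i(b)=0\text{ for some }i\in[{k^*}',p^*]\}$ is finite with $|B|=O(p^{*2})$, independent of $C_b$. To upgrade this measure-zero statement to the quantitative bound $\Omega(C_b^{-1})$, I would isolate a sub-interval $I\subset[-C_b,C_b]\setminus B$ of length $\Omega_{p^*}(1)$ on which each $|P_i|$ is bounded below by a positive constant $c(p^*)$; since the $P_i$ have $O(1)$ bounded coefficients, a Remez/Bernstein–Markov argument applied uniformly over the $2^{O(p^*)}$ sign patterns produces such an interval, and intersecting over the $O(1)$ indices $i$ preserves the $\Omega_{p^*}(1)$ length (assuming $C_b$ exceeds a constant depending only on $p^*$, as required by the preceding lemma's hypothesis on $C_b$). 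Because $b_{m,j}$ has density $1/(2C_b)$ on $[-C_b,C_b]$, the probability of landing in $I$—and hence of $\alpha_{m,j,i}\neq 0$ for all $i\in[{k^*}',p^*]$—is at least $\Omega_{p^*}(1)/(2C_b)=\Omega(C_b^{-1})$.

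The main obstacle is producing the uniform, sign-pattern-independent lower bound on the Lebesgue measure of $\{b\in\mathbb{R}:|P_i(b)|\geq c(p^*)\ \forall i\in[{k^*}',p^*]\}$. Since the degrees $p^*-i$, the Hermite normalizers $1/\sqrt{k!}$, the binomials $\binom{k}{i}$, and the total number of signs are all $O(1)$ in $p^*$, one can enumerate the $2^{O(p^*)}$ sign configurations and apply a compactness argument on the finite family of normalized polynomials, so the required constant $c(p^*)$ exists; the remaining algebra is the standard translation-of-Hermite computation, which lets the full result be invoked directly from Lemma~18 of \citet{oko24learning}.
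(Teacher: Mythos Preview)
The paper does not prove this statement; it is quoted verbatim as Lemma~18 of \citet{oko24learning} and invoked without argument. Your reconstruction is therefore not comparable to a ``paper proof,'' but it is essentially correct as a standalone argument: the Hermite shift identity gives the claimed polynomial form $\alpha_{m,j,i}=a_{m,j}\sqrt{i!}\,P_i(b_{m,j})$, and the key structural fact---that the leading coefficient $\epsilon_{p^*,j}\binom{p^*}{i}/\sqrt{p^*!}$ never vanishes, so $\deg P_i=p^*-i$ for every Rademacher realization---is exactly right.

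One comment on scope. For the \emph{literal} statement (``non-zero with probability $\Omega(C_b^{-1})$''), the Remez/anti-concentration step is unnecessary: since each $P_i$ has at most $p^*-i$ real roots, the bad set $B$ is finite of size $O((p^*)^2)$ for every sign configuration, hence has Lebesgue measure zero, and the probability is in fact $1$---which trivially dominates $\Omega(C_b^{-1})$. Your more quantitative route (finding an interval of length $\Omega_{p^*}(1)$ on which all $|P_i|\geq c(p^*)$) is what one actually needs for the \emph{application} in the surrounding text, namely ensuring a prescribed sign pattern $\alpha_{m,j,i}\beta_{c,i}>0$ for all $k^*<i\leq p^*$ (and the analogous condition for $\gamma_i$). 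In that stronger form the $\Omega(C_b^{-1})$ bound is genuinely the right order: one must land in a specific sign chamber of the arrangement $\{P_i=0\}$, whose length is $\Theta_{p^*}(1)$ independent of $C_b$. So your extra work is not wasted, it just targets the intended use rather than the lemma as stated.
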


\section{Proof of Limitations of the Vanilla Neural Network}\label{appendix-section-vanilla}

In this chapter, we prove how spherical gradient descent using a standard neural network fails to learn \rk{some of the signals}, introduced in \cref{subsection:vanilann}.
Here we have only one expert as
\begin{equation}
    f_m(x;W_m) = \frac{1}{J}\sum_{j=1}^J a_{m,j}\sigma_m(w_{m,j}^\top x+b_{m,j}).
\end{equation}

From here, we fix $m=1$ and $j\in [1,\dots,J]$. Let $\kappa_{m,j,c}^t = {w_{c}^*}^\top w_{m,j}^t$ and $\xi_{m,j,g}^t = {w_{g}^*}^\top w_{m,j}^t$.
They represent alignment of the neuron $w_{m,j}^t$ with the signals $w_c^*$ and $w_g^*$ respectively.

\begin{definition}[Restate]
    We consider the following teacher model:
    \begin{itemize}
    \item We have $C = O(1)$ classes and \rk{the strength of the cluster vector is} $\rho =\mathrm{poly}\log d$.
    \item All feature vectors are completely orthogonal.
    \item \rk{Additionally assume $k^*\geq 5$.}

    \item Teacher models are defined as
    \begin{itemize}
        \item $f_c^*({w_{c}^*}^\top x_c) = \beta_{c,k^*} \hermite_{k^*}({w_{c}^*}^\top x_c)$, for all $c \in [C]$,
        \item $s_c g^*(x_c) = (-1)^{c+1} \hermite_{k^*}({w_{g}^*}^\top x_c)$ for $c=1,2$ and otherwise $0$,
        \item $k^*$ is even.
    \end{itemize}
    We assume that there exists at least one pair $(c,c')$ such that $\mathrm{sgn}\beta_{c,k^*} \neq \mathrm{sgn}\beta_{c' k^*}$.

    \item $s_{c}$ are set as
    \begin{equation}
        s_{c} = \begin{cases}
            +1 & \text{if}\quad c=1,\; j=1,\\
            -1 & \text{if}\quad c=2,\; j=1.\\
            0 & \text{otherwise}.
        \end{cases}
    \end{equation}
    \item In other words, 
    \begin{equation}
    y_{c} = \begin{cases}
        \beta_{1,k^*} \hermite_{k^*}({w_{1}^*}^\top x_c) + \hermite_{k^*}({w_g^*}^\top x_c) + \nu & \text{if}\quad c=1,\\
        \beta_{2,k^*} \hermite_{k^*}({w_{2}^*}^\top x_c) - \hermite_{k^*}({w_g^*}^\top x_c) + \nu & \text{if}\quad c=2.\\
        \beta_{c,k^*}\hermite_{k^*}({w_{c}^*}^\top x_c) + \nu & \text{if} \quad c>2
    \end{cases}
    \end{equation}
    \item $|\alpha_{m,j,k^*}| = \Theta(1)$, where $\alpha_{m,j,k^*}$ is the $k^*$ th Hermite coefficient of $a_{m,j}\mathrm{ReLU}(\cdot + b_{m,j})$.
    \end{itemize}
\end{definition}

\begin{remark}
    \rk{We denote the $k^*$th Hermite coefficients of $\sigma_m(\cdot + w_{m,j}^\top v_c + b_{m,j})$ as $\alpha_{m,j,k^*,c}$.}
    It may be possible that $\mathrm{sgn}\alpha_{m,j,k^*,c}\neq \mathrm{sgn}\alpha_{mj'k^*c}$ if $j\neq j'$ because we randomly initialize $a_{m,j}$ and $b_{m,j}$. 
\end{remark}

\begin{assumption}
    We assume that the size of one student to be at most $J = O(\mathrm{poly} d)$.
\end{assumption}

\beginofrk
\paragraph{Outline of the proof.}

The outline of the proof is as follows:
\begin{enumerate}
    \item We first show that 
    \begin{itemize}
        \item The Hermite coefficients corresponding to $\pm \hermite_{k^*}({w_g^*}^\top x_c)$ cancel out (\cref{lemma-one-expert-coefficient,lemma-appendix-moe-coefficient}),
        \item For all neurons $w_{m,j}$, there are some tasks $c\in [C]$ such that the signals of $w_c^*$ grow (\cref{lemma-one-expert-easy-task}), the set of such $w_{m,j}$ is defined as $\mathcal{C}_j$.
    \end{itemize}
    \item For each $j$th neuron, the above points imply that there are three types of signals, as shown in \cref{lemma-one-expert-bound-hajimeni,lemma-one-expert-bound-xi,lemma-one-expert-bound-xi-before-t1}:
    \begin{enumerate}
        \item $w_c^*, c \in \mathcal{C}_j$: Learnable ($\frac{\mathrm{d}}{\mathrm{d}t}|\kappa_{m,j,c}^t|
$ is positive),
        \item $w_c^*, c \in [C] \setminus \mathcal{C}_j$: Not learnable  ($\frac{\mathrm{d}}{\mathrm{d}t}|\kappa_{m,j,c}^t|$ is negative),
        \item $w_g^*$: Not learnable (the growth rate of the product $w_j^\top w_g^*$ is too small compared to (a) because the Hermite coefficients cancel out (\cref{lemma-one-expert-coefficient})).
    \end{enumerate}
    \item We show that all neurons tend to learn the features (a) $w_c^*, c \in \mathcal{C}_j$ (\cref{lemma-one-expert-exist-t1}).
    \item In \cref{lemma-one-expert-Al-recurrence-formula}, we repeat the argument in \cref{lemma-one-expert-exist-t1} while keeping the condition of Hermite coefficients in \cref{lemma-one-expert-coefficient,lemma-one-expert-coefficient-concrete} until the products (a) become sufficiently large.
    \item We finally show the growth of other products (b),(c) will be blocked (\cref{lemma-one-expert-dead-lock}) once the products corresponding to (a) become too large, additionally assuming $k^*\geq 5$
\end{enumerate}

\subsection{Characterization of Hermite coefficients}

Here we will show that the Hermite coefficients corresponding to $\pm \hermite_{k^*}({w_g^*}^\top x_c)$ cancel out. That is why $w_g^*$ is not learnable. \cref{lemma-appendix-moe-coefficient} informally implies that
\begin{align}
    \left|\frac{\mathrm{d}}{\mathrm{d}t}|{w_j^t}^\top w_g^*|\right|
    \simeq&
    \eta \left| (\underbrace{1 \cdot \alpha_{m,j,k^*,1}}_{\text{from $\hermite_{k^*}({w_g^*}^\top x_1)$}} + \underbrace{(-1) \cdot \alpha_{m,j,k^*,2}}_{\text{from $-\hermite_{k^*}({w_g^*}^\top x_2)$}}) ({w_j^t}^\top w_g^*)^{k^*-1}
    \right|\\
    \lesssim &
    \eta \left|(\alpha_{m,j,k^*,1} - \alpha_{m,j,k^*,2}) ({w_j^t}^\top w_g^*)^{k^*-1}
    \right|\\
    \lesssim & \eta d^{-1/2} ({w_j^t}^\top w_g^*)^{k^*-1}.
\end{align}
We see that the growth rate of $|\xi_{m,j,g}^t| = |{w_j^t}^\top w_g^*|$ is small compared to $\kappa_{m,j,c}^t$ by a factor of $d^{-1/2}$.
This results in the hardness of learning $w_g^*$ compared to $w_c^*$, $c\in\mathcal{C}_j$.

\owarirk

\begin{lemma}
    \label{lemma-one-expert-coefficient}
     At the initialization, the Hermite coefficients of $\sigma_m(\cdot + \rho w_{m,j}^\top v_c + b_{m,j})$ and $\sigma_m(\cdot + \rho w_{m,j}^\top v_{c'} + b_{m,j})$ are evaluated as
    \begin{equation}
        |\alpha_{m,j,i,c} - \alpha_{m j i c'}| \lesssim \rho \sqrt{\log d}/\sqrt{d} \;(\lesssim \tilde{O}( d^{-1/2}))  \label{equation-one-expert-coefficient-diff}
    \end{equation}
    by continuity, and $|\alpha_{m,j,k^*,c}| = \Theta(1)$ and $\mathrm{sgn}(\alpha_{m,j,c}) = \mathrm{sgn}(\alpha_{mjc'1})$ for all $c,c'\in[1,\dots,C]$ with high probability over the randomness of the random initialization of $w_{m,j}$
\end{lemma}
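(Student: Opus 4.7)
The plan is to treat each Hermite coefficient $\alpha_{m,j,i,c}$ as a function of the shift parameter $\rho w_{m,j}^\top v_c + b_{m,j}$ and combine two ingredients: (i) spherical concentration of $w_{m,j}^\top v_c$ at initialization, and (ii) Lipschitz continuity of the coefficient with respect to the shift. This is the only content of the lemma, so everything reduces to quantifying these two facts carefully.

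First I would establish that $|w_{m,j}^\top v_c| \lesssim \sqrt{\log d}/\sqrt{d}$ for every $c \in [C]$ with high probability. Since $w_{m,j} \sim \mathrm{Unif}(\mathbb{S}^{d-1})$ and each $v_c \in \mathbb{S}^{d-1}$ is fixed, standard sub-Gaussian concentration of linear functionals on the sphere gives the bound for a single $c$, and a union bound over $c \in [C]$ preserves it because $C = O_d(1)$. Consequently $\rho|w_{m,j}^\top v_c - w_{m,j}^\top v_{c'}| \lesssim \rho\sqrt{\log d/d} = \tilde{O}(d^{-1/2})$.

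Next, define $\phi_i(u) \coloneq \mathbb{E}_{z\sim\mathcal{N}(0,1)}\!\left[a_{m,j}\sigma_m(z+u+b_{m,j})\hermite_i(z)/\sqrt{i!}\right]$, so that $\alpha_{m,j,i,c} = \phi_i(\rho w_{m,j}^\top v_c)$. I would then show that $\phi_i$ is $O(1)$-Lipschitz on a bounded neighborhood of zero, uniformly in $i \leq p^*$. For activation~(A), $\sigma_m$ is a polynomial of degree $O(1)$, so $\sigma_m'$ is a polynomial whose $L^2(\mathcal{N}(0,1))$-norm is $O(1)$, and differentiating under the expectation gives $|\phi_i'(u)| \leq \|\sigma_m'(\cdot+u+b_{m,j})\|_{L^2(\mu)} \cdot \|\hermite_i/\sqrt{i!}\|_{L^2(\mu)} = O(1)$. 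For activation~(B) (ReLU), one can either invoke integration by parts via $\hermite_i'(z) = i\hermite_{i-1}(z)$, or directly bound $\|\sigma_m(\cdot+u)-\sigma_m(\cdot+u')\|_{L^2(\mu)} \lesssim |u-u'|$ using the explicit Gaussian density. Combining with Step~1 yields
\begin{align*}
|\alpha_{m,j,i,c} - \alpha_{m,j,i,c'}|
&= |\phi_i(\rho w_{m,j}^\top v_c) - \phi_i(\rho w_{m,j}^\top v_{c'})| \\
&\lesssim \rho\,|w_{m,j}^\top v_c - w_{m,j}^\top v_{c'}|
\lesssim \rho\sqrt{\log d}/\sqrt{d},
\end{align*}
which is \eqref{equation-one-expert-coefficient-diff}. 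For the $\Theta(1)$ magnitude and sign-invariance claim, the same Lipschitz bound gives $|\alpha_{m,j,k^*,c} - \alpha_{m,j,k^*}^{(0)}| = \tilde{O}(\rho/\sqrt{d}) = o_d(1)$, where $\alpha_{m,j,k^*}^{(0)}$ denotes the $k^*$-th Hermite coefficient of $a_{m,j}\sigma_m(\cdot+b_{m,j})$ with no cluster shift. The activation-function lemmas invoked under Assumption~\ref{assumption:student activation functions} (the ReLU lemma of \citet{ba23} and the randomized polynomial lemma of \citet{oko24learning}) ensure that $|\alpha_{m,j,k^*}^{(0)}| = \Theta(1)$ with constant probability over $a_{m,j}, b_{m,j}$. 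A $o_d(1)$ perturbation cannot flip the sign nor push the magnitude out of $\Theta(1)$, so the sign of $\alpha_{m,j,k^*,c}$ is identical across all $c \in [C]$ and its absolute value remains $\Theta(1)$.

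The one slightly delicate step is making the Lipschitz argument airtight for ReLU, where $\sigma_m \notin C^1$. I would resolve this by replacing pointwise differentiation with an $L^2(\mu)$-Lipschitz estimate on the translate $u \mapsto \sigma_m(\cdot+u)$, which is valid a.e.\ and suffices since $\phi_i$ is an inner product with an $L^2(\mu)$ function. Everything else is bookkeeping: a union bound over the $O(1)$ indices $c$ and $O(\mathrm{poly}\,d)$ neurons $j$, and choosing the high-probability constant in Step~1 large enough to absorb all poly-logarithmic factors.
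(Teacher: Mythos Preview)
Your proposal is correct and follows essentially the same route as the paper: both combine spherical concentration of $w_{m,j}^\top v_c$ at initialization with a Lipschitz bound on the map $u\mapsto\mathbb{E}[\sigma_m(z+u+b_{m,j})\hermite_i(z)]$, then use $|\alpha_{m,j,k^*}|=\Theta(1)$ to conclude sign invariance. The only cosmetic difference is that the paper bounds each $|\alpha_{m,j,k^*,c}-\alpha_{m,j,k^*}|$ separately and invokes the triangle inequality, whereas you compare $\phi_i(\rho w_{m,j}^\top v_c)$ and $\phi_i(\rho w_{m,j}^\top v_{c'})$ directly; also note that $|\alpha_{m,j,k^*}|=\Theta(1)$ is taken as an \emph{assumption} in the vanilla-network example rather than derived from the activation lemmas you cite.
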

\begin{proof}
    Remember that the inputs are generated as $x|c = \rho v_c + z$, $z\sim N(0,I)$.
    $w_{m,j}^\top v_c \lesssim \sqrt{\log d}/\sqrt{d}$ with high probability over the randomness of the random initialization of $w_{m,j}$ \rk{since $w_{m,j}^0 \sim \mathrm{Unif}(\mathbb{S}^{d-1})$ and $v_c \in \mathbb{S}^{d-1}$.}
    Then we get
    \begin{equation}
        |\alpha_{m,j,k^*} - \alpha_{m,j,k^*,c}| \lesssim \rho \sqrt{\log d}/\sqrt{d}
    \end{equation}
    \rk{for all $c\in[C]$ since
    \begin{equation}
        |\alpha_{m,j,k^*} - \alpha_{m,j,k^*,c}| = |\E_{z\sim \mathcal{N}(0,I)}[(\sigma_m(z)- \sigma_m(z + \rho w_{m,j}^\top v_{c'}) )\hermite_{k^*}(z)]|
        \lesssim |\rho w_{m,j}^\top v_{c'}|.
    \end{equation}}
    Note that $\alpha_{m,j,k^*} = \E_{z\sim \mathcal{N}(0,I)}[\sigma_m(z) \hermite_{k^*}(z)]$ and use Lipschitz continuity for ReLU and use binomial expansion for polynomial activations. By the triangle inequality, we obtain
    \begin{equation}
        |\alpha_{m,j,k^*,c} - \alpha_{m,j,k^*,c'}| \leq |\alpha_{m,j,k^*,c} - \alpha_{m,j,k^*}| + |\alpha_{m,j,k^*,c'} - \alpha_{m,j,k^*}| \lesssim \rho \sqrt{\log d}/\sqrt{d},
    \end{equation}
    \begin{equation}
        \alpha_{m,j,k^*,c} \gtrsim \alpha_{m,j,k^*} - O(\rho \sqrt{\log d}/\sqrt{d}) \quad \text{and}\quad         \alpha_{m,j,k^*,c} \lesssim \alpha_{m,j,k^*} + O(\rho \sqrt{\log d}/\sqrt{d}).
    \end{equation}
    Finally, use the assumption that $|\alpha_{m,j,k^*}|=\Theta(1)$.
\end{proof}

\beginofrk

We will show that the inequality in \cref{lemma-one-expert-coefficient} at the initialization continues to be satisfied:
\begin{lemma}
    \label{lemma-one-expert-coefficient-concrete}
    Let $\alpha_{m,j,k^*,c}^t$ be the $k^*$ th Hermite coefficient of $\sigma_m(\cdot + {w_{m,j}^t}^\top v_c + b_{m,j})$. Note $\alpha_{m,j,k^*,c}^0 = \alpha_{m,j,k^*,c}$.
    Assume $k^*>2$, $\sup_{c\in[C]} \kappa_{m,j,c}^{s} \lesssim d^{-1/2 + 1/(2k^*)}$ for all $s\leq t$, and $t \lesssim \eta^{-1}J d^{(k^*-2)/2}$. Then we have
    \begin{equation}
        |\alpha_{m,j,k^*}^0 - \alpha_{m,j,k^*,c}^u| \lesssim \tilde{O}(d^{-1/2}) \quad \text{for all $c,j$}
    \end{equation}
    at arbitrary time $u \leq t$.
\end{lemma}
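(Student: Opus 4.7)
The plan is to reduce the coefficient shift to a bound on the projection $u_{m,j,c}^t := (w_{m,j}^t)^\top v_c$, and then control the evolution of this projection throughout the window $[0,t]$. Since $\alpha_{m,j,k^*,c}^u$ is the $k^*$-th Hermite coefficient of $\sigma_m(\cdot + b_{m,j} + \rho\, u_{m,j,c}^u)$ while $\alpha_{m,j,k^*}^0$ is the same coefficient at zero cluster-shift, the Lipschitz argument already used in the proof of \cref{lemma-one-expert-coefficient} (immediate for polynomial $\sigma_m$, and for ReLU via smoothness of $b\mapsto\mathbb{E}_z[\mathrm{ReLU}(z+b)\hermite_{k^*}(z)]$) yields
\[
|\alpha_{m,j,k^*}^0 - \alpha_{m,j,k^*,c}^u| \lesssim \rho\, |u_{m,j,c}^u|.
\]
Since $\rho \lesssim \mathrm{poly}\log d$, it suffices to prove $|u_{m,j,c}^u| \lesssim \tilde{O}(d^{-1/2})$ for every $u\leq t$.

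At initialization, $w_{m,j}^0 \sim \mathrm{Unif}(\mathbb{S}^{d-1})$ and $v_c\in\mathbb{S}^{d-1}$ give $|u_{m,j,c}^0| \lesssim \sqrt{\log d}/\sqrt{d}$ w.h.p.\ by spherical concentration. The core step is to control the drift of $u_{m,j,c}^t$ under the population spherical gradient flow. Writing $x_c = \rho v_c + z$ with $z\sim\mathcal{N}(0,I_d)$ and decomposing $z$ along the mutually orthogonal directions $w^*_c,w^*_g,v_c$ (orthogonality is guaranteed by \cref{assumption:task correlation}), the label $y_c$ depends on $z$ only through $\zeta_c:=(w^*_c)^\top z$ and $\zeta_g:=(w^*_g)^\top z$, and is therefore statistically independent of $\eta:=v_c^\top z$. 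Moreover, because $f^*_c$ and $g^*$ have information exponent $k^*$, any correlation of $y_c$ against a smooth function of $w_{m,j}^{t\top}z + \mathrm{const}$ is $O(|\kappa_{m,j,c}^t|^{k^*}+|\xi_{m,j,g}^t|^{k^*})$ by Hermite expansion.

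Taking the inner product of the gradient $G:=\mathbb{E}[y_c\nabla_{w_{m,j}}f_m(x_c)]$ with $v_c$ and using $v_c^\top x_c = \rho + \eta$ splits it into two pieces. The $\rho$-piece is $(a_{m,j}\rho/J)\,\mathbb{E}[y_c\sigma_m'(w_{m,j}^{t\top}x_c+b_{m,j})]$, which is of order $(\rho/J)\,O(|\kappa|^{k^*}+|\xi|^{k^*})$ by the preceding paragraph. For the $\eta$-piece, $\mathbb{E}[y_c\sigma_m'(\cdot)\,\eta]$, Gaussian integration by parts in $\eta$ is applicable precisely because $y_c$ is independent of $\eta$; Stein's lemma then extracts a factor $u_{m,j,c}^t$, giving $u_{m,j,c}^t\cdot\mathbb{E}[y_c\sigma_m''(\cdot)]$ of the same order. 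An analogous Stein computation shows that the spherical correction $u_{m,j,c}^t\,(w_{m,j}^t)^\top G$ is subdominant. Combining,
\[
\Big|\tfrac{d}{dt} u_{m,j,c}^t\Big| \lesssim \frac{\eta^t \rho}{J}\big(|\kappa_{m,j,c}^t|^{k^*}+|\xi_{m,j,g}^t|^{k^*}\big).
\]

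Using the hypothesis $\sup_{c\in[C]}|\kappa_{m,j,c}^s|\lesssim d^{-1/2+1/(2k^*)}$ (together with the parallel bound on $|\xi_{m,j,g}^s|$ supplied by the earlier lemmas in this appendix), each bracketed factor is at most $d^{-k^*/2+1/2}$. Integrating the drift over $t \lesssim \eta^{-1} J d^{(k^*-2)/2}$ steps gives
\[
|u_{m,j,c}^u-u_{m,j,c}^0| \lesssim \rho\cdot d^{(k^*-2)/2}\cdot d^{-k^*/2+1/2} = \tilde{O}(d^{-1/2}),
\]
so that $|u_{m,j,c}^u|\lesssim \tilde{O}(d^{-1/2})$ throughout, which combined with the Lipschitz reduction proves the lemma. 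For the discrete SGD rather than the flow, standard martingale concentration controls the stochastic fluctuations and preserves the same scaling up to polylogarithmic factors. The main obstacle is the drift computation: verifying via Stein's lemma that \emph{no} term free of the small factors $\rho^{-1}(\kappa,\xi)^{k^*}$ or $u$ can appear in the $v_c$-component, which crucially relies on the orthogonality $v_c\perp w^*_i$ from \cref{assumption:task correlation}.
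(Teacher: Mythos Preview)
Your argument is correct and matches the paper's approach: reduce the coefficient shift to a bound on $|(w_{m,j}^t)^\top v_c|$ via the Lipschitz step of \cref{lemma-one-expert-coefficient}, show the $v_c$-component of the population spherical gradient is $O(\eta\rho J^{-1}(|\kappa|^{k^*}+|\xi|^{k^*}))$, and integrate over $t\lesssim \eta^{-1}Jd^{(k^*-2)/2}$; the paper's proof is simply a two-line pointer to \cref{lemma-appendix-moe-coefficient} for that same drift bound, whereas you carry it out explicitly via Stein's lemma (using $v_c\perp w^*_i$ to kill the label-gradient term). Your only slight over-reach is invoking ``earlier lemmas'' for the control of $|\xi_{m,j,g}^s|$: those lemmas actually appear after this one and assume its conclusion, so a clean statement should add $|\xi|\lesssim d^{-1/2+1/(2k^*)}$ to the hypothesis---but the paper's own proof glosses over the same term (writing only $|\kappa_{m,j,c}^t|^{k^*}$ in the integral), so this is not a discrepancy between your argument and theirs.
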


\begin{proof}
    As discussed in \cref{lemma-appendix-moe-coefficient}, we have $|{w_j^t}^\top v_c|\lesssim \tilde{O}(d^{-1/2}) + \tilde{O}(\eta J^{-1} \int_0^{t} |\kappa_{m,j,c}^{t}|^{k^*} \mathrm{d}t) \lesssim \tilde{O}(d^{-1/2})$. Next, we repeat the same argument in \cref{lemma-one-expert-coefficient}.
\end{proof}

\begin{remark}
\label{remark-one-expert-coefficient}
We also assume $\left|\frac{\alpha_{m,j,k^*,c}^t}{\alpha_{m,j,k^*}}\right| = \tilde{\Theta}(1)$ and the sign does not change for all time $t$ (assumed in \cref{assumption:student activation functions})
\end{remark}

\cref{lemma-one-expert-coefficient-concrete,remark-one-expert-coefficient} imply that we can temporarily ignore the dynamics of $\alpha_{m,j,k^*,c}^t$. So, we omit $t$ and denote the coefficient as $\alpha_{m,j,k^*,c}$ for now.

\owarirk

\begin{lemma}
    \label{lemma-one-expert-easy-task}
    For all $j$, there exists $c$ such that
    \begin{equation}
        \beta_{c,k^*} \alpha_{m,j,k^*,c} > 0
    \end{equation}
    if $C \geq 2$ with high probability. %
\end{lemma}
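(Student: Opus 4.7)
The plan is to reduce the claim to a sign-matching argument that combines the continuity/stability of the Hermite coefficients across clusters (\cref{lemma-one-expert-coefficient}) with the hypothesis, built into the construction of the teacher, that the $\beta_{c,k^*}$ are not all of the same sign. The high-probability statement will then come entirely from the event in \cref{lemma-one-expert-coefficient}, namely that at initialization $|\alpha_{m,j,k^*}| = \Theta(1)$ and $|{w^0_{m,j}}^\top v_c| \lesssim \sqrt{\log d}/\sqrt d$ for every $c$, which holds with high probability over the draw of $w^0_{m,j}\sim\mathrm{Unif}(\mathbb{S}^{d-1})$.

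First, I would invoke \cref{lemma-one-expert-coefficient} to get, on this high-probability event, the two-sided bound
\begin{equation}
\alpha_{m,j,k^*} - \tilde{O}(d^{-1/2}) \;\le\; \alpha_{m,j,k^*,c} \;\le\; \alpha_{m,j,k^*} + \tilde{O}(d^{-1/2}),
\end{equation}
uniformly in $c\in[C]$. Since $|\alpha_{m,j,k^*}| = \Theta(1)$ and $\tilde{O}(d^{-1/2}) = o(1)$, for all sufficiently large $d$ every $\alpha_{m,j,k^*,c}$ inherits the sign of $\alpha_{m,j,k^*}$; in particular, the signs $\mathrm{sgn}(\alpha_{m,j,k^*,c})$ are constant in $c$. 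Call this common sign $\varepsilon_j\in\{\pm 1\}$.

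Next, I would use the hypothesis from the setup of \cref{ex:limit_nn} that there exist $c,c'\in[C]$ with $\mathrm{sgn}(\beta_{c,k^*})\neq\mathrm{sgn}(\beta_{c',k^*})$. This guarantees the existence of at least one index $c^\dagger\in[C]$ with $\mathrm{sgn}(\beta_{c^\dagger,k^*}) = \varepsilon_j$, because both signs are realized among $\{\beta_{c,k^*}\}_{c\in[C]}$. For this $c^\dagger$ the product satisfies
\begin{equation}
\beta_{c^\dagger,k^*}\,\alpha_{m,j,k^*,c^\dagger} \;=\; |\beta_{c^\dagger,k^*}|\cdot|\alpha_{m,j,k^*,c^\dagger}| \;>\; 0,
\end{equation}
using that $|\beta_{c^\dagger,k^*}|=\Theta(1)$ by Assumption~\ref{assumption:teacher models} and $|\alpha_{m,j,k^*,c^\dagger}|=\Theta(1)$ from the display above. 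Taking a union bound over $j\in[J]$ (which is only $\mathrm{poly}(d)$ large, hence closed under the high-probability convention) gives the conclusion for all $j$ simultaneously, completing the proof.

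There is no real technical obstacle here; the only thing to be slightly careful about is that the high-probability event conditioning—namely $|\alpha_{m,j,k^*}|=\Theta(1)$ and the small inner products ${w^0_{m,j}}^\top v_c$—holds uniformly over $c\in[C]$ and $j\in[J]$, which follows from a union bound since $C=O_d(1)$ and $J=O(\mathrm{poly}\,d)$. The structural content is simply that coefficients $\{\alpha_{m,j,k^*,c}\}_c$ cannot be separated in sign (they are $O(d^{-1/2})$-close), while the $\{\beta_{c,k^*}\}_c$ are, by assumption, sign-separated, so a pigeonhole on signs produces the desired $c$.
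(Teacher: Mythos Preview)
Your proposal is correct and follows essentially the same approach as the paper's proof: both invoke \cref{lemma-one-expert-coefficient} to conclude that all $\alpha_{m,j,k^*,c}$ share a common sign, then use the assumption that the $\beta_{c,k^*}$ do not all share a sign to find a matching index by pigeonhole. Your version is slightly more explicit (naming the common sign $\varepsilon_j$ and the witness $c^\dagger$, and spelling out the union bound over $j$), but the substance is identical.
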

\begin{proof}
    Fix $j$.
    Use $\mathrm{sgn}\alpha_{m,j,k^*,c} = \mathrm{sgn}\alpha_{m,j,k^*c'}$ for all $c,c'$ with high probability and there exists at least one pair $(c,c')$ such that $\mathrm{sgn}\beta_{c,k^*} \neq \mathrm{sgn}\beta_{c' k^*}$ by assumption. These events imply that $\mathrm{sgn}\beta_{c,k^*}\alpha_{m,j,k^*,c} \neq \mathrm{sgn}\beta_{c' k^*}\alpha_{m,j,k^*c'}$.
\end{proof}

\beginofrk

Based on the above lemma, we define the set of $w_c^*$ which is ``learnable":
\begin{definition}
    The set $\mathcal{C}_j \subset [C]$ consists of the class $c$ such that $\beta_{c,k^*} \alpha_{m,j,k^*,c} > 0$.
    \label{definition-one-expert-cj}
\end{definition}

We roughly observe that
\begin{equation}
    \frac{\mathrm{d}}{\mathrm{d}t}|\kappa_{m,j,c}^t|
    \simeq
    \eta J^{-1} \beta_{c,k^*} \alpha_{m,j,k^*,c} |\kappa_{m,j,c}^t|^{k^*-1}
    \simeq
    \begin{cases}
        \eta J^{-1} |\kappa_{m,j,c}^t|^{k^*-1}   & \text{if $c\in\mathcal{C}_j$},\\
        -\eta J^{-1} |\kappa_{m,j,c}^t|^{k^*-1} & \text{if $c\notin\mathcal{C}_j$},
    \end{cases}
\end{equation}
which implies that $\mathcal{C}_j$ reflects the learnability of the tasks. We more formally have the following result:
\owarirk

\subsection{Evaluation of Spherical Gradient Flows}

\begin{lemma}
    \label{lemma-one-expert-bound-hajimeni}
    \rk{Assume the conditions posed in \cref{lemma-one-expert-coefficient-concrete}. %
    } 
    We have
    \begin{equation}
    \sum_{c\notin\mathcal{C}_j}\frac{\mathrm{d}}{\mathrm{d}t}|\kappa^t_{m,j,c}| \lesssim \frac{\eta }{CJ}\left(-
    |\kappa^t_{m,j,c}|^{k^*-1} + 
    \underbrace{(\alpha_{m,j,k^*,1} - \alpha_{m,j,k^*,2})}_{
    \tilde{O}(d^{-1/2})}|\xi_{m,j,g}|^{k^*}\right).
    \end{equation}
    In addition, if
    \begin{itemize}
        \item $|\kappa^{t}_{m,j,c}|\gtrsim \tilde{\Theta}(d^{-1/2})$ for all $c \in \mathcal{C}_j$,
        \item $|\kappa^{t}_{m,j,c}|\lesssim \tilde{\Theta}(d^{-1/2+1/(2k^*)})$ for all $c \notin \mathcal{C}_j$,
        \item $|\xi^t_{m,j,g}|\lesssim \tilde{\Theta}(d^{-1/2+1/(2k^*)})$ for $c=1,2$
    \end{itemize}
    hold, then we have
    \begin{equation}
        \sum_{c\in\mathcal{C}_j} \mathrm{sgn}(\kappa_{m,j,c}^t) \frac{\mathrm{d}}{\mathrm{d}t} \kappa^{t}_{m,j,c} \gtrsim \frac{\eta}{CJ} %
        \left(\sum_{c=1}^C|\kappa_{m,j,c}^t|\right)^{k^*-1}.
    \end{equation}
\end{lemma}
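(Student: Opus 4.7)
The plan is to derive the leading-order spherical gradient flow for $\kappa_{m,j,c}^t$ via Hermite expansion and then read off the two bounds from the sign structure dictated by $\mathcal{C}_j$ together with the $\xi$-cancellation from \cref{lemma-one-expert-coefficient-concrete}. Starting from
\begin{equation*}
\frac{\mathrm{d}}{\mathrm{d}t}\kappa_{m,j,c}^t = \eta\,{w_c^*}^\top\bigl(I - w_{m,j}^t {w_{m,j}^t}^\top\bigr)\cdot \E_{c',x_{c'}}\bigl[y_{c'}\nabla_{w_{m,j}}F_1(x_{c'})\bigr],
\end{equation*}
I would decompose $x_{c'}=\rho v_{c'}+z$, exploit $v_c^\top w_i^*=0$ from \cref{assumption:task correlation}, and Hermite-expand $a_{m,j}\sigma_m(\cdot+\rho w_{m,j}^\top v_{c'}+b_{m,j})=\sum_i\frac{\alpha_{m,j,i,c'}}{\sqrt{i!}}\hermite_i$. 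The orthogonality relation $\E[\hermite_i(X)\hermite_j(Y)]=\delta_{ij}\,i!\,\rho_{XY}^i$ for jointly Gaussian unit-variance $X,Y$ then yields, at the dominant $k^*$-th Hermite mode,
\begin{equation*}
\frac{\mathrm{d}}{\mathrm{d}t}\kappa_{m,j,c}^t \simeq \frac{\eta k^*}{CJ}\Bigl[\beta_{c,k^*}\alpha_{m,j,k^*,c}(\kappa_{m,j,c}^t)^{k^*-1} - \sum_{c'\neq c}\beta_{c',k^*}\alpha_{m,j,k^*,c'}(\kappa_{m,j,c'}^t)^{k^*}\kappa_{m,j,c}^t - \sum_{c'\in\{1,2\}}s_{c'}\alpha_{m,j,k^*,c'}(\xi_{m,j,g}^t)^{k^*}\kappa_{m,j,c}^t\Bigr],
\end{equation*}
up to higher-Hermite residuals and the $(1-(\kappa_{m,j,c}^t)^2)$ spherical-projection factor, both of which are strictly subdominant under the maintained alignment bounds.

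Because $s_1=+1,s_2=-1$, the $\xi$-driven contribution telescopes to $-(\alpha_{m,j,k^*,1}-\alpha_{m,j,k^*,2})(\xi_{m,j,g}^t)^{k^*}\kappa_{m,j,c}^t$, and \cref{lemma-one-expert-coefficient-concrete} combined with \cref{remark-one-expert-coefficient} gives $|\alpha_{m,j,k^*,1}-\alpha_{m,j,k^*,2}|\lesssim \tilde{O}(d^{-1/2})$ uniformly in $t$. For the first claim, I would restrict to $c\notin\mathcal{C}_j$: by \cref{definition-one-expert-cj}, $\beta_{c,k^*}\alpha_{m,j,k^*,c}<0$, and since $k^*$ is even the self-contribution to $\frac{\mathrm{d}}{\mathrm{d}t}|\kappa_{m,j,c}^t|=\mathrm{sgn}(\kappa_{m,j,c}^t)\frac{\mathrm{d}}{\mathrm{d}t}\kappa_{m,j,c}^t$ reduces to $\beta_{c,k^*}\alpha_{m,j,k^*,c}|\kappa_{m,j,c}^t|^{k^*-1}<0$, while the inter-cluster cross terms are suppressed by $|\kappa_{m,j,c'}^t|^{k^*}\lesssim\tilde{O}(d^{-k^*/2+1/2})$ (this is where $k^*>2$ matters) and can be absorbed into the $\lesssim$. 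Summing over $c\notin\mathcal{C}_j$ and extracting the $\xi$-cancellation factor gives the first inequality.

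For the second claim, for $c\in\mathcal{C}_j$ we have $\beta_{c,k^*}\alpha_{m,j,k^*,c}\gtrsim \tilde{\Omega}(1)$, so $\mathrm{sgn}(\kappa_{m,j,c}^t)\frac{\mathrm{d}}{\mathrm{d}t}\kappa_{m,j,c}^t\gtrsim \tfrac{\eta}{CJ}|\kappa_{m,j,c}^t|^{k^*-1}$ after the sub-leading cross and $\xi$ terms are absorbed exactly as in the previous step. Summing over $c\in\mathcal{C}_j$ and invoking the power-mean inequality $\sum_{c} a_c^{k^*-1}\geq C^{2-k^*}\bigl(\sum_c a_c\bigr)^{k^*-1}$, then comparing $\sum_{c\in\mathcal{C}_j}|\kappa_{m,j,c}^t|$ with $\sum_{c\in[C]}|\kappa_{m,j,c}^t|$ up to constants (using $|\mathcal{C}_j|=\Theta(C)$ together with the hypothesis that the easy-direction alignments $|\kappa_{m,j,c}^t|\gtrsim \tilde{\Theta}(d^{-1/2})$ dominate the hard-direction bounds $\lesssim\tilde{\Theta}(d^{-1/2+1/(2k^*)})$ up to polylogarithmic factors), yields the claimed lower bound $\gtrsim \tfrac{\eta}{CJ}\bigl(\sum_{c=1}^C|\kappa_{m,j,c}^t|\bigr)^{k^*-1}$.

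The main obstacle will be the bookkeeping in the derivation of the leading-order ODE: the $(1-(\kappa_{m,j,c}^t)^2)$ Jacobian, the higher-Hermite residuals beyond order $k^*$, and the time-drift of the effective Hermite coefficients $\alpha_{m,j,k^*,c}^t$ away from the initial $\alpha_{m,j,k^*}$ all need to be shown strictly subdominant under the maintained regime. \cref{lemma-one-expert-coefficient-concrete} and \cref{remark-one-expert-coefficient} address the coefficient drift; the Jacobian and residual contributions then reduce to elementary comparisons of powers of $d$ once the $k^*$-th Hermite mode has been isolated.
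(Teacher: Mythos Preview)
Your approach is essentially the same as the paper's: derive the spherical gradient flow for $\kappa_{m,j,c}^t$ via Hermite expansion, exploit the sign dichotomy encoded by $\mathcal{C}_j$, use the coefficient cancellation $|\alpha_{m,j,k^*,1}-\alpha_{m,j,k^*,2}|\lesssim\tilde O(d^{-1/2})$ for the $\xi$-contribution, and finish with the power-mean/Jensen inequality $\sum_c a_c^{k^*-1}\ge C^{2-k^*}(\sum_c a_c)^{k^*-1}$. The bookkeeping you flag (projection Jacobian, higher Hermite modes, drift of $\alpha_{m,j,k^*,c}^t$) is handled exactly as you indicate.

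There is one genuine slip in your last paragraph. You write that the easy-direction alignments $|\kappa_{m,j,c}^t|\gtrsim\tilde\Theta(d^{-1/2})$ for $c\in\mathcal{C}_j$ ``dominate the hard-direction bounds $\lesssim\tilde\Theta(d^{-1/2+1/(2k^*)})$ up to polylogarithmic factors,'' and use this to pass from $\bigl(\sum_{c\in\mathcal{C}_j}|\kappa_{m,j,c}^t|\bigr)^{k^*-1}$ to $\bigl(\sum_{c=1}^{C}|\kappa_{m,j,c}^t|\bigr)^{k^*-1}$. This inequality goes the wrong way: $d^{-1/2+1/(2k^*)}\gg d^{-1/2}$ by a genuine polynomial factor $d^{1/(2k^*)}$, so the hypotheses as stated do \emph{not} let you conclude that the $\mathcal{C}_j$-sum controls the full $[C]$-sum. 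The paper's own proof in fact stops at the bound with $\sum_{c\in\mathcal{C}_j}$ on the right-hand side and never performs this bridging step; the downstream applications (\cref{lemma-one-expert-exist-t1}, \cref{lemma-one-expert-Al-recurrence-formula}) only ever track $\sum_{c\in\mathcal{C}_j}|\kappa_{m,j,c}^t|$, so the $\sum_{c=1}^C$ in the displayed statement appears to be a typo for $\sum_{c\in\mathcal{C}_j}$. Drop that final comparison and your argument matches the paper's.
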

\begin{proof}
    By the standard argument of spherical gradient flow \rk{(please refer to \cref{lemma:phase1-update} for the parallel discussions in the discretized dynamics)}, we have
    \begin{align}
        \frac{\mathrm{d}}{\mathrm{d}t}\kappa_{m,j,c}^t
        \simeq& \frac{\eta k^*}{CJ}
        \left(
            \beta_{c,k^*} \alpha_{m,j,k^*,c}(\kappa_{m,j,c})^{k^*-1}(1-(\kappa_{m,j,c})^2)\right.\\
        &\left.
            + (\sum_{c'} s_{c'} \alpha_{mjk^*{c}}) (\xi_{m,j,g})^{k^*-1}((w^*_{c})^\top (w^*_{g})-\kappa_{m,j,c}\xi_{m,j,g})\right.\\
        &\left.
            + \sum_{c'\in \mathcal{C}_j \setminus \{c\}}^C \beta_{c' k^*}\alpha_{mjk^*{c'}} (\kappa_{m,j,c'})^{k^*-1}((w^*_{c'})^\top (w^*_{c})-\kappa_{mjc'1}\kappa_{m,j,c})\right.\\
        &\left.
            + \sum_{c'\notin \mathcal{C}_j \setminus \{c\}}^C \beta_{c'k^*}\alpha_{mjk^*{c'}} (\kappa_{m,j,c'})^{k^*-1}((w^*_{c'})^\top (w^*_{c})-\kappa_{m,j,c'}\kappa_{m,j,c})\right)\\
        =& \frac{\eta k^*}{CJ}
        \left(
            \beta_{ck^*} \alpha_{m,j,k^*,c}(\kappa_{m,j,c})^{k^*-1}(1-(\kappa_{m,j,c})^2)
            - (\alpha_{mjk^*{1}} - \alpha_{mjk^*{2}}) (\xi_{m,j,g})^{k^*}\kappa_{m,j,c}\right.\\
        &\left.
            - \sum_{c'\in \mathcal{C}_j \setminus \{c\}}^C \underbrace{\beta_{c' k^*}\alpha_{mjk^*{c'}}}_{=+\tilde{\Theta}(1)} (\kappa_{m,j,c'})^{k^*} \kappa_{m,j,c}
            - \sum_{c'\notin \mathcal{C}_j \setminus \{c\}}^C \underbrace{\beta_{c'k^*}\alpha_{mjk^*{c'}}}_{=-\tilde{\Theta(1)}} (\kappa_{m,j,c'})^{k^*} \kappa_{m,j,c}\right),
    \end{align}
    \rk{where we used \cref{definition-one-expert-cj} that is rewritten as}
    \begin{equation}
        \mathrm{sgn}\beta_{c,k^*} \alpha_{m,j,k^*,c} =
        \begin{cases}
            1 & \text{if}\quad c \in \mathcal{C}_j\\
            -1 & \text{otherwise}
        \end{cases}
    \end{equation}
    and \rk{$|\alpha_{mjk^*{c}}| = \tilde{\Theta(1)}$ in \cref{assumption:student activation functions}} at the last inequality. \rk{Now we have shown the first inequality in the statement.}

    \rk{Next, we will show the second inequality.}
    Consider the sum for $c\notin \mathcal{C}_j$. \rk{We have the assumption that $|\kappa^t_{m,j,c}| \lesssim d^{-1/2+1/(2k^*)} \lesssim o_d(1)$} for all $c \notin \mathcal{C}_j$ and 
    $k^*$ is even. Then,
    \begin{align}
        &\sum_{c\notin \mathcal{C}_j}\mathrm{sgn}(\kappa^t_{m,j,c})\frac{\mathrm{d}}{\mathrm{d}t}\kappa^t_{m,j,c} \\
        \lesssim &\frac{\eta k^*}{CJ}
        \left(
            -\min_{c\notin \mathcal{C}_j}\{|\beta_{c,k^*} \alpha_{m,j,k^*,c}|\}(1-o(1)^2) \sum_{c\notin \mathcal{C}_j}(\kappa_{m,j,c})^{k^*-1}
            + C \underbrace{|\alpha_{mjk^*{1}} - \alpha_{mjk^*{2}}|}_{\lesssim \tilde{O}(d^{-1/2}),\text{ from \cref{lemma-one-expert-coefficient-concrete}}} (\xi_{m,j,g})^{k^*}\right.\\
        &\left.
            - \sum_{c \notin \mathcal{C}_j} \sum_{c'\in \mathcal{C}_j \setminus \{c\}}^C \underbrace{\beta_{c'k^*}\alpha_{mjk^*{c'}}}_{=+\Theta(1)} |\kappa_{m,j,c'}|^{k^*} |\kappa_{m,j,c}|
            - \sum_{c \notin \mathcal{C}_j}\sum_{c'\notin \mathcal{C}_j \setminus \{c\}}^C \underbrace{\beta_{c'k^*} \alpha_{mjk^*{c'}}}_{=-\Theta(1)} \underbrace{|\kappa_{m,j,c'}|^{k^*} |\kappa_{m,j,c}|}_{\leq \max_{c\notin\mathcal{C}_j}|\kappa^t_{m,j,c}|^{k^*+1}}\right).\\
        \lesssim& \frac{\eta }{CJ}\left(- \sum_{c\notin \mathcal{C}_j}|\kappa^t_{m,j,c}|^{k^*-1} + C^2 \max_{c\notin\mathcal{C}_j}|\kappa^t_{m,j,c}|^{k^*+1} + \tilde{O} (d^{-1/2})|\xi_{m,j,g}|^{k^*}\right)\\
        \lesssim& \frac{\eta }{CJ}\left(- \left(1 - C^{2}\cdot o_d(1) \right)\sum_{c\notin \mathcal{C}_j}|\kappa^t_{m,j,c}|^{k^*-1}  + \tilde{O} (d^{-1/2})|\xi_{m,j,g}|^{k^*}\right) \quad (|\kappa^t_{m,j,c}| \lesssim o_d(1),\;\;\forall c)\\
        \lesssim& \frac{\eta }{CJ}\left(-  C^{-k^*+2}\left(\sum_{c\notin \mathcal{C}_j}|\kappa^t_{m,j,c}|\right)^{k^*-1} + \tilde{O}( d^{-1/2})|\xi_{m,j,g}|^{k^*}\right)\\
    \end{align}
    where we used $\left(\frac{1}{n}\sum_{i=1}^n |a_i|\right)^{k} \leq \frac{1}{n}\sum_{i=1}^n |a_i|^{k}$ for $k\in \mathbb{Z}_{\geq 1}$ and $a_1,\dots,a_n\in\mathbb{R}$ by Jensen's inequality in the last inequality.
    As for the sum for $c\in \mathcal{C}_j$, we similarly have
    \begin{align}
        &\sum_{c\in \mathcal{C}_j} \mathrm{sgn}(\kappa_{m,j,c}^t) \frac{\mathrm{d}}{\mathrm{d}t}\kappa_{m,j,c}^t\\
        \gtrsim& \frac{\eta}{CJ}\left( C^{-k^*+2} \left(\sum_{c\in \mathcal{C}_j}|\kappa_{m,j,c}^t|\right)^{k^*-1} - C^2 \max_{c\in \mathcal{C}_j}|\kappa_{m,j,c}|^{k^*+1} 
        - \tilde{O}(d^{-1/2})(\xi_{m,j,g})^{k^*}\kappa_{m,j,c}\right)\\
        \gtrsim& \frac{\eta}{CJ}\left( \underbrace{\left(\sum_{c\in \mathcal{C}_j}|\kappa_{m,j,c}^t|\right)^{k^*-1}}_{\gtrsim d^{-\frac{k^*-1}{2}}}\left(C^{-k^*+2} - 
        C^2 
        \underbrace{\max_{c\in \mathcal{C}_j}|\kappa_{m,j,c}^t|^{2}}_{o_d(1)}\right) 
        - \underbrace{\tilde{O}(d^{-1/2})(\xi_{m,j,g})^{k^*}|\kappa_{m,j,c}|}_{\tilde{O}(d^{-(k^*+2)/2}) \ll \tilde{\Theta}(d^{-(k^*-1)/2}) \leq \text{the first term}}\right).
    \end{align}
\end{proof}
Next, we control the dynamics of $\xi_{m,j,g}^t$ corresponding to $w_g^*$. The growth rate is small because the signals cancel out:
\begin{lemma}
    \label{lemma-one-expert-bound-xi}
    \rk{Assume the conditions posed in \cref{lemma-one-expert-coefficient-concrete}.}
    It holds that
    \begin{equation}
        \frac{\mathrm{d}}{\mathrm{d}t}|\xi_{m,j,g}^t|
        \lesssim \frac{\eta}{CJ} \tilde{O}
        \left(
            \underbrace{(\alpha_{m,j,k^*,1} - \alpha_{m,j,k^*,2})}_{\lesssim d^{-1/2}}
            \xi_{m,j,g}^{k^*-1} 
            +
            |\xi_{m,j,g}^t|
            (\sum_{c\notin\mathcal{C}_j} |\kappa_{m,j,c}|^{k^*} - \sum_{c\in\mathcal{C}_j} |\kappa_{m,j,c}|^{k^*} )
        \right)
    \end{equation}
    for all $j$.
\end{lemma}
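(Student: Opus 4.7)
The plan is to parallel the computation in \cref{lemma-one-expert-bound-hajimeni}, but now for $\xi^t_{m,j,g} = {w^*_g}^\top w^t_{m,j}$ instead of $\kappa^t_{m,j,c}$. Starting from the spherical gradient flow identity and expanding the loss via the Hermite expansion exactly as in that proof, I would obtain
\begin{align*}
\frac{\mathrm{d}}{\mathrm{d}t}\xi^t_{m,j,g} \simeq \frac{\eta k^*}{CJ}\Bigl(&\Bigl(\sum_{c'} s_{c'}\, \alpha_{m,j,k^*,c'}\Bigr)\,(\xi^t_{m,j,g})^{k^*-1}\bigl(1-(\xi^t_{m,j,g})^2\bigr) \\
&+ \sum_{c'} \beta_{c',k^*}\, \alpha_{m,j,k^*,c'}\,(\kappa^t_{m,j,c'})^{k^*-1}\bigl({w^*_{c'}}^\top w^*_g - \kappa^t_{m,j,c'}\,\xi^t_{m,j,g}\bigr)\Bigr).
\end{align*}
Since all feature vectors in Example~\ref{ex:limit_nn} are taken completely orthogonal, ${w^*_{c'}}^\top w^*_g = 0$, so only the drag terms $-\kappa^t_{m,j,c'}\,\xi^t_{m,j,g}$ survive in the second sum.

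Next I would read off the two contributions from this simplified expression. For the self term, substituting $s_1=1$, $s_2=-1$, and $s_c=0$ for $c\geq 3$ reduces the coefficient to $\alpha_{m,j,k^*,1}-\alpha_{m,j,k^*,2}$, which is $\tilde O(d^{-1/2})$ by \cref{lemma-one-expert-coefficient-concrete}; the factor $(1-(\xi^t_{m,j,g})^2)$ is absorbed into $\tilde O(\cdot)$ since $|\xi^t_{m,j,g}|=o_d(1)$ throughout the regime in play. This matches the first term of the target bound. For the drag sum, I would partition indices by membership in $\mathcal{C}_j$: by \cref{definition-one-expert-cj}, $\beta_{c',k^*}\alpha_{m,j,k^*,c'}$ is positive and $\tilde\Theta(1)$ on $\mathcal{C}_j$ and negative and $\tilde\Theta(1)$ on its complement. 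Since $k^*$ is even, $(\kappa^t_{m,j,c'})^{k^*}=|\kappa^t_{m,j,c'}|^{k^*}$, so factoring out $\xi^t_{m,j,g}$ and combining the two signed pieces produces the difference $\sum_{c\notin\mathcal{C}_j}|\kappa_{m,j,c}|^{k^*} - \sum_{c\in\mathcal{C}_j}|\kappa_{m,j,c}|^{k^*}$, up to a $\tilde\Theta(1)$ prefactor absorbed into $\tilde O(\cdot)$. Taking absolute values and applying the triangle inequality then yields the claimed upper bound on $\frac{\mathrm{d}}{\mathrm{d}t}|\xi^t_{m,j,g}|$.

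The argument is essentially a mechanical repetition of \cref{lemma-one-expert-bound-hajimeni}, so the main obstacle is bookkeeping: I would have to confirm that the dropped higher-order terms, namely $(\xi^t_{m,j,g})^{k^*+1}$ coming from the $(\xi^t_{m,j,g})^2$ factor and products of the form $|\kappa^t_{m,j,c'}|^{k^*+1}|\xi^t_{m,j,g}|$ arising from $\kappa^{k^*-1}\!\cdot\!\kappa\xi$, are all of strictly smaller order than the retained terms under the smallness hypothesis $|\kappa^t_{m,j,c}|,|\xi^t_{m,j,g}| = o_d(1)$ built into \cref{lemma-one-expert-coefficient-concrete}. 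The substantive idea, namely the cancellation of the $g^*$-signal driven by $s_1+s_2=0$ combined with the near-equality $|\alpha_{m,j,k^*,1}-\alpha_{m,j,k^*,2}|\lesssim \tilde O(d^{-1/2})$, is already supplied by \cref{lemma-one-expert-coefficient-concrete}, so no new ingredient is required.
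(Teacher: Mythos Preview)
Your proposal is correct and follows essentially the same route as the paper: compute $\mathrm{sgn}(\xi^t_{m,j,g})\frac{\mathrm{d}}{\mathrm{d}t}\xi^t_{m,j,g}$ from the spherical gradient flow, use orthogonality ${w^*_c}^\top w^*_g=0$ to kill the cross-correlation terms, reduce the self coefficient to $\alpha_{m,j,k^*,1}-\alpha_{m,j,k^*,2}$ via $s_1=1,\,s_2=-1$, and split the drag term by the sign of $\beta_{c,k^*}\alpha_{m,j,k^*,c}$ according to $\mathcal{C}_j$. One small remark: the drag contribution is exactly $-\beta_{c,k^*}\alpha_{m,j,k^*,c}\,\kappa^{k^*}\,\xi$ (not $\kappa^{k^*+1}\xi$), and it is this term that is retained with its sign rather than dropped or bounded via the triangle inequality, so the signed difference $\sum_{c\notin\mathcal{C}_j}-\sum_{c\in\mathcal{C}_j}$ appears directly after multiplying by $\mathrm{sgn}(\xi)$, with no triangle inequality needed there.
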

\begin{proof}
    We have the population GF as
    \begin{align}
        &\mathrm{sgn}(\xi_{m,j,g}^t)\frac{\mathrm{d}}{\mathrm{d}t}\xi_{m,j,g}^t\\
        \simeq& \mathrm{sgn}(\xi_{m,j,g}^t) \frac{\eta k^*}{CJ}
        \left(
            (\alpha_{m,j,k^*,1} - \alpha_{m,j,k^*,2})(\xi_{m,j,g})^{k^*-1}(1-(\xi_{mj1_1})^2)\right.\\
        &\left.
            + \sum_{c=1}^C \beta_{c,k^*} \alpha_{mjk^*{c}} (\kappa_{m,j,c})^{k^*-1}(\underbrace{(w^*_{c})^\top (w^*_{g})}_{=0}-\kappa_{m,j,c}\xi_{m,j,g})\right)\\
        =&\mathrm{sgn}(\xi_{m,j,g}^t) \frac{\eta k^*}{CJ} \left(
            \underbrace{(\alpha_{m,j,k^*,1} - \alpha_{m,j,k^*,2})}_{\lesssim d^{-1/2},\; \text{from \cref{lemma-one-expert-coefficient-concrete}}}(\xi_{m,j,g})^{k^*-1}(1-(\xi_{mj1_1})^2) - \sum_{c=1}^C \beta_{c,k^*} \alpha_{mjk^*{c}} (\kappa_{m,j,c})^{k^*} \xi_{m,j,g} \right)\\
        \lesssim& \frac{\eta}{CJ} \left(
            \tilde{O}(d^{-1/2}) |\xi_{m,j,g}|^{k^*-1}(1-(\xi_{mj1_1})^2) + \left(-\sum_{c\in \mathcal{C}_j} \underbrace{\left(\beta_{c,k^*} \alpha_{mjk^*{c}}\right)}_{> 0}(\kappa_{m,j,c})^{k^*}
            +\sum_{c\notin \mathcal{C}_j} \underbrace{\left(-\beta_{c,k^*} \alpha_{mjk^*{c}}\right)}_{> 0} (\kappa_{m,j,c})^{k^*}\right) |\xi_{m,j,g}| \right)
    \end{align}
    where we used
    \begin{equation}
        \mathrm{sgn}\beta_{c,k^*} \alpha_{m,j,k^*,c} =
        \begin{cases}
            1 & \text{if}\quad c \in \mathcal{C}_j\\
            -1 & \text{otherwise}
        \end{cases}
    \end{equation}
    at the last inequality.
\end{proof}

Even if we ignore $|\kappa_{m,j,c}|,\;c\in\mathcal{C}_j$, which has the effect of reducing the gradient, the growth rate of $|\xi_{m,j,g}^t| + \sum_{c\notin\mathcal{C}_j} |\kappa_{m,j,c}^t|$ is small with the ``information exponent" equals to $k^*$:
\begin{lemma}
    \label{lemma-one-expert-bound-xi-before-t1}
    \rk{Assume the conditions posed in \cref{lemma-one-expert-coefficient-concrete}. Then we have} 
    \begin{align}
        \mathrm{sgn}(\xi_{m,j,g}^t)\frac{\mathrm{d}}{\mathrm{d}t}\xi_{m,j,g}^t + \sum_{c\notin\mathcal{C}_j} \mathrm{sgn}(\kappa_{m,j,c}^t)\frac{\mathrm{d}}{\mathrm{d}t}\kappa_{m,j,c}^t\lesssim 
        \frac{\eta}{J}\tilde{O}\left(|\xi_{m,j,g}| 
 +  \sum_{c\notin\mathcal{C}_j} |\kappa_{m,j,c}| \right)^{k^*}.
    \end{align}
    Therefore, if $t\leq \tilde{O}(J \eta^{-1} d^{(k^*-2)/2})$,
    \begin{align}
        |\xi_{m,j,g}^t| + \sum_{c\notin\mathcal{C}_j} |\kappa_{m,j,c}^t| \lesssim \tilde{O}(d^{-1/2}). 
    \end{align}
\end{lemma}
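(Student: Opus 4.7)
The plan is to add the two pointwise upper bounds supplied by \cref{lemma-one-expert-bound-xi} and \cref{lemma-one-expert-bound-hajimeni} (applied to the indices $c\notin\mathcal{C}_j$). Writing $A_t \coloneq |\xi_{m,j,g}^t| + \sum_{c\notin\mathcal{C}_j}|\kappa_{m,j,c}^t|$, the left-hand side of the claimed differential inequality is precisely $\frac{\mathrm{d}}{\mathrm{d}t}A_t$. After dropping the negative drift contributions $-C^{-k^*+2}\bigl(\sum_{c\notin\mathcal{C}_j}|\kappa_{m,j,c}|\bigr)^{k^*-1}$ from \cref{lemma-one-expert-bound-hajimeni} and $-\sum_{c\in\mathcal{C}_j}\tilde{\Omega}(1)|\kappa_{m,j,c}|^{k^*}|\xi_{m,j,g}|$ from \cref{lemma-one-expert-bound-xi}, only three positive contributions remain to be absorbed into $A_t^{k^*}$: (i) the small-coefficient term $\tilde{O}(d^{-1/2})|\xi_{m,j,g}|^{k^*-1}$ arising from the Hermite-coefficient cancellation guaranteed by \cref{lemma-one-expert-coefficient-concrete}; (ii) the cross terms $\sum_{c\notin\mathcal{C}_j}|\kappa_{m,j,c}|^{k^*}|\xi_{m,j,g}|$; and (iii) the higher-order term $\tilde{O}(d^{-1/2})|\xi_{m,j,g}|^{k^*}$.

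The technical core is a bootstrap. Under the running inductive hypothesis $A_s \lesssim \tilde{O}(d^{-1/2})$ for all $s \le t$, term (i) is controlled by $d^{-1/2}|\xi_{m,j,g}|^{k^*-1} \lesssim A_t \cdot A_t^{k^*-1} = A_t^{k^*}$, so the cancellation of Hermite coefficients effectively promotes the information exponent from $k^*-1$ to $k^*$ along the ``hidden'' direction $w_g^*$. Term (ii) is bounded by $A_t^{k^*+1} \le A_t^{k^*}$ since $A_t \le 1$ in this regime, and term (iii) by $d^{-1/2}A_t^{k^*}\le A_t^{k^*}$. Combining these gives exactly $\frac{\mathrm{d}}{\mathrm{d}t}A_t \lesssim \frac{\eta}{J}\tilde{O}(A_t^{k^*})$, which establishes the first assertion of the lemma.

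To deduce the quantitative bound, I apply \cref{lemma:biharigronwall} to the scalar inequality $\frac{\mathrm{d}}{\mathrm{d}t}A_t \lesssim B A_t^{k^*}$ with $B = \tilde{O}(\eta/J)$. Since random initialization of $w_{m,j}^0$ on $\mathbb{S}^{d-1}$ yields $A_0 \lesssim \tilde{O}(d^{-1/2})$ with high probability, Bihari--LaSalle gives $A_t \le 2A_0$ as long as $t \lesssim B^{-1}(A_0)^{-(k^*-2)} = \tilde{O}(J\eta^{-1}d^{(k^*-2)/2})$, producing the stated bound $A_t \lesssim \tilde{O}(d^{-1/2})$ and closing the bootstrap. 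The main obstacle is to guarantee, on the same time window, the ancillary hypothesis of \cref{lemma-one-expert-coefficient-concrete} that $\sup_c |\kappa_{m,j,c}^s| \lesssim d^{-1/2+1/(2k^*)}$: for $c\notin\mathcal{C}_j$ this is immediate from the bootstrap on $A_s$, while for $c\in\mathcal{C}_j$ it follows from the monotone growth equation $\frac{\mathrm{d}}{\mathrm{d}t}|\kappa_{m,j,c}^t|\simeq \tilde{\Theta}(\eta J^{-1})|\kappa_{m,j,c}^t|^{k^*-1}$ combined once more with Bihari--LaSalle, which keeps $|\kappa_{m,j,c}^t|$ below $d^{-1/2+1/(2k^*)}$ throughout the declared exploration-phase window.
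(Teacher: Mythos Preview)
Your proposal is correct and follows essentially the same route as the paper: add the upper bounds from \cref{lemma-one-expert-bound-hajimeni} and \cref{lemma-one-expert-bound-xi}, discard the negative drift contributions, and close the estimate on $A_t=|\xi_{m,j,g}^t|+\sum_{c\notin\mathcal{C}_j}|\kappa_{m,j,c}^t|$ via a bootstrap combined with a Gronwall/Bihari--LaSalle integration. The paper merely packages the bootstrap as a contradiction at a putative first exit time $\tau$ with $A_\tau\simeq d^{-1/2+\delta}$, which is equivalent to your continuous-induction framing; your final paragraph on verifying the hypothesis $\sup_c|\kappa_{m,j,c}^s|\lesssim d^{-1/2+1/(2k^*)}$ for $c\in\mathcal{C}_j$ is an extra (and correct) remark that the paper defers to \cref{lemma-one-expert-exist-t1}.
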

\begin{proof}
    First, $|\kappa^0_{m,j,c}|\lesssim \tilde{\Theta}(d^{-1/2})$ for all $c \notin \mathcal{C}_j$ and $|\xi_{m,j,g}^t| + \sum_{c\notin\mathcal{C}_j} |\kappa_{m,j,c}^t| = \tilde{O}(d^{-1/2})$ at $t=0$.
    
    Next, we assume that there exists the time $\tau \lesssim  \tilde{O}(J \eta^{-1} d^{(k^*-2)/2})$ such that $|\xi_{m,j,g}^{\tau}| + \sum_{c\notin\mathcal{C}_j} |\kappa_{m,j,c}^{\tau}| = \sup_{\tau'\in[t_1,\tau]}|\xi_{m,j,g}^{\tau'}| + \sum_{c\notin\mathcal{C}_j} |\kappa_{m,j,c}^{\tau'}| \simeq d^{-1/2+\delta}$ for some $1/(2k^*)>\delta>0$\footnote{We require $1/(2k^*)>\delta$ to satisfy $|w_j^\top v_c|\lesssim d^{-1/2}$.}
    Then the assumptions in \cref{lemma-one-expert-bound-hajimeni} are satisfied. Therefore, we have
    \begin{align}
        \mathrm{sgn}(\xi_{m,j,g}^t)\frac{\mathrm{d}}{\mathrm{d}t}\xi_{m,j,g}^t + \sum_{c\notin\mathcal{C}_j} \mathrm{sgn}(\kappa_{m,j,c}^t)\frac{\mathrm{d}}{\mathrm{d}t}\kappa_{m,j,c}^t\lesssim 
        \frac{\eta}{J}\tilde{O}\left(|\xi_{m,j,g}| 
 +  \sum_{c\notin\mathcal{C}_j} |\kappa_{m,j,c}| \right)^{k^*}
    \end{align}
    for $t \in [0,\tau]$.
    
    Then we will show the contradiction. Let $|\xi_{m,j,g}^t| + \sum_{c\notin\mathcal{C}_j} |\kappa_{m,j,c}^t| = x_t$. the dynamics of $x^t$, $t\in[0,\tau]$ is evaluated as
    \begin{equation}
        \frac{\mathrm{d}}{\mathrm{d}t}x_t \leq \tilde{A}\frac{\eta}{J} (x_t)^{k^*},
    \end{equation}
    where $\tilde{A}\lesssim \mathrm{poly}\log d$ is a constant. By the Gronwall inequality,
    \begin{equation}
        x_\tau \leq \frac{x_0}{\left(1 - (k^*-1)^{-1}(x_0)^{k^*-1}\tilde{A}J^{-1}\eta \tau \right)^{1/(k^*-1)}}.
    \end{equation}
    Therefore, we obtain
    \begin{align}
        &|\xi_{m,j,g}^\tau| + \sum_{c\notin\mathcal{C}_j} |\kappa_{m,j,c}^\tau|\\
        \lesssim& \frac{\tilde{O}(d^{-1/2})}{\left(1 - \tilde{O}\left(\left(|\xi_{m,j,g}^0| 
 +  \sum_{c\notin\mathcal{C}_j} |\kappa_{m,j,c}^0| \right)^{k^*-1} J^{-1}\eta \tau \right)\right)^{1/(k^*-1)}}\\
        \lesssim& \frac{\tilde{O}(d^{-1/2})}{\left(1 - \tilde{O}\left(d^{-(k^*-1)/2}d^{(k^*-2)/2}\right)\right)^{1/(k^*-1)}}\\
        \lesssim& \tilde{O}(d^{-1/2}),
    \end{align}
    which contradicts that $|\xi_{m,j,g}^\tau| + \sum_{c\notin\mathcal{C}_j} |\kappa_{m,j,c}^\tau| \gtrsim d^{-1/2+\delta}$. This implies that $|\kappa^t_{m,j,c}|\lesssim d^{-1/2}$ for all $c \notin \mathcal{C}_j$ holds if $t \lesssim J \eta^{-1} d^{(k^*-2)/2}$ and solving the ODE again leads to the desired result.
\end{proof}

\subsection{Balancing the Race: Learning Before the Hermite Coefficients Deviate}

\rk{We will show that the alignment $w_{m,j}^\top w_c^*,\; c\in\mathcal{C}_j$ becomes sufficiently large before the Hermite coefficients $\alpha_{m,j,k^*,c}^t$ deviate too much using a recursive argument.}

The following lemma shows that $w_j$ tends to align with $w_c^*, \; c\in\mathcal{C}_j$:
\begin{lemma}
    \label{lemma-one-expert-exist-t1}
    \rk{Assume the conditions posed in \cref{lemma-one-expert-coefficient-concrete}.}
    There exists $t_1\lesssim \tilde{O}(J\eta^{-1} d^{(k^*-2)/2})$ such that
    \begin{enumerate}
        \item \rk{$\sum_{c\in\mathcal{C}_j} |\kappa^{t_1}_{m,j,c}|\simeq d^{-1/2 + 1/(2k^*)}$},
        \item $|\kappa^{t_1}_{m,j,c}|\lesssim \tilde{O}(d^{-1/2})$ for all $c \notin \mathcal{C}_j$,
        \item $|\xi^{t_1}_{m,j,g}|\lesssim \tilde{O}(d^{-1/2})$ for $c=1,2$.
    \end{enumerate}
\end{lemma}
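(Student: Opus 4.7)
The plan is to define $t_1$ as the first time the total ``learnable'' alignment $A_t \coloneq \sum_{c\in\mathcal{C}_j}|\kappa^t_{m,j,c}|$ crosses the threshold $d^{-1/2+1/(2k^*)}$, and then to recover the remaining two conditions as byproducts. Conditions (2) and (3) will be immediate from \cref{lemma-one-expert-bound-xi-before-t1}, which already guarantees $|\xi^t_{m,j,g}| + \sum_{c\notin\mathcal{C}_j}|\kappa^t_{m,j,c}| \lesssim \tilde{O}(d^{-1/2})$ for all $t$ in the admissible window $[0, \tilde{O}(J\eta^{-1}d^{(k^*-2)/2})]$. The entire difficulty therefore collapses to estimating the crossing time of a scalar nonlinear ODE for $A_t$ via a Bihari--LaSalle argument, together with a bootstrap check of self-consistency.

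To drive $A_t$ to the threshold, I first invoke concentration at initialization. Since $w^0_{m,j} \sim \mathrm{Unif}(\mathbb{S}^{d-1})$ and $\|w^*_c\|=1$, standard sub-Gaussian concentration on the sphere gives $|\kappa^0_{m,j,c}| = \tilde{\Theta}(d^{-1/2})$ w.h.p.\ for each $c$; combined with $\mathcal{C}_j \neq \emptyset$ from \cref{lemma-one-expert-easy-task}, this yields $A_0 \gtrsim \tilde{\Omega}(d^{-1/2})$. The second half of \cref{lemma-one-expert-bound-hajimeni} then supplies the scalar growth inequality
\begin{equation*}
\tfrac{d}{dt} A_t \;\gtrsim\; \tfrac{\eta}{CJ}\, A_t^{k^*-1},
\end{equation*}
valid on the subinterval where its hypotheses are in force. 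Applying \cref{lemma:biharigronwall} with $B = \Theta(\eta/(CJ))$ yields the lower bound $A_t \gtrsim A_0 / (1 - B(k^*-2)A_0^{k^*-2}t)^{1/(k^*-2)}$, and solving for the time at which $A_t$ first reaches $d^{-1/2+1/(2k^*)}$ gives $t_1 \lesssim \tilde{O}(J\eta^{-1}d^{(k^*-2)/2})$, placing $t_1$ well inside the admissible window.

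The main obstacle is the self-consistent bootstrap. The growth bound in \cref{lemma-one-expert-bound-hajimeni} presupposes that every non-learnable alignment stays $\lesssim d^{-1/2+1/(2k^*)}$, while \cref{lemma-one-expert-bound-xi-before-t1} and \cref{lemma-one-expert-coefficient-concrete} in turn assume $\sup_c |\kappa^s_{m,j,c}| \lesssim d^{-1/2+1/(2k^*)}$ for all $s\leq t$ in order to freeze the Hermite coefficients $\alpha^t_{m,j,k^*,c}$. I will close the loop by introducing a stopping time $\tau$ defined as the first moment at which either $A_t$ reaches the target value or one of the other alignments $|\kappa^t_{m,j,c}|$ (for $c\notin\mathcal{C}_j$) or $|\xi^t_{m,j,g}|$ exceeds the same threshold. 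On $[0,\tau]$, the hypotheses of all supporting lemmas hold simultaneously, so the ODE analysis above goes through; the asymmetry in the effective information exponent---$A_t$ grows with exponent $k^*-1$, whereas the non-learnable quantities obey an effective exponent $k^*$ owing to the $d^{-1/2}$ Hermite-coefficient cancellation of \cref{lemma-one-expert-coefficient}---forces $A_t$ to be the first quantity to attain the threshold, so $\tau = t_1$ delivers all three stated conclusions. The high-probability statement for a fixed $j$ combines initialization concentration with the high-probability guarantees of the cited lemmas, and a union bound over the $J = O(\mathrm{poly}\,d)$ neurons extends the result uniformly in $j$.
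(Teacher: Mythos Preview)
Your proposal is correct and follows essentially the same route as the paper. The paper's own proof is a two-line appeal to \cref{lemma-one-expert-bound-hajimeni} (for the growth of $\sum_{c\in\mathcal{C}_j}|\kappa^t_{m,j,c}|$ via Gronwall) and \cref{lemma-one-expert-bound-xi-before-t1} (for the $\tilde{O}(d^{-1/2})$ control of the non-learnable block), with the $k^*-1$ versus $k^*$ rate comparison spelled out only as post-hoc intuition; your stopping-time bootstrap makes the self-consistency of the hypotheses explicit but does not change the argument.
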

\begin{proof}
    Combine the results in \cref{lemma-one-expert-bound-hajimeni} and \cref{lemma-one-expert-bound-xi-before-t1}.
    \cref{lemma-one-expert-bound-hajimeni} implies the first condition by Gronwall inequality. \cref{lemma-one-expert-bound-xi-before-t1} leads to the second and the third conditions because $\max\{\max_{c\notin\mathcal{C}_j}|\kappa^{t_1}_{m,j,c}|, |\xi^{t_1}_{m,j,g}|\}\leq  |\xi^{t_1}_{m,j,g}|+\sum_{c\notin\mathcal{C}_j}|\kappa^{t_1}_{m,j,c}|$.
    \beginofrk
    \owarirk
\end{proof}
The intuition of the final part in the proof of the above lemma is as follows: The differential equations of $x^t \coloneq \sum_{c\in\mathcal{C}_j}|\kappa_{m,j,c}^{t}|$ and $y^t \coloneq \sum_{c\notin\mathcal{C}_j}|\kappa_{m,j,c}^{t}|+|\xi_{m,j,g}|$ are
\begin{equation}
    \frac{\mathrm{d}}{\mathrm{d}t}x^t \simeq \eta J^{-1} (x^t)^{k^*-1}, \quad x^0 \simeq d^{-1/2}
\end{equation}
and
\begin{equation}
    \frac{\mathrm{d}}{\mathrm{d}t}y^t \lesssim \eta J^{-1} (y^t)^{k^*}, \quad y^0 \simeq d^{-1/2}.
\end{equation}
It takes at most $\eta^{-1} J (x^0)^{k^*-2} = \eta^{-1} J d^{(k^*-2)/2}$ time for $x^t$ to grow up to $d^{-\frac{1}{2} + \frac{1}{2k^*}}$ and on the other hand, it takes at least $\eta^{-1} J (y^0)^{k^*-1}=\eta^{-1} J d^{(k^*-1)/2} \;(\gg \eta^{-1} J d^{(k^*-2)/2})$ time for $y^t$ to become larger than $\tilde{O}(d^{-1/2})$.

\beginofrk

Repeating the same argument in \cref{lemma-one-expert-coefficient-concrete,lemma-one-expert-bound-hajimeni,lemma-one-expert-bound-xi,lemma-one-expert-bound-xi-before-t1,lemma-one-expert-exist-t1}, we have the following recurrence formula:
\begin{lemma}
    \label{lemma-one-expert-Al-recurrence-formula}
    Assume $k^*>2$, $A_l\leq 1/2$ and
    there exists $t_{l} \lesssim \tilde{O}(J\eta^{-1} d^{A_{l}(k^*-2)})$ such that
    \begin{enumerate}
        \item $\sum_{c\in\mathcal{C}_j} |\kappa^{\sum_{l'=1}^{l}t_{l'}}_{m,j,c}|\simeq d^{-A_l}$,
        \item $\sum_{c\notin\mathcal{C}_j}|\kappa_{m,j,c}^{\sum_{l'=1}^{l}t_{l'}}|+|\xi_{m,j,g}|\lesssim d^{-1/2}$
        \item $|\alpha_{m,j,k^*,c}^{\sum_{l'=1}^{l}t_{l'}} - \alpha_{m,j,k^*}| \lesssim \tilde{O}(d^{-1/2})$.
    \end{enumerate}
    Then, there exists $t_{l+1} \lesssim \eta^{-1} J d^{A_l (k^*-2)}$ such that
    \begin{enumerate}
        \item $\sum_{c\in\mathcal{C}_j}|\kappa_{m,j,c}^{\sum_{l'=1}^{l+1}t_{l'}}|\simeq d^{-A_{l+1}}$ where $A_{l+1}=\frac{k^*-2}{k^*}A_l + \frac{1}{2k^*}$
        \item $\sum_{c\notin\mathcal{C}_j}|\kappa_{m,j,c}^{\sum_{l'=1}^{l+1}t_{l'}}|+|\xi_{m,j,g}|\lesssim d^{-1/2}$
        \item $|\alpha_{m,j,k^*,c}^{\sum_{l'=1}^{l+1}t_{l'}} - \alpha^0_{m,j,k^*}| \lesssim \tilde{O}({w_j^t}^\top v_c) \lesssim \tilde{O}(d^{-1/2})$.
    \end{enumerate}
\end{lemma}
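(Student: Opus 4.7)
The plan is to repeat the sequence of estimates in \cref{lemma-one-expert-bound-hajimeni,lemma-one-expert-bound-xi,lemma-one-expert-bound-xi-before-t1,lemma-one-expert-exist-t1} on the time-shifted interval $[\tau_l,\,\tau_l+t_{l+1}]$, where $\tau_l\coloneq\sum_{l'=1}^{l}t_{l'}$, but with the role of the initial scale $d^{-1/2}$ taken over by the stronger alignment $d^{-A_l}$ guaranteed by induction hypothesis (1). Induction hypothesis (3), namely $|\alpha_{m,j,k^*,c}^{\tau_l}-\alpha_{m,j,k^*}^0|\lesssim \tilde{O}(d^{-1/2})$, keeps the sign pattern that defines $\mathcal{C}_j$ through $\mathrm{sgn}\,\beta_{c,k^*}\alpha_{m,j,k^*,c}$ unchanged, so the dichotomy behind \cref{lemma-one-expert-bound-hajimeni,lemma-one-expert-bound-xi} -- positive drift for $c\in\mathcal{C}_j$, negative drift for $c\notin\mathcal{C}_j$ and for the global index $g$ -- continues to apply verbatim on the new interval.

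Writing $x_t\coloneq\sum_{c\in\mathcal{C}_j}|\kappa_{m,j,c}^t|$ and $y_t\coloneq|\xi_{m,j,g}^t|+\sum_{c\notin\mathcal{C}_j}|\kappa_{m,j,c}^t|$, the reused dichotomies yield
\begin{equation}
    \frac{\mathrm{d}x_t}{\mathrm{d}t}\gtrsim \frac{\eta}{CJ}(x_t)^{k^*-1},\qquad \frac{\mathrm{d}y_t}{\mathrm{d}t}\lesssim \frac{\eta}{J}\tilde{O}((y_t)^{k^*}).
\end{equation}
Applying \cref{lemma:biharigronwall} to the first inequality, I would define $t_{l+1}$ as the first time after $\tau_l$ at which $x_t$ hits $d^{-A_{l+1}}$; direct integration gives $t_{l+1}\lesssim \eta^{-1}J d^{A_l(k^*-2)}$, matching the stated bound. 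For the second inequality, Bihari--LaSalle shows that $y_t$ cannot escape the $d^{-1/2}$ band on any interval shorter than $\tilde{\Omega}(\eta^{-1}J d^{(k^*-1)/2})$; since $A_l\leq 1/2$ we have $t_{l+1}\lesssim \eta^{-1}J d^{(k^*-2)/2}$, which is a strictly smaller power of $d$, delivering conclusion (2) that $y_{\tau_{l+1}}\lesssim d^{-1/2}$.

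To close the induction on (3) I would bound $|{w_{m,j}^t}^\top v_c|$ by the Duhamel-type estimate used in the proof of \cref{lemma-one-expert-coefficient-concrete}: the drift over the new step is at most
\begin{equation}
    \tilde{O}\Big(\eta J^{-1}\!\int_{\tau_l}^{\tau_{l+1}}\!|\kappa_{m,j,c}^s|^{k^*}\mathrm{d}s\Big)\lesssim \tilde{O}(d^{A_l(k^*-2)-k^*A_{l+1}})=\tilde{O}(d^{-1/2}),
\end{equation}
where the final equality is precisely the defining recursion $A_{l+1}=\tfrac{k^*-2}{k^*}A_l+\tfrac{1}{2k^*}$. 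Because the step-time prefactor $d^{A_{l'}(k^*-2)}$ shrinks geometrically as $A_{l'}$ approaches its fixed point $1/4$ from above, summing these per-step drifts over the (at most polylogarithmically many) past iterations remains $\tilde{O}(d^{-1/2})$, and Lipschitz continuity of $\sigma_m$ (via the Hermite expansion) transfers the bound to $|\alpha_{m,j,k^*,c}^{\tau_{l+1}}-\alpha_{m,j,k^*}^0|\lesssim \tilde{O}(d^{-1/2})$.

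The main obstacle I anticipate is the combined tightness of the single parameter $A_{l+1}$: it must be small enough that the Bihari--LaSalle growth actually drives $x_t$ to $d^{-A_{l+1}}$ within the allotted time $\eta^{-1}Jd^{A_l(k^*-2)}$, \emph{and} large enough that the accumulated Hermite-coefficient perturbation stays within the $\tilde{O}(d^{-1/2})$ band on which the preserved-sign hypothesis rests. Carrying the logarithmic factors through both estimates without widening this band -- and verifying that the $y_t$-control from \cref{lemma-one-expert-bound-xi} still dominates the cross-term $\xi_{m,j,g}\sum_{c\in\mathcal{C}_j}|\kappa_{m,j,c}|^{k^*}$ now that $|\kappa_{m,j,c}|$ has grown to $d^{-A_l}$ -- is the delicate bookkeeping part of the argument.
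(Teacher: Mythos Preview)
Your proposal is correct and follows essentially the same route as the paper: you set up the coupled differential inequalities for $x_t=\sum_{c\in\mathcal{C}_j}|\kappa_{m,j,c}^t|$ and $y_t=|\xi_{m,j,g}^t|+\sum_{c\notin\mathcal{C}_j}|\kappa_{m,j,c}^t|$ with exponents $k^*-1$ and $k^*$, compare the time scales $\eta^{-1}Jd^{A_l(k^*-2)}$ versus $\eta^{-1}Jd^{(k^*-1)/2}$ via Bihari--LaSalle/Gronwall, and then close conclusion (3) by the Duhamel-type integral of $|\kappa_{m,j,c}|^{k^*}$, whose exponent collapses to $-1/2$ exactly because of the recursion $A_{l+1}=\tfrac{k^*-2}{k^*}A_l+\tfrac{1}{2k^*}$. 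The paper's proof is organized in the reverse order (it first fixes the a priori bound $|\kappa_{m,j,c}^t|\lesssim d^{-A_{l+1}}$ to derive (3), then invokes Gronwall for (1)--(2)), but the content is identical.
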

\begin{proof}
    Let $|\kappa_{m,j,c}^{t}| \lesssim d^{-\frac{k^*-2}{k^*}A_l - \frac{1}{2k^*}}$, $t\in[t_l,t_{l+1}]$. Following \cref{lemma-one-expert-coefficient-concrete,lemma-appendix-moe-coefficient}, we have
    \begin{align}
           |{w_j^t}^\top v_c|\lesssim& \tilde{O}(d^{-1/2}) + \tilde{O}(\eta J^{-1} \int_{t_l}^{t_{l+1}} |\kappa_{m,j,c}^{t}|^{k^*} \mathrm{d}t)\\
           \lesssim& 
           \tilde{O}(d^{-1/2}) + \tilde{O}
           \left(d^{- k^*\frac{k^*-2}{k^*}A_l - \frac{k^*}{2k^*} + A_l(k^*-2)}\right)\\
           \lesssim& \tilde{O}(d^{-1/2}),
    \end{align}
    which implies the third inequality.
    Based on this, the first two inequalities follow from Gronwall's inequality.
    The differential equations of $x^t \coloneq \sum_{c\in\mathcal{C}_j}|\kappa_{m,j,c}^{\sum_{l'=1}^{l}t_{l'}+t}|$ and $y^t \coloneq \sum_{c\notin\mathcal{C}_j}|\kappa_{m,j,c}^{\sum_{l'=1}^{l}t_{l'}+t}|+|\xi_{m,j,g}|$ are
    \begin{equation}
        \frac{\mathrm{d}}{\mathrm{d}t}x^t \simeq \eta J^{-1} (x^t)^{k^*-1}, \quad x^0 \simeq d^{-A_l}
    \end{equation}
    and
    \begin{equation}
        \frac{\mathrm{d}}{\mathrm{d}t}y^t \lesssim \eta J^{-1} (y^t)^{k^*}, \quad y^0 \simeq d^{-1/2}.
    \end{equation}
    It takes at most $\eta^{-1} J (x^0)^{k^*-2} = \eta^{-1} J d^{A_l (k^*-2)}$ time for $x^t$ to grow up to $d^{-\frac{k^*-2}{k^*}A_l - \frac{1}{2k^*}}$ and on the other hand, it takes at least $\eta^{-1} J (y^0)^{k^*-1}=\eta^{-1} J d^{(k^*-1)/2} \;(\gg \eta^{-1} J d^{A_l (k^*-2)})$ time for $y^t$ to become larger than $\tilde{O}(d^{-1/2})$.
\end{proof}

As shown in \cref{lemma-one-expert-Al-recurrence-formula}, $A_l$ is shrinking as
\begin{equation}
    A_{l+1} = \frac{k^*-2}{k^*} A_l + \frac{1}{2k^*},\quad A_0 = \frac{1}{2},
\end{equation}
asymptotically approaching $1/4$.
Repeating the above argument, we have the following result:
\begin{lemma}
    \label{lemma-one-expert-t-delta}
    Assume $k^*>2$ and take arbitrary $\Delta > 0$. There exists $t_\Delta$ s.t.
    \begin{itemize}
        \item $\sum_{c\in\mathcal{C}_j}|\kappa_{m,j,c}^{t_\Delta}|\simeq d^{-1/4 - \Delta}$.
        \item $\sum_{c\notin\mathcal{C}_j}|\kappa_{m,j,c}^{t_\Delta}|+|\xi_{m,j,g}|\lesssim d^{-1/2}$
        \item $|\alpha_{m,j,k^*,c}^{t_\Delta} - \alpha_{m,j,k^*}^0| \lesssim \tilde{O}({w_j^t}^\top v_c) \lesssim \tilde{O}(d^{-1/2})$
    \end{itemize}
\end{lemma}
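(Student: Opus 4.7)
The plan is to iterate \cref{lemma-one-expert-Al-recurrence-formula} finitely many times, using \cref{lemma-one-expert-exist-t1} as the base case. Concretely, \cref{lemma-one-expert-exist-t1} produces a time $t_1 \lesssim \tilde{O}(J\eta^{-1} d^{(k^*-2)/2})$ at which the three hypotheses of \cref{lemma-one-expert-Al-recurrence-formula} are satisfied with $A_1 = \tfrac{1}{2} - \tfrac{1}{2k^*}$: the first from the conclusion $\sum_{c\in\mathcal{C}_j}|\kappa_{m,j,c}^{t_1}| \simeq d^{-1/2 + 1/(2k^*)} = d^{-A_1}$, the second directly as stated, and the third from \cref{lemma-one-expert-coefficient,lemma-one-expert-coefficient-concrete} together with the time bound on $t_1$.

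Next I iterate the recurrence. The affine recursion $A_{l+1} = \tfrac{k^*-2}{k^*} A_l + \tfrac{1}{2k^*}$ has the unique fixed point $A^* = 1/4$; subtracting it yields
\begin{equation}
A_l - \tfrac{1}{4} \;=\; \Bigl(1 - \tfrac{2}{k^*}\Bigr)^{\,l-1}\bigl(A_1 - \tfrac{1}{4}\bigr) \;=\; \Bigl(1 - \tfrac{2}{k^*}\Bigr)^{\,l-1} \cdot \tfrac{k^*-2}{4k^*}.
\end{equation}
In particular $A_l$ is strictly decreasing, so the condition $A_l \leq 1/2$ required by \cref{lemma-one-expert-Al-recurrence-formula} is preserved throughout. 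For the prescribed $\Delta > 0$, choose $l^{*}$ to be the smallest integer satisfying $\bigl(1-\tfrac{2}{k^*}\bigr)^{l^{*}-1} \leq \tfrac{4k^*}{k^*-2}\Delta$, which gives $l^{*} = O\!\bigl(\log(1/\Delta)\bigr)$ since $k^{*}$ is treated as a constant. Then $A_{l^{*}} \leq 1/4 + \Delta$ and the inductive application of \cref{lemma-one-expert-Al-recurrence-formula} yields the first two claimed inequalities at $t_\Delta := \sum_{l'=1}^{l^{*}} t_{l'}$.

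The total time is controlled because $t_{l+1} \lesssim J\eta^{-1} d^{A_l(k^*-2)}$ is monotone decreasing in $l$, so
\begin{equation}
t_\Delta \;\lesssim\; l^{*} \cdot J\eta^{-1} d^{A_1(k^*-2)} \;=\; \tilde{O}\!\left(J\eta^{-1} d^{(k^*-1)(k^*-2)/(2k^*)}\right),
\end{equation}
which is strictly less than the threshold $\eta^{-1}J d^{(k^*-2)/2}$ governing the validity of \cref{lemma-one-expert-coefficient-concrete} (only a $\mathrm{poly}\log d$ inflation from $l^{*}$). Consequently, throughout the entire iteration the quantity $|{w_{m,j}^t}^{\top} v_c|$ is kept at $\tilde{O}(d^{-1/2})$, which by the continuity argument of \cref{lemma-one-expert-coefficient} yields the third conclusion $|\alpha_{m,j,k^*,c}^{t_\Delta} - \alpha_{m,j,k^*}^{0}| \lesssim \tilde{O}(d^{-1/2})$.

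The main subtlety I expect is the bookkeeping needed to verify that the Hermite-coefficient deviation bound is preserved inductively, not only within each single application of \cref{lemma-one-expert-Al-recurrence-formula}. This is however handled uniformly by the telescoping estimate $|{w_{m,j}^t}^\top v_c| \lesssim \tilde{O}(d^{-1/2}) + \tilde{O}(\eta J^{-1}\int_0^{t}|\kappa_{m,j,c}^{s}|^{k^*}\mathrm{d}s)$: along the entire window $[0, t_\Delta]$ the integrand is bounded by $d^{-k^{*} A_{l}}$ on the $l$-th subinterval of length $t_{l+1}$, and the calculation in the proof of \cref{lemma-one-expert-Al-recurrence-formula} shows that each subinterval contributes only $\tilde{O}(d^{-1/2})$, so the sum over $l^{*} = O(\log(1/\Delta))$ subintervals remains $\tilde{O}(d^{-1/2})$ up to a poly-logarithmic factor absorbed by the $\tilde{O}$ notation.
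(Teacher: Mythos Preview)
Your proposal is correct and follows exactly the paper's approach: the paper states the recursion $A_{l+1}=\tfrac{k^*-2}{k^*}A_l+\tfrac{1}{2k^*}$ with $A_0=\tfrac{1}{2}$ and then simply says ``Repeating the above argument, we have the following result,'' so your explicit fixed-point computation and iteration of \cref{lemma-one-expert-Al-recurrence-formula} is precisely what the paper intends. One minor bookkeeping slip: your bound $t_\Delta \lesssim l^{*}\cdot J\eta^{-1}d^{A_1(k^*-2)}$ omits the first step $t_1\lesssim J\eta^{-1}d^{(k^*-2)/2}$, which is actually the dominant term, so $t_\Delta$ is of order $\tilde{O}(J\eta^{-1}d^{(k^*-2)/2})$ rather than strictly smaller; this does not affect the conclusion, since your telescoping argument in the final paragraph (not \cref{lemma-one-expert-coefficient-concrete}, whose hypothesis $\kappa\lesssim d^{-1/2+1/(2k^*)}$ fails after the first step) is what actually secures the Hermite-coefficient bound.
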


\owarirk

\subsection{Blocking the alignment ${w_{m,j}^t}^\top w_g^*$}

We will show that the conditions in \cref{lemma-one-expert-t-delta} hold for $t>t_\Delta$. Note that $|\alpha_{m,j,k^*,c}^{t} - \alpha_{m,j,k^*}|$ is not assumed in $t>t_\Delta$:
\begin{lemma}
    \label{lemma-one-expert-dead-lock}
    \rk{Take $\Delta = 1/(8k^*)$ defined in \cref{lemma-one-expert-t-delta} and take arbitrary $t>t_\Delta$. Additionally assume $k^* > 4$.}
    Then we have
    \beginofrk
    \begin{equation}
        \sum_{c\in\mathcal{C}_j} |\kappa^t_{m,j,c}| \gtrsim d^{-1/4 - \Delta}
    \end{equation}
    \owarirk
    and
    \begin{equation}
        \frac{\mathrm{d}}{\mathrm{d}t}\left(|\xi_{m,j,g}^t| + \sum_{c\notin\mathcal{C}_j} |\kappa_{m,j,c}^t| \right)\lesssim 0. 
    \end{equation}
\end{lemma}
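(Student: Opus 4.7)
The plan is to run a continuity/bootstrap argument extending the conditions of \cref{lemma-one-expert-t-delta} past $t_\Delta$. Set
\begin{align*}
G(t) \coloneq \sum_{c\in\mathcal{C}_j}|\kappa_{m,j,c}^t|,\qquad B(t) \coloneq |\xi_{m,j,g}^t| + \sum_{c\notin\mathcal{C}_j}|\kappa_{m,j,c}^t|,
\end{align*}
and the stopping time
\begin{equation*}
T^\star \coloneq \inf\!\left\{t>t_\Delta : G(t)<\tfrac{1}{2}d^{-1/4-\Delta}\ \text{or}\ B(t)>C_0\,\tilde{O}(d^{-1/2})\right\},
\end{equation*}
for a constant $C_0>0$ chosen slightly larger than the implicit constant in $B(t_\Delta)$. \cref{lemma-one-expert-t-delta} ensures both defining inequalities are strict at $t=t_\Delta$, so $T^\star>t_\Delta$; the goal is to show $T^\star=+\infty$, which simultaneously yields both claims of the lemma.

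On $[t_\Delta,T^\star]$ the hypotheses of \cref{lemma-one-expert-bound-hajimeni,lemma-one-expert-bound-xi} remain in force and, by \cref{assumption:student activation functions}, the signs of $\beta_{c,k^*}\alpha_{m,j,k^*,c}^t$ are preserved along the flow. The second inequality of \cref{lemma-one-expert-bound-hajimeni} (applied via pigeonhole to the largest summand of $G$, since $\max_{c\in\mathcal{C}_j}|\kappa^t_{m,j,c}|\gtrsim G(t)/|\mathcal{C}_j|$) then gives $\frac{d}{dt}G(t)\gtrsim \frac{\eta}{CJ}G(t)^{k^*-1}\geq 0$, so $G$ is non-decreasing on this interval and the first alternative in the definition of $T^\star$ is never active.

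For the second alternative, I would sum the first inequality of \cref{lemma-one-expert-bound-hajimeni} over $c\notin\mathcal{C}_j$ with the bound of \cref{lemma-one-expert-bound-xi} and factor $|\xi_{m,j,g}^t|$ out of the $\xi$-dependent terms, obtaining on $[t_\Delta,T^\star]$
\begin{equation*}
\frac{d}{dt}B(t)\lesssim \frac{\eta}{CJ}\,|\xi_{m,j,g}^t|\cdot\Bigl(\tilde{O}(d^{-(k^*-2)/2})-\tilde{\Omega}(d^{-k^*/4-1/8})\Bigr) - \frac{\eta}{CJ}\sum_{c\notin\mathcal{C}_j}|\kappa_{m,j,c}^t|^{k^*-1},
\end{equation*}
where $|\xi_{m,j,g}^t|^{k^*-2}\lesssim d^{-(k^*-2)/2}$ follows from $B(t)\lesssim \tilde{O}(d^{-1/2})$ and $G(t)^{k^*}\gtrsim d^{-k^*/4-1/8}$ uses $\Delta=1/(8k^*)$ together with the constant-order lower bound $|\beta_{c,k^*}\alpha_{m,j,k^*,c}^t|=\tilde{\Omega}(1)$ for $c\in\mathcal{C}_j$. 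The assumption $k^*\geq 5$ yields $-(k^*-2)/2<-k^*/4-1/8$, so the negative term dominates and $\frac{d}{dt}B(t)\leq 0$. Hence $B(t)\leq B(t_\Delta)\lesssim \tilde{O}(d^{-1/2})$, which with the choice of $C_0$ keeps the defining inequality strict at $T^\star$ and contradicts $T^\star<\infty$.

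The main obstacle will be justifying the Hermite-coefficient estimates past $t_\Delta$: \cref{lemma-one-expert-coefficient-concrete} was proved under $|\kappa^t|\lesssim d^{-1/2+1/(2k^*)}$, which breaks down for $c\in\mathcal{C}_j$ once $G(t)\simeq d^{-1/4-\Delta}$. Fortunately the dominant negative contribution above only needs sign invariance and the $\tilde{\Omega}(1)$ lower bound on $\beta_{c,k^*}\alpha_{m,j,k^*,c}^t$, both guaranteed by \cref{assumption:student activation functions}; and the $d^{-1/2}$ cancellation factor in the $|\xi|^{k^*-1}$ term survives because $|\kappa_{m,j,c}^t|\lesssim d^{-1/2}$ is preserved by the bootstrap for $c\notin\mathcal{C}_j$, and because integrating $\frac{d}{dt}(w_{m,j}^t\cdot v_c)$ and using $\int_{t_\Delta}^{t} G(s)^{k^*}\,\mathrm{d}s\lesssim \tilde{O}(d^{-1/2})$ on the relevant time window shows $|w_{m,j}^t\cdot v_c|\lesssim \tilde{O}(d^{-1/2})$ continues to hold. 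Even if this cancellation were lost, the $k^*\geq 5$ margin leaves enough room for the negative term to dominate, so the bootstrap is robust.
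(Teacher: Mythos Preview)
Your stopping-time bootstrap is exactly the paper's strategy (phrased there as a contradiction on $\sup_{t'\in[t_\Delta,t]}B(t')\simeq d^{-1/2+\delta}$ with $\delta<1/(6k^*)$), and your exponent comparison $(k^*-2)/2>k^*/4+1/8$ for $k^*\geq 5$ matches theirs. Two points need tightening. First, you cannot invoke \cref{lemma-one-expert-bound-hajimeni,lemma-one-expert-bound-xi} as stated on $[t_\Delta,T^\star]$: both presuppose the conditions of \cref{lemma-one-expert-coefficient-concrete}, whose hypothesis $\sup_c|\kappa^s_{m,j,c}|\lesssim d^{-1/2+1/(2k^*)}$ is already violated at $t_\Delta$. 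The paper does not cite those lemmas here; it re-derives the derivative of $B$ directly from the flow, bounding $|\alpha_{m,j,k^*,1}^t-\alpha_{m,j,k^*,2}^t|\lesssim\tilde O(1)$ rather than $d^{-1/2}$---which is precisely what your display formula and your last-paragraph fallback already do. So lead with that and drop the claim that those hypotheses ``remain in force.'' Second, your attempt to recover the $d^{-1/2}$ cancellation via $\int_{t_\Delta}^{t}G(s)^{k^*}\,\mathrm{d}s\lesssim\tilde O(d^{-1/2})$ is incorrect: once $G\gtrsim d^{-1/4-\Delta}$ with $\tfrac{\mathrm d}{\mathrm dt}G\gtrsim\eta J^{-1}G^{k^*-1}$, that integral is unbounded over $[t_\Delta,\infty)$. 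This is harmless---as you note, the crude $\tilde O(1)$ bound already suffices under $k^*\geq 5$---so simply delete the attempted rescue.
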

\begin{proof}
    Let $t_{\Delta'}$ is the first time that $\frac{\mathrm{d}}{\mathrm{d}t}\sum_{c\in\mathcal{C}_j} |\kappa^{t}_{m,j,c}|=0$ (note that we can take $t_\Delta$ such that $\frac{\mathrm{d}}{\mathrm{d}t}\sum_{c\in\mathcal{C}_j} |\kappa^{t_\Delta}_{m,j,c}|\geq 0$ while satisfying the conditions in \cref{lemma-one-expert-exist-t1} and this gradient flow stops at $t=t_{\Delta'} > t_\Delta$).  
    First, because $\frac{\mathrm{d}}{\mathrm{d}t}\sum_{c\in\mathcal{C}_j} |\kappa^{t_\Delta}_{m,j,c}|\geq 0$ and the continuity of the derivative (except for $\sum_{c\in\mathcal{C}_j} |\kappa^{t_\Delta}_{m,j,c}|=0$), we have $\left.\frac{\mathrm{d}}{\mathrm{d}t}\sum_{c\in\mathcal{C}_j} |\kappa^{t}_{m,j,c}|\right|_{t=s}\geq 0$ for all $s\in [t_\Delta,t_{\Delta'}]$. This implies that $\sum_{c\in\mathcal{C}_j} |\kappa^t_{m,j,c}|\gtrsim d^{-1/4 - \Delta}$ for all $t > t_\Delta$.

    Next, we bound the derivative of $|\xi_{m,j,g}^s| + \sum_{c\notin \mathcal{C}_j}|\kappa^s_{m,j,c}|$.
    We assume that, there exists $t\in[t_\Delta,\infty)$ such that, $\sup_{t'\in[t_\Delta,t]} \left(|\xi_{m,j,g}^{t'}| + \sum_{c\notin\mathcal{C}_j} |\kappa_{m,j,c}^{t'}| \right) \simeq \tilde{O}(d^{-1/2+\delta})$ with $1/(6k^*) > \delta>0$. However, for all $s\in[t_\Delta,t]$,
    \begin{align}
        &\mathrm{sgn}(\xi_{m,j,g}^s)\frac{d}{ds}\xi_{m,j,g}^s + \sum_{c\notin \mathcal{C}_j}\mathrm{sgn}(\kappa^s_{m,j,c})\frac{d}{ds}\kappa^s_{m,j,c}\\
        \lesssim& \frac{\eta}{CJ} \left(
            \tilde{O} (
            \underbrace{(\alpha_{m,j,k^*,1}^t - \alpha_{m,j,k^*,2}^t)}_{\lesssim \tilde{O}(1), \text{ \rk{NOT assuming \cref{lemma-one-expert-coefficient-concrete}}}}
            |\xi_{m,j,g}|^{k^*-1}(1-(\xi_{mj1_1})^2)) + \sum_{c\notin \mathcal{C}_j} \underbrace{\left(-\beta_{c,k^*} \alpha^t_{mjk^*{c}}\right)}_{\leq \tilde{O}(1)} (\kappa_{m,j,c})^{k^*} |\xi_{m,j,g}| \right.\\
        &\left. - \sum_{c\in \mathcal{C}_j} \underbrace{\beta_{c,k^*} \alpha_{mjk^*{c}}^t}_{\geq \tilde{\Omega}(1)}\underbrace{(\kappa_{m,j,c})^{k^*}}_{\rk{\gtrsim d^{-1/4-\Delta}}}|\xi_{m,j,g}|\right) + \frac{\eta}{CJ}\left(-\tilde{\Omega}\left(\sum_{c\notin\mathcal{C}_j} |\kappa^s_{m,j,c}|\right)^{k^*-1} + \underbrace{(\alpha_{m,j,k^*,1}^t - \alpha_{m,j,k^*,2}^t)}_{\lesssim \tilde{O}(1),  \text{\rk{NOT assuming \cref{lemma-one-expert-coefficient-concrete}}}}\tilde{O}(
        |\xi_{m,j,g}|^{k^*})
        \right)\\
        \lesssim& \frac{\eta}{CJ} |\xi_{m,j,g}| \tilde{O}\left(
        \rk{d^{-(k^*-2)(1/2-\delta)} - \tilde{\Omega}(d^{-k^*(1/4+\Delta)})} \right)\\
        \lesssim&0
    \end{align}
    \rk{where we used $-(k^*-2)(1/2-\delta) = - (k^*-2)(1/2 - 1/(6k^*)) < k^*(1/4+1/(8k^*)) = k^*(1/4+\Delta)$ under the assumption that $k^* > 4$ (i.e. $k^*\geq5$).}
    This contradicts with the assumption that $\left(|\xi_{m,j,g}^t| + \sum_{c\notin\mathcal{C}_j} |\kappa_{m,j,c}^t| \right) \simeq (\mathrm{poly}\log d)d^{-1/2+\delta}$. Therefore, we have $\left(|\xi_{m,j,g}^t| + \sum_{c\notin\mathcal{C}_j} |\kappa_{m,j,c}^t| \right) \lesssim (\mathrm{poly}\log d)d^{-1/2}$ for all $t\in[t_\Delta,\infty)$ and this leads to $\mathrm{sgn}(\xi_{m,j,g}^t)\frac{\mathrm{d}}{\mathrm{d}t}\xi_{m,j,g}^t + \sum_{c\notin \mathcal{C}_j}\mathrm{sgn}(\kappa^t_{m,j,c})\frac{\mathrm{d}}{\mathrm{d}t}\kappa^t_{m,j,c}\lesssim0$ repeating the same calculation.
\end{proof}

By \cref{lemma-one-expert-dead-lock}, we have the following lemma:
\begin{lemma}
    \label{lemma-one-expert-delta-t-stable}
    \rk{Assume $k^* > 4$.}
    The following conditions hold true for all $t > t_\Delta$:
    \begin{enumerate}
        \item \rk{$\sum_{c\in\mathcal{C}_j} |\kappa^t_{m,j,c}|\gtrsim d^{-1/4 - 1/(8k^*)}$},
        \item $|\kappa^t_{m,j,c}|\lesssim \tilde{O}(d^{-1/2})$ for all $c \notin \mathcal{C}_j$,
        \item $|\xi^t_{m,j,g}|\lesssim \tilde{O}(d^{-1/2})$ for $c=1,2$,
    \end{enumerate}
\end{lemma}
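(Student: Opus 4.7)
The plan is to derive all three conclusions as direct corollaries of \cref{lemma-one-expert-dead-lock} (applied with $\Delta = 1/(8k^*)$), combined with the initial-time bounds at $t_\Delta$ furnished by \cref{lemma-one-expert-t-delta}. Since the preceding lemma already carries out the heavy analytical work (the bootstrap against the growth of $|\xi_{m,j,g}^t|$ and of the bad alignments $|\kappa_{m,j,c}^t|$ for $c \notin \mathcal{C}_j$), the present statement is essentially a repackaging.

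First, bullet (1) is exactly the first conclusion of \cref{lemma-one-expert-dead-lock} with $\Delta = 1/(8k^*)$, so no further argument is required for it. For bullets (2) and (3), I would invoke the second conclusion of \cref{lemma-one-expert-dead-lock}, namely
\[
\frac{\mathrm{d}}{\mathrm{d}t}\Bigl(|\xi_{m,j,g}^t| + \sum_{c \notin \mathcal{C}_j} |\kappa_{m,j,c}^t|\Bigr) \lesssim 0
\qquad \text{for all } t > t_\Delta,
\]
which shows that the aggregate $|\xi_{m,j,g}^t| + \sum_{c \notin \mathcal{C}_j} |\kappa_{m,j,c}^t|$ is essentially non-increasing past $t_\Delta$. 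Combined with the starting bound $|\xi_{m,j,g}^{t_\Delta}| + \sum_{c \notin \mathcal{C}_j} |\kappa_{m,j,c}^{t_\Delta}| \lesssim \tilde{O}(d^{-1/2})$ from \cref{lemma-one-expert-t-delta}, this gives $|\xi_{m,j,g}^t| + \sum_{c \notin \mathcal{C}_j} |\kappa_{m,j,c}^t| \lesssim \tilde{O}(d^{-1/2})$ for every $t > t_\Delta$. Bullets (2) and (3) then follow immediately from the elementary estimate
\[
\max\Bigl\{ \max_{c \notin \mathcal{C}_j} |\kappa_{m,j,c}^t|,\; |\xi_{m,j,g}^t| \Bigr\} \;\leq\; |\xi_{m,j,g}^t| + \sum_{c \notin \mathcal{C}_j} |\kappa_{m,j,c}^t|.
\]

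The only subtle point is the handling of the polylog factors hidden in the $\lesssim 0$ statement: strictly speaking, one needs to confirm that the $\tilde{O}(d^{-1/2})$ ceiling on the aggregate does not slowly drift upward under accumulated polylog slack over the polynomially long horizon of interest. The cleanest route, directly mirroring the bootstrap already used inside the proof of \cref{lemma-one-expert-dead-lock}, is to suppose for contradiction that the aggregate first reaches $\tilde{O}(d^{-1/2+\delta})$ for some small $\delta > 0$ at a time $t'$. One then recomputes the derivative at $t'$ using the lower bound $\sum_{c \in \mathcal{C}_j}|\kappa^{t'}_{m,j,c}| \gtrsim d^{-1/4 - 1/(8k^*)}$ supplied by (1); the assumption $k^* \geq 5$ ensures that the negative contribution from the $c \in \mathcal{C}_j$ terms, of order $-d^{-k^*(1/4+1/(8k^*))}$, strictly dominates the positive contribution of order $d^{-(k^*-2)(1/2-\delta)}$ for all sufficiently small $\delta$, yielding the desired contradiction. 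Beyond this bookkeeping step, the lemma is essentially immediate from the conclusions already in place.
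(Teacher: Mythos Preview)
Your proposal is correct and matches the paper's approach: the paper's proof is literally the one-line ``By \cref{lemma-one-expert-dead-lock}, we have the following lemma,'' and you have simply unpacked how each bullet follows. Note that the subtle point you raise about polylog drift is in fact already resolved inside the proof of \cref{lemma-one-expert-dead-lock} itself---that proof establishes $|\xi_{m,j,g}^t| + \sum_{c\notin\mathcal{C}_j}|\kappa_{m,j,c}^t| \lesssim (\mathrm{poly}\log d)\,d^{-1/2}$ for all $t\in[t_\Delta,\infty)$ via exactly the contradiction argument you sketch---so no additional bookkeeping is needed here.
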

Finally we have the following theorem by \cref{lemma-one-expert-delta-t-stable}:
\begin{theorem}
    \label{theorem-one-expert-negative}
    For all $j=1,\dots,J$, $c=1,2$, we have
    \begin{equation}
        \sup_{t\geq 0}|\xi^t_{m,j,g}|\lesssim \tilde{O}(d^{-1/2})
    \end{equation}
    with high probability.
\end{theorem}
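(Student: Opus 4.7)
The plan is to chain together the preceding lemmas and split the time axis at the transition time $t_\Delta$ introduced in Lemma~\ref{lemma-one-expert-t-delta}. For the early regime $t \in [0, t_\Delta]$ with $t_\Delta \lesssim \tilde{O}(J\eta^{-1}d^{(k^*-2)/2})$, I would invoke Lemma~\ref{lemma-one-expert-bound-xi-before-t1} directly, which yields $|\xi_{m,j,g}^t| + \sum_{c \notin \mathcal{C}_j}|\kappa_{m,j,c}^t| \lesssim \tilde{O}(d^{-1/2})$ via a Gronwall argument. The essential reason this works is the Hermite coefficient cancellation established in Lemma~\ref{lemma-one-expert-coefficient}: the only signal driving $|\xi_{m,j,g}^t|$ is proportional to $(\alpha_{m,j,k^*,1} - \alpha_{m,j,k^*,2})|\xi|^{k^*-1} \lesssim \tilde{O}(d^{-1/2})|\xi|^{k^*-1}$, so the effective information exponent behaves as $k^*+1$ rather than $k^*$, which is too large to allow weak recovery in the allotted time window.

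For the late regime $t > t_\Delta$, I would apply Lemma~\ref{lemma-one-expert-delta-t-stable}, item 3, which directly asserts $|\xi_{m,j,g}^t| \lesssim \tilde{O}(d^{-1/2})$. This is in turn established through Lemma~\ref{lemma-one-expert-dead-lock}: once $\sum_{c \in \mathcal{C}_j}|\kappa^t_{m,j,c}|$ has grown above $d^{-1/4 - 1/(8k^*)}$, it generates a negative drift on $|\xi_{m,j,g}^t|$ of order $-\tilde{\Omega}(d^{-k^*(1/4 + 1/(8k^*))})|\xi_{m,j,g}^t|$ that dominates the positive drift of order $\tilde{O}(d^{-(k^*-2)(1/2-\delta)})|\xi_{m,j,g}^t|$ coming from the residual coefficient mismatch. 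A union bound over $j \in [J]$ (with $J = O(\mathrm{poly}\,d)$), combined with the high-probability events on the initial Hermite coefficient signs and on $|{w_{m,j}^0}^\top w_g^*| \lesssim \tilde{O}(d^{-1/2})$ at initialization, then yields the theorem.

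The main obstacle lies in Lemma~\ref{lemma-one-expert-dead-lock}: its drift comparison $k^*(1/4 + 1/(8k^*)) < (k^*-2)(1/2 - 1/(6k^*))$ is what forces the extra hypothesis $k^* \geq 5$ embedded in Example~\ref{ex:limit_nn}. Equally delicate is the recursive bootstrapping in Lemma~\ref{lemma-one-expert-Al-recurrence-formula}, which tracks how the Hermite coefficients $\alpha_{m,j,k^*,c}^t$ stay within $\tilde{O}(d^{-1/2})$ of their initial values $\alpha_{m,j,k^*}^0$ as $\sum_{c \in \mathcal{C}_j}|\kappa^t_{m,j,c}|$ grows from $d^{-1/2}$ up to $d^{-1/4 - 1/(8k^*)}$, all while the orthogonal directions (including $w_g^*$) remain blocked at $\tilde{O}(d^{-1/2})$. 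Once all of these technical ingredients are in place, the final theorem is a straightforward assembly of the two regimes through the triangle inequality $|\xi^t_{m,j,g}| \leq |\xi^t_{m,j,g}| + \sum_{c \notin \mathcal{C}_j}|\kappa^t_{m,j,c}|$.
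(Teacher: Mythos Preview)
Your two-regime split and the invocation of Lemma~\ref{lemma-one-expert-delta-t-stable} for $t > t_\Delta$ match the paper's approach exactly. One imprecision worth flagging: Lemma~\ref{lemma-one-expert-bound-xi-before-t1} cannot be invoked \emph{directly} on all of $[0, t_\Delta]$, because its hypothesis (inherited from Lemma~\ref{lemma-one-expert-coefficient-concrete}) requires $\sup_{c}|\kappa_{m,j,c}^s| \lesssim d^{-1/2 + 1/(2k^*)}$ throughout, and this bound is violated once the learnable alignments $|\kappa_{m,j,c}^t|$ for $c \in \mathcal{C}_j$ grow past that threshold on their way to $d^{-1/4 - 1/(8k^*)}$. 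The control $|\xi_{m,j,g}^t| \lesssim \tilde{O}(d^{-1/2})$ on the early regime is instead propagated by the recursive bootstrapping of Lemma~\ref{lemma-one-expert-Al-recurrence-formula} (which you correctly identify as delicate): each iteration re-establishes the Hermite-coefficient closeness $|\alpha_{m,j,k^*,c}^t - \alpha_{m,j,k^*}^0| \lesssim \tilde{O}(d^{-1/2})$ on a fresh sub-interval, and the Gronwall bound on $|\xi_{m,j,g}^t| + \sum_{c\notin\mathcal{C}_j}|\kappa_{m,j,c}^t|$ is re-derived there, culminating in Lemma~\ref{lemma-one-expert-t-delta}. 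With that correction your assembly is complete and coincides with the paper's proof.
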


\owarirk

\section{Proof of MoE Training}\label{appendix-section-moe}

In this section, we present a formal proof that the MoE can learn teacher models \rk{(\cref{thm:moe-main}), under \cref{assumption:teacher models}, \cref{assumption:task correlation}, and \cref{assumption:student activation functions}. We execute the gradient-based optimization process outlined in Algorithm~\ref{alg:main}. Our proof builds upon and extends the reasoning presented by \citet{oko24learning}. We introduce polylogarithmic constants $A_i$ and $a_i$. $A_i$ is of the order $\operatorname{polylog}(d)$, while $a_i$ is of the order $\frac{1}{\operatorname{polylog}(d)}$, with the following order of strength:}
\begin{align*}
\rk{0< \>}
    \begin{aligned}
    &\rk{A_1 \lesssim {a_2}^{-1} \lesssim A_2 \lesssim A_4 \lesssim {a_4}^{-1} \lesssim A_3} \\
    &{a_2}^{-1}  \lesssim A_{\rho} \lesssim A_5, A_6 \lesssim {a_5}^{-1} 
\end{aligned}
=\tilde{O}(1).
\end{align*}
$A_1$ is derived from the high-probability bounds on the gradient of the experts.
$a_2$ is derived from the threshold of the weak recovery of the neuron for the tasks.
$A_2$ and $A_5$ are high-probability uniform bounds on the noise and small terms in the gradients of the experts and that of the gating network, respectively.
$A_3$ is derived from the upper bounds of the specific components in the gradients of the experts.
$A_4$ is derived from the upper bounds of the task correlation.
$a_4$ and $a_5$ are derived from the upper bounds of the learning rates of the experts ($\eta_e$) and the router ($\eta_r$), respectively.
$A_6$ originates from the lower bound of the Hermite coefficients in the router learning stage.
$A_{\rho}$ reflects the order of the mean vector scaling in terms of $\rho$, as given by $\rho \simeq A_\rho$

\rk{
An outline of the proof of MoE training is as follows:
\begin{enumerate}
    \item \textbf{Initialization} (\cref{subsection:initialization})\\
    We first show that, after initialization, there exists a neuron $j^*_m$ within the set of professional experts $\expertsforc$ (\cref{def:phase1-init}) corresponding to the task $c$ that aligns with a constant factor stronger than other neurons (\cref{lemma:phase1-init} and \cref{corollary:phase1-init}).
    \item \textbf{Exploration Stage} (\cref{subsection:explorationstage})\\
    After the exploration stage, the neuron $j^*_m$, which was strongly aligned during initialization, undergoes weak recovery for the task $c^*_m$ it specializes in, whereas other neurons fail to achieve weak recovery and remains at a saddle point (\cref{lemma:phase1-main}).
    \item \textbf{Router Learning Stage} (\cref{subsection:routerlearningstage})\\
    After the router learning stage, the router directs the data $x_c$ from cluster corresponding to task $c$ to the experts in $\expertsforc$ (\cref{lemma:phase2-main}).
    \item \textbf{Expert Learning Stage} (\cref{subsection:expertlearningstage})\\
    After the router completes its learning and experts are reinitialized, the experts $m$ belonging to $\expertsforc$, which now receives the data $x_c$, achieve weak recovery (\cref{lemma:phase3-weakrecovery}) without being affected by inter-cluster interference and subsequently attain strong recovery (\cref{lemma:phase3-strongrecovery}).
    \item \textbf{Second Layer Optimization Stage} (\cref{subsection:secondlayeroptimizationstage})\\
    We finally show that by performing convex optimization on the second layer of the expert, the MoE achieves an $\epsilon$-error with respect to the teacher function (\cref{lemma:phase4-main}).
\end{enumerate}
To establish \cref{thm:moe-main}, we apply a union bound over multiple events. Given that $M=O(1)$ and $J \lesssim \sqrt{\log d}$, the set of events that hold with high probability remains closed under certain union bounds. By combining the aforementioned events, we conclude that the event in \cref{thm:moe-main} holds with probability at least $0.99$.}

\subsection{Initialization}\label{subsection:initialization}
To start off, we consider the initial alignment between the neurons and the index features. \rk{The following lemma shows that a constant fraction of the neurons aligns with the task of their corresponding cluster by a constant factor more strongly than the remaining neurons. We provide a definition of the set of professional experts that depends on initialization.}

\begin{definition}[The set of the professional experts for class $c$]~\label{def:phase1-init}
    \begin{equation}
        (j_m^*, c_m^*) \coloneq \mathrm{argmax}_{j,c} {w_{m,j}}^\top w_c^*,
    \end{equation}
    \begin{equation}
        \mathcal{M}_c \coloneq \lbrace m \mid c = c_m^* \rbrace.
    \end{equation}
\end{definition}

\rk{The following Lemma holds with initialization.}

\beginofrk
\begin{lemma}\label{lemma:phase1-init}
    Assume $C$ is $O(1)$. Take arbitrary constants $\delta>0$ and $A_d = 1 + O((\log d)^{-1})$. 
    If
    \begin{equation}
        M \simeq  C \log (C/\delta)
    \end{equation}
    and
    \begin{equation}
        \sqrt{\log d} \gtrsim J \gtrsim \frac{1}{C} \log \frac{M}{\delta},
    \end{equation}
    then $|\expertsforc|\geq 1$ for all $c$ and
    \begin{equation}
        w_{m,\jstar}^\top \wstarcstar \geq \frac{2}{\sqrt{d}},\quad
        w_{m,\jstar}^\top \wstarcstar \geq \max_{c\neq c_m^*\text{ or }j\neq j_m^*} A_d |w_{m,j}^\top \wstarc | + \frac{1}{\sqrt{d} \log d}
    \end{equation}
    with probability at least $1-\delta$ with sufficiently large $d$.
\end{lemma}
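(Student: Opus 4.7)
The plan is to reduce the claim to standard facts about order statistics of i.i.d.\ standard Gaussians. Since each $w_{m,j}^0\sim\mathrm{Unif}(\mathbb{S}^{d-1})$ and the feature indices $\{w_c^*\}_{c\in[C]}$ are pairwise $\tilde{O}(d^{-1/2})$-orthogonal by Assumption~\ref{assumption:task correlation}, the rescaled quantities $g_{m,j,c} = \sqrt{d}\, w_{m,j}^\top w_c^*$ are jointly Gaussian to leading order, with covariance within $\tilde{O}(d^{-1/2})$ of $I_{JC}$ for each fixed $m$; the $m$-blocks are exactly independent. I will therefore treat the $g_{m,j,c}$ as i.i.d.\ standard normals and absorb the $\tilde{O}(d^{-1/2})$ residual correlation into the additive $1/(\sqrt{d}\log d)$ slack at the end.

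\textbf{Coverage and order-statistic bounds.} Under this reduction, $c_m^* = \mathrm{argmax}_c\max_j g_{m,j,c}$ is exactly uniform on $[C]$ by permutation symmetry, and the $\{c_m^*\}_{m=1}^M$ are i.i.d. An elementary coupon-collector estimate gives
\begin{equation}
    \Prob\bigl[\exists\, c:\, |\mathcal{M}_c|=0\bigr]\leq C(1-1/C)^M\leq C\exp(-M/C)\leq \delta/3
\end{equation}
under $M\simeq C\log(C/\delta)$, yielding $|\mathcal{M}_c|\geq 1$ for all $c$ simultaneously. For the extremal values, Gaussian extreme-value theory gives $\max_{j,c}g_{m,j,c}=(1+o(1))\sqrt{2\log(JC)}$ w.h.p.\ uniformly in $m$, so $w_{m,j_m^*}^\top w_{c_m^*}^*\geq \sqrt{2\log(JC)/d}\geq 2/\sqrt{d}$ provided $JC\geq e^2$, which is implied by $J\gtrsim C^{-1}\log(M/\delta)$ for large $d$. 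A standard anti-concentration estimate on the joint density of the top two order statistics of $n=JC$ i.i.d.\ standard normals then yields a spacing of at least $c_0/\sqrt{\log n}$ with probability $1-\delta/(3M)$, giving after rescaling
\begin{equation}
    w_{m,j_m^*}^\top w_{c_m^*}^* - \max_{(j,c)\neq(j_m^*,c_m^*)}|w_{m,j}^\top w_c^*|\;\gtrsim\;\frac{1}{\sqrt{d\log(JC)}}.
\end{equation}

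\textbf{Converting the gap and main obstacle.} It remains to produce the multiplicative-plus-additive form $M_1\geq A_d M_2+1/(\sqrt{d}\log d)$, where $M_1$ and $M_2$ denote the largest and second-largest of the $|w_{m,j}^\top w_c^*|$ values. Equivalently, $M_1-M_2\geq (A_d-1)M_2+1/(\sqrt{d}\log d)$; using $M_2\lesssim\sqrt{\log(JC)/d}$ and $A_d-1=O(1/\log d)$, the right-hand side is at most $O(\sqrt{\log(JC)}/(\sqrt{d}\log d))+1/(\sqrt{d}\log d)$, and under the hypothesis $J\lesssim\sqrt{\log d}$ we have $\log(JC)\lesssim \log\log d$, so the gap $1/\sqrt{d\log\log d}$ from the previous step dominates the target by a factor $\log d/\sqrt{\log\log d}\to\infty$ as $d\to\infty$. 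A union bound over the $M=O(\log(C/\delta))$ experts produces total failure probability $\leq\delta$. The most delicate step is the sharp spacing estimate on the top two Gaussian order statistics: a naive union bound is too weak, and one needs an anti-concentration argument on their joint density at the natural scale $1/\sqrt{2\log n}$. The mild non-orthogonality of the $w_c^*$'s and the shared-$w_{m,j}$ dependence across $c$ only perturb this analysis by $\tilde{O}(d^{-1/2})$, which is safely absorbed into the $1/(\sqrt{d}\log d)$ slack.
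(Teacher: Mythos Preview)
Your argument is correct but the gap step differs from the paper's. Both proofs handle the coverage claim $|\mathcal{M}_c|\ge 1$ by the same coupon-collector bound and reduce the inner products to near-i.i.d.\ Gaussians (the paper is more explicit about this, writing $w_{m,j}=\tilde w_{m,j}/\|\tilde w_{m,j}\|$ with $\tilde w_{m,j}\sim\mathcal N(0,I_d/d)$ and projecting out the other $w_{c'}^*$ to manufacture genuinely independent surrogates $\bar\kappa_{m,j,c}\sim\mathcal N(0,d^{-1})$). For the separation, however, the paper does not use extreme-value spacing at all: it invokes a multiplicative anti-concentration lemma from \citet{chen22}, which is an elementary pairwise union bound on the event $\{|X_j|/|X_i|\in(1-\epsilon,1)\}$ over the $n=JC$ coordinates, giving $\Prob[M_2>(1-\epsilon)M_1]\lesssim n^2\epsilon$. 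With $\epsilon=\delta/(3MJ^2C^2)$ and $J\lesssim\sqrt{\log d}$ one has $\epsilon\gtrsim(\log d)^{-1}$, so $M_1\ge(1+\Theta((\log d)^{-1}))M_2$ already in the required $A_d$-multiplicative form, and the additive $(\sqrt d\log d)^{-1}$ slack drops out of $\epsilon M_1\ge 2\epsilon/\sqrt d$. You instead establish an additive spacing $M_1-M_2\gtrsim(d\log(JC))^{-1/2}$ via EVT for the top two order statistics and then convert back; your gap is sharper (order $(d\log\log d)^{-1/2}$), but the conversion costs an extra step bounding $M_2$. Your observation that a naive union bound is too weak is correct for the additive difference---but the paper sidesteps this entirely by working with the ratio, where the same pairwise bound already suffices without any order-statistic asymptotics.
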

\begin{proof}
    Fix $m$. By the symmetry, we have
    \begin{equation}
        \Prob[|\mathcal{M}_c|\geq 1] \geq 1 - (1- C^{-1})^M.
    \end{equation}
    By union bound, 
    \begin{equation}
        \Prob[|\mathcal{M}_c|\geq 1 \; \forall c] \geq 1 - M(1- C^{-1})^M \geq 1 - M\exp(-M/C) \geq 1- \delta/3
    \end{equation}
    where $M \simeq C \log (C/\delta)$.

    $w_{m,j} \sim \mathrm{Unif}(\mathbb{S}^{d-1}(1))$ is obtained by $\frac{\tilde{w}_{m,j}}{\|\tilde{w}_{m,j}\|}$, where $\tilde{w}_{m,j}\sim N(0,\frac{1}{d}I)$. We have $\|\tilde{w}\| \sim 1$ with high probability (the same argument as \citet{oko24learning}).
    Consider the value of $\tilde{\kappa}_{m,j,c} = \tilde{w}_{m,j}^\top (I- \sum_{c'\neq c} w_{c'}^* {w_{c'}^*}^\top) w_c$ and $\sum_{c'\neq c}\tilde{w}_{m,j}^\top {w_{c'}^*} (w_{c'}^*)^\top w_c$. Then, for each $\tilde{\kappa}_{m,j,c}$, there exists $\bar{\kappa}_{m,j,c} \overset{\text{i.i.d.}}{\sim} \mathcal{N}(0,d^{-1})$ such that $\frac{\bar{\kappa}_{m,j,c}}{\tilde{\kappa}_{m,j,c}} = 1 + O(d^{-1/2})$ because $\tilde{\kappa}_{m,j,c}$ are independent and $\sum_{c'\neq c}\tilde{w}_{m,j}^\top {w_{c'}^*} (w_{c'}^*)^\top w_c^* \lesssim d^{-1}$.
    Therefore, we evaluate the values of $\bar{\kappa}_{m,j,c}$ instead of $w_{m,j}^\top \wstarc$.

    We show that there is some $j$ s.t. $\bar{\kappa}_{m,j,c}$ is large enough: First,
    \begin{equation}
        \Prob[\bar{\kappa}_{m,j,c} < 2d^{-1/2}] < 0.9.
    \end{equation}
    Then,
    \begin{equation}
        \Prob[\max_c \bar{\kappa}_{m,j,c} > 2d^{-1/2} \; \text{for some $j$}] \geq 1-(0.9)^{J C}.
    \end{equation}
    Taking 
    \begin{equation}
        J \gtrsim \frac{1}{C} \log \frac{M}{\delta} ,
    \end{equation} 
    \begin{equation}
        \Prob[\max_c \bar{\kappa}_{m,j,c} > 2d^{-1/2} \; \text{for some $j$}]  \geq 1 -  \frac{\delta}{3M}.
    \end{equation}
    Following \citet{chen22}, we have
    \begin{equation}
        \left(1 - \frac{\delta}{3MJ^2C^2}\right) 
        w_{m,\jstar}^\top \wstarcstar \geq \max_{c\neq c_m^*\text{ or }j\neq j_m^*} |w_{mj}^\top \wstarc |  - O(d^{-1})
    \end{equation}
    and therefore
    \begin{equation}
        w_{m,\jstar}^\top \wstarcstar \geq \max_{c\neq c_m^*\text{ or }j\neq j_m^*} \left(1 + \frac{\delta}{6MJ^2C^2}\right) |w_{m,j}^\top \wstarc | + \frac{\delta}{6MJ^2C^2}w_{m,\jstar}^\top \wstarcstar  - O(d^{-1})
    \end{equation}
    with probability at least $1-\delta / (3M)$ and the desired result follows, using $\frac{\delta}{6MJ^2C^2} \gtrsim (\log d)^{-1}$ and $w_{m,\jstar}^\top \wstarcstar \geq 2 d^{-1/2}$.
    
\end{proof}

\owarirk

\rk{
Since $w_{m,j}^\top \wstarc=O(\sqrt{\log d} / \sqrt{d})$ with high probability and $\frac{\max_{c} {|\tilde{\alpha}^0_{m, j, k^*, c} {\beta}_{c,k^*}|}}{\min_{c'} |\tilde{\alpha}^0_{m,j, k^*, c'} {\beta}_{c',k^*}|} = 1 + \tilde{O}(d^{-1/2})$ with high probability, by taking $\delta$ as sufficiently small, we have the following inequality:
\begin{corollary}[Following \citep{chen22,oko24learning}]\label{corollary:phase1-init}
    When $J \gtrsim C^{-1}\log M$ and $M \gtrsim C\log C$, for all $m$, we have at least one neuron $w_{m,j}$ such that
    \begin{align*}
        w_{m,\jstar}^\top \wstarcstar \geq \frac{1}{\sqrt{d}}
    \end{align*}
    and
    \begin{align*}
        &|\tilde{\alpha}^t_{m,\jstar, k^*, c_m^*} \beta_{c_m^*,k^*}|{(w_{m,\jstar}^\top \wstarcstar)}^{k^*-2} \\
        &\quad \geq 
         \max \{\max_{j,c}|\tilde{\alpha}^t_{m,j, k^*, c} \beta_{c,k^*}|, |\max_{j}\sum_{c' \in [C]}s_{c'}\tilde{\alpha}^t_{m,j,k^*,c'}\gamma_{k^*}| \}
        \max_{c\neq c_m^*\text{ or }j\neq j_m^*} {|w_{m,j}^\top \wstarc|}^{k^*-2}
        + a {(w_{m,\jstar}^\top \wstarcstar)}^{k^*-2}.
    \end{align*}
    with probability at least $0.999$, where $a$ is a small constant $\lesssim {(\log d)}^{k^*-2}$.
\end{corollary}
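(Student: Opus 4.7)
The plan is to upgrade Lemma~\ref{lemma:phase1-init}, which gives a purely geometric separation between the strongest alignment $w_{m,j_m^*}^\top w^*_{c_m^*}$ and every other inner product, into the stronger multiplicative separation that additionally weights each inner product by the corresponding Hermite coefficient $\tilde{\alpha}^t_{m,j,k^*,c}\beta_{c,k^*}$. First, I would apply Lemma~\ref{lemma:phase1-init} with a small enough constant $\delta$ (e.g., $\delta = 5\cdot10^{-4}$) and $A_d = 1 + \Omega((\log d)^{-1})$, so that the event $w_{m,j_m^*}^\top w^*_{c_m^*} \geq A_d\,|w_{m,j}^\top w^*_c| + (\sqrt{d}\log d)^{-1}$ holds for every $(j,c)\neq(j_m^*,c_m^*)$ together with $w_{m,j_m^*}^\top w^*_{c_m^*} \geq 2/\sqrt{d}$. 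A union bound over $m\leq M = O(C\log C) = O(1)$ keeps the overall failure probability below $10^{-3}$.

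Next, I would control the Hermite coefficients. Since $w_{m,j}^0 \sim \mathrm{Unif}(\mathbb{S}^{d-1})$ and $\|v_c\|=1$, standard spherical concentration gives $|{w_{m,j}^0}^\top v_c| \lesssim \tilde{O}(d^{-1/2})$ uniformly in $(m,j,c)$ with high probability. The same Lipschitz/integration argument as in Lemma~\ref{lemma-one-expert-coefficient} then yields $|\tilde{\alpha}^0_{m,j,k^*,c} - \tilde{\alpha}^0_{m,j,k^*,c'}| \lesssim \tilde{O}(d^{-1/2})$ for all $c,c'$; combined with $|\beta_{c,k^*}|=|\gamma_{k^*}|$ from Assumption~\ref{assumption:teacher models} and $|\tilde{\alpha}^0_{m,j,k^*}|=\Theta(1)$ from Assumption~\ref{assumption:student activation functions}, this delivers the ratio bound $\frac{\max_c|\tilde{\alpha}^0_{m,j,k^*,c}\beta_{c,k^*}|}{\min_{c'}|\tilde{\alpha}^0_{m,j,k^*,c'}\beta_{c',k^*}|} = 1 + \tilde{O}(d^{-1/2})$. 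Invariance from $t=0$ to positive $t$ follows from the sign-preservation clause of Assumption~\ref{assumption:student activation functions} together with the smallness of ${w_{m,j}^t}^\top v_c$ during early Phase~I, via Lemma~\ref{lemma-one-expert-coefficient-concrete}'s MoE analogue.

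Now I would combine the two pieces. Raising the geometric separation to the power $k^*-2$ gives $(w_{m,j_m^*}^\top w^*_{c_m^*})^{k^*-2} \geq A_d^{k^*-2}\max_{(j,c)\neq(j_m^*,c_m^*)}|w_{m,j}^\top w^*_c|^{k^*-2}$ with $A_d^{k^*-2} = 1 + \Omega((\log d)^{-1})$, which strictly dominates $1 + \tilde{O}(d^{-1/2})$. Multiplying by $|\tilde{\alpha}^t_{m,j_m^*,k^*,c_m^*}\beta_{c_m^*,k^*}|$ on the left and using the ratio bound to inflate that coefficient to $\max_{j,c}|\tilde{\alpha}^t_{m,j,k^*,c}\beta_{c,k^*}|$ on the right (losing a $1-\tilde{O}(d^{-1/2})$ factor) still leaves a surplus of order $\Omega((\log d)^{-1})(w_{m,j_m^*}^\top w^*_{c_m^*})^{k^*-2}$, which is absorbed as the $a(w_{m,j_m^*}^\top w^*_{c_m^*})^{k^*-2}$ slack with $a \lesssim (\log d)^{-(k^*-2)}$-type bound inside the stated $\lesssim (\log d)^{k^*-2}$ envelope. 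For the second maximand inside the $\max\{\cdot,\cdot\}$, I would write $\tilde{\alpha}^t_{m,j,k^*,c'} = \tilde{\alpha}^t_{m,j,k^*,1} + \Delta^t_{c'}$ with $|\Delta^t_{c'}|\lesssim \tilde{O}(d^{-1/2})$ and apply $\sum_{c'} s_{c'}=0$ to annihilate the constant part, leaving $|\sum_{c'} s_{c'}\tilde{\alpha}^t_{m,j,k^*,c'}\gamma_{k^*}|\lesssim \tilde{O}(d^{-1/2})$, which is dwarfed by $\max_{j,c}|\tilde{\alpha}^t_{m,j,k^*,c}\beta_{c,k^*}|=\Theta(1)$ and so it never dominates the right-hand side.

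The main obstacle is ensuring the Hermite-coefficient ratio bound $1+\tilde{O}(d^{-1/2})$ holds at the time $t$ appearing in the statement rather than merely at $t=0$. This requires that ${w_{m,j}^t}^\top v_c$ remain $\tilde{O}(d^{-1/2})$ throughout the initial window, which in turn follows from the orthogonality ${v^*_c}^\top w^*_i=0$ in Assumption~\ref{assumption:task correlation} and from bounding $\eta J^{-1}\int_0^t|\kappa_{m,j,c}^s|^{k^*}\,\mathrm{d}s$ by the cumulative drift estimate used in the MoE variant of Lemma~\ref{lemma-one-expert-coefficient-concrete}. Once that invariance is secured, the remainder of the argument is essentially bookkeeping: passing a constant-order multiplicative margin through the power $k^*-2$ easily dominates the $d^{-1/2}$ perturbations in the Hermite coefficients.
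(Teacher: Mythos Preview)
Your proposal is correct and follows essentially the same approach as the paper: the paper's justification preceding the corollary is the single sentence ``Since $w_{m,j}^\top w^*_c=O(\sqrt{\log d}/\sqrt{d})$ with high probability and $\frac{\max_c|\tilde{\alpha}^0_{m,j,k^*,c}\beta_{c,k^*}|}{\min_{c'}|\tilde{\alpha}^0_{m,j,k^*,c'}\beta_{c',k^*}|} = 1 + \tilde{O}(d^{-1/2})$ with high probability, by taking $\delta$ sufficiently small, we have the following inequality,'' together with a remark deferring the time-$t$ invariance to Lemmas~\ref{lemma-appendix-moe-coefficient} and~\ref{lemma-appendix-moe-coefficient2}, which are precisely the MoE analogues of Lemma~\ref{lemma-one-expert-coefficient-concrete} that you invoke. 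Your treatment is actually more explicit than the paper's, particularly in handling the second maximand via $\sum_{c'} s_{c'}=0$ to show it is $\tilde{O}(d^{-1/2})$ and hence never dominates.
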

}
\rk{
\begin{remark}
    The term $\tilde{\alpha}^t_{m,j,k^*,c}$, defined in \cref{lemma:phase1-update}, varies with time $t$ as it is a Hermite coefficient influenced by the mean vector $\rho v_c$ of the data $x_c$. Nevertheless, \cref{corollary:phase1-init} holds for all $t$ throughout the exploration stage. This is ensured by the bounds on the Hermite coefficients provided in \cref{lemma-appendix-moe-coefficient} and \cref{lemma-appendix-moe-coefficient2}.
\end{remark}
}
\begin{remark}
    From this section, we will discuss Phase I to IV on the event that the initialization was successful.
\end{remark}

\subsection{Exploration Stage}\label{subsection:explorationstage}
We train the first layer of the experts. 
We employ the correlation loss to eliminate the interactions between neurons. 
The alignment at time $t$, denoted as $\kappa^{t}_{c,m,j}$, is defined as the inner product of the feature index $w^*_{c}$ and the weight of $j$-th neuron $w^{t}_{m,j}$ at time $t$, expressed as $\kappa^{t}_{c,m,j} := {w^*_{c}}^\top w^{t}_{m,j}$. Similarly, $\kappa^{t}_{g,m,j} := {w^*_{g}}^\top w^{t}_{m,j}$ is defined in the same manner. \rk{The purpose of this subsection is to establish \cref{lemma:phase1-main}, which serves as the formal statement of \cref{lemma-main-moe-weak-recovery}. Within the expert set $\mathcal{M}_c$, there exists a neuron $j$ such that the alignment magnitude $\kappa_{c,m,j}$ with the feature index $w^*_c$ satisfies $\kappa_{c,m,j} \geq a_2$ for some constant $a_2 > 0$. In contrast, for all other expert sets $\mathcal{M}_{c'}$ with $c' \neq c$, no neuron achieves such alignment; that is, $\kappa_{c',m,j} \leq \tilde{O}(d^{-\frac{1}{2}}) < a_2$ for all $j \in \mathcal{M}_{c'}$. Moreover, the remaining neurons in $\mathcal{M}_c$ also do not reach this level of alignment. To prove \cref{lemma:phase1-main}, we first decompose the stochastic gradient update into its population and noise components. Then, by introducing auxiliary sequences, we establish a lower bound for ${\kappa^{t}_{c^*_m,m,j^*_m}}$ (i.e., weak recovery) and upper bounds for $\lvert{\kappa^{t}_{c,m,j}}\rvert$ for all $(c,j) \neq (c^*_m, j^*_m)$ and $\lvert{\kappa^{t}_{g,m,j}}\rvert$ for all $j \in [J]$.}

\begin{lemma}[\rk{Formal}]\label{lemma:phase1-main} 
Consider the expert $m \in \expertsforc$. Let ${w_{c}^*}^\top w_{c'}^* \leq A_4 d^{- \frac{1}{2}} = \tilde{O}(d^{-1/2})$ for all $c \neq c'$, ${w_{c}^*}^\top w_{g}^* \leq A_4 d^{- \frac{1}{2}} = \tilde{O}(d^{-1/2})$ for all $c$ and ${\eta}^{t} = \eta_{e} \leq a_4 d^{- \frac{k^*}{2}}$. Then, with high probability, there exists some time $t_{1} \leq T_{1} = \tilde{\Theta}{({\eta_{e}}^{-1} d^{\frac{k^* -2}{2}})}$ such that the following conditions hold:
\begin{itemize}
\item ${\kappa^{t_1}_{c^*_m,m,j^*_m}} \geq a_2$,
\item $\lvert{\kappa^{t_1}_{c,m,j}}\rvert \leq 5A_3d^{-\frac{1}{2}} = \tilde{O}(d^{-1/2})$,  for all $(c, j) \neq (c^*_m, j^*_m)$,
\item $\lvert{\kappa^{t_1}_{g,m,j}}\rvert\leq  5A_3d^{-\frac{1}{2}} = \tilde{O}(d^{-1/2})$, for all $j \in [J]$.
 \end{itemize}
\end{lemma}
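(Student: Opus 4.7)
My plan is to carry out a direct extension of the single-expert analysis of \cref{appendix-section-vanilla} to the per-expert dynamics inside $\mathcal{M}_c$, exploiting the initialization gap from \cref{corollary:phase1-init}. For each $m \in \mathcal{M}_c$ and each pair $(c,j)$ or $(g,j)$ I would track the scalar sequence $\kappa^t_{c,m,j} = {w^*_c}^\top w^t_{m,j}$ (resp.\ $\kappa^t_{g,m,j}$) by decomposing the one-step normalized SGD update into a population-drift piece plus a martingale-difference noise piece. The drift comes from Hermite-expanding $y_c^t \nabla^S_{w_{m,j}} F_1(x_c^t)$ in the cluster-conditional Gaussian, as was done just below \cref{definition-one-expert-cj}, and yields the familiar $k^*{-}1$-homogeneous recursion
\begin{equation}
\kappa^{t+1}_{c,m,j} - \kappa^t_{c,m,j} \;\approx\; \tfrac{\eta_e}{CJ}\,\beta_{c,k^*}\,\alpha^t_{m,j,k^*,c}\,(\kappa^t_{c,m,j})^{k^*-1} + \text{(cross terms)},
\end{equation}
while the noise is controlled by a Freedman--Bernstein bound giving a uniform $\tilde O(d^{-1/2})$ cumulative perturbation over $T_1 = \tilde\Theta(\eta_e^{-1} d^{(k^*-2)/2})$ steps once the step size obeys $\eta_e \le a_4 d^{-k^*/2}$.

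\textbf{Winning alignment.} By \cref{corollary:phase1-init}, $\kappa^0_{c^*_m,m,j^*_m} \ge d^{-1/2}$ and it dominates every other $|\kappa^0_{\cdot,m,\cdot}|$ by a fixed factor $A_d > 1$. In the regime where the winning alignment already exceeds $d^{-1/2}$, the population drift simplifies to
\begin{equation}
\kappa^{t+1}_{c^*_m,m,j^*_m} - \kappa^t_{c^*_m,m,j^*_m} \;\gtrsim\; \tfrac{\eta_e}{CJ}\,(\kappa^t_{c^*_m,m,j^*_m})^{k^*-1},
\end{equation}
because $\beta_{c^*_m,k^*}\,\alpha^t_{m,j^*_m,k^*,c^*_m} = \tilde\Theta(1)$ with fixed sign by \cref{assumption:student activation functions}. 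Applying the lower bound in \cref{lemma:biharigronwall}, this sequence reaches the constant threshold $a_2$ within $t_1 \le T_1$ steps.

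\textbf{Losing alignments and global alignment.} For any $(c,j) \ne (c^*_m, j^*_m)$, the same $k^*{-}1$-homogeneous drift applies but with initial value smaller by the factor $A_d$; by the second half of \cref{lemma:biharigronwall} the sequence remains of order $\tilde O(d^{-1/2})$ on the whole interval $[0,T_1]$. The cross-coupling coming from ${w^*_c}^\top w^*_{c'} = \tilde O(d^{-1/2})$ and from the global-task terms contributes only an $A_4 d^{-1/2}$-scale perturbation, which is absorbed into the bound $5A_3 d^{-1/2}$. For the global direction, the key observation I would reuse is the Hermite-coefficient cancellation already identified in \cref{lemma-main-one-expert-coefficient}, \cref{lemma-one-expert-coefficient}, and \cref{lemma-one-expert-coefficient-concrete}: the drift along $w^*_g$ is proportional to $\sum_c s_c\,\alpha^t_{m,j,k^*,c}$, and since $\sum_c s_c = 0$ and $|\alpha^t_{m,j,k^*,c} - \alpha^t_{m,j,k^*,c'}| \le \tilde O(d^{-1/2})$, the effective prefactor is $\tilde O(d^{-1/2})$, raising the effective information exponent to $k^*+1$. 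The corresponding time scale $\tilde\Omega(\eta_e^{-1} d^{(k^*-1)/2})$ dominates $T_1$, so $|\kappa^t_{g,m,j}|$ cannot exit the $\tilde O(d^{-1/2})$ ball during Phase I.

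\textbf{Main obstacle.} The delicate point I expect to be the main technical load is running all three estimates simultaneously while keeping the Hermite coefficients $\alpha^t_{m,j,k^*,c}$ within $\tilde O(d^{-1/2})$ of their initial values throughout $[0, T_1]$, since both the winning-alignment growth and, more importantly, the global-direction cancellation depend on this proximity. I plan to close this via a bootstrap analogous to \cref{lemma-one-expert-coefficient-concrete}: inductively assume the three bounds of the lemma up to step $t$, derive $|{w^t_{m,j}}^\top v_c| \le \tilde O(d^{-1/2})$ from the assumed control of $\kappa^t_{c,m,j}$ and the orthogonality ${v^*_c}^\top w^*_{i} = 0$ in \cref{assumption:task correlation}, and then use this to conclude $|\alpha^t_{m,j,k^*,c} - \alpha^0_{m,j,k^*,c}| \le \tilde O(d^{-1/2})$, closing the loop. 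A final union bound over the $O(MJC)$ alignment sequences, the noise martingales, and the coefficient bootstrap, using the high-probability closure properties noted in the Notation paragraph, yields the lemma.
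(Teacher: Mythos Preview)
Your proposal is correct and follows essentially the same approach as the paper: both decompose the one-step normalized SGD update for each alignment into a population drift and a martingale noise term (\cref{lemma:phase1-update} in the paper), invoke the Bihari--LaSalle comparison (\cref{lemma:biharigronwall}) to show the winning alignment $\kappa^{t}_{c^*_m,m,j^*_m}$ escapes to $a_2$ within $T_1$ steps while the competitors stay $\tilde O(d^{-1/2})$, control the time-varying Hermite coefficients $\tilde{\alpha}^t_{m,j,k^*,c}$ by bootstrapping $|{w^t_{m,j}}^\top v_c| = \tilde O(d^{-1/2})$ (the paper's \cref{lemma-appendix-moe-coefficient,lemma-appendix-moe-coefficient2}), and use the cancellation $\sum_c s_c \tilde{\alpha}^t_{m,j,k^*,c} = \tilde O(d^{-1/2})$ for the $w^*_g$ direction. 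The paper packages the simultaneous tracking into explicit auxiliary lower/upper sequences $(P^s_{\text{I}}), (Q^s_{\text{I}})$ in \cref{lemma:phase1-auxilirarysequence,lemma:phase1-induction}, whereas you sketch the same race more informally; the content is the same.

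One small point to be careful about when you write it out: for $j = j^*_m$, the drift of $\kappa^t_{g,m,j^*_m}$ contains, in addition to the cancelled direct term, a cross term of size $\eta_e\,\tilde\Theta(1)\,(\kappa^t_{c^*_m,m,j^*_m})^{k^*-1}\cdot \tilde O(d^{-1/2})$ coming from the coupling ${w^*_g}^\top w^*_{c^*_m}$ and the projection $-\kappa^t_{g,m,j^*_m}\kappa^t_{c^*_m,m,j^*_m}$. This term is not killed by the cancellation and is what the paper isolates as the $A_3\,(\cdot)\,d^{-1/2}$ contribution in $Q^s_{\text{I}}$. Its time-sum telescopes against the growth of $\kappa^{t}_{c^*_m,m,j^*_m}$ and is therefore $\tilde O(d^{-1/2})$, so it is harmless, but it should be named explicitly rather than swept into ``$A_4 d^{-1/2}$-scale perturbation.'' Similarly, your bootstrap needs a separate branch (the paper's \cref{lemma-appendix-moe-coefficient2}) for the winning neuron, since its alignment is not $\tilde O(d^{-1/2})$; there one shows $\tilde{\alpha}^t_{m,j^*_m,k^*,c}$ remains $\tilde\Theta(1)$ with the correct sign rather than close to its initial value.
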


\paragraph{\rk{Gradient update decomposition.}}
First, we assess the evolution of the alignment by analyzing its population and stochastic contributions. Note that at the exploration stage, the router distributes the data to the experts with an equal probability of $\frac{1}{M}$ because the weights $\theta_m$ of the gating network are initialized to zero. 

\rk{In \cref{lemma:phase1-update}, we will evaluate the update of the spherical gradient descent
\begin{align*}
    {\kappa^{t+1}_{c,m,j}} =
    {w_c^*}^\top {w^{t+1}_{c,m,j}} &= 
    {w_c^*}^\top\frac{\big[{w^{t}_{c,m,j}} - {\eta}^{t} (I_d - w^t_{m,j}{w^t_{m,j}}^\top) \nabla_{w^t_{m,j}} \mathbf{1} \left( m(x_c) = m \right) \pi_m (x_c) y_c a_{m,j} \sigma_m({w^t_{m,j}}^\top x_c + b_{m,j}) \big]}{\big\|{{w^t_{m,j}} - {\eta}^{t} (I_d - w^t_{m,j}{w^t_{m,j}}^\top) \nabla_{w^t_{m,j}} \mathbf{1} \left( m(x_c) = m \right) \pi_m (x_c) y_c a_{m,j} \sigma_m({w^t_{m,j}}^\top x_c + b_{m,j})}\big\|}.
\end{align*}
}
\rk{Note that $I_d - w^t_{m,j}{w^t_{m,j}}^\top$ is a projection matrix used to project the gradient $\nabla_{w^t_{m,j}} \mathbf{1} \left( m(x_c) = m \right) \pi_m (x_c) y_c a_{m,j} \sigma_m({w^t_{m,j}}^\top x_c + b_{m,j})$ onto the tangent space of the sphere $\mathbb{S}^{d-1}$. Additionally, normalization is performed to return the vector $w^{t+1}_{c,m,j}$ to the unit sphere.}

\begin{lemma}\label{lemma:phase1-update} 
Suppose that $\eta^t = \eta_e \leq a_4 d^{-\frac{k^*}{2}}$ and $\kappa^{0}_{c^*_m, m, j^*_m} \geq \frac{1}{2}d^{-\frac{1}{2}}$. With high probability, the update of $\kappa^t_{c, m, j}$ and $\kappa^t_{g, m, j}$ satisfies the following bounds.
\begin{align*}
    &\kappa^{t+1}_{c^*_m,m,j^*_m} \geq {\kappa^{t}_{c^*_m,m,j^*_m}} + \frac{{\eta}^{t}}{CM^2} \sum_{c' \in [C]} \sum_{i=k^*}^{p^*} \Big[ i \tilde{\alpha}_{m,j^*_m,i, c'} \beta_{c', i}  {(\kappa^{t}_{c',m,j^*_m})}^{i-1} {({w^*_{c^*_m}}^\top w^*_{c'} - {\kappa^{t}_{c^*_m,m,j^*_m}}{\kappa^{t}_{c',m,j}})} \\
    &\quad + i {s_{c'}} \tilde{\alpha}_{m,j^*_m,i, c'} \gamma_{i}  {(\kappa^{t}_{g,m,j^*_m})}^{i-1}
     {({w^*_{c^*_m}}^\top w^*_{g} - {\kappa^{t}_{c^*_m,m,j^*_m}}{\kappa^{t}_{g,m,j^*_m}})} \Big] -{\kappa^{t}_{c^*_m,m,j^*_m}} {({\eta}^t)}^2 {A_1}^2 d \\
     &\quad + \rk{\eta^t} {w^*_c}^\top {(I_d - w^t_{m,j^*_m} {w^t_{m,j^*_m}}^\top)}{{\Xi}^t}_{w_{m,j^*_m}}.
\end{align*}

Let $c \neq c^*_m$ or $j \neq j^*_m$, we have
\begin{align*}
    &{\kappa^{t}_{c,m,j}} + \frac{{\eta}^{t}}{CM^2} \sum_{c' \in [C]} \sum_{i=k^*}^{p^*} \Big[ i \tilde{\alpha}_{m,j,i, c'} \beta_{c', i}  {(\kappa^{t}_{c',m,j})}^{i-1} {({w^*_c}^\top w^*_{c'} - {\kappa^{t}_{c,m,j}}{\kappa^{t}_{c',m,j}})} + i {s_{c'}} \tilde{\alpha}_{m,j,i, c'} \gamma_{i}  {(\kappa^{t}_{g,m,j})}^{i-1} \\ 
    &\quad \cdot {({w^*_c}^\top w^*_{g} - {\kappa^{t}_{c,m,j}}{\kappa^{t}_{g,m,j}})} \Big] -\frac{\lvert{\kappa^{t}_{c,m,j}\rvert} {({\eta}^t)}^2 {A_1}^2 d}{2} - \frac{{({\eta}^t)}^3 {A_1}^3 d^{\frac{3}{2}}}{2} + \rk{\eta^t} {w^*_c}^\top(I_d - w^t_{m,j}{w^t_{m,j}}^\top) {{\Xi}^t}_{w_{m,j}} \\
    &\leq \kappa^{t+1}_{c,m,j} \leq \\
    &{\kappa^{t}_{c,m,j}} + \frac{{\eta}^{t}}{CM^2} \sum_{c' \in [C]} \sum_{i=k^*}^{p^*} \Big[ i \tilde{\alpha}_{m,j,i, c'} \beta_{c', i}  {(\kappa^{t}_{c',m,j})}^{i-1} {({w^*_c}^\top w^*_{c'} - {\kappa^{t}_{c,m,j}}{\kappa^{t}_{c',m,j}})} + i {s_{c'}} \tilde{\alpha}_{m,j,i, c'} \gamma_{i}  {(\kappa^{t}_{g,m,j})}^{i-1} \\ 
    &\quad \cdot {({w^*_c}^\top w^*_{g} - {\kappa^{t}_{c,m,j}}{\kappa^{t}_{g,m,j}})} \Big] +\frac{\lvert{\kappa^{t}_{c,m,j}\rvert} {({\eta}^t)}^2 {A_1}^2 d}{2} + \frac{{({\eta}^t)}^3 {A_1}^3 d^{\frac{3}{2}}}{2} + \rk{\eta^t} {w^*_c}^\top(I_d - w^t_{m,j}{w^t_{m,j}}^\top) {{\Xi}^t}_{w_{m,j}}.
\end{align*}

Let $j \in [J]$, we have
\begin{align*}
    &{\kappa^{t}_{g,m,j}} + \frac{{\eta}^{t}}{CM^2} \sum_{c' \in [C]} \sum_{i=k^*}^{p^*} \Big[ i \tilde{\alpha}_{m,j,i, c'} \beta_{c', i}  {(\kappa^{t}_{c',m,j})}^{i-1} {({w^*_g}^\top w^*_{c'} - {\kappa^{t}_{g,m,j}}{\kappa^{t}_{c',m,j}})} + i {s_{c'}} \tilde{\alpha}_{m,j,i, c'} \gamma_{i}  {(\kappa^{t}_{g,m,j})}^{i-1} \\ 
    &\quad \cdot {(1 - {(\kappa^{t}_{g,m,j})}^2)} \Big] -\frac{\lvert{\kappa^{t}_{g,m,j}\rvert} {({\eta}^t)}^2 {A_1}^2 d}{2} - \frac{{{(\eta}^t)}^3 {A_1}^3 d^{\frac{3}{2}}}{2} + \rk{\eta^t} {w^*_g}^\top(I_d - w^t_{m,j}{w^t_{m,j}}^\top) {{\Xi}^t}_{w_{m,j}} \\
    &\leq \kappa^{t+1}_{g,m,j} \leq \\
    &{\kappa^{t}_{g,m,j}} + \frac{{\eta}^{t}}{CM^2} \sum_{c' \in [C]} \sum_{i=k^*}^{p^*} \Big[ i \tilde{\alpha}_{m,j,i, c'} \beta_{c', i}  {(\kappa^{t}_{c',m,j})}^{i-1} {({w^*_g}^\top w^*_{c'} - {\kappa^{t}_{g,m,j}}{\kappa^{t}_{c',m,j}})} + i {s_{c'}} \tilde{\alpha}_{m,j,i, c'} \gamma_{i}  {(\kappa^{t}_{g,m,j})}^{i-1} \\ 
    &\quad \cdot {(1 - {(\kappa^{t}_{g,m,j})}^2)} \Big] +\frac{\lvert{\kappa^{t}_{g,m,j}\rvert} {{\eta}^t}^2 {A_1}^2 d}{2} + \frac{{({\eta}^t)}^3 {A_1}^3 d^{\frac{3}{2}}}{2} + \rk{\eta^t} {w^*_g}^\top(I_d - w^t_{m,j}{w^t_{m,j}}^\top) {{\Xi}^t}_{w_{m,j}}.
\end{align*}

${{\Xi}^t}_{w_{m,j}}$ represents a mean-zero random variable satisfying $\|{{\Xi}^t}_{w_{m,j}}\| = \tilde{O}(d^{\frac{1}{2}})$ and $\lvert{{u}^\top {{\Xi}^t}_{w_{m,j}}}\rvert = \tilde{O}(1)$, where $u \sim \operatorname{Unif}{(\mathbb{S}^{d-1})}$, with high probability.
We can also obtain $\left|{\kappa^{t+1}_{c,m,j} - \kappa^{t}_{c,m,j}}\right| = \tilde{O}(\eta_e)$ and $\left|{\kappa^{t+1}_{g,m,j} - \kappa^{t}_{g,m,j}}\right| = \tilde{O}(\eta_e)$ with high probability.
\end{lemma}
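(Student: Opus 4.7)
}
The plan is to expand the spherical SGD update
\begin{equation*}
w^{t+1}_{m,j} = \frac{w^t_{m,j} + \eta^t (I_d - w^t_{m,j}{w^t_{m,j}}^\top) \mathbf{1}[m(x_c)=m]\pi_m(x_c)\, y_c\, a_{m,j}\sigma_m'({w^t_{m,j}}^\top x_c + b_{m,j})\, x_c}{\|\cdot\|}
\end{equation*}
and take the inner product with $w^*_c$ (resp.\ $w^*_g$, $w^*_{c^*_m}$). First I would decompose the (unnormalized) stochastic increment into a population part plus a noise part $\Xi^t_{w_{m,j}}$, where $\Xi^t_{w_{m,j}}$ is mean-zero conditional on $w^t_{m,j}$. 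For the population part, I would condition on the cluster $c'$ (contributing the factor $\tfrac{1}{CM}$ from uniform clusters and uniform routing), write $x_{c'} = \rho v_{c'} + z$ with $z\sim\mathcal{N}(0,I_d)$, absorb the shift $\rho v_{c'}$ into the Hermite expansion of $\sigma_m(\cdot + \rho {w^t_{m,j}}^\top v_{c'} + b_{m,j}) = \sum_i \tilde\alpha^t_{m,j,i,c'}\hermite_i/\sqrt{i!}$ (which defines the $\tilde\alpha^t_{m,j,i,c'}$), and similarly expand $f^*_{c'}$ and $g^*$. The key identity is the Hermite orthogonality formula that gives $\E[\hermite_i({w^*_a}^\top z)\,\hermite_j({w^t_{m,j}}^\top z)\,({w^*_a}^\top z$ or $\text{etc.})]$ in terms of powers of the inner product ${w^*_a}^\top w^t_{m,j} = \kappa^t_{a,m,j}$; applying this together with Stein's lemma after projecting onto $I_d - w^t_{m,j}{w^t_{m,j}}^\top$ produces exactly the $i\tilde\alpha \beta (\kappa)^{i-1}({w^*}^\top w^* - \kappa \kappa)$ terms appearing in the statement, where the $-\kappa\kappa$ piece comes from the spherical projection.

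Second, I would control the denominator $\|w^t_{m,j} + \eta^t G\|$ where $G$ is the (projected) stochastic gradient. Since $G$ is tangent to the sphere, $\|w^t_{m,j} + \eta^t G\|^2 = 1 + (\eta^t)^2\|G\|^2$, so the normalization contributes $(1 + (\eta^t)^2\|G\|^2)^{-1/2} = 1 - \tfrac{1}{2}(\eta^t)^2\|G\|^2 + O((\eta^t)^4\|G\|^4)$. A high-probability bound $\|G\|\le A_1 \sqrt{d}$ (via sub-Gaussian concentration of $y_c$, $\sigma_m'$, and $x_c$, using that $y_c$ is a bounded-degree polynomial in a Gaussian plus independent Gaussian noise, and that $J=\tilde O(1)$) yields the $-\tfrac{1}{2}\kappa (\eta^t)^2 A_1^2 d$ correction shown in the lemma, with the cubic $(\eta^t)^3 A_1^3 d^{3/2}$ absorbing higher-order terms in the Taylor expansion together with the coupling between $\kappa$'s error and the noise.

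Third, I would bound the noise term $\Xi^t_{w_{m,j}}$. Its $\ell_2$ norm is $\tilde O(\sqrt{d})$ by a standard norm-of-mean-zero-polynomial-gradient concentration (the scaling is $\sqrt{d}$ because $x_c\in\mathbb{R}^d$ and each coordinate contributes order one). For any fixed direction $u$ with $\|u\|=1$ (here $u$ is $w^*_c$ or $w^*_g$), $u^\top \Xi^t_{w_{m,j}}$ is a mean-zero scalar random variable which, by Hanson--Wright / Gaussian polynomial concentration, is $\tilde O(1)$ with high probability. Because we apply this for at most $\operatorname{poly}(d)$ fixed directions across $O(T_1)=\tilde O(d^{k^*-1})$ steps, a union bound preserves the high-probability guarantee. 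The projection $I_d - w^t_{m,j}{w^t_{m,j}}^\top$ does not inflate these norms. Finally, the combined bound $|\kappa^{t+1}_{c,m,j} - \kappa^t_{c,m,j}| = \tilde O(\eta_e)$ follows by summing: the population term is $\tilde O(\eta_e)$ (since $|\kappa|\le 1$ and $\tilde\alpha,\beta,\gamma$ are $\tilde O(1)$), the $(\eta_e)^2 d$ term is $\tilde O(\eta_e)$ under $\eta_e\le a_4 d^{-k^*/2}\ll d^{-1}$, and the noise term contributes $\eta_e\cdot\tilde O(1)=\tilde O(\eta_e)$.

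The main obstacle I anticipate is the careful bookkeeping of the $-\kappa\kappa$ cross terms, which must be tracked separately from the leading $({w^*}^\top w^*)$ contribution, since they arise from the projection onto the tangent space and interact nontrivially with the Hermite expansion; in particular, ensuring that the bounds for $\kappa^{t+1}_{c,m,j}$ and $\kappa^{t+1}_{c^*_m,m,j^*_m}$ have the stated asymmetric form (lower bound only for the latter, both bounds for the former) requires using $|\kappa^t_{c^*_m,m,j^*_m}|\ge \tfrac{1}{2}d^{-1/2}$ to drop a negative contribution, whereas for generic $(c,j)$ we cannot sign-control the noise and retain the absolute value in the error term.
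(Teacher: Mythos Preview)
Your proposal is correct and follows essentially the same route as the paper: compute the population gradient via Hermite expansion (absorbing the shift $\rho w_{m,j}^\top v_{c'}$ into the coefficients $\tilde\alpha_{m,j,i,c'}$), apply Stein's lemma and the spherical projection to obtain the $({w^*}^\top w^* - \kappa\kappa)$ structure, Taylor-expand the normalization to extract the $(\eta^t)^2 A_1^2 d$ and $(\eta^t)^3 A_1^3 d^{3/2}$ corrections, and control the mean-zero noise $\Xi^t_{w_{m,j}}$ by sub-Weibull concentration. One small slip: the prefactor is $\tfrac{1}{CM^2}$, not $\tfrac{1}{CM}$, because at initialization both the routing indicator $\mathbf{1}[m(x_c)=m]$ and the softmax weight $\pi_m(x_c)$ each contribute a factor of $1/M$; your own displayed update includes $\pi_m(x_c)$, so this is just a bookkeeping lapse in the text.
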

\begin{proof}
The population gradient for the first layer of the expert can be represented as a decomposition in the following manner.
\begin{align*}
    &\rk{\nabla_{w_{m,j}} \mathbb{E}_c \mathbb{E}_{x_c} 
    \Big[ \mathbf{1} \left( m(x_c) = m \right) \pi_m (x_c) y_c a_{m,j} \sigma_m({w_{m,j}}^\top x_c + b_{m,j}) \Big]} \\
    &= \mathbb{E}_c \mathbb{E}_{x_c} \Big[ \mathbf{1} \left( m(x_c) = m \right) \pi_m(x_c) y_c a_{m,j} {\sigma_m}' ({w_{m,j}}^\top x_c + b_{m,j}) x_c \Big] \\
    &= \frac{1}{CM^2} \sum_{c \in [C]} \mathbb{E}_{x_c} 
    \Big[ \Big( \sum_{i=k^*}^{p^*} \frac{\beta_{c,i}}{\sqrt{i!}} 
    \hermite_i({w_c^*}^\top x_c) + s_c \sum_{i=k^*}^{p^*} \frac{\gamma_i}{\sqrt{i!}} 
    \hermite_i({w_g^*}^\top x_c) \Big)  \Big( \sum_{i=1}^{\infty} \frac{i \alpha_{m,j,i}}{\sqrt{i!}} 
    \hermite_{i-1}({w_{m,j}}^\top x_c) \Big) x_c  \Big] \\
    &= \frac{1}{CM^2}\! \sum_{c \in [C]}\! \Big( \mathbb{E}_{z} \!
    \Big[ \Big( \!\sum_{i=k^*}^{p^*} \frac{\beta_{c,i}}{\sqrt{i!}} 
    \hermite_i({w_c^*}^\top z)\! + \!s_c \! \sum_{i=k^*}^{p^*} \frac{\gamma_i}{\sqrt{i!}} 
    \hermite_i({w_g^*}^\top z) \Big)\!  \Big( \!\sum_{i=1}^{\infty} \frac{i \alpha_{m,j,i}}{\sqrt{i!}} 
    \sum_{l=0}^{i-1} \binom{i-1}{l} 
    \hermite_{i-l-1}({w_{m,j}^\top z})(\rho w_{m,j}^\top v_c)^l \Big) z  \Big] \\
    &\quad + \mathbb{E}_{z} \Big[ \Big( \sum_{i=k^*}^{p^*} \frac{\beta_{c,i}}{\sqrt{i!}} 
    \hermite_i({w_c^*}^\top z) + s_c \sum_{i=k^*}^{p^*} \frac{\gamma_i}{\sqrt{i!}} 
    \hermite_i({w_g^*}^\top z) \Big)  \Big( \sum_{i=1}^{\infty} \frac{i \alpha_{m,j,i}}{\sqrt{i!}} 
    \sum_{l=0}^{i-1} \binom{i-1}{l} 
\hermite_{i-l-1}({w_{m,j}^\top z})(\rho w_{m,j}^\top v_c)^l \Big) \rho v_c  \Big]\Big) \\
    &= \frac{1}{CM^2} \sum_{c \in [C]} \Big(\underbrace{\mathbb{E}_{z} \Big[ 
    \Big( \sum_{i=k^*}^{p^*} \frac{i \beta_{c,i}}{\sqrt{i!}} 
    \hermite_{i-1}({w_c^*}^\top z) \Big) 
    \Big( \sum_{i=1}^{\infty} \frac{i \alpha_{m,j,i}}{\sqrt{i!}} 
    \sum_{l=0}^{i-1} \binom{i-1}{l} 
    \hermite_{i-l-1}({w_{m,j}^\top z})(\rho w_{m,j}^\top v_c)^l \Big) w_c^* \Big]}_{\text{(I)}} \\
    &\quad + \underbrace{\mathbb{E}_{z} \Big[ \Big( s_c \sum_{i=k^*}^{p^*} \frac{i \gamma_i}{\sqrt{i!}} 
    \hermite_{i-1}({w_g^*}^\top z) \Big) 
    \Big( \sum_{i=1}^{\infty} \frac{i \alpha_{m,j,i}}{\sqrt{i!}} 
    \sum_{l=0}^{i-1} \binom{i-1}{l} 
    \hermite_{i-l-1}({w_{m,j}^\top z})(\rho w_{m,j}^\top v_c)^l \Bigg) w_g^* \Big]}_{\text{(II)}} \\
    &\quad + \underbrace{\mathbb{E}_{z} \Big[\! \Big(\! \sum_{i=k^*}^{p^*} \frac{\beta_{c,i}}{\sqrt{i!}} 
    \hermite_i({w_c^*}^\top z)\!\! +\!\! s_c \!\! \sum_{i=k^*}^{p^*} \frac{\gamma_i}{\sqrt{i!}} 
    \hermite_i({w_g^*}^\top z) \Big) \!\!
    \Big( \! \sum_{i=2}^{\infty} \frac{i \alpha_{m,j,i}}{\sqrt{i!}} 
    \sum_{l=0}^{i-2} \binom{i-1}{l} (i-l-1)
    \hermite_{i-l-2}({w_{m,j}^\top z})(\rho w_{m,j}^\top v_c)^l \Big) w_{m,j}\! \Big]}_{\text{(III)}} \\
    &\quad + \underbrace{\mathbb{E}_{z} 
    \Big[ \Big( \sum_{i=k^*}^{p^*} \frac{\beta_{c,i}}{\sqrt{i!}} 
    \hermite_i({w_c^*}^\top z) + s_c \sum_{i=k^*}^{p^*} \frac{\gamma_i}{\sqrt{i!}} 
    \hermite_i({w_g^*}^\top z) \Big)  \Big( \sum_{i=1}^{\infty} \frac{i \alpha_{m,j,i}}{\sqrt{i!}} 
    \sum_{l=0}^{i-1} \binom{i-1}{l} 
    \hermite_{i-l-1}({w_{m,j}^\top z})(\rho w_{m,j}^\top v_c)^l \Big)  \rho v_c  \Big] }_{\text{(IV)}}\Big).
\end{align*}

\rk{where the second equality is due to $\mathbb{P}{(m(x_c) = m)} = \frac{1}{M}$ and $\pi_{m(x_c)} = \frac{1}{M}$. The third equality follows from the definition $x_c := z + \rho v_c$ where $z \sim \mathcal{N}(0, I_d)$, along with the condition ${v_c}^\top w^*_{c'}=0$ and ${v_c}^\top w^*_{g}=0$ for all $(c, c')$, as well as the binomial expansion. The fourth equality is due to Stein's Lemma and integration by parts.}

With the spherical gradient, $\text{(III)}$ is negligible \rk{since $w_{m,j}$ is unit-norm and ${w^*_c}^\top (I_d - w_{m,j}{w_{m,j}}^\top )w_{m,j} = 0$}. When considering ${v_c}^\top \nabla_{w_{m,j}} \mathbf{E}[\mathbf{1} \left( m(x_c) = m \right) \pi_m (x_c) y_c a_{m,j} \sigma_m({w_{m,j}}^\top x_c + b_{m,j})]$, $\text{(IV)}$ can be ignored because ${v_c}^\top w_{c'}^* = 0$ for all $(c, c')$. Thus, we expand $\text{(I)}$ and $\text{(II)}$. 
\begin{align*}
    \text{(I)} 
    &= \sum_{i=k^*}^{p^*} \sum_{l=i}^{\infty} 
    \Big( \frac{l \alpha_{m,j,l}}{\sqrt{l!}} \binom{l-1}{l-i} 
    (\rho w_{m,j}^\top v_c)^{l-i} \Big) 
    \frac{i \beta_{c,i}}{\sqrt{i!}} (i-1)! {({w^*_c}^\top w_{m,j})}^{i-1} {w^*_c}.
\end{align*}
\begin{align*}
    \text{(II)}
    &= \sum_{i=k^*}^{p^*} \sum_{l=i}^{\infty} 
    \Big( \frac{l \alpha_{m,j,l}}{\sqrt{l!}} \binom{l-1}{l-i} 
    (\rho w_{m,j}^\top v_c)^{l-i} \Big) 
    \frac{i s_c \gamma_{c,i}}{\sqrt{i!}} (i-1)! {({w^*_g}^\top w_{m,j})}^{i-1} {w^*_g}.
\end{align*}
Here, we introduce the discrepancy ${\Xi^t}_{w_{m,j}}$ between the population and the empirical gradient.
\begin{align*}
    \rk{{\Xi^t}_{w_m,j} = } &\rk{ - \nabla_{w^t_{m,j}} \mathbf{1} \left( m(x_c) = m \right) \pi_m (x_c) y_c a_{m,j} \sigma_m({w^t_{m,j}}^\top x_c + b_{m,j})} \\
    &\rk{+ \nabla_{w^t_{m,j}} \mathbf{E} [\mathbf{1} \left( m(x_c) = m \right) \pi_m (x_c) y_c a_{m,j} \sigma_m({w^t_{m,j}}^\top x_c + b_{m,j})]}
\end{align*}

We evaluate the empirical update of the alignment.
\begin{align*}
    {\kappa^{t+1}_{c,m,j}} &= 
    \frac{{\kappa^{t}_{c,m,j}} - {\eta}^{t} {w_c^*}^\top (I_d - w^t_{m,j}{w^t_{m,j}}^\top) \nabla_{w^t_{m,j}} \mathbf{1} \left( m(x_c) = m \right) \pi_m (x_c) y_c a_{m,j} \sigma_m({w^t_{m,j}}^\top x_c + b_{m,j})}{\big\|{{w^t_{m,j}} - {\eta}^{t} (I_d - w^t_{m,j}{w^t_{m,j}}^\top) \nabla_{w^t_{m,j}} \mathbf{1} \left( m(x_c) = m \right) \pi_m (x_c) y_c a_{m,j} \sigma_m({w^t_{m,j}}^\top x_c + b_{m,j})}\big\|} \\
    &\overset{(i)}{\geq} {\kappa^{t}_{c,m,j}} - {\eta}^{t} {w_c^*}^\top (I_d - w^t_{m,j}{w^t_{m,j}}^\top) \nabla_{w^t_{m,j}} \mathbf{1} \left( m(x_c) = m \right) \pi_m (x_c) y_c a_{m,j} \sigma_m({w^t_{m,j}}^\top x_c + b_{m,j}) \\
    &\quad - \frac{\lvert{\kappa^{t}_{c,m,j}}\rvert {(\eta^{t})}^2}{2} {\| {\nabla_{w^t_{m,j}} \mathbf{1} \left( m(x_c) = m \right) \pi_m (x_c) y_c a_{m,j} \sigma_m({w^t_{m,j}}^\top x_c + b_{m,j})}\|}^2 \\
    &\quad - \frac{{(\eta^{t})}^3}{2} {\| {\nabla_{w^t_{m,j}} \mathbf{1} \left( m(x_c) = m \right) \pi_m (x_c) y_c a_{m,j} \sigma_m({w^t_{m,j}}^\top x_c + b_{m,j})}\|}^3 \\ 
    &\overset{(ii)}{\geq} {\kappa^{t}_{c,m,j}} + {\eta}^{t} {w^*_c}^\top (I_d - w^t_{m,j}{w^t_{m,j}}^\top) \frac{1}{CM^2} \sum_{c' \in [C]} \sum_{i=k^*}^{p^*} \Big( \sum_{l=i}^{\infty} 
    \Big( \frac{l \alpha_{m,j,l}}{\sqrt{l!}} \binom{l-1}{l-i} 
    (\rho {w^t_{m,j}}^\top v_{c'})^{l-i} \Big) \\
    &\quad \cdot \frac{i \beta_{c',i}} {\sqrt{i!}} (i-1)! {({w^*_{c'}}^\top w^t_{m,j})}^{i-1} \Big) {w^*_{c'}}  + {\eta}^{t} {w^*_c}^\top (I_d - w^t_{m,j}{w^t_{m,j}}^\top) \\
    & \quad \cdot \frac{1}{CM^2} \sum_{c' \in [C]} \sum_{i=k^*}^{p^*} \Big( \sum_{l=i}^{\infty} 
    \Big( \frac{l \alpha_{m,j,l}}{\sqrt{l!}} \binom{l-1}{l-i} 
    (\rho {w^t_{m,j}}^\top v_{c'})^{l-i} \Big)  \frac{i s_{c'} \gamma_i} {\sqrt{i!}} (i-1)! {({w^*_g}^\top w^t_{m,j})}^{i-1} \Big) {w^*_{g}} \\
    &\quad - \frac{{\lvert{\kappa^{t}_{c,m,j}}\rvert} {({\eta}^t)}^2 {A_1}^2 d}{2} - \frac{{({\eta}^t)}^3 {A_1}^3 d^{\frac{3}{2}}}{2} + {\eta^t}{w^*_c}^\top(I_d - w^t_{m,j}{w^t_{m,j}}^\top) {{\Xi}^t}_{w_{m,j}} \\
    &={\kappa^{t}_{c,m,j}} + {\eta}^{t} {w^*_c}^\top (I_d - w^t_{m,j}{w^t_{m,j}}^\top) \frac{1}{CM^2} \sum_{c' \in [C]} \sum_{i=k^*}^{p^*} \Big( \frac{i \alpha_{m,j,i}}{\sqrt{i!}} + \sum_{l=i+1}^{\infty} 
    \Big( \frac{l \alpha_{m,j,l}}{\sqrt{l!}} \binom{l-1}{l-i} 
    (\rho {w^t_{m,j}}^\top v_c')^{l-i} \Big) \\
    &\quad \cdot \frac{i \beta_{c',i}} {\sqrt{i!}} (i-1)! {({w^*_{c'}}^\top w^t_{m,j})}^{i-1} \Big)  {w^*_{c'}}  + {\eta}^{t} {w^*_c}^\top (I_d - w^t_{m,j}{w^t_{m,j}}^\top) \\
    & \quad \cdot \frac{1}{CM^2} \sum_{c' \in [C]} \sum_{i=k^*}^{p^*} \Big( \frac{i \alpha_{m,j,i}}{\sqrt{i!}} + \sum_{l=i+1}^{\infty} 
    \Big( \frac{l \alpha_{m,j,l}}{\sqrt{l!}} \binom{l-1}{l-i} 
    (\rho {w^t_{m,j}}^\top v_{c'})^{l-i} \Big)  \frac{i s_{c'} \gamma_i} {\sqrt{i!}} (i-1)! {({w^*_g}^\top w^t_{m,j})}^{i-1} \Big) {w^*_{g}} \\
    &\quad - \frac{{\lvert{\kappa^{t}_{c,m,j}}\rvert} {({\eta}^t)}^2 {A_1}^2 d}{2} - \frac{{({\eta}^t)}^3 {A_1}^3 d^{\frac{3}{2}}}{2} + {\eta^t}{w^*_c}^\top(I_d - w^t_{m,j}{w^t_{m,j}}^\top) {{\Xi}^t}_{w_{m,j}} \\
    &= {\kappa^{t}_{c,m,j}} + \frac{{\eta}^{t}}{CM^2} \sum_{c' \in [C]} \sum_{i=k^*}^{p^*} \Big( i \tilde{\alpha}^t_{m,j,i, c'} \beta_{c', i}  {(\kappa^{t}_{c',m,j})}^{i-1} {({w^*_c}^\top w^*_{c'} - {\kappa^{t}_{c,m,j}}{\kappa^{t}_{c',m,j}})} + i {s_{c'}} \tilde{\alpha}^t_{m,j,i, c'} \gamma_{i}  {(\kappa^{t}_{g,m,j})}^{i-1} \\ & \quad \cdot {({w^*_c}^\top w^*_{g} - {\kappa^{t}_{c,m,j}}{\kappa^{t}_{g,m,j}})} \Big) -\frac{{\lvert{\kappa^{t}_{c,m,j}}\rvert} {({\eta}^t)}^2 {A_1}^2 d}{2} - \frac{{({\eta}^t)}^3 {A_1}^3 d^{\frac{3}{2}}}{2} + {\eta^t}{w^*_c}^\top(I_d - w^t_{m,j}{w^t_{m,j}}^\top) {{\Xi}^t}_{w_{m,j}}.
\end{align*}

\rk{where (i) is due to Taylor expansion, Cauchy-Schwarz inequality, and the orthogonality property of the Hermite polynomials. In (ii), we used the expansion of (I) and (II) and $\| {\nabla_{w^t_{m,j}} \mathbf{1} \left( m(x_c) = m \right) \pi_m (x_c) y_c a_{m,j} \sigma_m({w^t_{m,j}}^\top x_c + b_{m,j})}\| \leq A_1d^{\frac{1}{2}}$ which holds with high probability. Here, we introduced $\tilde{\alpha}^t_{m,j,i, c}$ defined by}
\begin{align*}
    \frac{i {\tilde{\alpha}}^t_{m,j,i, c}}{\sqrt{i!}} := \frac{i \alpha_{m,j,i}}{\sqrt{i!}} + \sum_{l=i+1}^{\infty} \Big(\frac{l \alpha_{m,j,l}}{\sqrt{l!}} \binom{l-1}{l-i} {(\rho {w^t_{m,j}}\top v_{c})}^{l-i}\Big).
\end{align*}
Furthermore, it can be equivalently rewritten as follows:
\begin{align*}
    {\tilde{\alpha}}^t_{m,j,i,c} = \frac{1}{\sqrt{i!}} \mathbb{E}_z \left[ a_{m,j} \sigma_{m}'({w^t_{m,j}}^\top z  + \rho {w^t_{m,j}}^\top {v_c} + b_{m,j}) \hermite_i({w^t_{m,j}}^\top z) \right].
\end{align*}
Note that the definition of $\tilde{\alpha}_{m,j,i,c}$ given here differs from that of $\alpha_{m,j,i,c}$ in Appendix~\ref{appendix-section-vanilla}. In Appendix~\ref{appendix-section-moe}, we define the Hermite coefficients by incorporating the perturbation induced by $\rho {w_{m,j}}^\top v_c$, and formulate them in a manner involving a first-order derivative. To distinguish this modified definition, we introduced the tilde notation.

In the same way, we obtain an upper bound as follows:
\begin{align*}
    {\kappa^{t+1}_{c,m,j}} 
    &\leq {\kappa^{t}_{c,m,j}} + \frac{{\eta}^{t}}{CM^2} \sum_{c' \in [C]} \sum_{i=k^*}^{p^*} \Big( i \tilde{\alpha}^t_{m,j,i, c'} \beta_{c', i}  {(\kappa^{t}_{c',m,j})}^{i-1} {({w^*_c}^\top w^*_{c'} - {\kappa^{t}_{c,m,j}}{\kappa^{t}_{c',m,j}})} + i {s_{c'}} \tilde{\alpha}^t_{m,j,i, c'} \gamma_{i}  {(\kappa^{t}_{g,m,j})}^{i-1} \\ 
    & \quad \cdot {({w^*_c}^\top w^*_{g} - {\kappa^{t}_{c,m,j}}{\kappa^{t}_{g,m,j}})} \Big) + \frac{{\lvert{\kappa^{t}_{c,m,j}}\rvert} {({\eta}^t)}^2 {A_1}^2 d}{2} 
    + \frac{{({\eta}^t)}^3 {A_1}^3 d^{\frac{3}{2}}}{2} + (I_d - {w^t_{m,j}}{w^t_{m,j}}^\top) {w^*_c}^\top {{\Xi}^t}_{w_{m,j}}.
\end{align*}

When $(c,j) = (c^*_m, j^*_m)$, $\frac{{\kappa^{t}_{c,m,j}} {({\eta}^t)}^2 {A_1}^2 d}{2} + \frac{{({\eta}^t)}^3 {A_1}^3 d^{\frac{3}{2}}}{2} \leq {\kappa^{t}_{c,m,j}} {({\eta}^t)}^2 {A_1}^2 d$ since we have $\kappa^t_{c^*_m, m, j^*_m} \geq d^{-\frac{1}{2}}$ \rk{and $\eta^t \leq a_4d^{-\frac{k^*}{2}}$}. 

\rk{We obtain an upper bound on the difference of $\kappa_{c,m,j}$ over a single step.}
\begin{align*}
    \lvert{\kappa^{t+1}_{c,m,j} - \kappa^{t}_{c,m,j}}\rvert 
    &\rk{\leq {\eta}^{t} \lvert{{w_c^*}^\top (I_d - w^t_{m,j}{w^t_{m,j}}^\top) 
    \nabla_{w^t_{m,j}} \mathbf{1} \left( m(x_c) = m \right) \pi_m (x_c) y_c a_{m,j} \sigma_m({w^t_{m,j}}^\top x_c + b_{m,j})}\rvert } \\
    & \quad \rk{+ \frac{{\lvert{\kappa^{t}_{c,m,j}}\rvert} {(\eta^{t})}^2}{2} 
    {\| {\nabla_{w^t_{m,j}} \mathbf{1} \left( m(x_c) = m \right) \pi_m (x_c) y_c a_{m,j} \sigma_m({w^t_{m,j}}^\top x_c + b_{m,j})}\|}^2} \\
    &\quad \rk{+ \frac{(\eta^{t})^3}{2} 
    \lvert{{w_c^*}^\top (I_d - w^t_{m,j}{w^t_{m,j}}^\top) 
    {\nabla_{w^t_{m,j}} \mathbf{1} \left( m(x_c) = m \right) \pi_m (x_c) y_c a_{m,j} \sigma_m({w^t_{m,j}}^\top x_c + b_{m,j})}}\rvert } \\
    & \quad \rk{\cdot{\| {\nabla_{w^t_{m,j}} \mathbf{1} \left( m(x_c) = m \right) \pi_m (x_c) y_c a_{m,j} \sigma_m({w^t_{m,j}}^\top x_c + b_{m,j})}\|}^2} \\
    &\quad = \tilde{O}(\eta_e).
\end{align*}
\rk{
    All terms on the RHS are bounded by $\tilde{O}(\eta_e)$ since \\
    $\lvert{{w_c^*}^\top (I_d - w^t_{m,j}{w^t_{m,j}}^\top) 
    \nabla_{w^t_{m,j}} \mathbf{1} \left( m(x_c) = m \right) 
    \pi_m (x_c) y_c a_{m,j} \sigma_m({w^t_{m,j}}^\top x_c + b_{m,j})}\rvert = \tilde{O}(1)$, \\
    ${\| {\nabla_{w^t_{m,j}} \mathbf{1} \left( m(x_c) = m \right) 
    \pi_m (x_c) y_c a_{m,j} \sigma_m({w^t_{m,j}}^\top x_c + b_{m,j})}\|}^2 = \tilde{O}(d)$, and \\
    $\lvert{{w_c^*}^\top (I_d - w^t_{m,j}{w^t_{m,j}}^\top)
    {\nabla_{w^t_{m,j}} \mathbf{1} \left( m(x_c) = m \right) 
    \pi_m (x_c) y_c a_{m,j} \sigma_m({w^t_{m,j}}^\top x_c + b_{m,j})}}\rvert \\
    \quad \cdot {\| {\nabla_{w^t_{m,j}} \mathbf{1} \left( m(x_c) = m \right) 
    \pi_m (x_c) y_c a_{m,j} \sigma_m({w^t_{m,j}}^\top x_c + b_{m,j})}\|}^2 = \tilde{O}(d)$ with high probability.
}

We establish similar statements for $\kappa_{g,m,j}$. We obtain a lower bound as follows:
\begin{align*}
    {\kappa^{t+1}_{g,m,j}} &= 
    \rk{\frac{{\kappa^{t}_{g,m,j}} - {\eta}^{t} {w_g^*}^\top (I_d - w^t_{m,j}{w^t_{m,j}}^\top) \nabla_{w^t_{m,j}} \mathbf{1} \left( m(x_c) = m \right) \pi_m (x_c) y_c a_{m,j} \sigma_m({w^t_{m,j}}^\top x_c + b_{m,j})}{\big\|{{w^t_{m,j}} - {\eta}^{t} (I_d - w^t_{m,j}{w^t_{m,j}}^\top) \nabla_{w^t_{m,j}} \mathbf{1} \left( m(x_c) = m \right) \pi_m (x_c) y_c a_{m,j} \sigma_m({w^t_{m,j}}^\top x_c + b_{m,j})}\big\|}} \\
    &\overset{(i)}{\geq} {\kappa^{t}_{g,m,j}} - {\eta}^{t} {w_g^*}^\top (I_d - w^t_{m,j}{w^t_{m,j}}^\top) \nabla_{w^t_{m,j}} \mathbf{1} \left( m(x_c) = m \right) \pi_m (x_c) y_c a_{m,j} \sigma_m({w^t_{m,j}}^\top x_c + b_{m,j}) \\
    &\quad - \frac{\lvert{\kappa^{t}_{g,m,j}}\rvert {(\eta^{t})}^2}{2} {\| {\nabla_{w^t_{m,j}} \mathbf{1} \left( m(x_c) = m \right) \pi_m (x_c) y_c a_{m,j} \sigma_m({w^t_{m,j}}^\top x_c + b_{m,j})}\|}^2 \\
    &\quad - \frac{{(\eta^{t})}^3}{2} {\| {\nabla_{w^t_{m,j}} \mathbf{1} \left( m(x_c) = m \right) \pi_m (x_c) y_c a_{m,j} \sigma_m({w^t_{m,j}}^\top x_c + b_{m,j})}\|}^3 \\ 
    &\overset{(ii)}{\geq} {\kappa^{t}_{g,m,j}} + {\eta}^{t} {w^*_g}^\top (I_d - w^t_{m,j}{w^t_{m,j}}^\top) \frac{1}{CM^2} \sum_{c' \in [C]} \sum_{i=k^*}^{p^*} \Big( \sum_{l=i}^{\infty} 
    \Big( \frac{l \alpha_{m,j,l}}{\sqrt{l!}} \binom{l-1}{l-i} 
    (\rho {w^t_{m,j}}^\top v_{c'})^{l-i} \Big) \\
    &\quad \cdot \frac{i \beta_{c',i}} {\sqrt{i!}} (i-1)! {({w^*_{c'}}^\top w^t_{m,j})}^{i-1} \Big) {w^*_{c'}}  + {\eta}^{t} {w^*_g}^\top (I_d - w^t_{m,j}{w^t_{m,j}}^\top) \\
    & \quad \cdot \frac{1}{CM^2} \sum_{c' \in [C]} \sum_{i=k^*}^{p^*} \Big( \sum_{l=i}^{\infty} 
    \Big( \frac{l \alpha_{m,j,l}}{\sqrt{l!}} \binom{l-1}{l-i} 
    (\rho {w^t_{m,j}}^\top v_{c'})^{l-i} \Big)  \frac{i s_{c'} \gamma_i} {\sqrt{i!}} (i-1)! {({w^*_g}^\top w^t_{m,j})}^{i-1} \Big) {w^*_{g}} \\
    &\quad - \frac{{\lvert{\kappa^{t}_{g,m,j}}\rvert} {({\eta}^t)}^2 {A_1}^2 d}{2} - \frac{{({\eta}^t)}^3 {A_1}^3 d^{\frac{3}{2}}}{2} + {\eta^t}{w^*_g}^\top(I_d - w^t_{m,j}{w^t_{m,j}}^\top) {{\Xi}^t}_{w_{m,j}} \\
    &={\kappa^{t}_{g,m,j}} + {\eta}^{t} {w^*_g}^\top (I_d - w^t_{m,j}{w^t_{m,j}}^\top) \frac{1}{CM^2} \sum_{c' \in [C]} \sum_{i=k^*}^{p^*} \Big( \frac{i \alpha_{m,j,i}}{\sqrt{i!}} + \sum_{l=i+1}^{\infty} 
    \Big( \frac{l \alpha_{m,j,l}}{\sqrt{l!}} \binom{l-1}{l-i} 
    (\rho {w^t_{m,j}}^\top v_c')^{l-i} \Big) \\
    &\quad \cdot \frac{i \beta_{c',i}} {\sqrt{i!}} (i-1)! {({w^*_{c'}}^\top w^t_{m,j})}^{i-1} \Big)  {w^*_{c'}}  + {\eta}^{t} {w^*_g}^\top (I_d - w^t_{m,j}{w^t_{m,j}}^\top) \\
    & \quad \cdot \frac{1}{CM^2} \sum_{c' \in [C]} \sum_{i=k^*}^{p^*} \Big( \frac{i \alpha_{m,j,i}}{\sqrt{i!}} + \sum_{l=i+1}^{\infty} 
    \Big( \frac{l \alpha_{m,j,l}}{\sqrt{l!}} \binom{l-1}{l-i} 
    (\rho {w^t_{m,j}}^\top v_{c'})^{l-i} \Big)  \frac{i s_{c'} \gamma_i} {\sqrt{i!}} (i-1)! {({w^*_g}^\top w^t_{m,j})}^{i-1} \Big) {w^*_{g}} \\
    &\quad - \frac{{\lvert{\kappa^{t}_{g,m,j}}\rvert} {({\eta}^t)}^2 {A_1}^2 d}{2} - \frac{{({\eta}^t)}^3 {A_1}^3 d^{\frac{3}{2}}}{2} + {\eta^t}{w^*_g}^\top(I_d - w^t_{m,j}{w^t_{m,j}}^\top) {{\Xi}^t}_{w_{m,j}} \\
    &= {\kappa^{t}_{g,m,j}} + \frac{{\eta}^{t}}{CM^2} \sum_{c' \in [C]} \sum_{i=k^*}^{p^*} \Big( i \tilde{\alpha}^t_{m,j,i, c'} \beta_{c', i}  {(\kappa^{t}_{c',m,j})}^{i-1} {({w^*_g}^\top w^*_{c'} - {\kappa^{t}_{g,m,j}}{\kappa^{t}_{c',m,j}})} + i {s_{c'}} \tilde{\alpha}^t_{m,j,i, c'} \gamma_{i}  {(\kappa^{t}_{g,m,j})}^{i-1} \\ & \quad \cdot {(1 - {({\kappa^{t}_{g,m,j}})}^2)} \Big) -\frac{{\lvert{\kappa^{t}_{g,m,j}}\rvert} {({\eta}^t)}^2 {A_1}^2 d}{2} - \frac{{({\eta}^t)}^3 {A_1}^3 d^{\frac{3}{2}}}{2} + {\eta^t}{w^*_g}^\top(I_d - w^t_{m,j}{w^t_{m,j}}^\top) {{\Xi}^t}_{w_{m,j}}.
\end{align*}
where (i) is due to Taylor expansion, Cauchy-Schwarz inequality, and the orthogonality property of the Hermite polynomials. In (ii), we used the expansion of (I) and (II) and ${\| {\nabla_{w^t_{m,j}} \mathbf{1} \left( m(x_c) = m \right) \pi_m (x_c) y_c a_{m,j} \sigma_m({w^t_{m,j}}^\top x_c + b_{m,j})}\|} \leq A_1d^{\frac{1}{2}}$ which holds with high probability.

\rk{In the same way, we obtain an upper bound as follows:}
\begin{align*}
    {\kappa^{t+1}_{g,m,j}} 
    &\leq {\kappa^{t}_{g,m,j}} + \frac{{\eta}^{t}}{CM^2} \sum_{c' \in [C]} \sum_{i=k^*}^{p^*} \Big( i \tilde{\alpha}^t_{m,j,i, c'} \beta_{c', i}  {(\kappa^{t}_{c',m,j})}^{i-1} {({w^*_g}^\top w^*_{c'} - {\kappa^{t}_{g,m,j}}{\kappa^{t}_{c',m,j}})} + i {s_{c'}} \tilde{\alpha}^t_{m,j,i, c'} \gamma_{i}  {(\kappa^{t}_{g,m,j})}^{i-1} \\ & \quad \cdot {(1 - {({\kappa^{t}_{g,m,j}})}^2)} \Big) -\frac{{\lvert{\kappa^{t}_{g,m,j}}\rvert} {({\eta}^t)}^2 {A_1}^2 d}{2} - \frac{{({\eta}^t)}^3 {A_1}^3 d^{\frac{3}{2}}}{2} + {\eta^t}{w^*_g}^\top(I_d - w^t_{m,j}{w^t_{m,j}}^\top) {{\Xi}^t}_{w_{m,j}}.
\end{align*}

\rk{Similar to $\kappa_{c,m,j}$, we obtain an upper bound on the difference of $\kappa_{g,m,j}$ over a single step.}
\begin{align*}
    \lvert{\kappa^{t+1}_{g,m,j} - \kappa^{t}_{g,m,j}}\rvert 
    &\rk{\leq {\eta}^{t} \lvert{{w_g^*}^\top (I_d - w^t_{m,j}{w^t_{m,j}}^\top) 
    \nabla_{w^t_{m,j}} \mathbf{1} \left( m(x_c) = m \right) \pi_m (x_c) y_c a_{m,j} \sigma_m({w^t_{m,j}}^\top x_c + b_{m,j})}\rvert } \\
    & \quad \rk{+ \frac{{\lvert{\kappa^{t}_{g,m,j}}\rvert} {(\eta^{t})}^2}{2} 
    {\| {\nabla_{w^t_{m,j}} \mathbf{1} \left( m(x_c) = m \right) \pi_m (x_c) y_c a_{m,j} \sigma_m({w^t_{m,j}}^\top x_c + b_{m,j})}\|}^2} \\
    &\quad \rk{+ \frac{(\eta^{t})^3}{2} 
    \lvert{{w_g^*}^\top (I_d - w^t_{m,j}{w^t_{m,j}}^\top) 
    {\nabla_{w^t_{m,j}} \mathbf{1} \left( m(x_c) = m \right) \pi_m (x_c) y_c a_{m,j} \sigma_m({w^t_{m,j}}^\top x_c + b_{m,j})}}\rvert } \\
    & \quad \rk{\cdot{\| {\nabla_{w^t_{m,j}} \mathbf{1} \left( m(x_c) = m \right) \pi_m (x_c) y_c a_{m,j} \sigma_m({w^t_{m,j}}^\top x_c + b_{m,j})}\|}^2} \\
    &\quad = \tilde{O}(\eta_e).
\end{align*}
\end{proof}
\rk{Note that $\Xi^t_{w_{m,j}}$ are mean-zero sub-Weibull random variables, and their partial sums exhibit strong concentration behavior.}

\paragraph{\rk{Weak recovery for the corresponding cluster.}}
Building on Lemma~\ref{lemma:phase1-update}, we establish \cref{lemma:phase1-main}. 

Now we show that the mean vector $\rho v_c$ does not significantly alter the Hermite coefficients of the activation function when $\kappa^s_{c,m,j} = \tilde{O}(d^{-\frac{1}{2}})$.
\begin{lemma}
    \label{lemma-appendix-moe-coefficient}
    Suppose that $\lvert{v_c}^\top w^s_{m,j}\rvert = \tilde{O}(d^{-\frac{1}{2}})$, $\lvert\kappa^s_{c,m,j}\rvert  = \tilde{O}(d^{-\frac{1}{2}})$ \rk{and $\lvert\kappa^s_{g,m,j}\rvert = \tilde{O}(d^{-\frac{1}{2}})$} for all $s=0,1, \ldots,t \leq \tilde{O}(d^{k^*-1})$. Then, by setting ${\eta}^{t} = \eta_{e} \leq a_4 d^{- \frac{k^*}{2}}$, we obtain that $\lvert{\tilde{\alpha}^{t+1}_{m,j,i, c} - \alpha_{m,j,i}}\rvert = \tilde{O}(d^{-\frac{1}{2}})$ with high probability.
\end{lemma}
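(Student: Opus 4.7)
The plan is to show the result in two steps: first bound $|(w^{t+1}_{m,j})^\top v_c|$ by $\tilde{O}(d^{-1/2})$, and then transfer this bound to the Hermite coefficients via a Lipschitz argument in the shift parameter. Specifically, since
\begin{equation}
\tilde{\alpha}^{t+1}_{m,j,i,c} = \tfrac{1}{\sqrt{i!}}\,\mathbb{E}_z\!\left[a_{m,j}\sigma_m'\!\bigl((w^{t+1}_{m,j})^\top z + \rho (w^{t+1}_{m,j})^\top v_c + b_{m,j}\bigr)\hermite_i(z)\right],
\end{equation}
and $\alpha_{m,j,i}$ corresponds to the same expression with $\rho (w^{t+1}_{m,j})^\top v_c$ removed, the mean-value theorem (for ReLU via its distributional derivative, or a direct binomial expansion for the polynomial case) together with $\mathbb{E}_z[|\hermite_\ell(z)|]=\tilde O(1)$ yields
\begin{equation}
|\tilde{\alpha}^{t+1}_{m,j,i,c} - \alpha_{m,j,i}| \lesssim \rho\,|(w^{t+1}_{m,j})^\top v_c|.
\end{equation}
Since $\rho = \tilde O(1)$, it suffices to prove $|(w^{t+1}_{m,j})^\top v_c| = \tilde O(d^{-1/2})$.

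The first step is to analyze the evolution of $(w^{t}_{m,j})^\top v_c$ using the same gradient decomposition as in the proof of \cref{lemma:phase1-update}. Projecting onto $v_c$ exploits the key algebraic cancellations given by \cref{assumption:task correlation}: terms (I) and (II) lie along $w^*_{c'}$ and $w^*_g$ respectively, and $v_c^\top w^*_{c'} = v_c^\top w^*_g = 0$, so they contribute nothing through $I_d$; term (III) lies along $w_{m,j}$ and is killed by the spherical projection $(I_d - w^t_{m,j}(w^t_{m,j})^\top)$; only term (IV), which lies along $\rho v_{c'}$, survives. Under the standing hypothesis $|\kappa^s_{c',m,j}|, |\kappa^s_{g,m,j}| = \tilde O(d^{-1/2})$, the dominant factor in term (IV) is $(\kappa^s_{c',m,j})^{k^*-1}$ or $(\kappa^s_{g,m,j})^{k^*-1}$, of order $\tilde O(d^{-(k^*-1)/2})$. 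Combined with the small correction from the projection (since $|v_c^\top w^s_{m,j}| = \tilde O(d^{-1/2})$ inductively), the per-step population contribution is bounded by $\eta_e \rho \,\tilde O(d^{-(k^*-1)/2})$.

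Summing over $t \leq \tilde{O}(d^{k^*-1})$ steps with $\eta_e \leq a_4 d^{-k^*/2}$ gives an accumulated population drift of
\begin{equation}
\sum_{s=0}^{t} \eta_e \rho\,\tilde O(d^{-(k^*-1)/2}) \leq \tilde O\!\bigl(d^{k^*-1}\cdot d^{-k^*/2}\cdot d^{-(k^*-1)/2}\bigr) = \tilde O(d^{-1/2}).
\end{equation}
Combined with the initial value $|v_c^\top w^0_{m,j}| = \tilde O(d^{-1/2})$ w.h.p.\ (since $w^0_{m,j}\sim\mathrm{Unif}(\mathbb{S}^{d-1})$ and $v_c\in\mathbb{S}^{d-1}$), the population part of $|(w^{t+1}_{m,j})^\top v_c|$ stays at $\tilde O(d^{-1/2})$. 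We must also inductively verify the hypothesis $|v_c^\top w^s_{m,j}| = \tilde O(d^{-1/2})$ used above, which propagates consistently since the bound we prove at time $t+1$ matches the one assumed at time $t$.

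The main technical obstacle is handling the stochastic noise $v_c^\top(I_d - w^t_{m,j}(w^t_{m,j})^\top)\Xi^t_{w_{m,j}}$ accumulated over up to $\tilde O(d^{k^*-1})$ SGD steps. Since $v_c$ is a fixed unit vector, each increment is a mean-zero sub-Weibull random variable of magnitude $\eta_e\cdot\tilde O(1)$, so the partial sums form a martingale. Applying Azuma--Hoeffding (or a Freedman-type concentration for sub-Weibull tails) yields a total noise contribution of order
\begin{equation}
\tilde O\!\bigl(\eta_e\sqrt{d^{k^*-1}}\bigr) = \tilde O\!\bigl(d^{-k^*/2}\cdot d^{(k^*-1)/2}\bigr) = \tilde O(d^{-1/2})
\end{equation}
with high probability. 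Combining the deterministic drift and the concentrated noise, $|(w^{t+1}_{m,j})^\top v_c| = \tilde O(d^{-1/2})$ w.h.p., and plugging into the Lipschitz bound above completes the proof. The only subtle point is ensuring that the normalization factor from spherical projection does not amplify any of these bounds by more than a $(1+\tilde O(\eta_e^2 d))$ factor, which follows from the same Taylor expansion argument as in \cref{lemma:phase1-update}.
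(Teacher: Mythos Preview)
Your proposal is correct and follows essentially the same approach as the paper: bound $|v_c^\top w^{t+1}_{m,j}|$ by summing the population drift (using the cancellations $v_c^\top w^*_{c'}=v_c^\top w^*_g=0$ and the spherical projection killing the $w_{m,j}$-direction) plus the martingale noise $\tilde O(\eta_e\sqrt{t})$, then transfer to $\tilde\alpha$ via the Lipschitz/series bound in the shift $\rho v_c^\top w_{m,j}$. One small inaccuracy: the surviving term (IV) carries the power $(\kappa^s_{c',m,j})^{k^*}$ rather than $(\kappa^s_{c',m,j})^{k^*-1}$ (the paper's proof makes this explicit), but your looser estimate still yields $\tilde O(d^{-1/2})$ for the accumulated drift, so the argument goes through unchanged.
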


\begin{proof}
    \rk{
    Consider the case where $\alpha_{m,j,i}>0$.
    Suppose that $\lvert{v_c}^\top{w^s_{m,j}}\rvert = \tilde{O}(d^{-\frac{1}{2}})$, $\lvert\kappa^s_{c,m,j}\rvert = \tilde{O}(d^{-\frac{1}{2}})$ and $\lvert\kappa^s_{g,m,j}\rvert = \tilde{O}(d^{-\frac{1}{2}})$ for all $s = 0,1, \ldots, t$. Then, by leveraging the evaluation of the gradient update in \cref{lemma:phase1-update}, we obtain that}
    \begin{align}
        \lvert{{v_c}^\top {w^{t+1}_{m,j}}}\rvert &\leq  \lvert{v_c}^\top {w^t_{m,j}}\rvert + \lvert{\frac{\eta^t \rho}{CM^2}\sum_{i=k^*}^{p^*}\sqrt{i+1}\tilde{\alpha}^t_{m,j,i, c}\beta_{c,i}{(\kappa^t_{c,m,j})}^i(1 - {({v_c}^\top {w^t_{m,j}})}^2)}\rvert \\
        &\quad \rk{+ \lvert{\frac{\eta^t \rho}{CM^2}\sum_{i=k^*}^{p^*}\sqrt{i+1}s_c\tilde{\alpha}^t_{m,j,i, c}\gamma_{i}{(\kappa^t_{g,m,j})}^i(1 - {({v_c}^\top {w^t_{m,j}})}^2)}\rvert} \\
        &\quad + \frac{{\lvert{{v_c}^\top {w^t_{m,j}}}\rvert} {({\eta}^t)}^2 {A_1}^2 d}{2} + \frac{{({\eta}^t)}^3 {A_1}^3 d^{\frac{3}{2}}}{2} + {\eta}^{t}{v_c}^\top(I_d - {w^t_{m,j}}{w^t_{m,j}}^\top) {{\Xi}^t}_{w_{m,j}} \\
        &\rk{\leq \lvert{v_c}^\top {w^0_{m,j}}\rvert  + \sum_{s=0}^t\Big[\frac{\eta^{s} \rho}{CM^2}p^*\sqrt{p^*+1}(\max_{i}{\lvert{\tilde{\alpha}^{s}_{m,j,i, c}\beta_{c,i}}\rvert}{(\kappa^{s}_{c,m,j})}^{k^*} + \max_{i}{\lvert{s_c\tilde{\alpha}^{s}_{m,j,i, c} \gamma_{i}}\rvert}{(\kappa^{s}_{g,m,j})}^{k^*})} \\
        &\quad \rk{+ \frac{{\lvert{{v_c}^\top {w^{s}_{m,j}}}\rvert} {({\eta}^{s})}^2 {A_1}^2 d}{2} + \frac{{({\eta}^{s})}^3 {A_1}^3 d^{\frac{3}{2}}}{2} + {\eta}^{s}{v_c}^\top(I_d - {w^{s}_{m,j}}{w^{s}_{m,j}}^\top)  {{\Xi}^{s}}_{w^s_{m,j}} \Big]}\\
        &\rk{\leq \lvert{v_c}^\top {w^0_{m,j}}\rvert + t \frac{\eta_e \rho}{CM^2}p^*\sqrt{p^*+1} (\max_{i, s}{\lvert{\tilde{\alpha}^{s}_{m,j,i, c}\beta_{c,i}}\rvert}\max_{s}{\lvert\kappa^{s}_{c,m,j}\rvert}^{k^*} + \max_{i, s}{\lvert{s_c\tilde{\alpha}^{s}_{m,j,i, c}\gamma_{i}}\rvert}\max_{s}{\lvert\kappa^{s}_{g,m,j}\rvert}^{k^*})} \\
        &\quad \rk{+ t\frac{\max_{s}{\lvert{{v_c}^\top {w^{s}_{m,j}}}\rvert} {({\eta}_e)}^2 {A_1}^2 d}{2} + \frac{{({\eta}_e)}^3 {A_1}^3 d^{\frac{3}{2}}}{2} +\sum_{s=0}^{t} {\eta_e}{v_c}^\top(I_d - {w^{s}_{m,j}}{w^{s}_{m,j}}^\top)  {{\Xi}^{s}}_{w^s_{m,j}}} \\
        &= \tilde{O}(d^{-\frac{1}{2}})
    \end{align}

    \rk{where the last inequality is due to $\frac{\eta_e \rho}{CM^2}p^*\sqrt{p^*+1}(\max_{i, s}{\lvert{\tilde{\alpha}^{s}_{m,j,i, c}\beta_{c,i}}\rvert}\max_{s}{\lvert\kappa^{s}_{c,m,j}\rvert}^{k^*} + \max_{i, s}{\lvert{s_c\tilde{\alpha}^{s}_{m,j,i, c}\gamma_{i}}\rvert}\max_{s}{\lvert\kappa^{s}_{g,m,j}\rvert}^{k^*}) = \tilde{O}(d^{-k^*})$, 
    $\frac{\max_{s}{\lvert{{v_c}^\top {w^{s}_{m,j}}}\rvert} {({\eta}_e)}^2 {A_1}^2 d}{2} = \tilde{O}(d^{{-k^*} + \frac{1}{2}})$, and $\frac{{({\eta}_e)}^3 {A_1}^3 d^{\frac{3}{2}}}{2} = \tilde{O}(d^{{-\frac{3k^*}{2}}+\frac{3}{2}})$, since we have $\eta_{e} \leq a_4 d^{- \frac{k^*}{2}}$, $\lvert{{v_c}^\top {w^{s}_{m,j}}}\rvert = \tilde{O}(d^{-\frac{1}{2}})$, and $\kappa^s_{c,m,j}  = \tilde{O}(d^{-\frac{1}{2}})$.}
    \rk{Note that $\lvert{v_c}^\top {w^0_{m,j}}\rvert = \tilde{O}(d^{-\frac{1}{2}})$ with high probability.}

    \rk{Additionally, when $t \leq \tilde{O}(d^{k^*-1})$,
    \begin{align*}
        \lvert\sum_{s=0}^{t} {\eta_e}{v_c}^\top(I_d - {w^{s}_{m,j}}{w^{s}_{m,j}}^\top)  {{\Xi}^{s}}_{w_{m,j}}\rvert = \tilde{O}(\eta_e \sqrt{t}) = \tilde{O}(d^{-\frac{1}{2}})
    \end{align*}
    with high probability.}
    
    Recall that 
    \begin{align*}
        \frac{i \alpha_{m,j,i}}{\sqrt{i!}} + \sum_{l=i+1}^{\infty} \Big(\frac{l \alpha_{m,j,l}}{\sqrt{l!}} \binom{l-1}{l-i} {(\rho {w^t_{m,j}}\top v_{c'})}^{l-i}\Big) = \frac{i \tilde{\alpha}^t_{m,j,i, c'}}{\sqrt{i!}}.
    \end{align*}
    Since $\rho{v_c}^\top w^s_{m,j} = \tilde{O}(d^{-\frac{1}{2}})$, the series $\sum_{l=i+1}^{\infty} \Big(\frac{l \alpha_{m,j,l}}{\sqrt{l!}} \binom{l-1}{l-i} {(\rho {w^t_{m,j}}^\top v_{c'})}^{l-i}\Big)$ decays exponentially with respect to $d$.
    Together with ${v_c}^\top w^0_{m,j} = \tilde{O}(d^{-\frac{1}{2}})$ and $\kappa^0_{c,m,j}  = \tilde{O}(d^{-\frac{1}{2}})$ with high probability over initialization randomness, we obtain $\lvert{\tilde{\alpha}^{t+1}_{m,j,i, c} - \alpha_{m,j,i}}\rvert = \tilde{O}(d^{-\frac{1}{2}})$ with high probability by induction.
    The proof can be similarly established for the case where $\alpha_{m,j,i}<0$ since $k^*$ is even.
\end{proof}
\rk{Furthermore, we demonstrate that when $\kappa^t_{c,m,j}$ grows asymptotically larger than those of other neurons whose $\kappa^t_{c,m,j}$ have not increased significantly, $ {v_{c}}^\top {w^{t}_{m,j}} $ also becomes larger, leading to an increase in the Hermite coefficient $\tilde{\alpha}^t_{m,j,i, c}$.
\begin{lemma}\label{lemma-appendix-moe-coefficient2}
    Consider a neuron which satisfies $\alpha_{m,j,k^*}\beta_{c,k^*} > 0$ and $\alpha_{m,j,i}\beta_{c,i} > 0$ for $k^* < i \leq p^*$. Suppose that $\lvert{v_c}^\top w^s_{m,j}\rvert = \tilde{\Omega}(d^{-\frac{1}{2}})$, $\kappa^s_{c,m,j}  = \tilde{\Omega}(d^{-\frac{1}{2}})$, and $\lvert\kappa^s_{g,m,j}\rvert  = \tilde{O}(d^{-\frac{1}{2}})$ for all $s=0,1, \ldots,t \leq \tau = \tilde{O}(d^{k^*-1})$. Then, by setting ${\eta}^{t} = \eta_{e} \leq a_4 d^{- \frac{k^*}{2}}$, we obtain $\lvert{\tilde{\alpha}^{t+1}_{m,j,i, c}}\rvert - \lvert{\alpha_{m,j,i}}\rvert = \tilde{\Omega}(d^{-\frac{1}{2}})$ with high probability.
\end{lemma}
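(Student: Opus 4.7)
The plan is to upgrade the absolute-value control of Lemma~\ref{lemma-appendix-moe-coefficient} into a signed, magnitude-increasing lower bound by tracking the \emph{sign} of $v_c^\top w^t_{m,j}$ in addition to its size. The overall strategy is to show that (i) under the sign alignment hypothesized in the lemma, the dominant contribution to the $v_c$-component of the gradient is a single-sign sum, so $v_c^\top w^t_{m,j}$ cannot be cancelled and drifts monotonically; (ii) the hypothesis $|v_c^\top w^s_{m,j}|=\tilde{\Omega}(d^{-1/2})$ combined with this monotone drift pins the sign of $v_c^\top w^{t+1}_{m,j}$; and (iii) the leading Taylor term in $\tilde{\alpha}^{t+1}_{m,j,i,c}-\alpha_{m,j,i}$ is thereby forced into the magnitude-increasing direction.

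To execute (i), I would project the expansion of the spherical gradient in Lemma~\ref{lemma:phase1-update} onto $v_c$. Assumption~\ref{assumption:task correlation} gives $v_c^\top w^*_{c'}=v_c^\top w^*_g=0$, so the only signal contribution comes from the $\rho v_c$ shift in $x_c$ and reduces to
\begin{equation*}
\frac{\eta^t\rho}{CM^2}\sum_{i=k^*}^{p^*}\sqrt{i+1}\,\tilde{\alpha}^t_{m,j,i,c}\beta_{c,i}(\kappa^t_{c,m,j})^i \;+\;\frac{\eta^t\rho}{CM^2}\sum_{i=k^*}^{p^*}\sqrt{i+1}\,s_c\tilde{\alpha}^t_{m,j,i,c}\gamma_i(\kappa^t_{g,m,j})^i.
\end{equation*}
Under the sign hypotheses on $\alpha_{m,j,i}\beta_{c,i}$ and (via Assumption~\ref{assumption:student activation functions}) on $s_c\alpha_{m,j,i}\gamma_i$, together with $\kappa^s_{c,m,j}>0$ and $k^*$ even, every summand in both sums is non-negative, so no cancellation occurs and the per-step signed drift has magnitude $\tilde{\Omega}(\eta_e\rho d^{-k^*/2})$ in a fixed direction. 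The discretization and higher-order remainder terms collected in Lemma~\ref{lemma:phase1-update} contribute only $O(\eta_e^2 d)=o(\eta_e d^{-k^*/2})$ under $\eta_e\leq a_4 d^{-k^*/2}$, and the martingale noise $\sum_{s=0}^t\eta^s v_c^\top(I-w^s_{m,j}{w^s_{m,j}}^\top)\Xi^s_{w^s_{m,j}}$ concentrates at scale $\tilde{O}(\eta_e\sqrt{t})=\tilde{O}(d^{-1/2})$ with high probability, using the sub-Weibull tails of $\Xi$ recorded in Lemma~\ref{lemma:phase1-update}.

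Combining these bounds with the hypothesis $|v_c^\top w^s_{m,j}|\geq\tilde{\Omega}(d^{-1/2})$ throughout $[0,t]$ forces $v_c^\top w^{t+1}_{m,j}$ to carry a definite sign and to satisfy $|\rho v_c^\top w^{t+1}_{m,j}|=\tilde{\Omega}(d^{-1/2})$. Expanding the defining identity stated just before Lemma~\ref{lemma-appendix-moe-coefficient} in powers of $\epsilon:=\rho v_c^\top w^{t+1}_{m,j}$ and retaining the $l=i+1$ term yields
\begin{equation*}
\tilde{\alpha}^{t+1}_{m,j,i,c}-\alpha_{m,j,i}=\sqrt{i+1}\,\alpha_{m,j,i+1}\,\epsilon+O(\epsilon^2).
\end{equation*}
Since the sign-invariance clause of Assumption~\ref{assumption:student activation functions} forbids $\tilde{\alpha}^t_{m,j,i,c}$ from flipping sign over the run, and $|\alpha_{m,j,i+1}|=\Theta(1)$, the leading correction must push $\tilde{\alpha}^{t+1}_{m,j,i,c}$ further from $0$ in the sign of $\alpha_{m,j,i}$ rather than back toward it; the quadratic remainder is only $\tilde{O}(d^{-1})$ and is absorbed into the lower bound. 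This delivers $|\tilde{\alpha}^{t+1}_{m,j,i,c}|-|\alpha_{m,j,i}|=\tilde{\Omega}(d^{-1/2})$.

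The main obstacle is the last step: the bare sign hypotheses $\alpha_{m,j,i}\beta_{c,i}>0$ do not directly couple the signs of $\alpha_{m,j,i}$ and $\alpha_{m,j,i+1}$, so there is in principle a configuration in which the leading correction reduces $|\tilde{\alpha}|$. The sign-invariance postulate in Assumption~\ref{assumption:student activation functions} is precisely what excludes this case; without it, the argument would only reproduce the matching absolute-value bound of Lemma~\ref{lemma-appendix-moe-coefficient} rather than the signed-magnitude bound needed downstream.
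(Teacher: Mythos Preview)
Your steps (i)--(ii) track the paper's approach closely: both project the gradient decomposition of Lemma~\ref{lemma:phase1-update} onto $v_c$, use $v_c^\top w^*_{c'}=v_c^\top w^*_g=0$ to isolate the $\rho v_c$ contribution, and control the martingale noise and discretization terms at order $\tilde{O}(d^{-1/2})$. The gap is in step~(iii). The sign-invariance clause in Assumption~\ref{assumption:student activation functions} only asserts that $\tilde{\alpha}^t_{m,j,i,c}$ does not cross zero during training; it does \emph{not} couple $\mathrm{sgn}(\alpha_{m,j,i})$ to $\mathrm{sgn}(\alpha_{m,j,i+1})$, and it is the latter that governs the sign of your leading Taylor correction $\sqrt{i+1}\,\alpha_{m,j,i+1}\,\epsilon$. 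Concretely, take $\alpha_{m,j,i}>0$, $\alpha_{m,j,i+1}<0$ and $\epsilon=\rho\, v_c^\top w^{t+1}_{m,j}>0$: then $\tilde{\alpha}^{t+1}\approx\alpha_{m,j,i}-\tilde{\Omega}(d^{-1/2})$ is still positive (so sign-invariance is respected) yet has \emph{smaller} magnitude---precisely the configuration you need to rule out, and time-invariance of $\mathrm{sgn}(\tilde{\alpha})$ alone cannot rule it out.

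The paper closes this gap with a different and stronger ingredient: it explicitly invokes the assumption that \emph{all} Hermite coefficients $\alpha_{m,j,l}$ share the same sign (this is the ReLU scenario of Assumption~\ref{assumption:student activation functions}; cf.\ Appendix~\ref{subsection:activatefunction}). Once $\rho\, v_c^\top w^{t+1}_{m,j}>0$ is established, every summand in $\sum_{l>i}\frac{l\,\alpha_{m,j,l}}{\sqrt{l!}}\binom{l-1}{l-i}(\rho\, v_c^\top w^{t+1}_{m,j})^{l-i}$ is then positive, so the whole correction adds constructively to $\alpha_{m,j,i}$ and $|\tilde{\alpha}^{t+1}_{m,j,i,c}|-|\alpha_{m,j,i}|=\tilde{\Omega}(d^{-1/2})$ follows directly. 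Your argument is repaired by replacing the appeal to time-invariance of $\mathrm{sgn}(\tilde{\alpha})$ with this same-sign condition on the $\alpha_{m,j,l}$. (A minor side point: the lemma hypotheses do not include $s_c\alpha_{m,j,i}\gamma_i>0$, so the global-task summand in your step~(i) cannot simply be declared non-negative; the paper instead bounds it as an error of order $\tilde{O}(\eta_e d^{-k^*/2})$ via $|\kappa^s_{g,m,j}|=\tilde{O}(d^{-1/2})$, which suffices.)
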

}
\begin{proof}
    \rk{Consider the case where $\alpha_{m,j,i}>0$. Suppose that  $\kappa^s_{c,m,j}  = \tilde{\Omega}(d^{-\frac{1}{2}}) \geq 0$ and $\lvert\kappa^s_{g,m,j}\rvert  = \tilde{O}(d^{-\frac{1}{2}})$ for all $s=0,1, \ldots, t$, we have
    \begin{align*}
        {v_{c}}^\top {w^{t+1}_{m,j}} &\geq {v_{c}}^\top {w^0_{m,j}} + \frac{\eta_e \rho}{CM^2}\sum_{s=0}^{t} \Big[{\sum_{i=k^*}^{p^*}\sqrt{i+1}\tilde{\alpha}^s_{m,j,i, c}\beta_{c,i}{(\kappa^s_{c,m,j})}^i(1 - {({v_{c}}^\top {w^s_{m,j}})}^2)} \\
        &\quad + \sum_{i=k^*}^{p^*}\sqrt{i+1}s_{c}\tilde{\alpha}^s_{m,j,i, c}\gamma_{i}{(\kappa^s_{g,m,j})}^i(1 - {({v_{c}}^\top {w^s_{m,j}})}^2)\Big] \\
        &\quad - t\frac{{\max_{s}{\lvert{{v_{c}}^\top {w^s_{m,j}}}\rvert}} {({\eta}_e)}^2 {A_1}^2 d}{2} - t\frac{{({\eta}_e)}^3 {A_1}^3 d^{\frac{3}{2}}}{2} + \sum_{s=0}^{t}{\eta_e}{v_{c}}^\top(I_d - {w^s_{m,j}}{w^s_{m,j}}^\top) {{\Xi}^s}_{w_{m,j}} \\
        &\geq  {v_{c}}^\top {w^0_{m,j}} + \frac{\eta_e \rho}{CM^2}\sum_{s=0}^{t} {\sqrt{k^*+1}\tilde{\alpha}^s_{m,j,k^*, c}\beta_{c,k^*}{(\kappa^s_{c,m,j})}^i(1 - {({v_{c}}^\top {w^s_{m,j}})}^2)} \\
        &\quad - \frac{\eta_e \rho}{CM^2}\sum_{s=0}^{t}p^*\sqrt{p^*+1}\max_{c}{\lvert s_{c}\tilde{\alpha}^s_{m,j,i, c}\gamma_{i}\rvert} {\lvert\kappa^s_{g,m,j}\rvert}^{k^*}(1 - {({v_{c}}^\top {w^s_{m,j}})}^2) \\
        &\quad - t\frac{{\max_{s}{\lvert{{v_{c}}^\top {w^s_{m,j}}}\rvert}} {({\eta}_e)}^2 {A_1}^2 d}{2} - t\frac{{({\eta}_e)}^3 {A_1}^3 d^{\frac{3}{2}}}{2} - \Big\lvert{\sum_{s=0}^{t}{\eta_e}{v_{c}}^\top(I_d - {w^s_{m,j}}{w^s_{m,j}}^\top) {{\Xi}^s}_{w_{m,j}}}\Big\rvert \\
        &= \tilde{\Omega}(d^{-\frac{1}{2}}),
    \end{align*}
    }
    \rk{where we used $\frac{\eta_e \rho}{CM^2}p^*\sqrt{p^*+1}\max_{c}{\lvert s_{c}\tilde{\alpha}^s_{m,j,i, c}\gamma_{i}\rvert} {\lvert\kappa^s_{g,m,j}\rvert}^{k^*} = \tilde{O}(d^{-k^*})$, 
    $\frac{\max_{s}{\lvert{{v_c}^\top {w^{s}_{m,j}}}\rvert} {({\eta}_e)}^2 {A_1}^2 d}{2} = \tilde{O}(d^{{-k^*} + \frac{1}{2}})$, and $\frac{{({\eta}_e)}^3 {A_1}^3 d^{\frac{3}{2}}}{2} = \tilde{O}(d^{{-\frac{3k^*}{2}}+\frac{3}{2}})$, since we have $\eta_{e} \leq a_4 d^{- \frac{k^*}{2}}$, $\lvert{{v_c}^\top {w^{s}_{m,j}}}\rvert = \tilde{O}(d^{-\frac{1}{2}})$, and $\kappa^s_{c,m,j}  = \tilde{O}(d^{-\frac{1}{2}})$. In addition, when $t \leq \tilde{O}(d^{k^*-1})$, we have that $\lvert\sum_{s=0}^{t} {\eta_e}{v_c}^\top(I_d - {w^{s}_{m,j}}{w^{s}_{m,j}}^\top)  {{\Xi}^{s}}_{w_{m,j}}\rvert = \tilde{O}(\eta_e \sqrt{t}) = \tilde{O}(d^{-\frac{1}{2}})$ with high probability.}
    
    \rk{Note that $\lvert{v_c}^\top {w^0_{m,j}}\rvert = \tilde{O}(d^{-\frac{1}{2}})$ with high probability.
    Thus, combined with the assumption that $\text{sgn}(\alpha_{m,j,i})$ is the same for all $i$ and $\rho = \tilde{O}(1)$, it holds that
    \begin{align*}
        \frac{i \tilde{\alpha}^t_{m,j,i, c}}{\sqrt{i!}} = \frac{i \alpha_{m,j,i}}{\sqrt{i!}} + \underbrace{\sum_{l=i+1}^{\infty} \Big(\frac{l \alpha_{m,j,l}}{\sqrt{l!}} \binom{l-1}{l-i} {(\rho {w^t_{m,j}}\top v_{c})}^{l-i}\Big)}_{= \tilde{\Omega}(d^{-\frac{1}{2}})} = \frac{i \alpha_{m,j,i}}{\sqrt{i!}} + \tilde{\Omega}(d^{-\frac{1}{2}}).
    \end{align*}
    The same holds even when $\alpha_{m,j,i}<0$ by considering the upper bound in the same manner.}
\end{proof}

We show that even as $w^t_{m,j}{}^\top v_c$ increases, the coefficient $\tilde{\alpha}^t_{c,m,j,i}$ remains bounded by a polylogarithmic function in $d$.
\begin{lemma}\label{lemma-appendix-moe-coefficient5}
    $\tilde{\alpha}^t_{m,j,i,c} = \frac{1}{\sqrt{i!}} \mathbb{E}_z \left[ a_{m,j,i} \sigma_{m}'({w^t_{m,j}}^\top z  + \rho {w^t_{m,j}}^\top {v_c}) \hermite_i({w^t_{m,j}}^\top z) \right] = \tilde{O}(1)$, where $z \sim \mathcal{N}(0, I_d)$ and $ \rho {w^t_{m,j}}^\top {v_c} = \tilde{O}(1)$.
\end{lemma}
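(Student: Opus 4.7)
The plan is to bound $|\tilde{\alpha}^t_{m,j,i,c}|$ via Cauchy--Schwarz, exploiting the $L^2(\mu)$-normalization built into the prefactor $1/\sqrt{i!}$ to absorb the $\hermite_i$ term, and then to estimate $\mathbb{E}_z[\sigma_m'(\cdot)^2]$ case-by-case according to \cref{assumption:student activation functions}.

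First, I would apply Cauchy--Schwarz directly to the defining expectation. Since ${w^t_{m,j}}^\top z \sim \mathcal{N}(0,1)$ (as $\|w^t_{m,j}\| = 1$ throughout normalized SGD), the Hermite orthogonality relation $\mathbb{E}_z[\hermite_i({w^t_{m,j}}^\top z)^2] = i!$ precisely cancels the $1/\sqrt{i!}$ prefactor, and together with $|a_{m,j}| = 1$ this reduces the task to bounding
\begin{equation*}
\mathbb{E}_z\bigl[\sigma_m'({w^t_{m,j}}^\top z + \mu)^2\bigr], \qquad \mu := \rho {w^t_{m,j}}^\top v_c + b_{m,j}.
\end{equation*}
By the hypothesis $\rho {w^t_{m,j}}^\top v_c = \tilde{O}(1)$ and the construction $|b_{m,j}| \leq C_b = O(1)$ (with $b_{m,j} = 0$ before Phase IV), we have $\mu = \tilde{O}(1)$. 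Writing $\tilde z := {w^t_{m,j}}^\top z \sim \mathcal{N}(0,1)$, the remaining task is to show $\mathbb{E}_{\tilde z}[\sigma_m'(\tilde z + \mu)^2] = \tilde{O}(1)$.

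Second, I would treat the two clauses of \cref{assumption:student activation functions} separately. Under clause (B), $\sigma_m = \mathrm{ReLU}$, so $\sigma_m'(\cdot) \in \{0,1\}$ and the expectation is trivially bounded by $1$. Under clause (A), $\sigma_m$ is a randomized polynomial of degree at most $p^* = O(1)$, so $\sigma_m'(\tilde z + \mu)^2$ is a polynomial in $\tilde z$ of degree at most $2(p^*-1) = O(1)$ whose coefficients are polynomials in $\mu$ of bounded degree. The expectation is then a finite linear combination of Gaussian moments $\mathbb{E}[\tilde z^k]$, each $O(1)$ for $k = O(1)$, with coefficients of size $\mathrm{poly}(\mu) = \tilde{O}(1)$; thus the whole expression is $\tilde{O}(1)$.

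The only point requiring care---and the step I expect to be the main obstacle---is uniformity in the index $i$: one must verify that the Hermite-orthogonality factor exactly cancels $1/\sqrt{i!}$ for every $i \in [k^*, p^*]$ and that the (degree-dependent) constants absorbed into $\tilde{O}(1)$ depend only on $p^*$, which is an $O(1)$ parameter of the problem. Once this is checked, combining the Cauchy--Schwarz step with the case analysis immediately yields $|\tilde{\alpha}^t_{m,j,i,c}| = \tilde{O}(1)$, as claimed.
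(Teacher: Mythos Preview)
Your proposal is correct and follows essentially the same approach as the paper: apply Cauchy--Schwarz so that the $\sqrt{i!}$ from $\mathbb{E}[\hermite_i^2]$ cancels the prefactor, then bound $\mathbb{E}[\sigma_m'(\tilde z+\mu)^2]$ by $1$ in the ReLU case and by a polynomial-in-$\mu$ combination of Gaussian moments in the polynomial case. The paper's proof is identical in structure, and your remark on uniformity in $i$ is handled exactly as you anticipate (all degree-dependent constants depend only on $p^*=O(1)$).
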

\begin{proof}
    Define $Z := {w^t_{m,j}}^\top z \sim \mathcal{N}(0,1)$, since $\|w^t_{m,j}\| = 1$.
    Also note that $\rho {w^t_{m,j}}^\top v_c = \tilde{O}(1)$, given $\|v_c\| = 1$ and $\rho = \tilde{O}(1)$.
    By the Cauchy–Schwarz inequality, we have
    \begin{align*}
        \mathbb{E}_{Z}\left[a_{m,j} \sigma_{m}'(Z + \rho {w^t_{m,j}}^\top  {v_c} + b_{m,j}) \hermite_i(Z) \right] &\leq \sqrt{\mathbb{E}_{Z}[ {a_{m,j}}^2 {\sigma_{m}'(Z + \rho {w^t_{m,j}}^\top {v_c} + b_{m,j})}^2]} \sqrt{\mathbb{E}_{Z}[\hermite_i(Z)^2]} \\
        &= \sqrt{i!}\sqrt{\mathbb{E}_{Z}[{a_{m,j}}^2 {\sigma_{m}'(Z + \rho {w^t_{m,j} }^\top{v_c} + b_{m,j})}^2]}.
    \end{align*}
    We now provide casewise bounds according to the activation function $\sigma_m$. \\
    If $\sigma_{m}(\cdot)$ is the ReLU function $\sigma_{m}(\cdot) = \max{(0, \cdot)}$,
    \begin{align*}
        \sqrt{\mathbb{E}_{Z}[{a_{m,j}}^2 {\sigma_{m}'(Z + \rho {w^t_{m,j}}^\top{v_c} + b_{m,j})}^2]} \leq \lvert a_{m,j}\rvert.
    \end{align*}
    If $\sigma_{m}(\cdot)$ is a degree-$p$ polynomial $\sigma_{m}'(\cdot + b_{m,j}) = \sum_{q=0}^{p-1} C_q (\cdot)^q$ with $p = O(1)$,
    \begin{align*}
    \sqrt{\mathbb{E}_{Z}\left[a_{m,j}^{2}\sigma_m'(Z+\rho w^{t}_{m,j}{}^{\top}v_{c}+b_{m,j})^{2}\right]}
    &= |a_{m,j}|\sqrt{\mathbb{E}_{Z}\Big[\big(\sum_{q=0}^{p-1}C_q(Z+\rho w^{t}_{m,j}{}^{\top}v_{c})^{q}\big)^{2}\Big]} \\
    &\leq |a_{m,j}|\sqrt{\sum_{q=0}^{p-1}\sum_{r=0}^{p-1} |C_qC_r| \mathbb{E}_{Z}\left[(Z+\rho w^{t}_{m,j}{}^{\top}v_{c})^{q+r}\right]} \\
    &\leq |a_{m,j}|\sqrt{\sum_{u=0}^{2(p-1)} C_{u} \mathbb{E}_{Z}\left[(Z+\rho w^{t}_{m,j}{}^{\top}v_{c})^{u}\right]} \\
    &= |a_{m,j}|\sqrt{\sum_{u=0}^{2(p-1)} C_{u} \sum_{j=0}^{u} \binom{u}{j} \mathbb{E}[Z^{j}] {\lvert \rho w^{t}_{m,j}{}^{\top}v_{c}\rvert}^{u-j}} \\
    &= \tilde{O}(1)|a_{m,j}|
    \end{align*}
    where the constants $C_q$, $C_r$, and $C_u$ arise from the binomial expansion.
    Combining both cases, we obtain
    \begin{align*}
        \tilde{\alpha}^t_{c,m,j,i} = \frac{1}{\sqrt{i!}} \mathbb{E}_Z \left[ a_{m,j} \sigma_{m}'(Z  + \rho {w^t_{m,j}}^\top {v_c} + b_{m,j}) \hermite_i(Z) \right] = \tilde{O}(1)
    \end{align*}
    as desired.
\end{proof}

\rk{
\begin{remark}
    For the sake of conciseness in the exposition of the proof, we omit the superscript $t$ in $\tilde{\alpha}^t_{m,j,i, c}$. Based on \cref{lemma-appendix-moe-coefficient}, \cref{lemma-appendix-moe-coefficient2}, and \cref{lemma-appendix-moe-coefficient5}, the bounds in the subsequent lemmas are properly justified, regardless of the variations in the coefficients $\tilde{\alpha}^t_{c,m,j,i}$.
\end{remark}
}

To prove \cref{lemma:phase1-main}, We introduce auxiliary sequences that provide the following bounds.

\begin{lemma}\label{lemma:phase1-auxilirarysequence}
Consider the expert $m \in \expertsforc$. Let ${w_{c}^*}^\top w_{c'}^* \leq A_4 d^{- \frac{1}{2}} = \tilde{O}(d^{-1/2})$ for all $c \neq c'$, ${w_{c}^*}^\top w_{g}^* \leq A_4 d^{- \frac{1}{2}} = \tilde{O}(d^{-1/2})$ for all $c$, and ${\eta}^{t} = \eta_{e} \leq a_4 d^{- \frac{k^*}{2}}$. For all $s = 0, 1, \ldots, t$, suppose that
\begin{itemize}
\item $\kappa^{s}_{c^*_m,m,j^*_m} \leq a_2$,
\item $\lvert{\kappa^{s}_{c,m,j}}\rvert \leq \kappa^s_{c^*_m,m,j^*_m}$ for all $(c,j) \neq (c^*_m, j^*_m)$,
\item $\lvert{\kappa^{s}_{c,m,j}}\rvert \leq A_2 A_3 d^{-\frac{1}{2}}$ for all $(c,j) \neq (c^*_m, j^*_m)$,
\item $\lvert{\kappa^{s}_{g,m,j}}\rvert \leq \kappa^s_{c^*_m,m,j^*_m}$ for all $j$,
\item $\lvert{\kappa^{s}_{g,m,j}}\rvert \leq A_2 A_3 d^{-\frac{1}{2}}$ for all $j$.
\end{itemize}
Then, by introducing  auxiliary sequences $(P^s_{\text{I}})_{s=0}^{t+1}$ and $(Q^s_{\text{I}})_{s=0}^{t+1}$ characterized as follows:
\begin{align*}
    P^{s+1}_{\text{I}} &= P^s_{\text{I}} + \frac{\eta^s}{CM^2} k^* \tilde{\alpha}_{m,j^*_m,k^*, c^*_m} \beta_{c^*_m,k^*}{(P^s_{\text{I}})}^{k^*-1} \text{ with } P^0_{\text{I}} = (1-a_2)\kappa^0_{c^*_m, m, j^*_m}
\end{align*}
and
\begin{align*}
    Q^{s+1}_{\text{I}} = Q^{s}_{\text{I}} 
    &+ (1+a_2) \frac{\eta^{s}}{CM^2} k^* 
    \max\Big\{\max_{c'}\big\lvert\tilde{\alpha}_{m,j,k^*,c'} \beta_{c',k^*}\big\rvert, 
    \big\lvert\sum_{c' \in [C]} s_{c'}\tilde{\alpha}_{m,j,k^*,c'}\gamma_{k^*}\big\rvert\Big\} 
    {(Q^s_{\text{I}})}^{k^*-1} \\
    & + A_3 \frac{\eta^{s}}{CM^2} k^* 
    \tilde{\alpha}_{m,j,k^*,c^*_m} \beta_{c^*_m,k^*} 
    {(\kappa^{s+1}_{c^*_m,m,j^*_m})}^{k^* -1} d^{-\frac{1}{2}} \text{ with } Q^0_{\text{I}} = (1+a_2) \max\{{\max_{c}{\lvert{\kappa^s_{c,m,j}}\rvert}, \lvert{\kappa^{s}_{g,m,j}}\rvert}, \frac{1}{2}d^{-\frac{1}{2}}\},
\end{align*}
$\kappa^s_{c^*_m,m,j^*_m}$ is lower bounded by $P^s_{\text{I}}$ for all $s=0, 1, \ldots, t+1$ with high probability. 
For all $(c,j) \neq (c^*_m, j^*_m)$, $\lvert{\kappa^s_{c,m,j}}\rvert$ and $\lvert{\kappa^s_{g,m,j}}\rvert$ are upper bounded by $Q^s_{\text{I}}$ for all $s=0, 1, \ldots, t+1$ with high probability.
\end{lemma}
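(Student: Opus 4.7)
The plan is to prove both bounds simultaneously by induction on $s$, applying the one-step decomposition of Lemma~\ref{lemma:phase1-update} at each step under the standing magnitude assumptions on all $\kappa^r$ for $r \leq s$. The base case $s = 0$ is immediate: $P^0_{\text{I}} = (1 - a_2)\kappa^0_{c^*_m,m,j^*_m} \leq \kappa^0_{c^*_m,m,j^*_m}$, while the $(1 + a_2)$ multiplier together with the cushion $\frac{1}{2} d^{-1/2}$ built into $Q^0_{\text{I}}$ dominates every $|\kappa^0_{c,m,j}|$ and $|\kappa^0_{g,m,j}|$ at initialization.

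For the inductive step of the lower bound on $\kappa^{s+1}_{c^*_m,m,j^*_m}$, I would isolate the single ``signal'' summand indexed by $(c' = c^*_m,\, i = k^*)$ from the update in Lemma~\ref{lemma:phase1-update}, namely $\tfrac{\eta^s}{CM^2} k^* \tilde{\alpha}_{m,j^*_m,k^*,c^*_m} \beta_{c^*_m,k^*} (\kappa^s_{c^*_m,m,j^*_m})^{k^*-1}(1 - (\kappa^s_{c^*_m,m,j^*_m})^2)$, and bound everything else. The $(1 - (\kappa^s)^2)$ factor is at least $1 - a_2^2$ since $\kappa^s \leq a_2$. The remaining contributions---cross-task summands $c' \neq c^*_m$ (each carrying a $|\kappa^s_{c',m,j^*_m}|^{i-1} \leq (A_2 A_3 d^{-1/2})^{i-1}$ factor, or an off-diagonal correlation $|{w^*_{c^*_m}}^\top w^*_{c'}| \leq A_4 d^{-1/2}$ from Assumption~\ref{assumption:task correlation}); the global-task term (handled analogously via $|\kappa^s_{g,m,j^*_m}| \leq A_2 A_3 d^{-1/2}$); higher Hermite orders $i > k^*$; and the step-size corrections $\kappa (\eta^s)^2 A_1^2 d$ and $(\eta^s)^3 A_1^3 d^{3/2}$---are each smaller than the signal by a factor of at least $d^{-1/2}$ once we plug in $\eta^s \leq a_4 d^{-k^*/2}$. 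The accumulated martingale $\sum_{r \leq s} \eta^r (w^*_{c^*_m})^\top (I - ww^\top) \Xi^r_{w_{m,j^*_m}}$ is controlled by Freedman-type concentration over the horizon, giving a deviation of order $\tilde{O}(\eta_e\sqrt{s})$ with high probability, which again fits within the $a_2$ slack. Folding all these errors into the slack reduces the update to $\kappa^{s+1}_{c^*_m,m,j^*_m} \geq P^s_{\text{I}} + \tfrac{\eta^s}{CM^2} k^* \tilde{\alpha}_{m,j^*_m,k^*,c^*_m} \beta_{c^*_m,k^*} (P^s_{\text{I}})^{k^*-1} = P^{s+1}_{\text{I}}$, closing this half of the induction.

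For the upper bound on $|\kappa^{s+1}_{c,m,j}|$ and $|\kappa^{s+1}_{g,m,j}|$ with $(c,j) \neq (c^*_m,j^*_m)$, the argument bifurcates on whether $j = j^*_m$. When $j \neq j^*_m$, every $\kappa^s_{c',m,j}$ appearing in the update is bounded by $Q^s_{\text{I}}$, so the dominant $k^*$-order term is controlled by $k^* \max\{\max_{c'}|\tilde{\alpha}_{m,j,k^*,c'}\beta_{c',k^*}|,\, |\sum_{c'} s_{c'}\tilde{\alpha}_{m,j,k^*,c'}\gamma_{k^*}|\}(Q^s_{\text{I}})^{k^*-1}$, and the $(1+a_2)$ inflation absorbs the $(1-\kappa^2)$ factor, higher Hermite orders, task-correlation cross terms, step-size corrections, and noise exactly as in the lower-bound step. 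When $j = j^*_m$ and $c \neq c^*_m$, the sum over $c'$ includes $c' = c^*_m$, which injects the large $\kappa^s_{c^*_m,m,j^*_m}$ but only through the attenuated factor $|{w^*_c}^\top w^*_{c^*_m} - \kappa^s_{c,m,j^*_m}\kappa^s_{c^*_m,m,j^*_m}| \leq A_4 d^{-1/2} + a_2 A_2 A_3 d^{-1/2} \lesssim A_3 d^{-1/2}$; this is precisely the source of the third term in the $Q$ recursion, and using $\kappa^{s+1}_{c^*_m,m,j^*_m}$ in place of $\kappa^s_{c^*_m,m,j^*_m}$ is a benign overestimate because $|\kappa^{s+1} - \kappa^s| = \tilde{O}(\eta_e)$ by Lemma~\ref{lemma:phase1-update}. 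The principal obstacle throughout is the simultaneous control of the accumulated stochastic noise uniformly over the polynomially long horizon $T_1 = \tilde{\Theta}(d^{k^*-1})$ and over all $O(CJ)$ tracked quantities---requiring martingale concentration combined with a union bound---while confirming via Lemmas~\ref{lemma-appendix-moe-coefficient}--\ref{lemma-appendix-moe-coefficient5} that each $\tilde{\alpha}_{m,j,i,c}$ stays within the $\tilde{\Theta}(1)$ band implicitly used in all of the deterministic estimates above.
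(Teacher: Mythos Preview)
Your proposal is correct and mirrors the paper's own argument: isolate the dominant $k^*$-order signal, absorb the cross-task, higher-Hermite, and step-size corrections into the $a_2$ slack using the $\tilde{O}(d^{-1/2})$ bounds, handle the accumulated noise by sub-Weibull/martingale concentration, and compare to the auxiliary recursion by induction, with the $j = j^*_m$ versus $j \neq j^*_m$ split producing exactly the extra $A_3 d^{-1/2}(\kappa^{s}_{c^*_m,m,j^*_m})^{k^*-1}$ term in $Q^s_{\text{I}}$. The only refinement the paper makes explicit that you leave implicit is that the cumulative noise $\tilde{O}(\eta_e\sqrt{s})$ is absorbed via a two-regime split---charged against $a_2\kappa^0$ when $s$ is small and against a fraction of the cumulative signal when $s$ is large---rather than by a single-step induction, since the per-step noise $\eta_e\Xi^s$ can individually dominate the per-step signal; your ``accumulated martingale'' remark shows you already have this in mind, so the writeup just needs to unroll to time $0$ before comparing to $P^s_{\text{I}}$ rather than closing the step directly.
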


\begin{proof}
Suppose that ${\kappa^{s}_{c^*_m,m,j^*_m}} \leq a_2$, $\lvert{{\kappa^{s}_{c,m,j}}}\rvert \leq {\kappa^{s}_{c^*_m,m,j^*_m}}$ for all $c\neq c_m^*\text{ or }j\notin \mathcal{J}_m^*$, and $\lvert{{\kappa^{s}_{g,m,j}}}\rvert \leq A_2 A_3 d^{-\frac{1}{2}}$ for all $j \in [J]$ for all $s=0,1,...,t$.

\rk{We first deduce the auxiliary sequence $(P^s_{\text{I}})_{s=0}^{t+1}$.}
\begin{align*}
     {\kappa^{s+1}_{c^*_m,m,j^*_m}} 
     &\geq {\kappa^{s}_{c^*_m,m,j^*_m}} 
     + \frac{{\eta}^{s}}{CM^2} 
     \sum_{c \in [C]} \sum_{i=k^*}^{p^*} \Big[ 
         i \tilde{\alpha}_{m,j^*_m,i, c} \beta_{c,i} {(\kappa^{s}_{c,m,j^*_m})}^{i-1} 
         ({w^*_{c^*_m}}^\top w^*_{c} 
         - {\kappa^{s}_{c^*_m,m,j^*_m}}{\kappa^{s}_{c,m,j^*_m}}) \\
     &\quad + i {s_{c}} \tilde{\alpha}_{m,j^*_m,i, c} \gamma_{i} {(\kappa^{s}_{g,m,j^*_m})}^{i-1} 
         ({w^*_{c^*_m}}^\top w^*_{g} 
         - {\kappa^{s}_{c^*_m,m,j^*_m}}{\kappa^{s}_{g,m,j^*_m}}) 
     \Big] \\
     &\quad - \frac{{\kappa^{s}_{c^*_m,m,j^*_m}} {({\eta}^s)}^2 {A_1}^2 d}{2} 
     - \frac{{{\eta}^s}^3 {A_1}^3 d^{\frac{3}{2}}}{2} 
     - {w^*_{c^*_m}}^\top (I_d - w^s_{m,j^*_m}{w^s_{m,j^*_m}}^\top) {{\Xi}^s}_{w^s_{m,j}} \\
     &\geq {\kappa^{s}_{c^*_m,m,j^*_m}} + \frac{{\eta}^{s}}{CM^2} k^* \tilde{\alpha}_{m,j^*_m,k^*, c^*_m} \beta_{c^*_m,k^*} (1 - {\kappa^{s}_{c^*_m,m,j^*_m}}^2){{(\kappa^{s}_{c^*_m,m,j^*_m})}^{k^* -1}} \\
     & \quad - \frac{{\eta}^{s}{p^*}^2}{M^2} \max_{c, i}{\lvert{\tilde{\alpha}_{m,j^*_m,i, c} \beta_{c,i}}\rvert} \max_{c \neq c^*_m} {\lvert{\kappa^{s}_{c,m,j^*_m}\rvert}^{k^*-1}} \max_{c \neq c^*_m} {\lvert{{w^*_{c^*_m}}^\top w^*_c}\rvert} \\
     &\quad - \frac{{\eta}^{s}{p^*}^2}{M^2} \max_{c, i}{\lvert{\tilde{\alpha}_{m,j^*_m,i, c} \beta_{c,i}}\rvert} \max_{c \neq c^*_m} {\lvert{\kappa^{s}_{c,m,j^*_m}\rvert}^{k^*}}  - \frac{{\eta}^{s}{p^*}^2}{M^2} \max_{c} {\lvert {s_{c}}\rvert} \max_{i, c}{\lvert{\tilde{\alpha}_{m,j^*_m,i, c} \gamma_{i}}\rvert}{\lvert{\kappa^{s}_{g,m,j^*_m}\rvert}^{k^*-1}} \max_{c \neq c^*_m} {\lvert{{w^*_{c}}^\top w^*_g}\rvert} \\ 
     &\quad - \frac{{\eta}^{s}{p^*}^2}{M^2} \max_{c} {\lvert
     {s_{c}}\rvert} \max_{i, c}{\lvert{\tilde{\alpha}_{m,j^*_m,i, c} \gamma_{i}}\rvert}{\lvert{\kappa^{s}_{g,m,j^*_m}\rvert}^{k^*}} - {\kappa^{s}_{c^*_m,m,j^*_m}}{({\eta}^{s})}^2{{A_1}^2}d + {\eta}^{s}{w^*_{c^*_m}}^\top{(I_d - w^s_{m,j^*_m}{w^s_{m,j^*_m}}^\top)} {{\Xi}^s}_{w_{m,j}}. \\
     &\geq {\kappa^{s}_{c^*_m,m,j^*_m}} + \frac{{\eta}^{s}}{CM^2} k^* \tilde{\alpha}_{m,j^*_m,k^*, c^*_m} \beta_{c^*_m,k^*} (1 - {\kappa^{s}_{c^*_m,m,j^*_m}}^2){{(\kappa^{s}_{c^*_m,m,j^*_m})}^{k^* -1}} \\
     &\quad - \frac{{\eta}^{s}{p^*}^2}{M^2} \max\left\{ \max_{c, i}{\lvert{\tilde{\alpha}_{m,j^*_m,i, c} \beta_{c,i}}\rvert}, \max_{c} {\lvert {s_{c}}\rvert} \max_{i, c}{\lvert{\tilde{\alpha}_{m,j^*_m,i, c} \gamma_{i}}\rvert} \right\} {{(\kappa^{s}_{c^*_m,m,j^*_m})}^{k^* -1}} \\
     &\quad \cdot (\max_{c \neq c^*_m} {\lvert{{w^*_{c^*_m}}^\top w^*_c}\rvert} + \max_{c \neq c^*_m} {\lvert{\kappa^{s}_{c,m,j^*_m}}\rvert} + \max_{c \neq c^*_m} {\lvert{{w^*_{c}}^\top w^*_g}\rvert} + {\lvert{\kappa^{s}_{g,m,j^*_m}}\rvert}) \\
     &\quad - {\kappa^{s}_{c^*_m,m,j^*_m}}{{\eta}^{s}}a{{A_1}^2} {d^{- \frac{k^*-2}{2}}} + {\eta}^{s}{w^*_{c^*_m}}^\top{(I_d - w^s_{m,j^*_m}{w^s_{m,j^*_m}}^\top)} {{\Xi}^s}_{w^s_{m,j}}.
\end{align*}

We used conditions ${w^*_c}^\top {w^*_{c'}} = \tilde{O}(d^{-\frac{1}{2}})$ for all $c \neq c'$ and ${w^*_g}^\top w^*_c = \tilde{O}(d^{-\frac{1}{2}})$ for all $c$, ${(\kappa^{t}_{c^*_m,m,j^*_m})}^2 \leq {a_2}^2 \leq \frac{1}{4}a_2$, $\frac{{\eta}^{s}{p^*}^2}{M^2} \max\left\{ \max_{c, i}{\lvert{\tilde{\alpha}_{m,j^*_m,i, c} \beta_{c,i}}\rvert}, \max_{c} {\lvert {s_{c}}\rvert} \max_{i, c}{\lvert{\tilde{\alpha}_{m,j^*_m,i, c} \gamma_{i}}\rvert} \right\} {{(\kappa^{s}_{c^*_m,m,j^*_m})}^{k^* -1}}(\max_{c \neq c^*_m} {\lvert{{w^*_{c^*_m}}^\top w^*_c}\rvert} + \max_{c \neq c^*_m} {\lvert{\kappa^{s}_{c,m,j^*_m}}\rvert} + \max_{c \neq c^*_m} {\lvert{{w^*_{c}}^\top w^*_g}\rvert} + {\lvert{\kappa^{s}_{g,m,j^*_m}}\rvert}) \leq \frac{{a_2 \eta}^{s}}{4CM^2} k^* \tilde{\alpha}_{m,j^*_m,k^*, c^*_m} \beta_{c^*_m,k^*}{{(\kappa^{s}_{c^*_m,m,j^*_m})}^{k^* -1}}$, and ${\kappa^{s}_{c^*_m,m,j^*_m}}{{\eta}^{s}}a_4{{A_1}^2} {d^{- \frac{k^*-2}{2}}} \leq \frac{{a_2 \eta}^{s}}{4CM^2} k^* \tilde{\alpha}_{m,j^*_m,k^*, c^*_m} \beta_{c^*_m,k^*}{{(\kappa^{s}_{c^*_m,m,j^*_m})}^{k^* -1}}.$

Hence,
\begin{align*}
    {\kappa^{s+1}_{c^*_m,m,j^*_m}} &\geq {\kappa^{s}_{c^*_m,m,j^*_m}} 
    + (1-\frac{3}{4} a_2) \frac{{\eta}^{s}}{CM^2} k^* \tilde{\alpha}_{m,j^*_m,k^*, c} \beta_{c^*_m,k^*}{{\left(\kappa^{s}_{c^*_m,m,j^*_m}\right)}^{k^* -1}} \\
    &\quad + {\eta}^{s}{w^*_{c^*_m}}^\top{(I_d - w^s_{m,j^*_m}{w^s_{m,j^*_m}}^\top)} {{\Xi}^s}_{w_{m,j}} \\
    &\geq {\kappa^{0}_{c^*_m,m,j^*_m}} 
    + \sum_{s'=0}^{s}\Big[(1-\frac{3}{4} a_2) \frac{{\eta}^{s'}}{CM^2} k^* \tilde{\alpha}_{m,j^*_m,k^*, c} \beta_{c^*_m,k^*}{{\left(\kappa^{s'}_{c^*_m,m,j^*_m}\right)}^{k^* -1}} \\
    &\quad + {\eta}^{s'}{w^*_{c^*_m}}^\top{(I_d - w^{s'}_{m,j^*_m}{w^{s'}_{m,j^*_m}}^\top)} {{\Xi}^{s'}}_{w_{m,j}}\Big].
\end{align*}

\rk{We bound the noise term. Note that $\Xi^s_{w_{m,j}}$ has a sub-Weibull tail.}

If $s \leq A_2(\kappa^{0}_{c^*_m,m,j^*_m})^{2-2k^*}$, 
\begin{align*}
    \sum_{s'=0}^s {\eta}^{s'}{w^*_{c^*_m}}^\top{(I_d - w^{s'}_{m,j^*_m}{w^{s'}_{m,j^*_m}}^\top)} {{\Xi}^{s'}}_{w^{s'}_{m,j}} \leq {\eta_e} A_1 \sqrt{s} \leq a_4 A_1 \kappa^{0}_{c^*_m,m,j^*_m}\leq a_2 \kappa^{0}_{c^*_m,m,j^*_m}
\end{align*}
with high probability. 

If $s > A_2(\kappa^{0}_{c^*_m,m,j^*_m})^{2-2k^*}$,

\begin{align*}
    \sum_{s'=0}^{s} {\eta}^{s'}{w^*_{c^*_m}}^\top{(I_d - w^{s'}_{m,j^*_m}{w^{s'}_{m,j^*_m}}^\top)} {{\Xi}^{s'}}_{w_{m,j}} 
    &\leq {\eta}^{s} A_1 s^{-\frac{1}{2}}s
    \leq \eta_e s A_1 A_2^{-\frac{1}{2}} {(\kappa^{s'}_{c^*_m,m,j^*_m})}^{k^*-1} \\
    &\leq \sum_{s'=0}^{s} \frac{{a_2 \eta_e}}{4CM^2} k^* \tilde{\alpha}_{m,j^*_m,k^*, c^*_m} \beta_{c^*_m,k^*}{{(\kappa^{s'}_{c^*_m,m,j^*_m})}^{k^* -1}}
\end{align*}
with high probability.

Therefore, for all $s=0, 1, \ldots, t$, ${\kappa^{s}_{c^*_m,m,j^*_m}}$ can be lower bounded as
\begin{align*}
    {\kappa^{s+1}_{c^*_m,m,j^*_m}} &\geq (1-a_2){\kappa^{s}_{c^*_m,m,j^*_m}} + \sum_{s'=0}^{s} \frac{{(1-a_2) \eta}^{s'}}{CM^2} k^* \tilde{\alpha}_{m,j^*_m,k^*, c^*_m} \beta_{c^*_m,k^*}{{(\kappa^{s'}_{c^*_m,m,j^*_m})}^{k^* -1}}.
\end{align*}

With the aid of an auxiliary sequence $\left(P^s_{\text{I}}\right)_{s=0}^{t+1}$, where $P^0_{\text{I}} = (1-a_2)\kappa^0_{c^*_m, m, j^*_m}$, and
\begin{align*}
    P^{s+1}_{\text{I}} &= P^s_{\text{I}} + \frac{\eta^s}{CM^2} k^* \tilde{\alpha}_{m,j^*_m,k^*, c^*_m} \beta_{c^*_m,k^*}{(P^s_{\text{I}})}^{k^*-1},
\end{align*}
$\kappa^{s}_{c^*_m,m,j^*_m}$ is lower bounded by $P^s_{\text{I}}$ for all $s=0, 1, \ldots, t+1$.

\rk{Next, we deduce the auxiliary sequence $(Q^s_{\text{I}})_{s=0}^{t+1}$.}

For the upper bound of $\max_{c\neq c_m^*\text{ or }j\notin \mathcal{J}_m^*} {\lvert{\kappa^{s+1}_{c,m,j}}\rvert}$, we have 
${\lvert{\kappa^{s+1}_{c,m,j} - \kappa^{s}_{c,m,j}}\rvert} \leq A_1 {\eta_e}$ with high probability. Thus, the sign of $\kappa^{s+1}_{c,m,j}$ is the same as that of $\kappa^{s}_{c,m,j}$, or $\lvert{\kappa^{s+1}_{c,m,j}}\rvert \leq A_1 {\eta_e}$. \rk{Similarly, the sign of $\kappa^{s+1}_{g,m,j}$ is the same as that of $\kappa^{s}_{g,m,j}$, or $\lvert{\kappa^{s+1}_{g,m,j}}\rvert \leq A_1 {\eta_e}$.}

Fix $m \in [M]$.

\rk{We show that the following bounds hold for all $s=0, 1, \ldots, t+1$.}
\begin{itemize}
    \item
    \rk{$\max_{c \neq c^*_m}{\lvert{\kappa^s_{c,m,j}}\rvert}$ is upper bounded by $Q^s_{\text{I}, c}$ for all $j \in \mathcal{J}_m^*$, \\
    where the sequence $\left(Q^s_{\text{I},c}\right)_{s=0}^{t+1}$ is defined recursively as follows: \\
    \begin{align*}
        Q^0_{\text{I},c} = (1+a_2) \max\{{\lvert{\kappa^{s}_{c,m,j}}\rvert}, \frac{1}{2}d^{-\frac{1}{2}}\},
    \end{align*}
    and for $s \geq 0$,
    \begin{align*}
        Q^{s+1}_{\text{I},c} = Q^{s}_{\text{I},c} + (1+a_2) \frac{{\eta}^{s}}{CM^2} k^* \max_{c'}{\lvert{\tilde{\alpha}_{m,j,k^*,c'} \beta_{c',k^*}}\rvert} {(Q^s_{\text{I},c})}^{k^*-1} + A_3 \frac{{\eta}^{s}}{CM^2} k^* {\tilde{\alpha}_{m,j,k^*,c^*_m} \beta_{c^*_m,k^*}}{(\kappa^{s}_{c^*_m,m,j^*_m})}^{k^* -1} d^{-\frac{1}{2}},
    \end{align*}
    with high probability.
    }
    
    \rk{
    \item $\lvert{\kappa^s_{g,m,j}}\rvert$ is upper bounded by $Q^s_{\text{I}, g}$ for all $j \in \mathcal{J}_m^*$, \\
    where the sequence $\left(Q^s_{\text{I},g}\right)_{s=0}^{t+1}$ is defined recursively as follows: \\
    \begin{align*}
        Q^0_{\text{I},g} = (1+a_2) \max\{{\lvert{\kappa^{s}_{g,m,j}}\rvert}, \frac{1}{2}d^{-\frac{1}{2}}\},
    \end{align*}
    and for $s \geq 0$, \\
    \begin{align*}
        Q^{s+1}_{\text{I},g} = Q^{s}_{\text{I},g} + (1+a_2) \frac{{\eta}^{s}}{CM^2} k^* {\lvert{\sum_{c' \in [C]}{s_{c'}\tilde{\alpha}_{m,j,k^*,c'}\gamma_{k^*}}}\rvert} {(Q^s_{\text{I},g})}^{k^*-1} + A_3 \frac{{\eta}^{s}}{CM^2} k^* {\tilde{\alpha}_{m,j,k^*,c^*_m} \beta_{c^*_m,k^*}}{(\kappa^{s}_{c^*_m,m,j^*_m})}^{k^* -1} d^{-\frac{1}{2}},
    \end{align*}
    with high probability.
    }
    
    \rk{
    \item $\max_{c \in [C]}\lvert{{\kappa^s_{c,m,j}}}\rvert$ is upper bounded by $R^s_{\text{I}, c}$ for all $j \notin \mathcal{J}_m^*$, \\
    where the sequence $\left(R^s_{\text{I},c}\right)_{s=0}^{t+1}$ is defined recursively as follows: \\
    \begin{align*}
        R^0_{\text{I},c} = (1+a_2) \max\{\max_{c \neq c^*_m}{\lvert{\kappa^{s}_{c,m,j}}\rvert}, \frac{1}{2}d^{-\frac{1}{2}}\},
    \end{align*}
    and for $s \geq 0$,
    \begin{align*}
        R^{s+1}_{\text{I},c} = R^{s}_{\text{I},c} + (1+a_2) \frac{{\eta}^{s}}{CM^2} k^* \max_{c'}{\lvert{\tilde{\alpha}_{m,j,k^*,c'} \beta_{c',k^*}}\rvert} {(R^s_{\text{I},c})}^{k^*-1},
    \end{align*}
    with high probability.
    }
    
    \rk{
    \item $\lvert{\kappa^s_{g,m,j}}\rvert$ is upper bounded by $R^s_{\text{I}, g}$ for all $j \notin \mathcal{J}_m^*$, \\
    where the sequence $\left(R^s_{\text{I},g}\right)_{s=0}^{t+1}$ is defined recursively as follows: \\
    \begin{align*}
        R^0_{\text{I},g} = (1+a_2) \max\{{\lvert{\kappa^{s}_{g,m,j}}\rvert}, \frac{1}{2}d^{-\frac{1}{2}}\},
    \end{align*}
    and for $s \geq 0$,
    \begin{align*}
        R^{s+1}_{\text{I},g} = R^{s}_{\text{I},g} + (1+a_2) \frac{{\eta}^{s}}{CM^2} k^* {\lvert{\sum_{c' \in [C]}{s_{c'}\tilde{\alpha}_{m,j,k^*,c'}\gamma_{k^*}}}\rvert} {(R^s_{\text{I},g})}^{k^*-1},
    \end{align*}
    with high probability.
    }
\end{itemize}
\rk{Furthermore, for all $s = 0, 1, \ldots, t+1$, $Q^s_{\text{I}, c}$, $Q^s_{\text{I}, g}$, $R^s_{\text{I}, c}$, and $R^s_{\text{I}, g}$ are upper bounded by $Q^s_{\text{I}}$.}

We sequentially present each bound.

For $c \neq c^*_m$ and $j \in \mathcal{J}_m^*$,
\begin{align*}
    \lvert{\kappa^{s+1}_{c,m,j}}\rvert &\leq \max\Big\{A_1 \eta_e,
    \Big\lvert \kappa^{s}_{c,m,j} + \frac{{\eta}^{s}}{CM^2} 
    \sum_{c \in [C]} \sum_{i=k^*}^{p^*} \Big(
        i \tilde{\alpha}_{m,j,i, c'} \beta_{c',i} {(\kappa^{s}_{c',m,j})}^{i-1} 
        \Big({w^*_c}^\top w^*_{c'} - {\kappa^{s}_{c,m,j}}{\kappa^{s}_{c',m,j}}\Big) \\
    &\quad + i \tilde{\alpha}_{m,j,i, c'} s_{c'} \gamma_{i} {(\kappa^{s}_{g,m,j})}^{i-1} 
        \Big({w^*_c}^\top w^*_g - {\kappa^{s}_{c,m,j}}{\kappa^{s}_{g,m,j}}\Big) 
    \Big) + \frac{\lvert{\kappa^{s}_{c,m,j}}\rvert {{\eta}^s}^2 {A_1}^2 d}{2} 
     + \frac{{{\eta}^s}^3 {A_1}^3 d^{\frac{3}{2}}}{2} \\ 
     &\quad + {w^*_c}^\top (I_d - w^s_{m,j}{w^s_{m,j}}^\top) {{\Xi}^s}_{w_{m,j}} \Big\rvert \Big\} \\
     &\leq \max\Big\{A_1 \eta_e, \Big\lvert \kappa^{s}_{c,m,j} +  \frac{{\eta}^{s}}{CM^2} \Big( \lvert{\sum_{i=k^*}^{p^*} i \tilde{\alpha}_{m,j,i, c^*_m} \beta_{c^*_m,i} {(\kappa^{s}_{c^*_m,m,j})}^{i-1} ({w^*_c}^\top {w^*_{c^*_m}})}\rvert \\
     &\quad + \lvert{\sum_{i=k^*}^{p^*} i \tilde{\alpha}_{m,j,i, c} \beta_{c,i}{(\kappa^{s}_{c,m,j})}^{i-1} (1-{(\kappa^{s}_{c,m,j})}^2)}\rvert \\
     &\quad + \lvert{\sum_{c' \neq c,c^*_m} \sum_{i=k^*}^{p^*} i \tilde{\alpha}_{m,j,i,c'} \beta_{c',i} {(\kappa^{s}_{c',m,j})}^{i-1}({w^*_c}^\top{w^*_{c'}} - \kappa^{s}_{c,m,j}\kappa^{s}_{c',m,j})}\rvert \\
     &\quad + \lvert{\sum_{c' \in [C]} \sum_{i=k^*}^{p^*} i s_{c'} \tilde{\alpha}_{m,j,i,c'} \gamma_{i} {(\kappa^{s}_{g,m,j})}^{i-1} ({{w^*_c}^\top w^*_g})}\rvert + \lvert{\sum_{c' \in [C]} \sum_{i=k^*}^{p^*} i s_{c'} \tilde{\alpha}_{m,j,i,c'} \gamma_{i} {(\kappa^{s}_{g,m,j})}^{i}}\rvert \Big) \\
     &\quad +  \frac{\lvert{\kappa^{s}_{c,m,j}}\rvert {({\eta}^s)}^2 {A_1}^2 d}{2} 
     + \frac{{({\eta}^s)}^3 {A_1}^3 d^{\frac{3}{2}}}{2} + {w^*_c}^\top (I_d - w^s_{m,j}{w^s_{m,j}}^\top) {{\Xi}^s}_{w_{m,j}} \Big\rvert \Big\} \\  
     &\leq \max\Big\{A_1 \eta_e, \Big\lvert \kappa^{s}_{c,m,j} +  \frac{{\eta}^{s}}{CM^2} \Big( {p^*}^2 \max_{i}\lvert{\tilde{\alpha}_{m,j,i, c^*_m} \beta_{c^*_m,i}}\rvert {(\kappa^{s}_{c^*_m,m,j})}^{k^*-1}({w^*_c}^\top{w^*_{c^*_m}}) \\
     &\quad + {k^*} \max_{i} \lvert{\tilde{\alpha}_{m,j,k^*,c} \beta_{c,k^*}}\rvert {\lvert{\kappa^{s}_{c,m,j}}\rvert}^{k^*-1} + {p^*}^2 \max_{c',i}\lvert{\tilde{\alpha}_{m,j,i, c'} \beta_{c',i}}\rvert {\lvert{\kappa^{s}_{c,m,j}}\rvert}^{k^*} \\
     &\quad + C{p^*}^2 \max_{c',i}\lvert{\tilde{\alpha}_{m,j,i,c'} \beta_{c',i}}\rvert \max_{c' \neq c, c^*_m}{\lvert{\kappa^{s}_{c,m,j}}\rvert^{k^*}} (\max_{c' \neq c}{\lvert{{w^*_c}^\top {w^*_{c'}}}\rvert} + \max_{c' \neq c, c^*_m}\lvert{\kappa^{s}_{c',m,j}}\rvert) \\
     &\quad + C{p^*}^2 \max_{c',i}{\lvert{s_{c'}\tilde{\alpha}_{m,j,i, c'} \gamma_{i}}\rvert} {\lvert{\kappa^{s}_{g,m,j}}\rvert}^{k^*-1}(\lvert{{w^*_c}^\top {w^*_g}}\rvert + \lvert{\kappa^{s}_{g,m,j}}\rvert)\Big) \\
     &\quad +  \frac{\lvert\kappa^{s}_{c,m,j}\rvert {({\eta}^s)}^2 {A_1}^2 d}{2} 
     + \frac{{({\eta}^s)}^3 {A_1}^3 d^{\frac{3}{2}}}{2} + {\eta}^s {w^*_c}^\top (I_d - w^s_{m,j}{w^s_{m,j}}^\top) {{\Xi}^s}_{w^s_{m,j}} \Big\rvert \Big\} \\
     &\leq \max\Big\{A_1 \eta_e, \Big\lvert \kappa^{s}_{c,m,j} + (1 + \frac{1}{3}a_2) \frac{{\eta}^{s}}{CM^2} k^* \max_{c'}{\lvert{\tilde{\alpha}_{m,j,k^*,c'} \beta_{c',k^*}}\rvert} {\lvert{\kappa^{s}_{c,m,j}}\rvert}^{k^*-1} \\
     &\quad + A_3 \frac{{\eta}^{s}}{CM^2} k^* {\tilde{\alpha}_{m,j,k^*,c^*_m} \beta_{c^*_m,k^*}}{(\kappa^{s}_{c^*_m,m,j^*_m})}^{k^* -1} d^{-\frac{1}{2}} + \frac{\lvert\kappa^{s}_{c,m,j}\rvert {{(\eta}^s)}^2 {A_1}^2 d}{2} + \frac{{({\eta}^s)}^3 {A_1}^3 d^{\frac{3}{2}}}{2} \\
     &\quad + {\eta}^s {w^*_c}^\top (I_d - w^s_{m,j}{w^s_{m,j}}^\top) {{\Xi}^s}_{w_{m,j}}\Big\rvert \Big\} \\
     &\leq \max\Big\{A_1 \eta_e,\max_{c \neq c^*_m}{\lvert{\kappa^{0}_{c,m,j}}\rvert} \\ 
     &\quad + (1 + \frac{2}{3}a_2) \sum_{s'=0}^{s}\frac{{\eta}^{s'}}{CM^2} k^* \max_{c'}{\lvert{\tilde{\alpha}_{m,j,k^*,c'} \beta_{c',k^*}}\rvert} \max\Big\{\max_{c \neq c^*_m}{\lvert{\kappa^{s'}_{c,m,j}}\rvert}^{k^*-1}, {(\frac{1}{2}d^{-\frac{1}{2}})^{k^*-1}} \Big\} \\
     &\quad + A_3 \sum_{s'=0}^{s} \frac{{\eta}^{s'}}{CM^2} k^* {\tilde{\alpha}_{m,j,k^*,c^*_m} \beta_{c^*_m,k^*}}{(\kappa^{s'}_{c^*_m,m,j^*_m})}^{k^* -1} d^{-\frac{1}{2}} + \max_{c \neq c^*_m}\Big\lvert{\sum_{s'=0}^{s}{\eta}^{s'} {w^*_c}^\top (I_d - w^{s'}_{m,j}{w^{s'}_{m,j}}^\top) {{\Xi}^{s'}}_{w_{m,j}}}\Big\rvert \Big\}.
\end{align*}
Since, ${w^*_c}^\top {w^*_{c'}} = \tilde{O}(d^{-\frac{1}{2}})$ for all $c \neq c'$, ${w^*_g}^\top w^*_c = \tilde{O}(d^{-\frac{1}{2}})$ for all $c$, $\kappa^s_{c,m,j}=\tilde{O}(d^{-\frac{1}{2}})$ for all $(c,j) \neq (c^*_m, j^*_m)$ and $\kappa^s_{g,m,j}=\tilde{O}(d^{-\frac{1}{2}})$ for all $j \in [J]$, the term ${k^*} \max_{i} \lvert{\tilde{\alpha}_{m,j,k^*,c} \beta_{c,k^*}}\rvert {\lvert{\kappa^{s}_{c,m,j}}\rvert}^{k^*-1}$ subsumes the remaining terms within the expression $\frac{{\eta}^{s}}{CM^2}(\cdot)$ with wight $\frac{1}{3}a_2$ in the fourth inequality. $\frac{\lvert\kappa^{s}_{c,m,j}\rvert {{(\eta}^s)}^2 {A_1}^2 d}{2} + \frac{{({\eta}^s)}^3 {A_1}^3 d^{\frac{3}{2}}}{2}$ are subsumed with weight $\frac{1}{3}a_2$ by $\eta^s \leq a_4 d^{-\frac{k^*}{2}}$ and $\lvert \kappa^s_{c,m,j}\rvert \leq A_2A_3d^{-\frac{1}{2}}$ in the fifth inequality.

For the noise term, if $s \leq A_2 d^{k^*-1}$,
\begin{align*}
    \max_{c \neq c^*_m}{\lvert{\sum_{s'=0}^{s}{\eta}^{s'} {w^*_c}^\top (I_d - w^{s'}_{m,j}{w^{s'}_{m,j}}^\top) {{\Xi}^{s'}}_{w_{m,j}}}\Big\rvert} \leq \eta_e A_1 \sqrt{s} \leq \frac{1}{2}a_2d^{-\frac{1}{2}}
\end{align*}
with high probability.

If $s > A_2 d^{k^*-1}$,
\begin{align*}
    \max_{c \neq c^*_m}{\lvert{\sum_{s'=0}^{s}{\eta}^{s'} {w^*_c}^\top (I_d - w^{s'}_{m,j}{w^{s'}_{m,j}}^\top) {{\Xi}^{s'}}_{w_{m,j}}}\Big\rvert} \leq \eta_e A_1 \sqrt{s} \leq \frac{{a_2\eta_e s}}{3CM^2} k^* \max_{c'}{\lvert{\tilde{\alpha}_{m,j,k^*,c'} \beta_{c',k^*}}\rvert} {(\frac{1}{2}d^{-\frac{1}{2}})}^{k^*-1}
\end{align*}
with high probability.

Thus,
\begin{align*}
    \max_{c \neq c^*_m}{\lvert{\kappa^s_{c,m,j}}\rvert} &\leq (1+a_2)\max\Big\{\max_{c \neq c^*_m}{\lvert{\kappa^0_{c,m,j}}\rvert}, \frac{1}{2}d^{-\frac{1}{2}}\Big\} \\
    &\quad + (1+a_2) \sum_{s'=0}^{s}\frac{{\eta}^{s'}}{CM^2} k^* \max_{c'}{\lvert{\tilde{\alpha}_{m,j,k^*,c'} \beta_{c',k^*}}\rvert} \max\Big\{\max_{c \neq c^*_m}{\lvert{\kappa^{s'}_{c,m,j}}\rvert}, Q^{s'}_{\text{I},c}\Big\}^{k^*-1} \\
    &\quad + A_3 \sum_{s'=0}^{s} \frac{{\eta}^{s'}}{CM^2} k^* {\tilde{\alpha}_{m,j,k^*,c^*_m} \beta_{c^*_m,k^*}}{(\kappa^{s'}_{c^*_m,m,j^*_m})}^{k^* -1} d^{-\frac{1}{2}}.
\end{align*}

In contrast, the following inequality holds for $Q^s_{\text{I},c}$:
\begin{align*}
    Q^{s+1}_{\text{I},c} &\leq (1+a_2)\max\{\max_{c \neq c^*_m}{\lvert{\kappa^{s}_{c,m,j}}\rvert}, \frac{1}{2}d^{-\frac{1}{2}}\} + (1+a_2) \sum_{s'=0}^{s}\frac{{\eta}^{s'}}{CM^2} k^* \max_{c'}{\lvert{\tilde{\alpha}_{m,j,k^*,c'} \beta_{c',k^*}}\rvert}  {(Q^{s'}_{\text{I},c})}^{k^*-1} \\
    & + A_3 \sum_{s'=0}^{s} \frac{{\eta}^{s'}}{CM^2} k^* {\tilde{\alpha}_{m,j,k^*,c^*_m} \beta_{c^*_m,k^*}}{(\kappa^{s'}_{c^*_m,m,j^*_m})}^{k^* -1} d^{-\frac{1}{2}}.
\end{align*}

Therefore, by induction, we establish that $\max_{c \neq c^*_m}{\lvert{\kappa^{s}_{c,m,j}}\rvert}$ is upper bounded by $Q^s_{\text{I},c}$ for all $j \in \mathcal{J}_m^*$ and $s=0, 1, \ldots, t+1$ with high probability.

We apply a similar procedure for $g$ and $j \in \mathcal{J}_m^*$.
\begin{align*}
    \lvert{\kappa^{s+1}_{g,m,j}}\rvert &\leq \max\Big\{A_1 \eta_e, \Big\lvert \kappa^{s}_{g,m,j} + \frac{{\eta}^{s}}{CM^2} \sum_{c' \in [C]} \sum_{i=k^*}^{p^*} \Big[i \tilde{\alpha}_{m,j,i, c'} \beta_{c',i} {(\kappa^{s}_{c',m,j})}^{i-1} ({w^*_{g}}^\top w^*_{c'} - {\kappa^{s}_{g,m,j}}{\kappa^{s}_{c',m,j}}) \\
    &\quad + i \tilde{\alpha}_{m,j,i, c'} s_{c'} \gamma_{i} {(\kappa^{s}_{g,m,j})}^{i-1} (1 - {({\kappa^{s}_{g,m,j}})}^2 )\Big] + \frac{{\lvert\kappa^{s}_{g,m,j}\rvert} {({\eta}^s)}^2 {A_1}^2 d}{2} + \frac{{({\eta}^s)}^3 {A_1}^3 d^{\frac{3}{2}}}{2} \\ 
    &\quad + {w^*_{g}}^\top (I_d - w^s_{m,j}{w^s_{m,j}}^\top) {{\Xi}^s}_{w_{m,j}} \Big\rvert \Big\} \\
    &\leq \max\Big\{A_1 \eta_e, \Big\lvert \kappa^{s}_{g,m,j} + \frac{{\eta}^{s}}{CM^2} \Big(\lvert{\sum_{i=k^*}^{p^*} i \tilde{\alpha}_{m,j,i, c^*_m} \beta_{c^*_m,i} {(\kappa^{s}_{c^*_m,m,j})}^{i-1} ({{w^*_g}}^\top w^*_{c^*_m} - {\kappa^{t}_{g,m,j}}{\kappa^{t}_{c^*_m,m,j}})}\rvert \\
    &\quad + \lvert\sum_{c' \neq c^*_m} \sum_{i=k^*}^{p^*} {i\tilde{\alpha}_{m,j,i, c'} \beta_{c',i} {(\kappa^{s}_{c',m,j})}^{i-1} ({{w^*_g}^\top {w^*_{c'}} - {\kappa^{s}_{g,m,j}}{\kappa^{s}_{c',m,j}})}\rvert} \\
    &\quad + {\lvert{\sum_{c' \in [C]} \sum_{i=k^*}^{p^*} i s_{c'} \tilde{\alpha}_{m,j,i, c'} \gamma_{i} {(\kappa^{s}_{g,m,j})}^{i-1} (1 - {(\kappa^{s}_{g,m,j})}^2)}}\rvert \Big) \\
    &\quad + \frac{\lvert\kappa^{s}_{g,m,j}\rvert {({\eta}^s)}^2 {A_1}^2 d}{2}
    + \frac{{({\eta}^s)}^3 {A_1}^3 d^{\frac{3}{2}}}{2} + {w^*_{g}}^\top (I_d - w^s_{m,j}{w^s_{m,j}}^\top) {{\Xi}^s}_{w_{m,j}} \Big\rvert \Big\} \\
    &\leq \max\Big\{A_1 \eta_e,\Big\lvert \kappa^{s}_{g,m,j} + \frac{{\eta}^{s}}{CM^2} \Big(k^* \tilde{\alpha}_{m,j,k^*, c^*_m} \beta_{c^*_m,k^*}{(\kappa^{s}_{c^*_m,m,j})}^{k^*-1} ({\lvert{{w^*_g}^\top {w^*_{c^*_m}}}\rvert} + {\lvert{\kappa^{s}_{g,m,j}}\rvert}) \\
    &\quad + {p^*}^2 \max_{i}\lvert{\tilde{\alpha}_{m,j,i, c^*_m}\beta_{c^*_m,i}}\rvert{(\kappa^{t}_{c^*_m,m,j})}^{k^*}({\lvert{{w^*_g}^\top {w^*_{c^*_m}}}\rvert} +{\lvert{\kappa^{s}_{g,m,j}}\rvert}) \\
    &\quad + C {p^*}^2 \max_{c \neq c^*_m, i}\lvert{\tilde{\alpha}_{m,j,i, c'} \beta_{c',i}}\rvert \max_{c' \neq c^*_m} {\lvert{\kappa^{s}_{c',m,j}}\rvert}^{k^*-1} (\max_{c \neq c^*_m}{\lvert{{w^*_g}^\top {w^*_{c'}}}\rvert} + \max_{c' \neq c^*_m} {\lvert{\kappa^{t}_{c',m,j}}\rvert}) \\
    &\quad + {k^*}{\lvert{\sum_{c' \in [C]}{s_{c'}\tilde{\alpha}_{m,j,k^*,c'}\gamma_{k^*}}}\rvert}{\lvert{\kappa^{t}_{g,m,j}}\rvert}^{k^*-1} + C{p^*}^2 \max_{c', i}{\lvert{s_{c'}\tilde{\alpha}_{m,j,i,c'}}\gamma_{i}\rvert} {\lvert{\kappa^{s}_{g,m,j}}\rvert}^{k^*} \Big) \\
    &\quad + \frac{\lvert\kappa^{s}_{g,m,j}\rvert {({\eta}^s)}^2 {A_1}^2 d}{2} + \frac{{({\eta}^s)}^3 {A_1}^3 d^{\frac{3}{2}}}{2} + {\eta}^s {w^*_{g}}^\top (I_d - w^s_{m,j}{w^s_{m,j}}^\top) {{\Xi}^s}_{w_{m,j}} \Big\rvert \Big\} \\
    &\leq \max\Big\{A_1 \eta_e,{\lvert{\kappa^{0}_{g,m,j}}\rvert} + (1 + \frac{2}{3}a_2) \sum_{s'=0}^{s}\frac{{\eta}^{s'}}{CM^2} k^* {\lvert{\sum_{c' \in [C]}{s_{c'}\tilde{\alpha}_{m,j,k^*,c'}\gamma_{k^*}}}\rvert} {\max\big\{{\lvert{\kappa^{s'}_{g,m,j}}\rvert}^{k^*-1}, {(\frac{1}{2}d^{-\frac{1}{2}})^{k^*-1}} \big\}}\\
    &\quad + A_3 \sum_{s'=0}^{s} \frac{{\eta}^{s}}{CM^2} k^* {\tilde{\alpha}_{m,j,k^*,c^*_m} \beta_{c^*_m,k^*}}{(\kappa^s_{c^*_m,m,j^*_m})}^{k^* -1} d^{-\frac{1}{2}} + \Big\lvert{\sum_{s'=0}^{s}{{\eta}^s {w^*_g}^\top (I_d - w^s_{m,j}{w^s_{m,j}}^\top) {\Xi}^s_{w_{m,j}}}}\Big\rvert \Big\}.
\end{align*}
Since, ${w^*_c}^\top {w^*_{c'}} = \tilde{O}(d^{-\frac{1}{2}})$ for all $c \neq c'$, ${w^*_g}^\top w^*_c = \tilde{O}(d^{-\frac{1}{2}})$ for all $c$, $\kappa^s_{c,m,j}=\tilde{O}(d^{-\frac{1}{2}})$ for all $(c,j) \neq (c^*_m, j^*_m)$ and $\kappa^s_{g,m,j}=\tilde{O}(d^{-\frac{1}{2}})$ for all $j \in [J]$, the term ${k^*} \max_{i} \lvert{\sum_{c' \in [C]}s_{c'}\tilde{\alpha}_{m,j,k^*,c'} \gamma_{k^*}}\rvert {\lvert{\kappa^{s}_{g,m,j}}\rvert}^{k^*-1}$ subsumes the remaining terms within the expression $\frac{{\eta}^{s}}{CM^2}(\cdot)$ with weight $\frac{1}{3}a_2$ in the fourth inequality. $\frac{\lvert\kappa^{s}_{g,m,j}\rvert {{(\eta}^s)}^2 {A_1}^2 d}{2} + \frac{{({\eta}^s)}^3 {A_1}^3 d^{\frac{3}{2}}}{2}$ are also subsumed with weight $\frac{1}{3}a_2$ by $\eta^s \leq a_4 d^{-\frac{k^*}{2}}$ and $\lvert \kappa^s_{c,m,j}\rvert \leq A_2A_3d^{-\frac{1}{2}}$ in the fourth inequality.

By providing an upper bound for the noise term in the same way,
\begin{align*}
    \Big\lvert{\sum_{s'=0}^{s}{\eta}^s {w^*_g}^\top (I_d - w^s_{m,j}{w^s_{m,j}}^\top) {{\Xi}^s}_{w_{m,j}}}\Big\rvert \leq \begin{cases}
        \frac{1}{2}a_2d^{-\frac{1}{2}}, \quad \text{if } s \leq A_2d^{k^*-1}, \\
        \frac{a_2\eta_e s}{3CM^2}k^*{\lvert{\sum_{c' \in [C]}{s_{c'}\tilde{\alpha}_{m,j,k^*,c'}\gamma_{k^*}}}\rvert} {(\frac{1}{2}d^{-\frac{1}{2}})}^{k^*-1}, \quad \text{if } s > A_2d^{k^*-1}
    \end{cases}
\end{align*}
with high probability.

Thus,
\begin{align*}
    {\lvert{\kappa^s_{g,m,j}}\rvert} &\leq (1+a_2)\max\Big\{{\lvert{\kappa^0_{g,m,j}}\rvert}, \frac{1}{2}d^{-\frac{1}{2}}\Big\} \\
    &\quad + (1+a_2) \sum_{s'=0}^{s}\frac{{\eta}^{s'}}{CM^2} k^* {\lvert{\sum_{c' \in [C]}{s_{c'}\tilde{\alpha}_{m,j,k^*,c'}\gamma_{k^*}}}\rvert} \max\Big\{{\lvert{\kappa^{s'}_{g,m,j}}\rvert}, Q^{s'}_{\text{I},g}\Big\}^{k^*-1} \\
    &\quad + A_3 \sum_{s'=0}^{s} \frac{{\eta}^{s'}}{CM^2} k^* {\tilde{\alpha}_{m,j,k^*,c^*_m} \beta_{c^*_m,k^*}}{(\kappa^{s'}_{c^*_m,m,j^*_m})}^{k^* -1} d^{-\frac{1}{2}}.
\end{align*}
In contrast, the following inequality holds for $Q^s_{\text{I},g}$:
\begin{align*}
     Q^{s+1}_{\text{I},g} &\leq (1+a_2)\max\{\max_{c \neq c^*_m}{\lvert{\kappa^{s}_{g,m,j}}\rvert}, \frac{1}{2}d^{-\frac{1}{2}}\} + (1+a_2) \sum_{s'=0}^{s}\frac{{\eta}^{s'}}{CM^2} k^* {\lvert{\sum_{c' \in [C]}{s_{c'}\tilde{\alpha}_{m,j,k^*,c'}\gamma_{k^*}}}\rvert}  {(Q^{s'}_{\text{I},g})}^{k^*-1} \\
    & + A_3 \sum_{s'=0}^{s} \frac{{\eta}^{s'}}{CM^2} k^* {\tilde{\alpha}_{m,j,k^*,c^*_m} \beta_{c^*_m,k^*}}{(\kappa^{s'}_{c^*_m,m,j^*_m})}^{k^* -1} d^{-\frac{1}{2}}.
\end{align*}

\rk{Therefore, by induction, we establish that $\lvert{\kappa^s_{g,m,j}}\rvert$ is upper bounded by $Q^{s}_{\text{I},g}$ for all $j \in \mathcal{J}_m^*$ and $s=0, 1, \ldots, t+1$ with high probability.
}

We also consider a similar argument for $j \notin \mathcal{J}_m^*$.
\begin{align*}
    \lvert\kappa^{s+1}_{c,m,j}\rvert &\leq \max\Big\{A_1 \eta_e, \Big\lvert \kappa^{s}_{c,m,j} + \frac{{\eta}^{s}}{CM^2} \sum_{c \in [C]} \sum_{i=k^*}^{p^*} \Big[i \tilde{\alpha}_{m,j,i, c'} \beta_{c',i} {(\kappa^{s}_{c',m,j})}^{i-1} ({w^*_c}^\top w^*_{c'} - {\kappa^{s}_{c,m,j}}{\kappa^{s}_{c',m,j}}) \\
    &\quad + i \tilde{\alpha}_{m,j,i, c'} s_{c'} \gamma_{i} {(\kappa^{s}_{g,m,j})}^{i-1} ({w^*_c}^\top w^*_g - {\kappa^{s}_{c,m,j}}{\kappa^{s}_{g,m,j}}) \Big] + \frac{\lvert{\kappa^{s}_{c,m,j}}\rvert {({\eta}^s)}^2 {A_1}^2 d}{2} + \frac{{({\eta}^s)}^3 {A_1}^3 d^{\frac{3}{2}}}{2} \\ 
    &\quad + \eta^s {w^*_c}^\top (I_d - w^s_{m,j}{w^s_{m,j}}^\top) {{\Xi}^s}_{w_{m,j}} \Big\rvert \Big\} \\
    &\leq \max\Big\{A_1 \eta_e, \Big\lvert \kappa^{s}_{c,m,j} +  \frac{{\eta}^{s}}{CM^2} \Big( \lvert{\sum_{i=k^*}^{p^*} i \tilde{\alpha}_{m,j,i, c} \beta_{c,i} {(\kappa^{s}_{c,m,j})}^{i-1} {(1 - {(\kappa^{s}_{c,m,j})}^2)}}\rvert \\
    &\quad + \lvert{\sum_{c' \neq c} \sum_{i=k^*}^{p^*} i \tilde{\alpha}_{m,j,i,c'} \beta_{c',i} {(\kappa^{s}_{c',m,j})}^{i-1}({w^*_{c}}^\top{w^*_{c'}} - \kappa^{s}_{c,m,j}\kappa^{s}_{c',m,j})}\rvert \\
    &\quad + \lvert{\sum_{c' \in [C]} \sum_{i=k^*}^{p^*} i s_{c'} \tilde{\alpha}_{m,j,i,c'} \gamma_{i} {(\kappa^{s}_{g,m,j})}^{i-1} ({{w^*_{c}}}^\top w^*_g)} + \sum_{c' \in [C]} \sum_{i=k^*}^{p^*} i s_{c'} \tilde{\alpha}_{m,j,i,c'} \gamma_{i} {(\kappa^{s}_{g,m,j})}^{i} \Big\rvert \Big) \\
    &\quad +  \frac{\lvert{\kappa^s_{c,m,j}}\rvert {({\eta}^s)}^2 {A_1}^2 d}{2} 
    + \frac{{({\eta}^s)}^3 {A_1}^3 d^{\frac{3}{2}}}{2} + {w^*_c}^\top (I_d - w^s_{m,j}{w^s_{m,j}}^\top) {{\Xi}^s}_{w_{m,j}} \Big\lvert \Big\} \\  
    &\leq \max\Big\{A_1 \eta_e, \Big\lvert \kappa^{s}_{c,m,j} +  \frac{{\eta}^{s}}{CM^2} \Big( k^* \lvert{\tilde{\alpha}_{m,j,k^*,c} \beta_{c,k^*}}\rvert {\lvert{\kappa^{s}_{c,m,j}}\rvert}^{k^*-1} + {p^*}^2 \max_{i} {\lvert{\tilde{\alpha}_{m,j,i,c} \beta_{c,i}}\rvert} {\lvert{\kappa^{s}_{c,m,j}}\rvert}^{k^*} \\
    &\quad + C{p^*}^2 \max_{c' \neq c,i} {\lvert{\tilde{\alpha}_{m,j,i,c'} \beta_{c',i}}\rvert} \max_{c' \neq c} {{\lvert{\kappa^{s}_{c',m,j}}\rvert}^{k^*-1}} (\max_{c' \neq c} {\lvert{{w^*_c}^\top w^*_{c'}}\rvert} + \lvert{\kappa^{s}_{c,m,j}}\rvert) \\
    &\quad + C{p^*}^2 \max_{c' \neq c, i}\lvert{\tilde{\alpha}_{m,j,i,c'} \beta_{c',i}}\rvert{\lvert{\kappa^{s}_{g,m,j}}\rvert}^{k^* -1} (\lvert{{w^*_c}^\top w^*_g}\rvert + \lvert{\kappa^{s}_{g,m,j}}\rvert)\Big) \\
    &\quad +  \frac{\lvert\kappa^{s}_{c,m,j}\rvert {({\eta}^s)}^2 {A_1}^2 d}{2} 
    + \frac{{({\eta}^s)}^3 {A_1}^3 d^{\frac{3}{2}}}{2} + {\eta}^s {w^*_{c}}^\top (I_d - w^s_{m,j}{w^s_{m,j}}^\top) {{\Xi}^s}_{w_{m,j}} \Big\rvert \Big\} \\
    &\leq \max\Big\{A_1 \eta_e,\max_{c \neq c^*_m}{\lvert{\kappa^{0}_{c,m,j}}\rvert} + (1 + \frac{2}{3}a_2) \sum_{s'=0}^{s}\frac{{\eta}^{s'}}{CM^2} k^* \max_{c'}{\lvert{\tilde{\alpha}_{m,j,k^*,c'} \beta_{c',k^*}}\rvert} \\
    &\quad \cdot \max\big\{\max_{c \neq c^*_m}{\lvert{\kappa^{s'}_{c,m,j}}\rvert}^{k^*-1}, {(\frac{1}{2}d^{-\frac{1}{2}})^{k^*-1}} \big\} + 
    \max_{c \neq c^*_m}\Big\lvert{\sum_{s'=0}^{s}{\eta}^{s'} {w^*_c}^\top (I_d - w^s_{m,j}{w^s_{m,j}}^\top) {{\Xi}^s}_{w_{m,j}}}\Big\rvert \Big\}.
\end{align*}
Since, ${w^*_c}^\top {w^*_{c'}} = \tilde{O}(d^{-\frac{1}{2}})$ for all $c \neq c'$, ${w^*_g}^\top w^*_c = \tilde{O}(d^{-\frac{1}{2}})$ for all $c$, $\kappa^s_{c,m,j}=\tilde{O}(d^{-\frac{1}{2}})$ for all $(c,j) \neq (c^*_m, j^*_m)$ and $\kappa^s_{g,m,j}=\tilde{O}(d^{-\frac{1}{2}})$ for all $j \in [J]$, the term ${k^*} \max_{i} \lvert{\tilde{\alpha}_{m,j,k^*,c} \beta_{c,k^*}}\rvert {\lvert{\kappa^{s}_{c,m,j}}\rvert}^{k^*-1}$ subsumes the remaining terms within the expression $\frac{{\eta}^{s}}{CM^2}(\cdot)$ with wight $\frac{1}{3}a_2$ in the fourth inequality. $\frac{\lvert\kappa^{s}_{c,m,j}\rvert {{(\eta}^s)}^2 {A_1}^2 d}{2} + \frac{{({\eta}^s)}^3 {A_1}^3 d^{\frac{3}{2}}}{2}$ are also subsumed with weight $\frac{1}{3}a_2$ by $\eta^s \leq a_4 d^{-\frac{k^*}{2}}$ and $\lvert \kappa^s_{c,m,j}\rvert \leq A_2A_3d^{-\frac{1}{2}}$ in the fourth inequality.

\rk{By upper bounding the noise term in the same way, we have}
\rk{
\begin{align*}
    \Big\lvert{\sum_{s'=0}^{s}{\eta}^s {w^*_c}^\top (I_d - w^s_{m,j}{w^s_{m,j}}^\top) {{\Xi}^s}_{w_{m,j}}}\Big\rvert \leq \begin{cases}
        \frac{1}{2}a_2d^{-\frac{1}{2}}, \quad \text{if } s \leq A_2d^{k^*-1}, \\
        \frac{a_2\eta_e s}{3CM^2}k^* \max_{c'}{\lvert{\tilde{\alpha}_{m,j,k^*,c'} \beta_{c',k^*}}\rvert} {(\frac{1}{2}d^{-\frac{1}{2}})}^{k^*-1}, \quad \text{if } s > A_2d^{k^*-1}
    \end{cases}
\end{align*}
with high probability.
}

Thus, 
\begin{align*}
    \max_{c \in [C]}{\lvert{\kappa^s_{c,m,j}}\rvert} &\leq (1+a_2)\max\{\max_{c \neq c^*_m}{\lvert{\kappa^0_{c,m,j}}\rvert}, \frac{1}{2}d^{-\frac{1}{2}}\} \\
    &\quad + (1+a_2) \sum_{s'=0}^{s}\frac{{\eta}^{s'}}{CM^2} k^* \max_{c'}{\lvert{\tilde{\alpha}_{m,j,k^*,c'} \beta_{c',k^*}}\rvert} \max\Big\{\max_{c \neq c^*_m}{\lvert{\kappa^{s'}_{c,m,j}}\rvert}, R^{s'}_{\text{I},c}\Big\}^{k^*-1}. 
\end{align*}

\rk{
In contrast, the following inequality holds for $R^s_{\text{I},c}$:
\begin{align*}
     R^{s+1}_{\text{I},c} &\leq (1+a_2)\max\{\max_{c \neq c^*_m}{\lvert{\kappa^{s}_{c,m,j}}\rvert}, \frac{1}{2}d^{-\frac{1}{2}}\} + (1+a_2) \sum_{s'=0}^{s}\frac{{\eta}^{s'}}{CM^2} k^* \max_{c'}{\lvert{\tilde{\alpha}_{m,j,k^*,c'} \beta_{c',k^*}}\rvert}  {(R^{s'}_{\text{I},c})}^{k^*-1}.
\end{align*}
}

\rk{Therefore, by induction, we establish that $\lvert{\kappa^s_{c,m,j}}\rvert$ is upper bounded by $R^{s}_{\text{I},c}$ for all $j \notin \mathcal{J}_m^*$ and $s=0, 1, \ldots, t+1$ with high probability.
}
\begin{align*}
    \lvert{\kappa^{s+1}_{g,m,j}}\rvert &\leq \max\Big\{A_1 \eta_e,\Big\lvert \kappa^{s}_{g,m,j} + \frac{{\eta}^{s}}{CM^2} 
    \sum_{c' \in [C]} \sum_{i=k^*}^{p^*} \Big(i \tilde{\alpha}_{m,j,i, c'} \beta_{c',i} {(\kappa^{s}_{c',m,j})}^{i-1} ({w^*_{g}}^\top w^*_{c'} - {\kappa^{s}_{g,m,j}}{\kappa^{s}_{c',m,j}}) \\
    &\quad + i \tilde{\alpha}_{m,j,i, c'} s_{c'} \gamma_{i} {(\kappa^{s}_{g,m,j})}^{i-1} (1 - {({\kappa^{s}_{g,m,j}})}^2) 
    \Big) + \frac{\lvert{\kappa^{s}_{g,m,j}}\rvert {({\eta}^s)}^2 {A_1}^2 d}{2} + \frac{{({\eta}^s)}^3 {A_1}^3 d^{\frac{3}{2}}}{2} \\ 
     &\quad + {w^*_{g}}^\top (I_d - w^s_{m,j}{w^s_{m,j}}^\top) {{\Xi}^s}_{w_{m,j}} \Big\rvert \Big\} \\
     &\leq \max\Big\{A_1 \eta_e, \Big\lvert \kappa^{s}_{g,m,j} + \frac{{\eta}^{s}}{CM^2} \Big( \lvert{\sum_{c' \in [C]} \sum_{i=k^*}^{p^*} i \tilde{\alpha}_{m,j,i, c'} \beta_{c',i} {(\kappa^{s}_{c',m,j})}^{i-1} ({{w^*_g}^\top {w^*_{c'} - \kappa^{s}_{g,m,j} \kappa^{s}_{c',m,j}}) }}\rvert \\
     &\quad + \lvert{\sum_{c' \in [C]} \sum_{i=k^*}^{p^*} i s_{c'} \alpha_{m,j,i, c'} \gamma_{i} {(\kappa^{s}_{g,m,j})}^{i-1} (1-{(\kappa^{s}_{g,m,j})}^2) }\rvert \Big) \\
     &\quad + \frac{\lvert{\kappa^{t}_{g,m,j}}\rvert {({\eta}^t)}^2 {A_1}^2 d}{2}
     + \frac{{({\eta}^s)}^3 {A_1}^3 d^{\frac{3}{2}}}{2} + \eta^s {w^*_{c}}^\top (I_d - w^s_{m,j}{w^s_{m,j}}^\top) {{\Xi}^s}_{w_{m,j}} \Big\rvert \Big\} \\
     &\leq \max\Big\{A_1 \eta_e, \Big\lvert \kappa^{s}_{g,m,j} + \frac{{\eta}^{s}}{CM^2} \Big( C{p^*}^2 \max_{c',i}{\lvert{\tilde{\alpha}_{m,j,i, c'} \beta_{c',i}}\rvert} \max_{c'}{\lvert{\kappa^{s}_{c',m,j}}\rvert}^{k^*-1} (\max_{c'}{\lvert{{w^*_g}^\top w^*_{c'}}\rvert} + \lvert{\kappa^{s}_{g,m,j}}\rvert)\\
     &\quad + {k^*} {\lvert{\sum_{c' \in [C]}{s_{c'}}\tilde{\alpha}_{m,j,k^*, c'} \gamma_{k^*}}\rvert} {\lvert{\kappa^{s}_{g,m,j}}\rvert}^{k^*-1} + C{p^*}^2 \max_{c', i}{\lvert{s_{c'} \tilde{\alpha}_{m,j,i, c'}\gamma_{i}}\rvert} {\lvert{\kappa^{s}_{g,m,j}}\rvert}^{k^*} \Big) \\
     &\quad + \frac{\lvert{\kappa^{s}_{g,m,j}}\rvert {({\eta}^s)}^2 {A_1}^2 d}{2}
     + \frac{{({\eta}^s)}^3 {A_1}^3 d^{\frac{3}{2}}}{2} + \eta^s {w^*_{g}}^\top (I_d - w^s_{m,j}{w^s_{m,j}}^\top) {{\Xi}^s}_{w_{m,j}} \Big\rvert \Big\} \\
     &\leq \max\Big\{A_1 \eta_e,\max_{c \neq c^*_m}{\lvert{\kappa^{0}_{c,m,j}}\rvert} + (1 + \frac{2}{3}a_2) \sum_{s'=0}^{s}\frac{{\eta}^{s'}}{CM^2} k^* {\lvert{\sum_{c' \in [C]}{s_{c'}}\tilde{\alpha}_{m,j,k^*, c'} \gamma_{k^*}}\rvert} 
     \max\big\{{\lvert{\kappa^{s'}_{g,m,j}}\rvert}^{k^*-1}, {(\frac{1}{2}d^{-\frac{1}{2}})^{k^*-1}} \big\} \\
     &\quad + \max_{c \neq c^*_m}\Big\lvert{\sum_{s'=0}^{s}{\eta}^{s'} {w^*_g}^\top (I_d - w^{s'}_{m,j}{w^{s'}_{m,j}}^\top) {{\Xi}^{s'}}_{w_{m,j}}}\Big\rvert \Big\}.
\end{align*}
Since, ${w^*_c}^\top {w^*_{c'}} = \tilde{O}(d^{-\frac{1}{2}})$ for all $c \neq c'$, ${w^*_g}^\top w^*_c = \tilde{O}(d^{-\frac{1}{2}})$ for all $c$, $\kappa^s_{c,m,j}=\tilde{O}(d^{-\frac{1}{2}})$ for all $(c,j) \neq (c^*_m, j^*_m)$ and $\kappa^s_{g,m,j}=\tilde{O}(d^{-\frac{1}{2}})$ for all $j \in [J]$, the term ${k^*} \max_{i} \lvert{\sum_{c' \in [C]}s_{c'}\tilde{\alpha}_{m,j,k^*,c'} \gamma_{k^*}}\rvert {\lvert{\kappa^{s}_{g,m,j}}\rvert}^{k^*-1}$ subsumes the remaining terms within the expression $\frac{{\eta}^{s}}{CM^2}(\cdot)$ with wight $\frac{1}{3}a_2$ in the fourth inequality. $\frac{\lvert\kappa^{s}_{g,m,j}\rvert {{(\eta}^s)}^2 {A_1}^2 d}{2} + \frac{{({\eta}^s)}^3 {A_1}^3 d^{\frac{3}{2}}}{2}$ are also subsumed with weight $\frac{1}{3}a_2$ by $\eta^s \leq a_4 d^{-\frac{k^*}{2}}$ and $\lvert \kappa^s_{c,m,j}\rvert \leq A_2A_3d^{-\frac{1}{2}}$ in the fourth inequality.

\rk{By upper bounding the noise term in the same way, we have}
\rk{
\begin{align*}
    \Big\lvert{\sum_{s'=0}^{s}{\eta}^{s'} {w^*_g}^\top (I_d - w^{s'}_{m,j}{w^{s'}_{m,j}}^\top) {{\Xi}^{s'}}_{w_{m,j}}}\Big\rvert \leq \begin{cases}
        \frac{1}{2}a_2d^{-\frac{1}{2}}, \quad \text{if } s \leq A_2d^{k^*-1}, \\
        \frac{a_2\eta_e s}{3CM^2}k^* {\lvert{\sum_{c' \in [C]}{s_{c'}}\tilde{\alpha}_{m,j,k^*, c'} \gamma_{k^*}}\rvert} {(\frac{1}{2}d^{-\frac{1}{2}})}^{k^*-1}, \quad \text{if } s > A_2d^{k^*-1}
    \end{cases}
\end{align*}
with high probability.
}

Thus,
\begin{align*}
    {\lvert{\kappa^s_{g,m,j}}\rvert} &\leq (1+a_2)\max\Big\{{\lvert{\kappa^0_{g,m,j}}\rvert}, \frac{1}{2}d^{-\frac{1}{2}}\Big\} \\
    &\quad + (1+a_2) \sum_{s'=0}^{s}\frac{{\eta}^{s'}}{CM^2} k^* {\lvert{\sum_{c' \in [C]}{s_{c'}\tilde{\alpha}_{m,j,k^*,c'}\gamma_{k^*}}}\rvert} \max\Big\{{\lvert{\kappa^{s'}_{g,m,j}}\rvert}, R^{s'}_{\text{I},g}\Big\}^{k^*-1} \\
\end{align*}

\rk{
In contrast, the following inequality holds for $R^s_{\text{I},c}$:
\begin{align*}
     R^{s+1}_{\text{I},g} &\leq (1+a_2)\max\{\max_{c \neq c^*_m}{\lvert{\kappa^{s}_{g,m,j}}\rvert}, \frac{1}{2}d^{-\frac{1}{2}}\} + (1+a_2) \sum_{s'=0}^{s}\frac{{\eta}^{s'}}{CM^2} k^*  {\lvert{\sum_{c' \in [C]}{s_{c'}\tilde{\alpha}_{m,j,k^*,c'}\gamma_{k^*}}}\rvert}  {(R^{s'}_{\text{I},g})}^{k^*-1}.
\end{align*}
}
Therefore, by induction, we establish that $\lvert{\kappa^s_{g,m,j}}\rvert$ is upper bounded by $R^{s}_{\text{I},g}$ for all $j \notin \mathcal{J}_m^*$ and $s=0, 1, \ldots, t+1$ with high probability.
Finally, we consolidate the auxiliary sequences $Q^s_{\text{I},c}$, $Q^s_{\text{I},g}$, $R^s_{\text{I},c}$, and  $R^s_{\text{I},g}$ into a unified auxiliary sequence $Q^s_{\text{I}}$\rk{, which serves as an upper bound for $\lvert{\kappa^{s+1}_{c,m,j}}\rvert$ and $\lvert{\kappa^{s+1}_{g,m,j}}\rvert$ for all $(c,j) \neq (c^*_m, j^*_m)$, where $m \in \expertsforc$.}

Clearly due to $A_3 \sum_{s'=0}^{s} \frac{{\eta}^{s'}}{CM^2} k^* {\tilde{\alpha}_{m,j,k^*,c^*_m} \beta_{c^*_m,k^*}}{(\kappa^{s'}_{c^*_m,m,j^*_m})}^{k^* -1} d^{-\frac{1}{2}} > 0,$ the upper bound provided by $R^s_{\text{I},c}$ is subsumed by the upper bound provided by $Q^s_{\text{I},c}$ and the upper bound provided by $R^s_{\text{I},g}$ is subsumed by the upper bound provided by $Q^s_{\text{I},g}$. Consequently, for all $(c,j) \neq (c^*_m, j^*_m)$, $\kappa^s_{c,m,j}$ and $\kappa^s_{g,m,j}$ is upper bounded by $Q^s_{\text{I}}$ for all $s=0, 1, \ldots, t+1$ with high probability. 

$\left(Q^s_{\text{I},g}\right)_{s=0}^{t+1}$ is expressed as
\begin{align*}
    Q^{s+1}_{\text{I}} = Q^{s}_{\text{I}} &+ (1+a_2) \frac{{\eta}^{s}}{CM^2} k^* \max\{\max_{c'}{\lvert{\tilde{\alpha}_{m,j,k^*,c'} \beta_{c',k^*}}\rvert}, {\lvert{\sum_{c' \in [C]}{s_{c'}\tilde{\alpha}_{m,j,k^*,c'}\gamma_{k^*}}}\rvert}\} {(Q^s_{\text{I}})}^{k^*-1} \\
    &\quad + A_3 \frac{{\eta}^{s}}{CM^2} k^* {\tilde{\alpha}_{m,j,k^*,c^*_m} \beta_{c^*_m,k^*}}{(\kappa^{s+1}_{c^*_m,m,j^*_m})}^{k^* -1} d^{-\frac{1}{2}}
\end{align*}
with $Q^0_{\text{I},g} = (1+a_2) \max\{{\max_{c}{\lvert{\kappa^s_{c,m,j}}\rvert}, \lvert{\kappa^{s}_{g,m,j}}\rvert}, \frac{1}{2}d^{-\frac{1}{2}}\}.$
\end{proof}

\rk{Based on \cref{lemma:phase1-auxilirarysequence}, we prove that $\lvert{\kappa^{s}_{c,m,j}}\rvert$ for $c\neq c_m^*\text{ or }j\notin \mathcal{J}_m^*$, and $\lvert{\kappa^{s}_{g,m,j}}\rvert$ remains upper bounded throughout the trajectory by induction.}
\rk{\begin{lemma}\label{lemma:phase1-induction} 
Consider the expert $m \in \expertsforc$. Let ${w_{c}^*}^\top w_{c'}^* \leq A_4 d^{- \frac{1}{2}} = \tilde{O}(d^{-1/2})$ for all $c \neq c'$, ${w_{c}^*}^\top w_{g}^* \leq A_4 d^{- \frac{1}{2}} = \tilde{O}(d^{-1/2})$ for all $c$, and ${\eta}^{s} = \eta_{e} \leq a_4 d^{- \frac{k^*}{2}}$. For all $s = 0, 1, \ldots, t$, suppose that
\begin{itemize}
\item $\kappa^{s}_{c^*_m,m,j^*_m} \leq a_2$,
\item $\max\{\lvert{\kappa^{s}_{c,m,j}}\rvert, \lvert{\kappa^{s}_{g,m,j}}\rvert\}  \leq \kappa^{s}_{c^*_m,m,j^*_m}$ for all $(c,j) \neq (c^*_m, j^*_m)$,
\item $\max\{\lvert{\kappa^{s}_{c,m,j}}\rvert, \lvert{\kappa^{s}_{g,m,j}}\rvert\} \leq 4A_3d^{-\frac{1}{2}}$ for all $(c,j) \neq (c^*_m, j^*_m)$. 
\end{itemize}
Then, if we have $\kappa^{t+1}_{c^*_m,m,j^*_m} \leq a_2$,
\begin{itemize}
    \item $\max\{\lvert{\kappa^{t+1}_{c,m,j}}\rvert, \lvert{\kappa^{t+1}_{g,m,j}}\rvert\} \leq \kappa^{s}_{c^*_m,m,j^*_m}$, for all $(c,j) \neq (c^*_m, j^*_m)$,
    \item $\max\{\lvert{\kappa^{t+1}_{c,m,j}}\rvert, \lvert{\kappa^{t+1}_{g,m,j}}\rvert\} \leq 4A_3d^{-\frac{1}{2}}$, for all $(c,j) \neq (c^*_m, j^*_m)$,
\end{itemize}
hold with high probability.
\end{lemma}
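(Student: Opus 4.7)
The plan is to reduce the vector-valued dynamics to two scalar recursions via Lemma~\ref{lemma:phase1-auxilirarysequence} and then exploit the multiplicative head start in the initial alignment guaranteed by Corollary~\ref{corollary:phase1-init}. Invoking Lemma~\ref{lemma:phase1-auxilirarysequence} under the three inductive hypotheses yields, with high probability, $\kappa^{t+1}_{c^*_m, m, j^*_m} \geq P^{t+1}_{\text{I}}$ and $\max\{|\kappa^{t+1}_{c,m,j}|, |\kappa^{t+1}_{g,m,j}|\} \leq Q^{t+1}_{\text{I}}$ for every $(c,j) \neq (c^*_m, j^*_m)$, so it suffices to establish (i) $Q^{t+1}_{\text{I}} \leq 4 A_3 d^{-1/2}$ and (ii) $Q^{t+1}_{\text{I}} \leq \kappa^{t+1}_{c^*_m, m, j^*_m}$.

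To analyze $Q^{s}_{\text{I}}$, I split it as $Q^{s}_{\text{I}} = \bar{Q}^{s} + \Delta^{s}$, where $\bar Q^{s}$ follows the homogeneous portion of the recursion (matching the $P$-recursion up to the constants $(1+a_2)$ and the maximized Hermite coefficient) and $\Delta^{s}$ collects the $d^{-1/2}$-scaled forcing driven by $\kappa^{s+1}_{c^*_m, m, j^*_m}$. For $\Delta^{t+1}$, I telescope $\sum_{s=0}^{t} \frac{\eta^s}{CM^2} k^* \tilde{\alpha} \beta (\kappa^{s+1}_{c^*_m, m, j^*_m})^{k^*-1}$ against the leading term in the $\kappa^{s}_{c^*_m, m, j^*_m}$-update of Lemma~\ref{lemma:phase1-update}; this leading term dominates the cross-task, quadratic, and noise contributions as already quantified inside the proof of Lemma~\ref{lemma:phase1-auxilirarysequence}, so the sum is at most $\kappa^{t+1}_{c^*_m, m, j^*_m} - \kappa^{0}_{c^*_m, m, j^*_m} + o(1) \leq a_2 + o(1)$, using also Lemma~\ref{lemma-appendix-moe-coefficient} to equate Hermite coefficients up to $\tilde{O}(d^{-1/2})$. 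Consequently $\Delta^{t+1} \leq \tilde{O}(a_2 A_3 d^{-1/2})$, which is absorbed into the margin between $3A_3 d^{-1/2}$ and $4A_3 d^{-1/2}$ by choosing $a_2$ sufficiently small.

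The main obstacle is bounding the homogeneous part $\bar Q^{s}$ uniformly by roughly $3 A_3 d^{-1/2}$. Both $\bar Q^{s}$ and $P^{s}_{\text{I}}$ obey power-law recursions of the form $u^{s+1} = u^s + c_u \eta^s (u^s)^{k^*-1}/(CM^2)$ with initial values at the $d^{-1/2}$ scale, and Corollary~\ref{corollary:phase1-init} supplies exactly the multiplicative gap between their initial logarithmic growth rates,
\begin{equation}
    c_P (P^{0}_{\text{I}})^{k^*-2} \geq c_{\bar Q}(\bar Q^{0})^{k^*-2} + a \cdot (\kappa^{0}_{c^*_m, m, j^*_m})^{k^*-2},
\end{equation}
for some constant $a > 0$ independent of $d$. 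Applying the discrete Bihari--LaSalle comparison of Lemma~\ref{lemma:biharigronwall} with this constant-factor gap, $P^{s}_{\text{I}}$ escapes the $d^{-1/2}$-scale within time $T_1 \lesssim \tilde\Theta(\eta_e^{-1} d^{(k^*-2)/2})$, while the corresponding blow-up denominator for $\bar Q^{s}$ remains bounded away from zero, pinning $\bar Q^{t+1} \leq 3 A_3 d^{-1/2}$ as long as $\kappa^{s}_{c^*_m, m, j^*_m} \leq a_2$. Summing gives (i); for (ii), I split into cases: if $\kappa^{t+1}_{c^*_m, m, j^*_m} \geq 4 A_3 d^{-1/2}$, then (i) directly implies (ii); otherwise a sharper application of the Bihari--LaSalle comparison yields $\bar Q^{t+1} \leq (1-\Omega(1)) \kappa^{t+1}_{c^*_m, m, j^*_m}$, and combining with the $\Delta^{t+1}$ bound closes the induction.
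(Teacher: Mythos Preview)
Your telescoping bound on $\Delta^{t+1}$ and your Bihari--LaSalle comparison for the homogeneous sequence $\bar Q^s$ are both correct in isolation, and they match the paper's two main ingredients almost exactly. The gap is in the additive splitting $Q^{s}_{\text{I}} = \bar Q^{s} + \Delta^{s}$ itself: this identity does not hold for a nonlinear recursion. If $\bar Q$ is defined as the solution of the homogeneous recursion and $\Delta$ as the cumulative forcing, then already at $s=2$ one has
\[
Q^{2}_{\text{I}} = Q^{1}_{\text{I}} + c_1 (Q^{1}_{\text{I}})^{k^*-1} + \text{forcing}^1 > \bar Q^{1} + c_1(\bar Q^{1})^{k^*-1} + \Delta^{1} + \text{forcing}^1 = \bar Q^{2} + \Delta^{2},
\]
because $Q^{1}_{\text{I}} = \bar Q^{1} + \Delta^{1} > \bar Q^{1}$ and the map $u\mapsto u^{k^*-1}$ is increasing. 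The homogeneous term in the $Q$-recursion feeds on the full $Q$, not on $\bar Q$, so $\bar Q + \Delta$ is a strict \emph{under}estimate of $Q$ once $\Delta>0$. Your bound $\Delta^{t+1}=\tilde O(a_2 A_3 d^{-1/2})$ does not by itself repair this, because at intermediate times $\Delta^{s}$ and $\bar Q^{s}$ can be of the same order, giving a multiplicative error of $2^{k^*-1}$ per step in the homogeneous growth.

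The paper handles exactly this coupling by a case split rather than a linear decomposition: either the homogeneous term dominates the forcing for all $s\le t$ (then $Q^{s+1}_{\text{I}}\le Q^{s}_{\text{I}}+(1+2a_2)c_1(Q^{s}_{\text{I}})^{k^*-1}$, and your Bihari--LaSalle argument applies directly to $Q$), or at some first time $\tau_1$ the forcing overtakes. In the latter regime the paper shows the condition is \emph{absorbing}---once forcing dominates it continues to dominate, because $Q$ then grows only via the telescoped forcing and stays a $d^{-1/(2(k^*-1))}$ factor below $P$. This yields $Q^{t+1}_{\text{I}}\le Q^{\tau_1}_{\text{I}}+3A_3 d^{-1/2}\le 4A_3 d^{-1/2}$ and $Q^{t+1}_{\text{I}}\le P^{t+1}_{\text{I}}$. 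Your two pieces slot exactly into these two cases; what is missing is the case distinction and the absorbing-state check that lets you drop the cross term $c_1[(Q^{s}_{\text{I}})^{k^*-1}-(\bar Q^{s})^{k^*-1}]$.
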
}

\begin{proof}
    To begin, consider the the case when
    \begin{align*}
        &(1+a_2) \frac{{\eta}^{s}}{CM^2} k^* \max\{\max_{c'}{\lvert{\tilde{\alpha}_{m,j,k^*,c'} \beta_{c',k^*}}\rvert}, {\lvert{\sum_{c' \in [C]}{s_{c'}\tilde{\alpha}_{m,j,k^*,c'}\gamma_{k^*}}}\rvert}\} {(Q^s_{\text{I}})}^{k^*-1} \\
        &\quad > A_3 \frac{{\eta}^{s}}{CM^2} k^* {\tilde{\alpha}_{m,j,k^*,c^*_m} \beta_{c^*_m,k^*}}{(P^s_{\text{I}})}^{k^* -1} d^{-\frac{1}{2}}
    \end{align*}
    holds for all $s=0, 1, \ldots, t$. 
    
    Due to Lemma~\ref{lemma:phase1-auxilirarysequence}, 
    \begin{align*}
        Q^{s+1}_{\text{I}} \leq Q^{s}_{\text{I}} &+ (1+2a_2) \frac{{\eta}^{s}}{CM^2} k^* \max\{\max_{c'}{\lvert{\tilde{\alpha}_{m,j,k^*,c'} \beta_{c',k^*}}\rvert}, {\lvert{\sum_{c' \in [C]}{s_{c'}\tilde{\alpha}_{m,j,k^*,c'}\gamma_{k^*}}}\rvert}\} {(Q^s_{\text{I}})}^{k^*-1}
    \end{align*}
    holds for all $s=0, 1, \ldots, t$. 
    
    By applying ~\cref{lemma:biharigronwall} \rk{to $Q^{s}_{\text{I}}$},
    \begin{align*}
        Q^{s}_{\text{I}} \leq \frac{Q^{0}_{\text{I}}}{{(1 - \eta_e k^*(k^*-2)(1+3a_2)\max\{\max_{c'}{\lvert{\tilde{\alpha}_{m,j,k^*,c'} \beta_{c',k^*}}\rvert}, {\lvert{\sum_{c' \in [C]}{s_{c'}\tilde{\alpha}_{m,j,k^*,c'}\gamma_{k^*}}}\rvert}\}C^{-1}M^{-2}{(Q^0_{\text{I}})}^{k^*-2}s)}^{\frac{1}{k^*-2}}}
    \end{align*}
    holds for all $s=0, 1, \ldots, t$,
    
    \rk{where we used $\big(1 + (1+2a_2) \frac{{\eta}^{s}}{CM^2} k^* \max\{\max_{c'}{\lvert{\tilde{\alpha}_{m,j,k^*,c'} \beta_{c',k^*}}\rvert}, {\lvert{\sum_{c' \in [C]}{s_{c'}\tilde{\alpha}_{m,j,k^*,c'}\gamma_{k^*}}}\rvert}\}\big)^{k^*-1} - 1 \leq \frac{a_2}{1 + 2a_2}$.}

    \rk{By applying ~\cref{lemma:biharigronwall} to $P^{s}_{\text{I}}$},
    \begin{align*}
        P^{s}_{\text{I}} \geq \frac{P^{0}_{\text{I}}}{{(1 - \eta_e k^*(k^*-2)(1-a_2), \tilde{\alpha}_{m,j,k^*,c^*_m} \beta_{c^*_m,k^*}C^{-1}M^{-2}{(P^0_{\text{I}})}^{k^*-2}s)}^{-\frac{1}{k^*-2}}}
    \end{align*}
    holds for all $s=0, 1, \ldots, t$.
    
    From \cref{corollary:phase1-init}, \cref{lemma-appendix-moe-coefficient}, and \cref{lemma-appendix-moe-coefficient2}, \rk{it hold that $Q^{0}_{\text{I}} < P^{0}_{\text{I}}$ and $(1+3a_2)\max\{\max_{c'}{\lvert{\tilde{\alpha}_{m,j,k^*,c'} \beta_{c',k^*}}\rvert}, {\lvert{\sum_{c' \in [C]}{s_{c'}\tilde{\alpha}_{m,j,k^*,c'}\gamma_{k^*}}}\rvert}\}\\{(\max\{{\max_{c}{\lvert{\kappa^s_{c,m,j}}\rvert}, \lvert{\kappa^{s}_{g,m,j}}\rvert}, \frac{1}{2}d^{-\frac{1}{2}}\})}^{k^*-2}\leq (1-a_2)\tilde{\alpha}_{m,j,k^*,c^*_m} \beta_{c^*_m,k^*} {(\kappa^s_{c^*_m,m,j^*_m})}^{k^*-2}$, which establish that $Q^{s}_{\text{I}} < P^{s}_{\text{I}}$.}

    Thus, $Q^{t+1}_{\text{I}} \leq P^{t+1}_{\text{I}}$, which indicates that ${\max\{{\max_{c}{\lvert{\kappa^{t+1}_{c,m,j}}\rvert}, \lvert{\kappa^{t+1}_{g,m,j}}\rvert}\}} \leq \kappa^{t+1}_{c^*_m,m,j^*_m}$ \rk{for all $(c,j) \neq (c^*_m, j^*_m)$}.
    
    Since $P^s_{\text{I}} \leq \kappa^s_{c^*_m, m, j^*_m} \leq a_2$,
    \begin{align*}
        t &\leq \frac{{\eta_e}^{-1}{CM^2({(P^0_{\text{I}})}^{-k^*+2} - {(a_2)}^{-k^* + 2})}}{k^*(k^*-2)(1-a_2)\tilde{\alpha}_{m,j,k^*,c^*_m} \beta_{c^*_m,k^*}} \\
        &\quad \leq \frac{{\eta_e}^{-1}{C}M^2(1+5a_2){\frac{{\max\{\max_{c'}{\lvert{\tilde{\alpha}_{m,j,k^*,c'} \beta_{c',k^*}}\rvert}},{\lvert{\sum_{c' \in [C]}{s_{c'}\tilde{\alpha}_{m,j,k^*,c'}\gamma_{k^*}}}\rvert}\}}{\tilde{\alpha}_{m,j,k^*,c^*_m} \beta_{c^*_m,k^*}}}{(P^0_{\text{I}})}^{-k^*+2}}{k^*(k^*-2)(1+3a_2){\max\{\max_{c'}{\lvert{\tilde{\alpha}_{m,j,k^*,c'} \beta_{c',k^*}}\rvert}},{\lvert{\sum_{c' \in [C]}{s_{c'}\tilde{\alpha}_{m,j,k^*,c'}\gamma_{k^*}}}\rvert}\}}.
    \end{align*}
    In contrast, $Q^{t+1}_{\text{I}} > A_3d^{-\frac{1}{2}}$ holds only if
    \begin{align*}
        t &\geq 
        \rk{\frac{{\eta_e}^{-1} CM^2 \big({(Q^0_{\text{I}})}^{-k^*+2}-{(A_3d^{-\frac{1}{2}})}^{-k^*+2}\big)}{k^*(k^*-2)(1+3a_2){\max\{\max_{c'}{\lvert{\tilde{\alpha}_{m,j,k^*,c'} \beta_{c',k^*}}\rvert}},{\lvert{\sum_{c' \in [C]}{s_{c'}\tilde{\alpha}_{m,j,k^*,c'}\gamma_{k^*}}}\rvert}\}} - 1} \\
        &\quad \geq \frac{{\eta_e}^{-1}CM^2(1-2a_2){(Q^0_{\text{I}})}^{-k^*+2}}{k^*(k^*-2)(1+3a_2){\max\{\max_{c'}{\lvert{\tilde{\alpha}_{m,j,k^*,c'} \beta_{c',k^*}}\rvert}},{\lvert{\sum_{c' \in [C]}{s_{c'}\tilde{\alpha}_{m,j,k^*,c'}\gamma_{k^*}}}\rvert}\}}
    \end{align*}
    \rk{
    where we used ${(A_3d^{-\frac{1}{2}})}^{-k^*+2} \leq a_2(\kappa^s_{c^*_m,m,j^*_m})^{-k^*+2} \leq a_2{(P^0_{\text{I}})}^{-k^*+2} \leq a_2{(Q^0_{\text{I}})}^{-k^*+2}$.
    }
    
    From ~\cref{corollary:phase1-init}, \cref{lemma-appendix-moe-coefficient}, and \cref{lemma-appendix-moe-coefficient2}, we have
    \begin{align*}
        &(1+8a_2) {\max\{\max_{c'}{\lvert{\tilde{\alpha}_{m,j,k^*,c'} \beta_{c',k^*}}\rvert}},{\lvert{\sum_{c' \in [C]}{s_{c'}\tilde{\alpha}_{m,j,k^*,c'}\gamma_{k^*}}}\rvert}\}  {\big(\max\{{\max_{c}{\lvert{\kappa^s_{c,m,j}}\rvert}, \lvert{\kappa^{s}_{g,m,j}}\rvert}, \frac{1}{2}d^{-\frac{1}{2}}\}\big)}^{k^*-2} \\
        &< \tilde{\alpha}_{m,j,k^*,c^*_m} \beta_{c^*_m,k^*} {(\kappa^0_{c^*_m, m,j^*_m})}^{k^*-2}.
    \end{align*}
    \rk{However, this leads to the contradiction that
    \begin{align*}
        {\eta_e}^{-1}CM^2(1-2a_2){(Q^0_{\text{I}})}^{-k^*+2} > {\eta_e}^{-1}{C}M^2(1+5a_2){\frac{{\max\{\max_{c'}{\lvert{\tilde{\alpha}_{m,j,k^*,c'} \beta_{c',k^*}}\rvert}},{\lvert{\sum_{c' \in [C]}{s_{c'}\tilde{\alpha}_{m,j,k^*,c'}\gamma_{k^*}}}\rvert}\}}{\tilde{\alpha}_{m,j,k^*,c^*_m} \beta_{c^*_m,k^*}}}{(P^0_{\text{I}})}^{-k^*+2}.
    \end{align*}
    }
    
    Thus, $Q^{t+1}_{\text{I}} > A_3 d^{-\frac{1}{2}}$ does not hold, which implies that $Q^{t+1}_{\text{I}} \leq A_3 d^{-\frac{1}{2}}$.

    Next, consider the case where 
    \begin{align*}
        &(1+a_2) \frac{{\eta}^{s}}{CM^2} k^* \max\{\max_{c'}{\lvert{\tilde{\alpha}_{m,j,k^*,c'} \beta_{c',k^*}}\rvert}, {\lvert{\sum_{c' \in [C]}{s_{c'}\tilde{\alpha}_{m,j,k^*,c'}\gamma_{k^*}}}\rvert}\} {(Q^s_{\text{I}})}^{k^*-1} \\
        &\quad > A_3 \frac{{\eta}^{s}}{CM^2} k^* {\tilde{\alpha}_{m,j,k^*,c^*_m} \beta_{c^*_m,k^*}}{(P^s_{\text{I}})}^{k^* -1} d^{-\frac{1}{2}}
    \end{align*}
    holds for all $s=0, 1, \ldots, \tau_1-1$, but
    \begin{align*}
        &(1+a_2) \frac{{\eta}^{s}}{CM^2} k^* \max\{\max_{c'}{\lvert{\tilde{\alpha}_{m,j,k^*,c'} \beta_{c',k^*}}\rvert}, {\lvert{\sum_{c' \in [C]}{s_{c'}\tilde{\alpha}_{m,j,k^*,c'}\gamma_{k^*}}}\rvert}\} {(Q^s_{\text{I}})}^{k^*-1} \\
        &\quad \leq A_3 \frac{{\eta}^{s}}{CM^2} k^* {\tilde{\alpha}_{m,j,k^*,c^*_m} \beta_{c^*_m,k^*}}{(P^s_{\text{I}})}^{k^* -1} d^{-\frac{1}{2}}
    \end{align*}
    holds \rk{for $s=\tau_1 \leq t$.}

    \rk{Here, suppose that}
    \begin{align*}
        Q^{s+1}_{\text{I}} \leq Q^s_{\text{I}} + 2a_2 {\frac{\eta_e}{CM^2}} k^*\tilde{\alpha}_{m,j,k^*,c^*_m} \beta_{c^*_m,k^*} {(P^s_{\text{I}})}^{k^*-1}
    \end{align*}
    holds for all $s=\tau_1, \tau+1, \ldots, \tau_2 \rk{\leq t}$.

    Then,
    \begin{align*}
        Q^{\tau_2+1}_{\text{I}} &=  Q^{\tau_1}_{\text{I}} + \sum_{s=\tau_1}^{\tau_2}2A_3\frac{\eta_e}{CM^2}k^*\tilde{\alpha}_{m,j,k^*,c^*_m} \beta_{c^*_m,k^*}{(P^{s}_{\text{I}})}^{k^*-1}d^{-\frac{1}{2}} \\
        &\leq Q^{\tau_1}_{\text{I}} + \frac{2A_3}{(1-a_2)d^{\frac{1}{2}}}(P^{\tau_2+1}_{\text{I}} - P^{\tau_1}_{\text{I}})\\
        &\leq {\Big(\frac{\tilde{\alpha}_{m,j,k^*,c^*_m} \beta_{c^*_m,k^*}}{\max\{\max_{c'}{\lvert{\tilde{\alpha}_{m,j,k^*,c'} \beta_{c',k^*}}\rvert}, {\lvert{\sum_{c' \in [C]}{s_{c'}\tilde{\alpha}_{m,j,k^*,c'}\gamma_{k^*}}}\rvert}\}}\cdot \frac{A_3}{(1+a_2)d^{\frac{1}{2}}} \Big)}^{\frac{1}{k^*-1}}P^{\tau_1}_{\text{I}} \\
        &\quad + \frac{2A_3}{(1-a_2)d^{\frac{1}{2}}}(P^{\tau_2+1}_{\text{I}}-P^{\tau_1}_{\text{I}}) \\
        &\leq {\Big(\frac{\tilde{\alpha}_{m,j,k^*,c^*_m} \beta_{c^*_m,k^*}}{\max\{\max_{c'}{\lvert{\tilde{\alpha}_{m,j,k^*,c'} \beta_{c',k^*}}\rvert}, {\lvert{\sum_{c' \in [C]}{s_{c'}\tilde{\alpha}_{m,j,k^*,c'}\gamma_{k^*}}}\rvert}\}}\cdot \frac{A_3}{(1+a_2)d^{\frac{1}{2}}} \Big)}^{\frac{1}{k^*-1}}P^{\tau_2+1}_{\text{I}}.
    \end{align*}

    \rk{
    Thus,
    \begin{align*}
        &(1+a_2) \frac{\eta_e}{CM^2} k^* \max\{\max_{c'}{\lvert{\tilde{\alpha}_{m,j,k^*,c'} \beta_{c',k^*}}\rvert}, {\lvert{\sum_{c' \in [C]}{s_{c'}\tilde{\alpha}_{m,j,k^*,c'}\gamma_{k^*}}}\rvert}\} {(Q^{\tau_2+1}_{\text{I}})}^{k^*-1} \\
        &\leq A_3 \frac{\eta_e}{CM^2} k^* {\tilde{\alpha}_{m,j,k^*,c^*_m} \beta_{c^*_m,k^*}}{(P^{\tau_2+1}_{\text{I}})}^{k^* -1} d^{-\frac{1}{2}}.
    \end{align*}
    We have $Q^{t+1}_{\text{I}} \leq P^{t+1}_{\text{I}}$ since 
    \begin{align*}
    \big(\frac{(1+a_2)\max\{\max_{c'}{\lvert{\tilde{\alpha}_{m,j,k^*,c'} \beta_{c',k^*}}\rvert}, {\lvert{\sum_{c' \in [C]}{s_{c'}\tilde{\alpha}_{m,j,k^*,c'}\gamma_{k^*}}}\rvert}\}}{A_3{\tilde{\alpha}_{m,j,k^*,c^*_m} \beta_{c^*_m,k^*}}} d^{\frac{1}{2}} \big)^{\frac{1}{k^*-1}} \geq 1.
    \end{align*}
    }
    
    Therefore, we obtain ${\max\{{\max_{c}{\lvert{\kappa^{t+1}_{c,m,j}}\rvert}, \lvert{\kappa^{t+1}_{g,m,j}}\rvert}\}} \leq \kappa^{t+1}_{c^*_m,m,j^*_m}$ \rk{for all $(c,j) \neq (c^*_m, j^*_m)$}.

    \rk{In the same way,}
    \begin{align*}
        Q^{t+1}_{\text{I}} &=  Q^{\tau_1}_{\text{I}} + \sum_{s=\tau_1}^{t}2A_3\frac{\eta_e}{CM^2}k^*\tilde{\alpha}_{m,j,k^*,c^*_m} \beta_{c^*_m,k^*}{(P^{s}_{\text{I}})}^{k^*-1}d^{-\frac{1}{2}} \\
        &\leq Q^{\tau_1}_{\text{I}} + \frac{2A_3}{(1-a_2)d^{\frac{1}{2}}}(P^{t+1}_{\text{I}} - P^{\tau_1}_{\text{I}})  \\
        &\quad \rk{\leq Q^{\tau_1}_{\text{I}} + \frac{2A_3}{(1-a_2)d^{\frac{1}{2}}}P^{t+1}_{\text{I}}} \\
        &\leq Q^{\tau_1}_{\text{I}} + 3A_3d^{-\frac{1}{2}} \\
        &\leq 4A_3d^{-\frac{1}{2}}.
    \end{align*}
    \rk{where the last inequality is by $Q^{\tau_1}_{\text{I}} \leq A_3d^{-\frac{1}{2}}$ from the first case.}
\end{proof}

Finally, we establish \cref{lemma:phase1-main}.
\begin{proof}[Proof of Lemma~\ref{lemma:phase1-main}]
    Suppose that $T_1 = \lfloor {(\eta_e k^* (k^*-2) (1-5a_2)(\tilde{\alpha}_{m,j,k^*,c^*_m} \beta_{c^*_m,k^*})C^{-1}M^{-2}{(P^0_{\text{I}})}^{k^*-2})}^{-1} \rfloor$ and $\kappa^{s}_{c^*_m,m,j^*_m} \leq a_2$, $\max\{\lvert{\kappa^{s}_{c,m,j}}\rvert, \lvert{\kappa^{s}_{g,m,j}}\rvert\} \leq \kappa^{s}_{c^*_m,m,j^*_m}$, and $\max\{\lvert{\kappa^{s}_{c,m,j}}\rvert, \lvert{\kappa^{s}_{g,m,j}}\rvert\} \leq A_3d^{-\frac{1}{2}}$, where $(c,j) \neq (c^*_m, j^*_m)$  for all $s=0,1,...,T_1$. Then the bounds given by ~\cref{lemma:phase1-auxilirarysequence} \rk{and ~\cref{lemma:phase1-induction}} hold for all $s=0,1,...,T_1$ with high probability.
    
    Thus, by ~\cref{lemma:biharigronwall} and ~\cref{lemma:phase1-auxilirarysequence}, 
    \begin{align*}
        \kappa^t_{c^*_m, m,j^*_m} \geq P^t_{\text{I}} \geq \frac{P^0_{\text{I}}}{(1-\eta_e k^* (k^*-2)(1-a_2)\tilde{\alpha}_{m,j,k^*,c^*_m} \beta_{c^*_m,k^*}C^{-1}M^{-2}{(P^0_{\text{I}})}s)^{k^*-2}}
    \end{align*}
    However, obviously when $t=T_1$,
    \begin{align*}
        \kappa^t_{c^*_m, m,j^*_m} \geq P^t_{\text{I}} &\geq \frac{P^0_{\text{I}}}{(\eta_e k^* (k^*-2)(1-a_2)\tilde{\alpha}_{m,j,k^*,c^*_m} \beta_{c^*_m,k^*}C^{-1}M^{-2}{(P^0_{\text{I}})}s)^{\frac{1}{k^*-2}}} \\
        &\geq \frac{1}{(\eta_e k^* (k^*-2)(1-a_2)\tilde{\alpha}_{m,j,k^*,c^*_m}\beta_{c^*_m,k^*}C^{-1}M^{-2})^{\frac{1}{k^*-2}}} > 1.
    \end{align*}
    
    This leads to a contradiction as $\kappa^{T_1}_{c^*_m, m,j^*_m} \leq 1$.
    Since $\max\{\lvert{\kappa^{s}_{c,m,j}}\rvert, \lvert{\kappa^{s}_{g,m,j}}\rvert\} \leq 4A_3d^{-\frac{1}{2}}$ \rk{from ~\cref{lemma:phase1-induction}},
    
    \begin{align}
        \lvert{\kappa^{t_1}_{c^*_m,m,j^*_m} - \kappa^{t_1-1}_{c^*_m,m,j^*_m}}\rvert \leq A_1 \eta_e \leq A_1a_4d^{-\frac{k^*}{2}} \leq A_3 d^{-\frac{1}{2}}.
    \end{align}
    Thus, 
    \begin{align}
        \lvert{\kappa^{t_1}_{c,m,j}}\rvert \leq \lvert{\kappa^{t_1-1}_{c,m,j}}\rvert + \lvert{\kappa^{t_1}_{c^*_m,m,j^*_m} - \kappa^{t_1-1}_{c^*_m,m,j^*_m}}\rvert \leq 5 A_3 d^{-\frac{1}{2}} \rk{= \tilde{O}(d^{-\frac{1}{2}})}.
    \end{align}
\end{proof}

\subsection{Router Learning Stage}\label{subsection:routerlearningstage}

This subsection is dedicated to proving that, after the exploration stage, the router successfully learns to dispatch the data $\mathbf{x}_c$ to the appropriate experts $m \in \expertsforc$ with high probability. At this stage, we employ single-step training for the gating network, as in \citet{damian22, ba22, dandi24how}. 

\begin{lemma}\label{lemma:phase2-main}
Take ${\eta^{T_1}} = {\eta_r} \leq a_5$, $n=\tilde{\Theta}(d)$, and $T_2 = 1$. Then, any $m \notin \expertsforc$ satisfies $h_m(x_c; \Theta^{T_1+T_2}) - \max_{m' \in [M]}{h_{m'}(x_c; \Theta^{T_1+T_2})} < 0$ with high probability.
\end{lemma}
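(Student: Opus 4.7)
The plan is to decompose $\theta_m^{T_2}$ into its population-gradient drift and a mean-zero stochastic fluctuation, show that the drift along $v_c$ pushes experts outside $\mathcal{M}_c$ to have $v_c^\top\theta_m^{T_2}\leq -\tilde{\Omega}(1)$, deduce via the shift-invariance identity $\sum_{m}\theta_m^{T_2} = 0$ that some $m^\star\in\mathcal{M}_c$ obeys $v_c^\top\theta_{m^\star}^{T_2}\geq \tilde{\Omega}(1)$, and finally exploit the amplification $\rho = A_\rho$ of the cluster mean so that the resulting $\tilde{\Omega}(\rho)$ signal dominates the Gaussian noise $\theta_m^\top z$ when evaluating $h_m(x_c) = \rho\,\theta_m^\top v_c + \theta_m^\top z$ on a fresh sample. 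Throughout Phase II the expert weights $W$ are frozen, so Lemma~\ref{lemma:phase1-main} applies uniformly in $t\in[T_1,T_1+T_2]$ and fixes the alignments $w_{m,j}^\top w_c^*$ that enter every Hermite expansion below.

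First I would compute the population gradient. Differentiating $F_1(x) = \pi_{m(x)}(x)\,f_{m(x)}(x;W_{m(x)})$ through the softmax yields $\nabla_{\theta_m}\pi_{m'}(x) = \pi_{m'}(x)(\mathbf{1}[m'=m]-\pi_m(x))\,x$. Applying Stein's lemma to $x_c = \rho v_c + z$ with $z\sim\mathcal{N}(0,I_d)$ converts every $x_c$-factor in $\mathbb{E}[y_c\nabla_{\theta_m}F_1(x_c)]$ into a derivative of the network, and projecting onto $v_c$ then yields a Hermite sum in $(w_{m',j}^\top w_{c'}^*)^i$ and $(w_{m',j}^\top w_g^*)^i$ whose leading $i=k^*$ terms are dominated by the weakly-recovered experts $m''\in\mathcal{M}_c$ with $|w_{m'',j^\star_{m''}}^\top w_c^*|=\tilde{\Omega}(1)$, whereas contributions from $m'\notin\mathcal{M}_c$ are $\tilde{O}(d^{-k^*/2})$ and the $g^*$-block cancels at leading order because $\sum_{c'}s_{c'}=0$. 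Combined with the softmax derivative structure $(\mathbf{1}[m'=m]-\pi_m(x))$, this gives $v_c^\top\nabla_{\theta_m}\mathbb{E}[y_c F_1(x_c)] \leq -\tilde{\Omega}(1)$ for every $m\notin\mathcal{M}_c$. Integrating over $T_2=\Theta(d)$ steps with $\eta_r=\Theta(d^{-1})$ produces $v_c^\top\theta_m^{T_2}\leq -\tilde{\Omega}(1)$. Since the softmax is invariant under a common shift of all $\theta_{m''}$, we have $\sum_{m''}\nabla_{\theta_{m''}}F_1=0$ deterministically, hence $\sum_{m''}\theta_{m''}^{T_2}=0$ and some $m^\star\in\mathcal{M}_c$ inherits $v_c^\top\theta_{m^\star}^{T_2}\geq (M-|\mathcal{M}_c|)/|\mathcal{M}_c|\cdot\tilde{\Omega}(1) = \tilde{\Omega}(1)$.

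A second application of Stein's lemma gives the dimension-free bound $\|\nabla_{\theta_m}\mathbb{E}[y_c F_1(x_c)]\|_2 = \tilde{O}(1)$, so the drift contribution to $\|\theta_m^{T_2}\|_2$ is $\tilde{O}(\eta_r T_2)=\tilde{O}(1)$. The martingale noise part has per-step increment $\eta_r(y_c^t\nabla_{\theta_m}F_1 - \mathbb{E}[\cdot])$ with $\ell_2$-norm $\tilde{O}(\eta_r\sqrt{d})$ (from $\nabla_{\theta_m}h_m=x_c$ and $\|x_c\|=\tilde{O}(\sqrt d)$) but, crucially, with increment covariance of operator norm $\tilde{O}(\eta_r^2)$ thanks to the Stein reduction, so a vector Freedman inequality accumulates it to $\tilde{O}(\sqrt{T_2\cdot \eta_r^2 \log d} + \eta_r\sqrt{d}\log d)=\tilde{O}(d^{-1/2})$; hence $\|\theta_m^{T_2}\|_2=\tilde{O}(1)$ uniformly in $m$. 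Evaluating on a fresh $x_c=\rho v_c+z$ independent of training, for any $m\notin\mathcal{M}_c$ the gap $h_m(x_c)-h_{m^\star}(x_c) = \rho\, v_c^\top(\theta_m^{T_2}-\theta_{m^\star}^{T_2}) + z^\top(\theta_m^{T_2}-\theta_{m^\star}^{T_2})$ has signal $\leq -\tilde{\Omega}(A_\rho)$ and noise that is mean-zero sub-Gaussian with variance $\|\theta_m^{T_2}-\theta_{m^\star}^{T_2}\|_2^2=\tilde{O}(1)$, hence of size $\tilde{O}(\sqrt{\log d})$. With $A_\rho$ polylogarithmic but sufficiently large (the regime $A_\rho\gtrsim a_6^{-1}$ in the proof's constant hierarchy), the signal dominates and $h_m(x_c)<h_{m^\star}(x_c)\leq\max_{m'}h_{m'}(x_c)$ with high probability.

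The main obstacle is the dimension-free norm bound $\|\nabla_{\theta_m}\mathbb{E}[y_c F_1(x_c)]\|_2 = \tilde{O}(1)$, in place of the naive $\tilde{O}(\sqrt{d})$ suggested by $\nabla_{\theta_m}h_m=x_c$: without it, $\|\theta_m^{T_2}\|_2$ would scale like $\sqrt{d}$ and swamp the polylogarithmic signal $\rho\cdot\tilde{\Omega}(1)$. Achieving it requires carefully integrating by parts via Stein's lemma to turn each $x_c$-factor into a derivative of the (bounded-Hermite-degree) student network while simultaneously tracking the $\theta$-dependent softmax $\pi_m(x_c)$; the Uniform$[0,1]$ noises injected into the top-$1$ router smooth the argmax indicator $\mathbf{1}[m(x_c)=m]$ just enough for these manipulations to remain valid.
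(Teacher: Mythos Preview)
Your outline mirrors the paper's proof: the Hermite/Stein drift calculation along $v_c$, the identity $\sum_m\theta_m^t=0$, the dimension-free bound $\|\nabla_{\theta_m}\mathbb{E}[\mathcal{L}]\|_2=\tilde O(1)$, and the $\rho$-amplification at evaluation time are exactly the ingredients of Lemmas~\ref{lemma:phase2-update}--\ref{lemma:phase2-main}. (One small imprecision: in the router gradient the $g^*$-block is negligible not primarily because $\sum_{c'}s_{c'}=0$ but because every $\kappa_{g,m',j}=\tilde O(d^{-1/2})$ after Phase~I, so $(\kappa_{g,m',j})^{k^*}$ is already tiny before any cancellation.)

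The one genuine gap is a bootstrap ensuring the softmax does not collapse during Phase~II. Your drift bound $v_c^\top\nabla_{\theta_m}\mathbb{E}[y_cF_1(x_c)]\leq-\tilde\Omega(1)$ implicitly uses that the factor $(\mathbf{1}[m'=m]-\pi_m(x_c))\,\pi_{m(x_c)}(x_c)$ multiplying the large Hermite correlation coming from $m'\in\mathcal{M}_c$ is of size $-\Omega(1/M^2)$, which requires $\pi_m,\pi_{m(x_c)}\geq\Omega(1/M)$ \emph{uniformly in $t\leq T_2$}. The uniform router noise does not supply this: it makes the routing probability Lipschitz in $h$ (Lemma~\ref{lemma:phase2-routersmoothness}) and, via Lemma~\ref{lemma:phase2-routerstability}, guarantees $\pi_{m(x_c)}\geq e^{-1}/M$, but it says nothing about $\pi_m$ for the non-routed $m$. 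The paper closes this by showing $|h_m(x_c;\Theta^t)|\leq 1$ for all $t$: it bounds $\|\theta_m^t\|_2\leq a_5a_6^{1/2}A_5\rho$ and $\rho|\iota_{c,m}^t|\leq\tfrac12$, so that $|{\theta_m^t}^\top z|\leq\|\theta_m^t\|\sqrt{\log d}\leq\tfrac12$ and hence $\pi_m\geq e^{-2}/M$. Your estimate $\|\theta_m\|=\tilde O(1)$ is not tight enough on its own---it permits $|h_m|\sim\sqrt{\log d}$ and thus $\pi_m=e^{-\Theta(\sqrt{\log d})}$---so you must explicitly invoke the constant hierarchy (in which $a_5^{-1}$ dominates all other polylog factors entering the bound on $\|\theta_m^t\|$) to push $\|\theta_m^t\|$ below the required absolute threshold and close the induction.
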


To prove Lemma~\ref{lemma:phase2-main}, we show that, for experts not belonging to the set of professional experts, the alignment between the cluster signal $v_c$ and the weights of the gating network $\theta_m$, represented as $\iota^t_{c,m} := {v_c}^\top {\theta^t_{m}}$, is upper bounded.
\rk{
\begin{lemma}\label{lemma:phase2-update}
Take ${\eta^{T_1}} = {\eta_r} \leq a_5$, $n=\tilde{\Theta}(d)$, and $T_2 = 1$. Then, for all $m \notin \expertsforc$, we have ${\iota_{c,m}^{T_1+T_2}} \leq - {a_2}^{k^*}a_5A_6\rho \Omega\left(\frac{1}{CJM^3}\right) \leq \max_{m' \in [M]} {\iota_{c,m'}^{T_1+T_2}} - {a_2}^{k^*}a_5A_6\rho \Omega\left(\frac{1}{CJM^3}\right)$.
\end{lemma}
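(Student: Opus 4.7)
The plan is to track the drift of $\iota_{c,m}^t = v_c^\top \theta_m^t$, separate the stochastic update into a population signal and a martingale noise, show that at each step the signal in the $v_c$-direction is bounded below (in absolute value, with a negative sign) by $\tilde{\Omega}(a_2^{k^*} A_6 \rho/(CJM^3))$ for every $m\notin\mathcal{M}_c$, and control the per-step noise projected onto $v_c$ to be only $\tilde{O}(1)$ rather than $\tilde{O}(\sqrt{d})$. Summing over $T_2\leq a_6 d$ steps at learning rate $\eta_r\leq a_5/d$ then produces the claimed bound.

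First, I would compute $v_c^\top\nabla_{\theta_m}\mathbb{E}[y_c F_1(x_c)]$. Since $F_1(x;W,\Theta) = \pi_{m(x)}(x)f_{m(x)}(x;W_{m(x)})$ with softmax weights $\pi_m(x;\Theta)$ smoothed by the $r_m^t\sim\mathrm{Unif}[0,1]$ routing noise, differentiating in $\theta_m$ brings down a factor $[\mathbbm{1}[m=m(x)]-\pi_m(x)]x_c$. Decomposing $x_c = z+\rho v_c$ with $z\sim\mathcal{N}(0,I_d)$ and using $v_c^\top w_c^*=0$ from Assumption~\ref{assumption:task correlation} shows that the projection $v_c^\top x_c = v_c^\top z + \rho$ splits into a Gaussian piece and a deterministic offset. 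Applying Stein's lemma to the Gaussian piece absorbs $v_c^\top z$ into a derivative, after which the Hermite expansion of $y_c$ (Assumption~\ref{assumption:teacher models}) converts the whole expression into a sum over experts $m''$ and neurons $j$ of terms proportional to $(w_{m'',j}^{t\top}w_c^*)^{k^*}$ weighted by coefficients $\tilde{\alpha}_{m'',j,k^*,c}\beta_{c,k^*}$ of order $\tilde{\Omega}(A_6)$.

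Second, I would classify experts by Lemma~\ref{lemma:phase1-main}: for each $m''\in\mathcal{M}_c$ the neuron $j_{m''}^*$ satisfies $(w_{m'',j_{m''}^*}^{t\top}w_c^*)^{k^*}\geq a_2^{k^*}$, while for every $m'\notin\mathcal{M}_c$ and every $j$ the quantity $|w_{m',j}^{t\top}w_c^*|^{k^*}\lesssim d^{-k^*/2}$. The softmax-gradient factor $[\mathbbm{1}[m=m(x)]-\pi_m(x)]$ is negative when $m$ is \emph{not} the specialized expert for the routed datum, so the population drift of $\iota_{c,m}^t$ for $m\notin\mathcal{M}_c$ satisfies
\begin{align*}
v_c^\top \nabla_{\theta_m}\mathbb{E}[y_c F_1(x_c)] \leq -\tilde{\Omega}\!\left(\frac{a_2^{k^*}A_6\rho}{CJM^3}\right).
\end{align*}
Crucially, because Stein's lemma has already absorbed the stray $v_c^\top z$, the per-step noise in the $v_c$-direction is a mean-zero sub-Weibull variable of scale $\tilde{O}(1)$ rather than $\tilde{O}(\sqrt d)$. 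Azuma/Freedman-style martingale concentration therefore bounds the cumulative noise by $\tilde{O}(\eta_r\sqrt{T_2})=\tilde{O}(a_5\sqrt{a_6/d})$, which is dominated by the signal $\eta_r T_2\cdot\tilde{\Omega}(a_2^{k^*}A_6\rho/(CJM^3))=\tilde{\Omega}(a_5 a_6 a_2^{k^*}A_6\rho/(CJM^3))$. The lower bound on $\max_{m'}\iota_{c,m'}^{T_2}-\iota_{c,m}^{T_2}$ then follows from the softmax gauge invariance $\sum_{m'}\theta_{m'}\equiv0$, which forces $\max_{m'}\iota_{c,m'}^{T_2}\geq 0$ whenever some coordinate becomes strictly negative.

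The main obstacle will be maintaining the per-step signal estimate throughout a \emph{bootstrap} across all $T_2$ iterations: as $\theta_m$ evolves, the softmax weights $\pi_m(x_c)$ drift, potentially altering the constants in the gradient decomposition. The key control is that $\|\theta_m^t\|\lesssim \eta_r T_2\cdot\tilde{O}(\sqrt d)=\tilde{O}(a_5 a_6/\sqrt d)$, so $\theta_m^{t\top}z=\tilde{O}(\|\theta_m^t\|\sqrt d)=\tilde{O}(a_5 a_6)=\tilde{O}(1)$, i.e.\ the softmax never enters a degenerate regime and the perturbation of $\pi_m$ away from the initial uniform value $1/M$ stays in a tight multiplicative band; this is exactly the subtle point highlighted in the remark following the lemma, which is absent from the easier settings of \citet{chen22,li24}. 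Once this invariant is maintained by induction, combining the signal and noise bounds yields $\iota_{c,m}^{T_2}\leq -a_2^{k^*}a_5 a_6 A_6\rho\,\Omega(1/(CJM^3))$ as claimed.
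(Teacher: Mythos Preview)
Your overall strategy matches the paper's: decompose the router gradient via Hermite/Stein, exploit the weak recovery of Lemma~\ref{lemma:phase1-main} to isolate a dominant negative signal in the $v_c$-direction, control the martingale noise, and invoke $\sum_m \theta_m \equiv 0$ for the second inequality. The paper makes the case analysis more explicit by splitting the population update into three pieces (A), (B), (C) according to whether the routed expert $m(x_{c'})$ equals $m$, lies in $\mathcal{M}_c$, or lies outside $\mathcal{M}_c$; your treatment of this as a single signed sum is equivalent in spirit, but note that the positive contribution (when data is accidentally routed to $m$ itself) must be separately bounded as $\tilde{O}(\eta_r d^{-1/2})$ using $\kappa_{c,m,j}^t = \tilde{O}(d^{-1/2})$ for $m\notin\mathcal{M}_c$---your phrasing ``the factor is negative when $m$ is not the specialized expert'' glosses over this.

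There is a genuine gap in your bootstrap for $\|\theta_m^t\|$. You write $\|\theta_m^t\|\lesssim \eta_r T_2\cdot\tilde{O}(\sqrt d)=\tilde{O}(a_5 a_6/\sqrt d)$, but $\eta_r T_2\sqrt d = (a_5 d^{-1})(a_6 d)\sqrt d = a_5 a_6\sqrt d$, off by a factor of $d$; you then compound with $\theta_m^{t\top} z = \tilde{O}(\|\theta_m^t\|\sqrt d)$, whereas for fresh Gaussian $z$ one has $|\theta_m^\top z| = \tilde{O}(\|\theta_m\|)$. The two errors happen to cancel, but the argument as written does not establish the invariant. The correct argument---precisely the subtlety you cite from the remark---is that Stein's lemma forces the \emph{population} gradient $\nabla_{\theta_m}\mathbb{E}[\mathcal{L}]$ to lie in the span of $\{w_{c'}^*,w_g^*,w_{m',j},v_{c'}\}$ with $\tilde{O}(1)$ coefficients, hence $\|\nabla_{\theta_m}\mathbb{E}[\mathcal{L}]\|=\tilde O(1)$ rather than $\tilde O(\sqrt d)$. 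Combined with martingale concentration $\|\sum_s\Xi_{\theta_m}^s\|=\tilde{O}(\sqrt{T_2 d})$, this yields $\|\theta_m^t\|\leq \eta_r T_2\cdot\tilde O(1) + \eta_r\tilde O(\sqrt{T_2 d}) = \tilde O(a_5 a_6 + a_5\sqrt{a_6}) = \tilde O(1)$, after which $|\theta_m^{t\top}z|=\tilde O(\|\theta_m^t\|)$ and $|h_m(x_c)|\leq 1$ follow, keeping $\pi_m(x_c)\geq\Omega(1/M)$ throughout.
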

}

Here, we describe an important property of the softmax router: by initializing the weights of the gating network $\theta_m$ to zero, we ensure that their sum over all experts is also zero.

\begin{lemma}\label{lemma:phase2-thetazero}
$\sum_{m \in [M]} \theta^{T_1+T_2}_m = \sum_{m \in [M]} \theta^{T_1}_m =0$ where $T_2=1$.
\end{lemma}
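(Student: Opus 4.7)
The plan is to prove this by a straightforward induction on $t$, with the base case $t=0$ following immediately from the stated initialization $\theta^0_m = 0$ for all $m$. The whole argument reduces to verifying that, for each stochastic gradient step in Phase II, the sum $\sum_{m\in[M]} \nabla_{\theta_m} F_1(x_c^t)$ vanishes identically, so that the zero-sum property is preserved step by step.

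For the inductive step, I would first unfold the Phase II update
\begin{equation}
\theta_m^{t+1} = \theta_m^t + \eta^t y_c^t \nabla_{\theta_m} F_1(x_c^t),
\end{equation}
and recall $F_1(x;W,\Theta) = \pi_{m(x)}(x;\Theta)\, f_{m(x)}(x;W_{m(x)})$. For a fixed sample $x_c^t$, the selected index $m(x_c^t)=\arg\max_{m'}\{h_{m'}(x_c^t)+r_{m'}\}$ is locally constant in $\Theta$ almost surely (thanks to the noise $r_m$), and $f_{m(x)}(x;W_m)$ does not depend on $\Theta$. So the only $\Theta$-dependence comes through the softmax factor $\pi_{m(x)}(x;\Theta)$. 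Writing $m^\star := m(x_c^t)$ and applying the standard softmax derivative
\begin{equation}
\nabla_{\theta_m}\pi_{m^\star}(x) = \pi_{m^\star}(x)\bigl(\mathbbm{1}[m=m^\star]-\pi_m(x)\bigr)\, x,
\end{equation}
summing over $m\in[M]$ and using $\sum_m \pi_m(x)=1$ yields
\begin{equation}
\sum_{m\in[M]} \nabla_{\theta_m}\pi_{m^\star}(x_c^t) = \pi_{m^\star}(x_c^t)\Bigl(1-\sum_{m} \pi_m(x_c^t)\Bigr) x_c^t = 0.
\end{equation}
Multiplying by $\eta^t y_c^t f_{m^\star}(x_c^t;W_{m^\star})$, which is a scalar independent of the index $m$, preserves the cancellation, so $\sum_m \nabla_{\theta_m} F_1(x_c^t)=0$.

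Combining this with the induction hypothesis $\sum_m \theta_m^t = 0$ gives $\sum_m \theta_m^{t+1} = 0$, closing the induction. I do not anticipate any real obstacle: the one place that deserves a careful sentence is the handling of the $\arg\max$ non-differentiability, which is resolved by noting that the added noise $r_m\sim\mathrm{Unif}[0,1]$ makes ties a measure-zero event, so $m(x_c^t)$ is locally constant in $\Theta$ on a full-measure set and the pointwise gradient computation above is justified. Everything else is algebraic softmax bookkeeping.
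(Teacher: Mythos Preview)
Your proposal is correct and takes essentially the same approach as the paper: both compute the softmax gradient $\nabla_{\theta_m}\pi_{m^\star}(x)=\pi_{m^\star}(x)\bigl(\mathbbm{1}[m=m^\star]-\pi_m(x)\bigr)x$ and observe that summing over $m$ cancels because $\sum_m \mathbbm{1}[m=m^\star]=1=\sum_m\pi_m(x)$, then invoke the zero initialization. Your write-up is in fact a bit more careful than the paper's, making the induction structure explicit and addressing the measure-zero tie issue for the $\arg\max$, but the underlying argument is identical.
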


\begin{proof}
We define the gradient with respect to $\theta_m$ for an input pair $(x^i_c, y^i_c)$ as
\begin{align*}
    \nabla_{\theta_m}\Loss_{c,i} &:= \nabla_{\theta_m}[\mathbf{1} \left( m(x^i_c) = m \right) \pi_m(x^i_c) y^i_c f_m(x^i_c) ]\\
    &=(\mathbf{1}\left(m(x^i_c)=m\right)) - \pi_{m}(x^i_c)) \pi_{m(x^i_c)}(x^i_c) y^i_c f_{m(x^i_c)}(x^i_c) x^i_c.
\end{align*}
The sum of gradients over the entire batch is
\begin{align*}
    \frac{1}{n}\sum_{i=1}^{n}\sum_{m \in [M]} \nabla_{\theta_m}\Loss_{c,i} = \frac{1}{n}\sum_{i=1}^{n}\rk{\sum_{m \in [M]}(\mathbf{1}\left(m(x^i_c)=m\right) - \pi_{m}(x^i_c)) \pi_{m(x^i_c)}(x^i_c) y^i_c f_{m(x^i_c)}(x^i_c) x^i_c = 0.}
\end{align*}
This result, combined with the initialization $\theta^{T_1}_m = 0$ and the fact that $\sum_{m \in [M]} \pi_m(x^i_c) = 1$, completes the proof.
\end{proof}

\rk{Based on the above properties of the gating network, we establish \cref{lemma:phase2-update}.}
\begin{proof}[Proof of \cref{lemma:phase2-update}]
 The population gradient for the gating network of the router can be expressed as 
 \begin{align}
     - \nabla_{\theta_m} \mathbb{E} [\Loss_{c,i}] &= \rk{\nabla_{\theta_m} \mathbb{E}_{c} \mathbb{E}_{x_c} [\mathbf{1} \left( m(x_c) = m \right) \pi_m(x_c) y_c f_m(x_c) ]} \\
     &\quad = \rk{\mathbb{E}_{c} \mathbb{E}_{x_c} [ (\mathbf{1}\left(m(x_c)=m\right) - \pi_m(x_c)) \pi_{m(x_c)}(x_c) y_c f_{m(x_c)}(x_c) x_c]}.
\end{align}

We decompose the key components of the population gradient.
\begin{align*}
    &\mathbb{E}_{c} \mathbb{E}_{x_c} [ y_c f_{m(x_c)} x_c] \\
    & = \frac{1}{CJ} \sum_{c \in [C]} \sum_{j \in [J]} \mathbb{E}_{x_c} \Big[\Big( \sum_{i=k^*}^{p^*} \frac{\beta_{c,i}}{\sqrt{i!}} \hermite_i({w_c^*}^\top x_c) + s_c \sum_{i=k^*}^{p^*} \frac{\gamma_i}{\sqrt{i!}} \hermite_i({w_g^*}^\top x_c)\Big) \Big(\sum_{i=0}^{\infty} \frac{\alpha_{m,j,i}}{\sqrt{i!}} \hermite_{i}({w_{m,j}}^\top x_c) \Big) x_c \Big] \\
    & \rk{= \frac{1}{CJ} \sum_{c \in [C]} \sum_{j \in [J]} \Bigg\{ \mathbb{E}_{z} \Big[\Big( \sum_{i=k^*}^{p^*} \frac{\beta_{c,i}}{\sqrt{i!}} \hermite_i({w_c^*}^\top z) + s_c \sum_{i=k^*}^{p^*} \frac{\gamma_i}{\sqrt{i!}} \hermite_i({w_g^*}^\top z)\Big) \Big(\sum_{i=0}^{\infty} \frac{\alpha_{m,j,i}}{\sqrt{i!}} \hermite_{i}({w_{m,j}}^\top (z+ \rho v_c) \Big) z \Big]} \\
    &\quad \rk{+ \mathbb{E}_{z} \Big[\Big( \sum_{i=k^*}^{p^*} \frac{\beta_{c,i}}{\sqrt{i!}} \hermite_i({w_c^*}^\top z) + s_c \sum_{i=k^*}^{p^*} \frac{\gamma_i}{\sqrt{i!}} \hermite_i({w_g^*}^\top z)\Big) \Big(\sum_{i=0}^{\infty} \frac{\alpha_{m,j,i}}{\sqrt{i!}} \hermite_{i}({w_{m,j}}^\top (z+ \rho v_c) \Big) \rho v_c \Big] \Bigg\}} \\
    & = \frac{1}{CJ} \sum_{c \in [C]} \sum_{j=1}^{J} \Bigg\{\underbrace{\mathbb{E}_{z} \Big[ \Big( \sum_{i=k^*}^{p^*} \frac{i \beta_{c,i}}{\sqrt{i!}} \hermite_{i-1}({w_c^*}^\top z) \Big) \Big( \sum_{i=0}^{\infty} \frac{\alpha_{m,j,i}}{\sqrt{i!}} \Big( \sum_{l=0}^{i} \binom{i}{l} \hermite_{i-l}({w_{m,j}}^\top z) {(\rho {w_{m,j}}^\top v_c)}^l \Big) \Big) w^*_c \Big]}_{\text{(I)}} \\
    & + \underbrace{\mathbb{E}_{z} \Big[ s_c \sum_{i=k^*}^{p^*} \frac{i \gamma_i}{\sqrt{i!}} \hermite_{i-1}({w_g^*}^\top z) \Big( \sum_{l=0}^{i} \binom{i}{l} \hermite_{i-l}({w_{m,j}}^\top z) {(\rho {w_{m,j}}^\top v_c)}^l \Big) w^*_g \Big]}_{\text{(II)}} \\
    & + \underbrace{\mathbb{E}_{z} \Big[ \Big( \sum_{i=k^*}^{p^*} \frac{\beta_{c,i}}{\sqrt{i!}} \hermite_i({w_c^*}^\top z) + s_c \sum_{i=k^*}^{p^*} \frac{\gamma_i}{\sqrt{i!}}  \hermite_i({w_g^*}^\top z) \Big) \Big( \sum_{i=0}^{\infty} \frac{\alpha_{m,j,i}}{\sqrt{i!}} \Big( \sum_{l=0}^{i-1} \binom{i}{l} (i-l)\hermite_{i-l-1}({w_{m,j}}^\top z) {(\rho {w_{m,j}}^\top v_c)}^l \Big) \Big) w_{m,j} \Big]}_{\text{(III)}} \\
    & + \underbrace{\mathbb{E}_{z} \Big[ \Big( \sum_{i=k^*}^{p^*} \frac{\beta_{c,i}}{\sqrt{i!}} \hermite_i({w_c^*}^\top z) + s_c \sum_{i=k^*}^{p^*} \frac{\gamma_i}{\sqrt{i!}}  \hermite_i({w_g^*}^\top z) \Big) \Big( \sum_{i=0}^{\infty} \frac{\alpha_{m,j,i}}{\sqrt{i!}} \Big( \sum_{l=0}^{i} \binom{i}{l} \hermite_{i-l}({w_{m,j}}^\top z) {(\rho {w_{m,j}}^\top v_c)}^l \Big) \Big) \rho v_c \Big]}_{\text{(IV)}} \Bigg\}
    \label{eq:thetagradient}
\end{align*}
\rk{where the second equality follows from $x_c = z + \rho v_c$, where $z \sim \mathcal{N}(0, I_d)$, and ${v_c}^\top{w^*_{c'}}$ and ${v_c}^\top{w^*_{g}}$ for all $(c, c')$. Third equality follows from the binomial expansion, Stein's Lemma, and integration by parts.}

From now, we look at the alignment ${v_c}^\top \theta_m$. Thus, {\text{(I)}} and {\text{(II)}} are negligible since ${v_c}^\top w_{c'}^* = 0$ and ${v_c}^\top w_{g}^* = 0$ for all $(c, c')$. We expand {\text{(III)}} and {\text{(IV)}}. 
\begin{align*}
    \text{(III)} &= \sum_{i=k^*}^{p^*} \sum_{l=i+1}^{\infty} \Big( \frac{ \alpha_{m,j,l}}{\sqrt{l!}} \binom{l}{l-i-1} {(\rho {w_{m,j}}^\top v_c)}^{l-i-1} \Big) \frac{\beta_{c,i}}{\sqrt{i!}} (i+1)! {({w_{m,j}}^\top w^*_c)}^i w_{m,j} \\
    &\quad +  \sum_{i=k^*}^{p^*} \sum_{l=i+1}^{\infty} \Big( \frac{\alpha_{m,j,l}}{\sqrt{l!}} \binom{l}{l-i-1} {(\rho {w_{m,j}}^\top v_c)}^{l-i-1} \Big) \frac{s_c \gamma_i}{\sqrt{i!}} (i+1)! {({w_{m,j}}^\top w^*_g)}^i w_{m,j} \\
    &= \sum_{i=k^*}^{p^*} \sqrt{i+1} \tilde{\alpha}_{m,j,i+1,c}\beta_{c, i} 
    {({\kappa_{c,m,j}})}^i w_{m,j} + \sum_{i=k^*}^{p^*} \sqrt{i+1} s_{c} \tilde{\alpha}_{m,j,i+1,c} \gamma_i {({\kappa_{g,m,j}})}^i w_{m,j},
\end{align*}
\begin{align*}
    \text{(IV)} &=  \sum_{i=k^*}^{p^*} \sum_{l=i}^{\infty} \Big( \frac{\alpha_{m,j,l}}{\sqrt{l!}} \binom{l}{l-i} {(\rho {w_{m,j}}^\top v_c)}^{l-i} \Big) \frac{\beta_{c,i}}{\sqrt{i!}} i! {({w_{m,j}}^\top w^*_c)}^i \rho v_c \\
    &\quad +\sum_{i=k^*}^{p^*} \sum_{l=i}^{\infty} \Big( \frac{\alpha_{m,j,l}}{\sqrt{l!}} \binom{l}{l-i} {(\rho {w_{m,j}}^\top v_c)}^{l-i} \Big) \frac{s_c \gamma_i}{\sqrt{i!}} i! {({w_{m,j}}^\top w^*_g)}^i \rho v_c  \\
    &= \sum_{i=k^*}^{p^*} \tilde{\alpha}_{m,j,i,c} \beta_{c, i}  {({\kappa_{c,m,j}^{t}})}^i \rho v_c + \sum_{j=1}^{J} \sum_{i=k^*}^{p^*} s_{c} \tilde{\alpha}_{m,j,i,c} \gamma_i {({\kappa_{g,m,j}^{t}})}^i \rho v_c,
\end{align*}
\rk{where we used the orthogonality property of Hermite polynomials in both $\text{(III)}$ and $\text{(IV)}$.}

We introduce the discrepancy $\Xi_{\theta_m}$ between the population and the empirical gradient.
\begin{align*}
    \Xi_{\theta_m} = - \nabla_{\theta_m} \frac{1}{n}\sum_{t=1}^{n}\Loss_{c_t,i_t} + \nabla_{\theta_m} \frac{1}{C}\sum_{c=1}^{C}\mathbf{E}_{i\mid c} [\Loss_{c,i}].
\end{align*}

Next, we evaluate the empirical update of the alignment. \rk{Note that $\iota^t_{c,m} = {v_c}^\top {\theta^t_{m}}$ and $\pi_m(x_c) = \frac{\exp({\theta_m}^\top x_c)}{\sum_{m'}\exp({\theta_{m'}}^\top x_c)}$.}
\begin{align*}
    {\iota_{c,m}^{T_1+T_2}} &= {\iota_{c,m}^{T_1}} - \eta^{T_1} {v_c}^\top \nabla_{\theta_m} \frac{1}{n}\sum_{t=1}^{n}\Loss_{c_t,i_t} \\
    &= {\iota_{c,m}^{T_1}} + \eta^{T_1} {v_c}^\top\nabla_{\theta_m} \mathbb{E}_{c} \mathbb{E}_{x_c} [\mathbf{1} \left( m(x_c) = m \right) \rk{\pi_m(x_c) y_c f_m(x_c)}] + \eta^{T_1}{v_c}^\top \Xi_{\theta_m} \\
    &= {\iota_{c,m}^{T_1}} +
    \underbrace{\frac{{\eta^{T_1} {v_c}^\top}}{C} \sum_{c' \in [C]}  \mathbb{E}_{x_{c'}} [\mathbf{1} \left( m(x_{c'}) = m \right) \rk{\pi_{m(x_{c'})}(x_{c'}) y_{c'} f_{m(x_{c'})}(x_{c'}) x_{c'}]}}_{\text{(A)}} \\
    &\quad - \underbrace{\frac{{\eta^{T_1} {v_c}^\top}}{C} \sum_{c' \in [C]} \mathbb{E}_{x_{c'}} [\mathbf{1} \left( m(x_{c'}) \in \expertsforc \right) \rk{\pi_{m(x_{c'})}(x_{c'}) \pi_m(x_{c'}) y_{c'} f_{m(x_{c'})}(x_{x'}) x_{c'}]}}_{\text{(B)}} \\
    &\quad - \underbrace{\frac{{\eta^{T_1} {v_c}^\top}}{C} \sum_{c' \in [C]} \mathbb{E}_{x_{c'}} [\mathbf{1} \left( m(x_{c'}) \notin \expertsforc \right) \rk{\pi_{m(x_{c'})}(x_{c'}) \pi_m(x_{c'}) y_{c'} f_{m(x_{c'})}(x_{c'}) x_{c'}]}}_{\text{(C)}} 
    + {\eta^{T_1}}{v_c}^\top \Xi_{\theta_m}.
\end{align*}

We derive an upper bound for $\lvert \text{(A)} \rvert$ and $\lvert \text{(C)} \rvert$, and a lower bound for {\text{(B)}} for $m \notin \expertsforc$.
\rk{We first derive a upper bound for $\lvert \text{(A)} \rvert$ and $\lvert \text{(C)} \rvert$.}
\begin{align*}
    \lvert{\text{(A)}}\rvert &\leq  \frac{{\eta^{T_1}}}{CJM} \sum_{c' \in [C]} \sum_{j=1}^{J} \sum_{m \notin \expertsforc} \sum_{i=k^*}^{p^*} \Big| \sqrt{i+1} \tilde{\alpha}_{m,j,i, c'}\beta_{c', i} 
    {({\kappa_{c',m,j}^{T_1}})}^i ({w_{m,j}^{T_1}}^\top v_c) \\
    &\quad + \sqrt{i+1} \tilde{\alpha}_{m,j,i, c'} s_{c'} \gamma_i 
    {({\kappa_{g,m,j}^{T_1}})}^i ({w_{m,j}^{T_1}}^\top v_c) \Big| \\
    &\quad + \frac{{\eta^{T_1}}}{CJM} \sum_{j=1}^{J} \sum_{m \notin \expertsforc} \sum_{i=k^*}^{p^*}
    \Big|\tilde{\alpha}_{m,j,i,c} \beta_{c, i} 
    {({\kappa_{c,m,j}^{T_1}})}^i \rho + \tilde{\alpha}_{m,j,i,c} s_{c} \gamma_i 
    {({\kappa_{g,m,j}^{T_1}})}^i \rho \Big| \\
    &\leq {\eta^{T_1}} \Big(p^* \sqrt{p^*+1} \max_{m \notin \expertsforc,c',j,i}{\lvert{\tilde{\alpha}_{m,j,i,c'}\beta_{c', i}\rvert}} \max_{m \notin \expertsforc,c',j}{\lvert{\kappa_{c' ,m,j}^{T_1}}\rvert}^{k^*} \max_{m \notin \expertsforc,j}{\lvert{{w^{T_1}_{m,j}}^\top v_c}\rvert} \\
    &\quad + p^* \sqrt{p^*+1} \max_{m \notin \expertsforc,c',j,i}{\lvert{\tilde{\alpha}_{m,j,i,c'}s_{c'}\gamma_{i}\rvert}} \max_{m \notin \expertsforc,j}{\lvert{\kappa_{g ,m,j}^{T_1}}\rvert}^{k^*} \max_{m \notin \expertsforc,j}{\lvert{{w^{T_1}_{m,j}}^\top v_c}\rvert} \Big) \\
    &\quad + \frac{{\eta_r^{T_1}}}{C} \Big( p^* \max_{m \notin \expertsforc, j, i, c} \lvert \tilde{\alpha}_{m,j,i,c}\beta_{c,i} \rvert \max_{m \notin \expertsforc, c, j} {\lvert {\kappa_{c',m,j}^{T_1}} \rvert}^{k^*} \rho + p^* \max_{m \notin \expertsforc, j, i, c} \lvert \tilde{\alpha}_{m,j,i,c} s_c \gamma_{i} \rvert \max_{m \notin \expertsforc, j} \lvert {\kappa_{g,m,j}^{T_1} \rvert}^{k^*} \rho \Big) \\
    &= \tilde{O}({\eta_r}d^{-\frac{1}{2}}),
\end{align*}
\begin{align*}
    \lvert{\text{(C)}}\rvert &\leq  \frac{{\eta^{T_1}}}{CJM} \sum_{c' \in [C]} \sum_{j=1}^{J} \sum_{m \notin \expertsforc} \sum_{i=k^*}^{p^*} \Big| \sqrt{i+1} \tilde{\alpha}_{m,j,i, c'}\beta_{c', i} 
    {({\kappa_{c',m,j}^{T_1}})}^i ({w_{m,j}^{T_1}}^\top v_c) \\
    &\quad + \sqrt{i+1} \tilde{\alpha}_{m,j,i, c'} s_{c'} \gamma_i 
    {({\kappa_{g,m,j}^{T_1}})}^i ({w_{m,j}^{T_1}}^\top v_c) \Big| \\
    &\quad + \frac{{\eta^{T_1}}}{CJM} \sum_{j=1}^{J} \sum_{m \notin \expertsforc} \sum_{i=k^*}^{p^*}
    \Big|\tilde{\alpha}_{m,j,i,c} \beta_{c, i} 
    {({\kappa_{c,m,j}^{T_1}})}^i \rho + \tilde{\alpha}_{m,j,i,c} s_{c} \gamma_i 
    {({\kappa_{g,m,j}^{T_1}})}^i \rho \Big| \\
    &\leq {\eta^{T_1}} \Big(p^* \sqrt{p^*+1} \max_{m \notin \expertsforc,c',j,i}{\lvert{\tilde{\alpha}_{m,j,i,c'}\beta_{c', i}\rvert}} \max_{m \notin \expertsforc,c',j}{\lvert{\kappa_{c' ,m,j}^{T_1}}\rvert}^{k^*} \max_{m \notin \expertsforc,j}{\lvert{{w^{T_1}_{m,j}}^\top v_c}\rvert} \\
    &\quad + p^* \sqrt{p^*+1} \max_{m \notin \expertsforc,c',j,i}{\lvert{\tilde{\alpha}_{m,j,i,c'}s_{c'}\gamma_{i}\rvert}} \max_{m \notin \expertsforc,j}{\lvert{\kappa_{g ,m,j}^{T_1}}\rvert}^{k^*} \max_{m \notin \expertsforc,j}{\lvert{{w^{T_1}_{m,j}}^\top v_c}\rvert} \Big) \\
    &\quad + \frac{{\eta^{T_1}}}{C} \Big( p^* \max_{m \notin \expertsforc, j, i, c} \lvert \tilde{\alpha}_{m,j,i,c}\beta_{c,i} \rvert \max_{m \notin \expertsforc, c, j} {\lvert {\kappa_{c',m,j}^{T_1}} \rvert}^{k^*} \rho + p^* \max_{m \notin \expertsforc, j, i, c} \lvert \tilde{\alpha}_{m,j,i,c} s_c \gamma_{i} \rvert \max_{m \notin \expertsforc, j} \lvert {\kappa_{g,m,j}^{T_1} \rvert}^{k^*} \rho \Big) \\
    &= \tilde{O}({\eta_r}d^{-\frac{1}{2}}),
\end{align*}
\rk{where we used $\mathbb{P}(m(x_c)=m) = \frac{1}{M}$ and $\pi_m(x_{c'}), \pi_{m(x_{c'})}(x_{c'}) = \frac{1}{M}$ for all $m \in [M]$. Furthermore, by ~\cref{lemma:phase1-main}, all terms in the RHS are upper bounded by $\tilde{O}(\eta_r d^{-\frac{1}{2}})$ since $\kappa^{T_1}_{c,m,j} = \tilde{O}(d^{-\frac{1}{2}})$ for all $(c,j) \neq (c^*_m, j^*_m)$ and $\kappa^{T_1}_{g,m,j} = \tilde{O}(d^{-\frac{1}{2}})$ for all $j \in [J]$. By \cref{lemma-appendix-moe-coefficient5}, $\lvert{\tilde{\alpha}}_{m,j,i,c}\rvert$ are by at most $\tilde{O}(1)$.}

Next we derive a lower bound for $\text{(B)}$.

Therefore,
\begin{align*}
    \text{(B)} &= \frac{{\eta^{T_1}}}{CJM^3} \
    \sum_{c' \in [C]} \sum_{j=1}^{J} \sum_{m' \in \expertsforc} \sum_{i=k^*}^{p^*} 
    \Big[\sqrt{i+1} \tilde{\alpha}_{m',j,i, c'}\beta_{c', i} 
    {({\kappa_{c',m',j}^{T_1}})}^i ({w_{m',j}^{t}}^\top v_c) \\
    &\quad + \sqrt{i+1} \tilde{\alpha}_{m',j,i, c'} s_{c'} \gamma_i 
    {({\kappa_{g,m',j}^{T_1}})}^i ({w_{m',j}^{T_1}}^\top v_c) \Big] \\
    &\quad + \frac{{\eta^{T_1}}}{CJM^3} 
    \sum_{j=1}^{J} \sum_{m' \in \expertsforc} \sum_{i=k^*}^{p^*} \Big[\tilde{\alpha}_{m',j,i,c} \beta_{c, i} 
    {({\kappa_{c,m',j}^{T_1}})}^i \rho + \tilde{\alpha}_{m',j,i,c} s_{c} \gamma_i 
    {({\kappa_{g,m',j}^{T_1}})}^i \rho \Big]  \\
    & \geq \frac{{\eta^{T_1}}}{CJM^3} \Big(\sum_{m' \in \expertsforc} (\tilde{\alpha}_{m',j^*_m,k^*, c}\beta_{c, k^*} {({\kappa_{c,m',j}^{T_1}})}^{k^*} \rho)  \\
    &\quad - J {\lvert \expertsforc \rvert} p^* \max_{m',j \neq j^*_m} {\lvert \tilde{\alpha}_{m',j,i, c}\beta_{c, i} \rvert} \max_{j \neq j^*_m, m' \in \expertsforc} {(\lvert{\kappa_{c,m',j}^{T_1}}\rvert)}^{k^*} \rho \\
    &\quad - J {\lvert \expertsforc \rvert} p^* \max_{m',j, c'} {\lvert \tilde{\alpha}_{m',j,i, c'}s_{c'} \gamma_{i}  \rvert} \max_{j, m' \in \expertsforc} {(\lvert{\kappa_{g,m',j}^{T_1}}\rvert)}^{k^*} \rho \\
    &\quad - J {\lvert \expertsforc \rvert} p^* \sqrt{p^*+1} \max_{c',m',j, i} {\lvert \tilde{\alpha}_{m',j,i, c'}\beta_{c', i} \rvert} \max_{c',m',j}{(\lvert{\kappa_{c' ,m',j}^{T_1}}\rvert)}^{k^*} \max_{m',j}{\lvert {w^{T_1}_{m',j}}^\top v_c \rvert} \\
    &\quad - J {\lvert \expertsforc \rvert} p^* \sqrt{p^*+1} \max_{c',m',j, i} {\lvert \tilde{\alpha}_{m',j,i, c'} s_c' \gamma_{ i} \rvert} \max_{m' \in \expertsforc,j}{(\lvert{\kappa_{g' ,m',j}^{T_1}}\rvert)}^{k^*} \max_{m',j}{\lvert {w^{T_1}_{m',j}}^\top v_c \rvert}\Big) \\
    &= {a_2}^{k^*}A_6 \Omega\left( \frac{\eta_r \rho}{C J M^3} \right) > 0,
\end{align*}
where we used that $\kappa^t_{c,m,j} \geq a_2$ for all $m \in \expertsforc$ and $j \in \mathcal{J}_m^*$, $\tilde{\alpha}_{m',j^*_m,k^*, c}\beta_{c, k^*} \geq A_6$, 
$\kappa^t_{c,m,j} = \tilde{O}(d^{-1/2})$ for all $(c,j) \neq (c^*_m, j^*_m)$, 
and $\kappa^t_{g,m,j} = \tilde{O}(d^{-1/2})$ for all $j \in [J]$, as shown in \cref{lemma:phase1-main}. Among the terms on the RHS of $\frac{{\eta^{t}}}{CJM^3}\big(\cdot \big)$, all terms except for the first one are smaller than the first term by at least an order of $\tilde{O}(d^{-\frac{1}{2}})$.

Thus, in the case where $m \notin \expertsforc$, we have
\begin{align*}
    {\iota_{c,m}^{T_1+T_2}} &\leq {\iota_{c,m}^{T_1}} - {a_2}^{k^*}A_6\Omega(\frac{{\eta_r}\rho}{CJM^3}) + {\eta_r}{v_c}^\top \Xi_{\theta_m} \\
    &\quad \leq {\iota_{c,m}^{T_1}} - {a_2}^{k^*}A_6 \Theta\left(\frac{{\eta_r}\rho}{CJM^3}\right) + {\eta_r} \big\lvert{{v_c}^\top \Xi_{\theta_m}}\big\rvert \\
    &\quad \leq - {a_2}^{k^*}A_6\Theta\left(\frac{{\eta_r}\rho}{CJM^3}\right) + {\eta_r} \big\lvert{{v_c}^\top \Xi_{\theta_m}}\big\rvert.
\end{align*}
which holds with high probability.

Since $\big\lvert {v_c}^\top\nabla_{\theta_m}{\Loss_{c,i}} \big\rvert=\big\lvert{{v_c}^\top [(\mathbf{1}\left(m(x^i_c)=m\right)) - \pi_{m}(x^i_c)) \pi_{m(x^i_c)}(x^i_c) y^i_c f_{m(x^i_c)}(x^i_c) x^i_c]}\big\rvert = \tilde{O}(1)$, by applying Hoeffding's inequality with $n=\tilde{O}(d)$ sample and $C=O(1)$, we have
\begin{align*}
    \big\lvert {v_c}^\top\Xi_{\theta_m} \big\rvert &= \Big\lvert - {v_c}^\top \nabla_{\theta_m} \frac{1}{n}\sum_{t=1}^{n}\Loss_{c_t,i_t} + {v_c}^\top\nabla_{\theta_m} \frac{1}{C}\sum_{c=1}^{C}\mathbf{E}_{i\mid c} [\Loss_{c,i}] \Big\rvert \\
    &= \tilde{O}(d^{-\frac{1}{2}})
\end{align*}
with high probability.

Therefore, by noting from ~\cref{lemma:phase2-thetazero} that
\begin{align*}
    \max_{m' \in [M]} {\iota_{c,m'}^{T_2}} \geq \frac{1}{M} \sum_{m' \in [M]} {\iota_{c,m'}^{T_2}} = 0,
\end{align*}
we finally obtain that 
\begin{align*}
    {\iota_{c,m}^{T_1+T_2}} \leq - {a_2}^{k^*}A_6\Omega\left(\frac{{\eta_r}\rho}{CJM^3}\right) &\leq \max_{m' \in [M]} {\iota_{c,m'}^{T_1+T_2}} - {a_2}^{k^*}A_6\Omega\left(\frac{{\eta_r}\rho}{CJM^3}\right)\\
    &=\max_{m' \in [M]} {\iota_{c,m'}^{T_1+T_2}} - {a_2}^{k^*}a_5 A_6 \Omega\left(\frac{\rho}{CJM^3}\right).
\end{align*}
\end{proof}

\rk{We establish \cref{lemma:phase2-main}.}
\begin{proof}[Proof of \cref{lemma:phase2-main}]
    In ~\cref{lemma:phase2-update}, we demonstrated that the cluster signals of clusters not assigned to the router's gating network are aligned with high probability to be negative. Here, we aim to show that, upon observing new data $x_c = \rho v_c + z$, where $z \sim \mathcal{N}(0, I_d)$, the data is not dispatched to the experts not assigned to the corresponding cluster with high probability. With high probability, we have
\begin{align*}
    \|\theta^{T_1+T_2}_m\|
    &\leq \|\theta^{T_1+T_2}_m - \theta^{T_1}_m\| 
    = \eta_r \Big\| \nabla_{\theta_m} \frac{1}{n}\sum_{t=1}^{n}\Loss_{c_t,i_t} \Big\| \\
    &= \eta_r \Big\| \Big( \nabla_{\theta_m} \frac{1}{C}\sum_{c=1}^{C}\mathbf{E}_{i\mid c} [\Loss_{c,i}] + \Xi_{\theta_m} \Big) \Big\| \\
    &\leq \eta_r \Big\| \nabla_{\theta_m} \frac{1}{C}\sum_{c=1}^{C}\mathbf{E}_{i\mid c} [\Loss_{c,i}] \Big\|
    + \eta_r \Big\| \Xi_{\theta_m} \Big\| \\
    &\leq a_5A_5O(\rho).
\end{align*}

where we used the previously expanded (I), (II), (III), and (IV) along with $\|w^*_c\| = \|w^*_g\| = \|w_{m,j}\| = \|v_c\| = 1$.
Since $x_c = z + \rho v_c$ and ${\theta_m^{T_1+T_2}}^\top z \sim \mathcal{N}(0, \|\theta_m^{T_1+T_2}\|^2)$, it follows that, with high probability,
\begin{align*}
    \lvert {\theta_m^{T_1+T_2}}^\top z\rvert \leq a_5 A_5 O(\rho) \sqrt{\log d}.
\end{align*}

    For $m \notin \expertsforc$, we have, with high probability,  
    \begin{align}
        h_m (x_c; \Theta^{T_1+T_2}) = {\theta^{T_1+T_2}_m}^\top x_c = \rho \iota^{T_1+T_2}_{c,m} + {\theta^{T_1+T_2}_m}^\top z \leq a_5\rho \underbrace{\Big(- {a_2}^{k^*}A_6{\rho}\Omega\left(\frac{1}{CJM^3}\right) + A_5 O(\sqrt{\log{d}}) \Big)}_{< 0} < 0
    \end{align}
    where we used $\rho \gtrsim \frac{A_5 O(\sqrt{\log{d}})}{{a_2}^{k^*}A_6 \Omega{(\frac{1}{CJ{M}^3})}}$.
    
    By combining this result with $\sum_{m \in [M]} \theta^{T_1+T_2}_m = 0$, we have
    \begin{align*}
        \max_{m'} h_{m'}(x_c; \Theta^{T_1+T_2}) &\geq \frac{1}{|\mathcal{M}_c|}\sum_{{m'}\in\mathcal{M}_c} h_{m'}(x_c; \Theta^{T_1+T_2}) \\
        &= -\frac{1}{|\mathcal{M}_c|} \sum_{{m'}\notin\mathcal{M}_c} \underbrace{h_{m'}(x_c; \Theta^{T_1+T_2})}_{\leq - {a_2}^{k^*}a_5A_6\Omega{(\frac{{\rho}^2}{CJM^3})}} \\
        &\geq {a_2}^{k^*}a_5A_6\Omega\left(\frac{M - |\mathcal{M}_c|}{|\mathcal{M}_c|} \frac{{\rho}^2}{CJM^3} \right) \\
        &> 0 \\
        &> h_m(x_c; \Theta^{T_1+T_2})
    \end{align*}
    for $m \notin \expertsforc$ as desired.
\end{proof}

\subsection{Expert Learning Stage}\label{subsection:expertlearningstage}
In this subsection, we discuss the individual learning of experts in the context of receiving data from their assigned clusters with high probability. As in Appendix~\ref{subsection:explorationstage}, we introduce $A_i$ and $a_i$ \rk{with the following order of strength}, but they do not necessarily have to be the same.
\begin{align*}
    \rk{A_1 \lesssim {a_2}^{-1} \lesssim A_2 \lesssim A_4 \lesssim {a_4}^{-1} \lesssim A_3 =\tilde{O}(1)}.
\end{align*}
The proof in this subsection follows the same structure as that in Appendix~\ref{subsection:explorationstage} and is based on the proof by \citet{oko24learning}. We define $t$ as $t-T_1-T_2+1$.

\paragraph{Re-initialization.}
Before entering the expert learning stage, the expert weights $w_{mj}$ are reinitialized. Although this initialization is not strictly necessary, it is performed to ensure a decent path for the alignment so that the product of $g^*$ and the activation Hermite coefficients is positive. If the Hermite coefficients of $f^*$ and $g^*$ are identical, aligning with $w^*_g$ may become challenging, as $w^*_c$ is already aligned. 

The re-initialization satisfies the following condition:
\begin{lemma}[Following Lemma2 of \citet{oko24learning}]\label{lemma:phase3-initialization}
    When $J \gtrsim \frac{1}{C} \log \frac{M}{\delta}$ for each $m$ in $\expertsforc$, we have at least $J_\mathrm{min}$ neurons $w_{mj}$ such that
    \begin{align}
        {w^0_{m, j}}^\top w^*_c \geq \frac{1}{\sqrt{d}},\quad
        |\beta_{c,k^*}|{({w^0_{m,j}}^\top w^*_c)}^{k^*-2} \geq 
        |s_c\gamma_{k^*}|
        {|{w^0_{m,j}}^\top w^*_{g}|}^{k^*-2} + a_c {({w^0_{m,j}}^\top w^*_c)}^{k^*-2}
    \end{align}
    with probability at least $0.999$ with sufficiently large $d$, where $a_c$ is a small constant, where $a_c \lesssim (\log d)^{k^*-2}$.
    
    Likewise, when $J \gtrsim J_\mathrm{min}\, \mathrm{polylog}(d)$ for each $m$ in $\mathcal{M}_c$, we have at least $J_\mathrm{min}$ neurons $w_{mj}$ such that
    \begin{align}
        {w^0_{m, j}}^\top w^*_g \geq \frac{1}{\sqrt{d}},\quad
        |s_c\gamma_{k^*}|{({w^0_{m,j}}^\top w^*_g)}^{k^*-2} \geq 
        |\beta_{c,k^*}|
        {|{w^0_{m,j}}^\top w^*_{c}|}^{k^*-2} + a_g {({w^0_{m,j}}^\top w^*_g)}^{k^*-2}
    \end{align}
    with probability at least $0.999$ with sufficiently large $d$, where $a_g$ is a small constant, where  $a_g \lesssim (\log d)^{k^*-2}$.
\end{lemma}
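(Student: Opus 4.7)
The plan is to reduce the statement to a Gaussian order-statistics calculation and then amplify via independence across the $J$ neurons, closely following the template of \cref{lemma:phase1-init} and of Lemma~2 in \citet{oko24learning}. Writing $w_{m,j}^{0}=\tilde w_{m,j}/\lVert\tilde w_{m,j}\rVert$ with $\tilde w_{m,j}\sim\mathcal{N}(0,d^{-1}I_d)$, and using that $\lVert\tilde w_{m,j}\rVert=1+O(d^{-1/2})$ with high probability, I will reduce inner products with $w_{c}^{*}$ and $w_{g}^{*}$ to values of the Gaussian vector $\tilde w_{m,j}$. By \cref{assumption:task correlation}, $(w_c^*)^{\top}w_g^* = \tilde O(d^{-1/2})$, so there exist independent Gaussians $\bar X_c,\bar X_g\overset{\text{i.i.d.}}{\sim}\mathcal{N}(0,d^{-1})$ such that $\tilde w_{m,j}^{\top}w_c^* = \bar X_c(1+O(d^{-1/2}))$ and $\tilde w_{m,j}^{\top}w_g^* = \bar X_g(1+O(d^{-1/2}))$. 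This turns the event in the lemma into a two-dimensional Gaussian event.

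For the first claim, I will lower bound the probability that a single neuron satisfies \emph{both} $\bar X_c\in[2d^{-1/2},\,3d^{-1/2}]$ \emph{and} $|\bar X_g|\le \tau d^{-1/2}$, where $\tau>0$ is a small constant chosen so that, using $|\beta_{c,k^*}|=|\gamma_{k^*}|$ (\cref{assumption:teacher models}) and $|s_c|=\Theta(1)$, the inequality $|\beta_{c,k^*}|(\bar X_c)^{k^*-2}\ge |s_c\gamma_{k^*}||\bar X_g|^{k^*-2}+a_c(\bar X_c)^{k^*-2}$ holds with room to spare for any $a_c\lesssim(\log d)^{k^*-2}$ sufficiently small. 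By near-independence of $\bar X_c$ and $\bar X_g$, this event has an absolute constant probability $p_0>0$. A Chernoff bound over the $J$ independent neurons then guarantees at least $J_{\min}=\Theta(p_0 J)$ successful neurons with probability $\ge 1-\exp(-\Omega(J))$; the union bound over the $O(M)$ experts in $\mathcal{M}_c$ and the $C$ clusters yields probability $\ge 0.999$ whenever $J\gtrsim C^{-1}\log(M/\delta)$, matching the statement.

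For the second claim I will swap the roles of $w_c^*$ and $w_g^*$ and ask for $\bar X_g\in[2d^{-1/2},3d^{-1/2}]$ while $|\bar X_c|$ is a constant factor smaller, now requiring $|s_c\gamma_{k^*}|\bar X_g^{k^*-2}$ to dominate $|\beta_{c,k^*}||\bar X_c|^{k^*-2}$ by the slack $a_g$. Because $|s_c|\le \Theta(1)$ can be smaller than $1$, this asymmetry shrinks the admissible band for $\bar X_c$ by a $\mathrm{polylog}(d)$ factor (analogously to the $1/(\sqrt d\log d)$ slack in \cref{lemma:phase1-init}), so the single-neuron success probability degrades from an absolute constant to $1/\mathrm{polylog}(d)$. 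Consequently, to retain at least $J_{\min}$ good neurons by Chernoff, the neuron budget must be inflated by a $\mathrm{polylog}(d)$ factor, giving the condition $J\gtrsim J_{\min}\cdot\mathrm{polylog}(d)$.

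The main technical obstacle is not the Chernoff/union-bound step but carefully pinning down the admissible range for $(\bar X_c,\bar X_g)$ so that the required additive slacks $a_c,a_g$ (themselves allowed to grow as $\mathrm{polylog}(d)$) can be absorbed uniformly while keeping the single-neuron success probability bounded from below. I expect this will need a bit of case analysis on the sign pattern of $s_c\gamma_{k^*}\beta_{c,k^*}$ and a short monotonicity argument in $k^*$, exactly mirroring the bookkeeping in \citet{oko24learning}, but adapted to our ridge-combination teacher with the normalization $|\beta_{c,k^*}|=|\gamma_{k^*}|$ imposed in \cref{assumption:teacher models}.
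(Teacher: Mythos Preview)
The paper does not give its own proof here; the lemma is recorded as following Lemma~2 of \citet{oko24learning} without further argument. Your Gaussian-reduction-plus-Chernoff plan is exactly the standard template for such statements and is what the cited reference does, so the overall strategy is sound.

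The one concrete gap is your explanation for the $\mathrm{polylog}(d)$ asymmetry between the two claims. You attribute it to ``$|s_c|\le\Theta(1)$ can be smaller than $1$,'' arguing this shrinks the admissible band for $\bar X_c$ by a $\mathrm{polylog}(d)$ factor. But \cref{assumption:teacher models} stipulates $s_c=\Theta(1)$, meaning $|s_c|$ is bounded \emph{below} by an absolute constant; together with $|\beta_{c,k^*}|=|\gamma_{k^*}|$, any asymmetry coming from $s_c$ is only a constant factor, so both single-neuron events have probability $\Theta(1)$ and Chernoff gives $J_{\min}$ successes once $J\gtrsim J_{\min}$ in either case. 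Your argument therefore does not account for the stated $\mathrm{polylog}(d)$; that factor in the second condition is either a conservative overstatement (the paper takes $J=\Theta(J_{\min}\,\mathrm{polylog}\,d)$ anyway, see \cref{lemma:phase4-main}) or is absorbing the separate activation-sign requirements of \cref{assumption:student activation functions} and \S\ref{subsection:activatefunction}, not the magnitude of $s_c$. A related minor slip: the $C^{-1}$ in $J\gtrsim C^{-1}\log(M/\delta)$ that you carry over from \cref{lemma:phase1-init} is spurious here, since the target direction is fixed for $m\in\mathcal{M}_c$ rather than being any one of $C$ choices.
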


Based on re-initialization, we define the set of neurons that comparatively align with the indexed features $w^*_c$ and $w^*_g$.
\begin{definition}
    We define the set $\mathcal{J}_c$ as the set of indices $j$ that satisfy the given conditions:
    \begin{align*}
    \mathcal{J}_c &:= \left\{ j \in [J] \mid  {w^0_{m, j}}^\top w^*_c \geq \frac{1}{\sqrt{d}},\quad
    |\beta_{c,k^*}|{({w^0_{m,j}}^\top w^*_c)}^{k^*-2} \geq |s_c\gamma_{k^*}|
    {|{w^0_{m,j}}^\top w^*_{g}|}^{k^*-2} + a_c {({w^0_{m,j}}^\top w^*_c)}^{k^*-2} \right\}.
    \end{align*}
    Similarly, we define the set $\mathcal{J}_g$ as the set of indices $j$ that satisfy the corresponding conditions:
    \begin{align*}
    \mathcal{J}_g &:= \left\{ j \in [J] \mid  {w^0_{m, j}}^\top w^*_g \geq \frac{1}{\sqrt{d}},\quad
    |s_c\gamma_{k^*}|{({w^0_{m,j}}^\top w^*_g)}^{k^*-2} \geq 
    |\beta_{c,k^*}|
    {|{w^0_{m,j}}^\top w^*_{c}|}^{k^*-2} + a_g {({w^0_{m,j}}^\top w^*_g)}^{k^*-2} \right\}.
    \end{align*}
\end{definition}
\begin{remark}
    In the rest of this section, we will discuss Phase III and IV on the event that the re-initialization was successful.
\end{remark}

\paragraph{Adaptive top-$k$ routing.}
Importantly, in this subsection, we conduct an analysis similar to that in Appendix~\ref{subsection:explorationstage}, but employ a different routing strategy. As demonstrated in the previous subsection, the router does not route data to experts where $m \notin \expertsforc$; however, it cannot definitively determine which expert among those where $m \in \expertsforc$ should receive the data. \rk{Therefore, resolving conflicts within $m \in \expertsforc$ without knowing $M_c$ or $v_c$  requires an alternative approach.} One solution is to choose $k$ experts for each $x_c$ by the following strategy:
\begin{center}
    \textit{Expert $m$ is in top-$k$ if and only if $h_m(x) \geq 0$.}%
\end{center}
The complete MoE model, incorporating the adaptive top-$k$ routing, can be expressed as $\hat{F}_M(x_c;\{\hat{a}_m\}_{m=1}^M) := \sum_{m=1}^M\mathbbm{1}\left[h_m(x_c)\geq 0\right]f_{m}(x_c)$.
This routing strategy mitigates load imbalance, which would otherwise disrupt data routing among experts $m \in \expertsforc$ under top-1 routing.
\begin{lemma}\label{lemma:phase3-adaptivetopk}
    There is at least one $m\in\mathcal{M}_c$ such that $f_{m}$ is correctly routed with high probability over the randomness of $x_c$. In other words,
    for some fixed $m\in\mathcal{M}_c$, on the randomness of $x_c$,
    \begin{equation}
        \Prob[\text{$x_c$ is routed to the set of experts including $m$}] \geq 1 - d^{-A}.
    \end{equation}
    In addition, $f_{m'}$ is never chosen for all $m'\notin \mathcal{M}_c$ when given $x_c$ with high probability.
\end{lemma}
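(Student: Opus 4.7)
}

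My plan is to condition on the (high-probability) event from Phase~II that determines $\Theta^{T_2}$, and then take probabilities only over the fresh sample $x_c=\rho v_c+z$ with $z\sim\mathcal{N}(0,I_d)$. The key quantity is $h_m(x_c;\Theta^{T_2})=\theta_m^{T_2\,\top}x_c=\rho\,\iota_{c,m}^{T_2}+\theta_m^{T_2\,\top}z$. From \cref{lemma:phase2-update} I already know (a)~for every $m'\notin\mathcal{M}_c$, $\iota_{c,m'}^{T_2}\le -a_2^{k^*}a_5a_6A_6\rho\,\Omega(1/(CJM^3))$, and (b)~the router weights sum to zero, $\sum_m\theta_m^{T_2}=0$ (\cref{lemma:phase2-thetazero}), so combining (a) with (b) gives
\begin{equation}
\max_{m\in\mathcal{M}_c}\iota_{c,m}^{T_2}\;\ge\;\frac{M-|\mathcal{M}_c|}{|\mathcal{M}_c|}\cdot a_2^{k^*}a_5a_6A_6\rho\,\Omega\!\left(\tfrac{1}{CJM^3}\right).
\end{equation}
Fix $m^*_c:=\mathrm{argmax}_{m\in[M]}\iota_{c,m}^{T_2}$; the display above forces $m^*_c\in\mathcal{M}_c$, and $\rho\,\iota_{c,m^*_c}^{T_2}=\Omega(\rho^2/(CJM^3))$.

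Next, for the fluctuation $\theta_m^{T_2\,\top}z$, I will use the norm bound $\|\theta_m^{T_2}\|\le a_5 a_6^{1/2}A_5\rho$ established inside the proof of \cref{lemma:phase2-update}. For any fixed $m$, $\theta_m^{T_2\,\top}z\sim\mathcal{N}(0,\|\theta_m^{T_2}\|^2)$, so by a standard Gaussian tail bound, with probability at least $1-d^{-A}$,
\begin{equation}
|\theta_m^{T_2\,\top}z|\;\le\;\|\theta_m^{T_2}\|\sqrt{2A\log d}\;\le\;a_5 a_6^{1/2}A_5\,\rho\sqrt{2A\log d}.
\end{equation}
A union bound over the $M=O(1)$ experts keeps the failure probability at $d^{-A}\cdot M=d^{-A'}$ for some $A'>0$. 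This is the only step that is genuinely probabilistic in $x_c$.

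Now I combine: under $\rho\simeq A_\rho$ with $A_\rho$ a sufficiently large $\mathrm{polylog}(d)$ constant, the ``signal'' $\rho\,|\iota_{c,m}^{T_2}|=\Omega(\rho^2/(CJM^3))$ strictly dominates the ``noise'' $O(\rho\sqrt{\log d})$, exactly as was used at the end of the proof of \cref{lemma:phase2-main}. Hence, on the good event for $z$:
\begin{itemize}
\item[(i)] For $m^*_c\in\mathcal{M}_c$: $h_{m^*_c}(x_c)\ge \rho\,\iota_{c,m^*_c}^{T_2}-|\theta_{m^*_c}^{T_2\,\top}z|\ge a_2^{k^*}a_5a_6A_6\,\Omega(\rho^2/(CJM^3))-a_5a_6^{1/2}A_5\rho\sqrt{2A\log d}\ge 0$, so $x_c$ is routed to a set of experts that includes $m^*_c$.
\item[(ii)] For every $m'\notin\mathcal{M}_c$: $h_{m'}(x_c)\le \rho\,\iota_{c,m'}^{T_2}+|\theta_{m'}^{T_2\,\top}z|\le -a_2^{k^*}a_5a_6A_6\,\Omega(\rho^2/(CJM^3))+a_5a_6^{1/2}A_5\rho\sqrt{2A\log d}<0$, so $m'$ is not in the active top-$k$.
\end{itemize}
Both (i) and (ii) hold simultaneously with probability $\ge 1-d^{-A'}$ over $z$, which is exactly the two statements of the lemma.

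The only nontrivial step is the size comparison of signal versus noise, but the assumption $\rho\simeq A_\rho=\mathrm{polylog}(d)$ with $A_\rho$ chosen large relative to the constants $a_2,a_5,a_6,A_5,A_6,C,J,M$ makes it automatic and parallels the wrap-up already carried out at the end of the proof of \cref{lemma:phase2-main}; no new technical machinery is required beyond Gaussian concentration for $\theta_m^{T_2\,\top}z$ and a union bound over $M=O(1)$ experts.
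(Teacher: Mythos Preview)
Your proposal is correct and follows essentially the same approach as the paper. The paper's proof simply cites \cref{lemma:phase2-main} and extracts the two displayed inequalities $\max_{m'}h_{m'}(x_c;\Theta^{T_2})>0$ and $h_{m}(x_c;\Theta^{T_2})<0$ for $m\notin\mathcal{M}_c$ from the end of its proof; you have unpacked exactly those steps (the $\sum_m\theta_m=0$ averaging argument plus the Gaussian tail bound on $\theta_m^{T_2\,\top}z$) directly from \cref{lemma:phase2-update} and \cref{lemma:phase2-thetazero}. Your explicit choice $m_c^*=\mathrm{argmax}_m\iota_{c,m}^{T_2}$ makes the ``fixed $m$'' clause of the lemma statement transparent, which the paper leaves implicit.
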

\begin{proof}
    By \cref{lemma:phase2-main}, with high probability, we have
    \begin{align*}
        \max_{m'} h_{m'}(x_c; \Theta^{T_2}) \geq {a_2}^{k^*}a_5A_6\Omega\left(\frac{M - |\mathcal{M}_c|}{|\mathcal{M}_c|} \frac{{\rho}^2}{CJM^3} \right) > 0
    \end{align*}
    and
    \begin{align*}
        h_m(x_c; \Theta^{T_2}) < - {a_2}^{k^*}a_5A_6\Omega\left(\frac{{\rho}^2}{CJM^3} \right) < 0
    \end{align*}
    for $m \notin \expertsforc$.
    
    The proof is complete, since we employ adaptive top-$k$ routing, where expert $m$ is activated if and only if $h_m(x_c) \geq 0$.
    
\end{proof}
By \cref{lemma:phase3-adaptivetopk}, the overall MoE model is, with high probability, equivalent to $\hat{F}_{\mathcal{M}_c}(x_c;\{\hat{a}_m\}_{m\in\mathcal{M}_c}) :=  \sum_{m\in\mathcal{M}_c}\mathbbm{1}\left[h_m(x_c)\geq 0\right]f_{m}(x_c)$.

Following \citet{oko24learning}, we sequentially demonstrate the following:
\begin{itemize}
    \item For $t_{3,1} \leq T_{3,1}$, $J_c$ neurons achieve an alignment of $\tilde{\Omega}(1)$ with $w^*_c$, and $J_g$ neurons achieve an alignment of $\tilde{\Omega}(1)$ with $w^*_g$, i.e., weak recovery.
    \item For $t_{3,2} \leq T_{3,2}$, $J_c$ neurons achieve an alignment of $1 - \tilde{O}(1)$ with $w^*_c$, and $J_g$ neurons achieve an alignment of $1 - \tilde{O}(1)$ with $w^*_g$.
    \item In a total time of $(T_{3,1} - t_{3,1}) + (T_{3,2} - t_{3,2}) + T_{3,3}$, $J_c$ neurons achieve an alignment of $1 - \epsilon$ with $w^*_c$, and $J_g$ neurons achieve an alignment of $1 - \epsilon$ with $w^*_g$, i.e., strong recovery.
\end{itemize}

\paragraph{Weak recovery.}
In the same manner as the exploration stage, we begin by evaluating the stochastic updates for the alignments, $\kappa^{t}_{c,m,j}$ and $\kappa^{t}_{g,m,j}$. Subsequently, we derive the lower bound of $\kappa^{t}_{c,m,j}$ for $j \in \mathcal{J}_c$ and $\kappa^{t}_{g,m,j}$ for $j \in \mathcal{J}_g$, as well as the upper bound of $\rk{\lvert}\kappa^{t}_{g,m,j}\rk{\rvert}$ for $j \in \mathcal{J}_c$ and $\rk{\lvert}\kappa^{t}_{c,m,j}\rk{\rvert}$ for $j \in \mathcal{J}_g$. Furthermore, we demonstrate that there exists a point in time when $\kappa^{t}_{c,m,j}$ for $j \in \mathcal{J}_c$ grows to a constant level, while $\kappa^{t}_{g,m,j}$ for $j \in \mathcal{J}_c$ remains at the saddle point.

\begin{lemma}\label{lemma:phase3-weakrecovery} 
Consider the expert $m \in \expertsforc$ and $j \in \mathcal{J}_c$. Let ${w_{c}^*}^\top w_{g}^* \leq A_4 d^{- \frac{1}{2}} = \tilde{O}(d^{-1/2})$ and ${\eta}^{t} = \eta_{e} \leq a_4 d^{- \frac{k^*}{2}}$. Then, with high probability, there exists some time $t_{3,1} \leq T_{3,1} = \tilde{\Theta}{({\eta_{e}}^{-1} d^{\frac{k^* -2}{2}})}$ such that the following conditions hold:
\begin{itemize}
\item ${\kappa^{t_{3,1}}_{c,m,j}} \geq a_2$, and
\item $\lvert{\kappa^{t_{3,1}}_{g,m,j}}\rvert \leq 5A_3d^{-\frac{1}{2}} = \tilde{O}(d^{-1/2})$.
 \end{itemize}

\rk{The same argument applies symmetrically when exchanging $c$ and $g$.}
\end{lemma}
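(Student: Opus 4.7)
\textbf{Proof proposal for Lemma \ref{lemma:phase3-weakrecovery}.} The plan is to mirror the strategy used for Phase I (\cref{lemma:phase1-main}) but exploit the simplification granted by the previously-trained router: by \cref{lemma:phase3-adaptivetopk}, with high probability an input $x_{c'}$ is never routed to an expert $m \in \mathcal{M}_c$ with $c' \neq c$, so the gradient received by $m$ contains only cluster-$c$ contributions. Consequently, the inter-cluster interference terms that complicated Phase I disappear, and the only competition in the dynamics of the weights of $m$ is the intra-cluster race between the $w^*_c$ direction (local task) and the $w^*_g$ direction (global task).

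First, I would establish a gradient-update decomposition analogous to \cref{lemma:phase1-update}, but for the adaptive top-$k$ routed loss $\hat{F}_{\mathcal{M}_c}$. Using Stein's lemma and Hermite orthogonality, the spherical update of $\kappa^t_{c,m,j}$ and $\kappa^t_{g,m,j}$ for $j \in \mathcal{J}_c$ reduces, up to $\tilde{O}(d^{-1/2})$ cross terms and a sub-Weibull noise $\Xi^t_{w_{m,j}}$, to
\begin{align*}
\kappa^{t+1}_{c,m,j} &\geq \kappa^t_{c,m,j} + \tfrac{\eta_e}{C}\, k^* \tilde{\alpha}_{m,j,k^*,c} \beta_{c,k^*}(\kappa^t_{c,m,j})^{k^*-1}\bigl(1-(\kappa^t_{c,m,j})^2\bigr) - \text{(lower-order)}, \\
|\kappa^{t+1}_{g,m,j}| &\leq |\kappa^t_{g,m,j}| + \tfrac{\eta_e}{C}\, k^* |s_c \tilde{\alpha}_{m,j,k^*,c} \gamma_{k^*}| |\kappa^t_{g,m,j}|^{k^*-1} + \text{(lower-order)}.
\end{align*}
The routing-probability overhead $M^{-2}$ from Phase I is absent here because, by \cref{lemma:phase3-adaptivetopk}, the indicator $\mathbbm{1}[h_m(x_c)\geq 0]$ equals $1$ with high probability on data from cluster $c$ and $0$ on other data, giving a single-expert training problem restricted to cluster $c$.

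Next I would introduce two auxiliary scalar sequences $P^s$ (lower-bounding $\kappa^s_{c,m,j}$ with $P^0 = (1-a_2)\kappa^0_{c,m,j}$) and $Q^s$ (upper-bounding $|\kappa^s_{g,m,j}|$ with $Q^0 = (1+a_2)\max\{|\kappa^0_{g,m,j}|, \tfrac{1}{2}d^{-1/2}\}$), exactly as in \cref{lemma:phase1-auxilirarysequence}. The $\mathcal{J}_c$ condition from \cref{lemma:phase3-initialization} ensures $|\beta_{c,k^*}|(P^0)^{k^*-2} \geq |s_c \gamma_{k^*}|(Q^0)^{k^*-2} + a_c (P^0)^{k^*-2}$, so $P^s$ has a strictly larger effective growth rate than $Q^s$. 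Applying the Bihari-LaSalle comparison (\cref{lemma:biharigronwall}) then yields that $P^s$ blows up to a constant within time $T_{3,1} = \tilde{\Theta}(\eta_e^{-1} d^{(k^*-2)/2})$, while by the explicit bound in the lemma, $Q^s$ during the same window remains within a constant factor of its initial value $\tilde{O}(d^{-1/2})$. An inductive argument modeled on \cref{lemma:phase1-induction} then propagates these two bounds jointly, using the one-step change estimate $|\kappa^{t+1}-\kappa^t| = \tilde{O}(\eta_e)$ to rule out overshoot at the stopping time $t_{3,1}$.

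The main obstacles I anticipate are (i) controlling the partial sums of the stochastic noise $\sum_s \eta_e (w^*_g)^\top(I - w^s_{m,j}{w^s_{m,j}}^\top)\Xi^s_{w_{m,j}}$ uniformly over $t \leq T_{3,1}$, which requires the sub-Weibull concentration argument already used in Phase I but now must be carried out conditionally on the high-probability event from \cref{lemma:phase3-adaptivetopk} that routing is correct throughout all $T_{3,1}$ steps — this requires a union bound over the $\tilde{O}(d^{k^*-1})$ steps and relies on $\rho$ being sufficiently large to make the routing failure probability polynomially small; and (ii) verifying that the perturbations $\tilde{\alpha}^t_{m,j,k^*,c} - \alpha_{m,j,k^*}$ induced by the drift of $w^t_{m,j}{}^\top v_c$ remain $\tilde{O}(d^{-1/2})$ throughout, which reuses \cref{lemma-appendix-moe-coefficient,lemma-appendix-moe-coefficient2} essentially verbatim since $v_c \perp w^*_c, w^*_g$. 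The symmetric statement (swap $c \leftrightarrow g$, replace $\mathcal{J}_c$ by $\mathcal{J}_g$, and use the second half of \cref{lemma:phase3-initialization}) follows from an identical argument with the roles of the local and global coefficients exchanged.
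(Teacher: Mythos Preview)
Your proposal is correct and follows essentially the same approach as the paper: both exploit \cref{lemma:phase3-adaptivetopk} to reduce the Phase III dynamics to a single-cluster two-direction race, derive a simplified gradient update (without the $M^{-2}$ routing factor), introduce auxiliary sequences $P^s_{\text{III},c}, Q^s_{\text{III},g}$ analogous to \cref{lemma:phase1-auxilirarysequence}, invoke the $\mathcal{J}_c$ initialization gap from \cref{lemma:phase3-initialization} to separate growth rates, and apply \cref{lemma:biharigronwall} together with an induction in the style of \cref{lemma:phase1-induction}. The paper also records the Hermite-coefficient stability as separate lemmas (\cref{lemma-appendix-moe-coefficient3,lemma-appendix-moe-coefficient4}) before the main argument, matching the obstacle (ii) you flagged.
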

\rk{Similar to \cref{lemma-appendix-moe-coefficient} and \cref{lemma-appendix-moe-coefficient2} in Appendix~\ref{subsection:explorationstage}, we provide a bound on the Hermite coefficients influenced by the mean vector.
\begin{lemma}
    \label{lemma-appendix-moe-coefficient3}
    Under Adaptive top-$k$ routing and re-initialization in \cref{lemma:phase3-initialization}, suppose that $\lvert{v_c}^\top w^s_{m,j}\rvert = \tilde{O}(d^{-\frac{1}{2}})$, $\lvert\kappa^s_{c,m,j}\rvert  = \tilde{O}(d^{-\frac{1}{2}})$ and $\lvert\kappa^s_{g,m,j}\rvert = \tilde{O}(d^{-\frac{1}{2}})$ for all $s=0,1, \ldots,t \leq \tau = \tilde{O}(d^{k^*-1})$. Then, by setting ${\eta}^{t} = \eta_{e} \leq a_4 d^{- \frac{k^*}{2}}$, we obtain that $\lvert{\tilde{\alpha}^{t+1}_{m,j,i, c} - \alpha_{m,j,i}}\rvert = \tilde{O}(d^{-\frac{1}{2}})$ with high probability.
\end{lemma}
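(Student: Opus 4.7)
The plan is to mirror the argument used to prove \cref{lemma-appendix-moe-coefficient} in the exploration stage, adapting it to the expert learning stage. The only structural change is the routing strategy: instead of uniform routing (weights $\theta_m$ initialized to zero), we now use the adaptive top-$k$ rule $\hat{F}_M$, so the indicator $\mathbbm{1}[h_m(x_c) \geq 0]$ replaces the factor $\mathbbm{1}(m(x_c)=m)\pi_m(x_c)$ appearing in Phase I. Since $\theta_m$ is frozen throughout Phase III, this indicator is a constant with respect to $w_{m,j}$, and by \cref{lemma:phase3-adaptivetopk} it is deterministically zero for $m \notin \mathcal{M}_c$ with high probability and, for $m \in \mathcal{M}_c$, equal to one on an event of probability $1 - d^{-A}$ whenever $x_c$ is presented.

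First I would write out the single-step update for ${v_c}^\top w^{t+1}_{m,j}$ under Phase III's spherical SGD, in parallel with the bound derived in \cref{lemma:phase1-update}. The resulting decomposition has the same shape: a population-gradient term that, after Stein's lemma and integration by parts, is a sum over $i \in [k^*, p^*]$ of contributions of order $\tilde{O}(\eta_e \rho)|\kappa^s_{c,m,j}|^{k^*} + \tilde{O}(\eta_e \rho)|\kappa^s_{g,m,j}|^{k^*}$, plus the quadratic and cubic spherical-projection corrections of order $\tilde{O}(\eta_e^2 d)$ and $\tilde{O}(\eta_e^3 d^{3/2})$, plus a mean-zero noise term ${v_c}^\top(I_d - w^s_{m,j}{w^s_{m,j}}^\top)\Xi^s_{w_{m,j}}$ with $|{v_c}^\top \Xi^s_{w_{m,j}}| = \tilde{O}(1)$ w.h.p.

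Next I would iterate and sum these contributions over $s = 0,1,\ldots,t$ with $t \leq \tau = \tilde{O}(d^{k^*-1})$. Under the standing hypotheses $|\kappa^s_{c,m,j}|, |\kappa^s_{g,m,j}| = \tilde{O}(d^{-1/2})$, each population-gradient term is of order $\tilde{O}(\eta_e d^{-k^*/2})$ per step, so the total deterministic drift is $\tilde{O}(\eta_e \tau d^{-k^*/2}) = \tilde{O}(d^{-1/2})$; the spherical-projection corrections are even smaller thanks to $\eta_e \leq a_4 d^{-k^*/2}$. For the martingale of noise increments $\sum_{s=0}^t \eta_e {v_c}^\top(I_d - w^s_{m,j}{w^s_{m,j}}^\top)\Xi^s_{w_{m,j}}$, a sub-Weibull concentration inequality gives w.h.p. a bound of $\tilde{O}(\eta_e \sqrt{\tau}) = \tilde{O}(d^{-1/2})$. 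Combining with $|{v_c}^\top w^0_{m,j}| = \tilde{O}(d^{-1/2})$ from the re-initialization on $\mathbb{S}^{d-1}$ yields $|{v_c}^\top w^{t+1}_{m,j}| = \tilde{O}(d^{-1/2})$.

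Finally, I would transfer this bound to the Hermite coefficients. Recalling that
\[
\tilde{\alpha}^{t+1}_{m,j,i,c} - \alpha_{m,j,i} = \sum_{l=i+1}^{\infty} \frac{l\,\alpha_{m,j,l}}{i\sqrt{l!/i!}} \binom{l-1}{l-i} \bigl(\rho\, {w^{t+1}_{m,j}}{}^\top v_c\bigr)^{l-i},
\]
and using $\rho = \tilde{O}(1)$ together with $|{v_c}^\top w^{t+1}_{m,j}| = \tilde{O}(d^{-1/2})$, every term in this series is at most $\tilde{O}(d^{-1/2})$, and the tail converges geometrically (for polynomial $\sigma_m$ the sum is finite; for ReLU the bound follows from Lipschitz continuity of the derivative inside the Stein expression as in \cref{lemma-appendix-moe-coefficient}). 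This gives $|\tilde{\alpha}^{t+1}_{m,j,i,c} - \alpha_{m,j,i}| = \tilde{O}(d^{-1/2})$ as desired. The main obstacle is simply verifying that a union bound over the noise concentration and the routing event in \cref{lemma:phase3-adaptivetopk} across all $\tau = \tilde{O}(d^{k^*-1})$ steps still yields a high-probability guarantee; this is routine since $\tau$ is polynomial and high-probability events are closed under polynomial union bounds.
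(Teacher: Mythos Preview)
Your proposal is correct and takes essentially the same approach as the paper. The paper's own proof is a single sentence noting that the gradient in \cref{lemma-appendix-moe-coefficient} changes only by $\mathrm{polylog}\,d$ factors under the new routing, so the argument carries over verbatim; you have simply spelled out that re-run in detail, including the single-step bound on $v_c^\top w^{t+1}_{m,j}$, the telescoping sum, the noise concentration, and the series expansion for $\tilde{\alpha}^{t+1}_{m,j,i,c} - \alpha_{m,j,i}$.
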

\begin{proof}
    Since the gradient in \cref{lemma-appendix-moe-coefficient} changes only by an order of $\operatorname{polylog} d$, the proof follows in the same manner.
\end{proof}
\begin{lemma}\label{lemma-appendix-moe-coefficient4}
    Consider a neuron which satisfies $\alpha_{m,j,k^*}\beta_{c,k^*} > 0$ and $\alpha_{m,j,i}\beta_{c,i} > 0$ for $k^* < i \leq p^*$. Under Adaptive top-$k$ routing and re-initialization in \cref{lemma:phase3-initialization}, suppose that $\lvert{v_c}^\top w^s_{m,j}\rvert = \tilde{\Omega}(d^{-\frac{1}{2}})$, $\kappa^s_{c,m,j}  = \tilde{\Omega}(d^{-\frac{1}{2}})$, and $\lvert\kappa^s_{g,m,j}\rvert  = \tilde{O}(d^{-\frac{1}{2}})$ for all $s=0,1, \ldots,t \leq \tau = \tilde{O}(d^{k^*-1})$. Then, by setting ${\eta}^{t} = \eta_{e} \leq a_4 d^{- \frac{k^*}{2}}$, we obtain that $\lvert{\tilde{\alpha}^{t+1}_{m,j,i, c}}\rvert - \lvert{\alpha_{m,j,i}}\rvert = \tilde{\Omega}(d^{-\frac{1}{2}})$ with high probability.
\end{lemma}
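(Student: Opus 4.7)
The plan is to mirror the argument used for Lemma~\ref{lemma-appendix-moe-coefficient2} in the exploration stage, adapting it to Phase III where adaptive top-$k$ routing already dispatches $x_c$ to an expert $m\in\mathcal{M}_c$. First, I would write the spherical SGD update for $v_c^\top w^{t+1}_{m,j}$ in Phase III. By Lemma~\ref{lemma:phase3-adaptivetopk}, with high probability the indicator $\mathbbm{1}[h_m(x_c)\geq 0]$ equals $1$ for this $m$ and no expert outside $\mathcal{M}_c$ is activated, so $\nabla_{w_{m,j}}\hat{F}_M(x_c)$ reduces to $\nabla_{w_{m,j}}f_m(x_c)$ and is driven only by the cluster-$c$ teacher. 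This yields an identity analogous to the one derived in Lemma~\ref{lemma:phase1-update} but without the $\tfrac{1}{CM^2}$ averaging over clusters and without inter-cluster cross terms.

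Second, applying Stein's lemma and the Hermite expansion exactly as in the proof of Lemma~\ref{lemma-appendix-moe-coefficient2}, for the case $\alpha_{m,j,i}>0$ I would derive
\begin{align*}
v_c^\top w^{t+1}_{m,j} &\geq v_c^\top w^0_{m,j} + \eta_e\rho\sum_{s=0}^{t}\sqrt{k^*+1}\,\tilde{\alpha}^s_{m,j,k^*,c}\beta_{c,k^*}(\kappa^s_{c,m,j})^{k^*}\bigl(1-(v_c^\top w^s_{m,j})^2\bigr) \\
&\quad - (\text{$g^*$-term}) - (\text{second order in }\eta_e) + (\text{noise}).
\end{align*}
Under the hypothesis $\kappa^s_{c,m,j}=\tilde{\Omega}(d^{-1/2})$, the leading positive contribution accumulates to $\tilde{\Omega}(\eta_e\rho\, t\, d^{-k^*/2})$, which at $t=\tau=\tilde{O}(d^{k^*-1})$ with $\eta_e\asymp a_4 d^{-k^*/2}$ gives $\tilde{\Omega}(d^{-1/2})$. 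The $g^*$-contribution is $\tilde{O}(d^{-k^*})$ per step by $|\kappa^s_{g,m,j}|=\tilde{O}(d^{-1/2})$, the second-order terms are $\tilde{O}(\eta_e^2 d)$, and the noise partial sum $\eta_e|\sum_s v_c^\top(I-w^s_{m,j}{w^s_{m,j}}^\top)\Xi^s_{w_{m,j}}|$ is $\tilde{O}(\eta_e\sqrt{t})=\tilde{O}(d^{-1/2})$ by the sub-Weibull concentration already invoked in Lemma~\ref{lemma:phase1-update}. Collecting these yields $v_c^\top w^{t+1}_{m,j}=\tilde{\Omega}(d^{-1/2})$.

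Third, plugging this lower bound into the defining expansion
\begin{align*}
\frac{i\,\tilde{\alpha}^{t+1}_{m,j,i,c}}{\sqrt{i!}} \;=\; \frac{i\,\alpha_{m,j,i}}{\sqrt{i!}} + \sum_{\ell=i+1}^{\infty}\frac{\ell\,\alpha_{m,j,\ell}}{\sqrt{\ell!}}\binom{\ell-1}{\ell-i}\bigl(\rho\, v_c^\top w^{t+1}_{m,j}\bigr)^{\ell-i},
\end{align*}
and using the sign hypothesis $\mathrm{sgn}(\alpha_{m,j,\ell})=\mathrm{sgn}(\alpha_{m,j,k^*})=\mathrm{sgn}(\beta_{c,k^*})$ for $k^*\leq \ell\leq p^*$ from Assumption~\ref{assumption:student activation functions}, the $\ell=i+1$ correction shares the sign of $\alpha_{m,j,i}$ and is of magnitude $\tilde{\Omega}(\rho\, d^{-1/2})=\tilde{\Omega}(d^{-1/2})$, and higher-$\ell$ terms decay geometrically since $\rho v_c^\top w^{t+1}_{m,j}=o(1)$. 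A symmetric argument with all inequalities reversed handles $\alpha_{m,j,i}<0$ using the evenness of $k^*$, so in both cases $|\tilde{\alpha}^{t+1}_{m,j,i,c}|-|\alpha_{m,j,i}|=\tilde{\Omega}(d^{-1/2})$.

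The main obstacle will be controlling the rare misrouting events of the adaptive top-$k$ router across a long trajectory: although $\mathbbm{1}[h_m(x_c)\geq 0]$ differs from its intended value only on an event of probability $\leq d^{-a}$ for any constant $a>0$, we need a union bound over all $\tau=\mathrm{poly}(d)$ steps and must show that the induced bias contributes at most $\tilde{O}(d^{-a+1/2})$ to $v_c^\top w^{t+1}_{m,j}$, which is still dominated by the $\tilde{\Omega}(d^{-1/2})$ signal. This is compatible with the ``with high probability'' framework of the notation section once $a$ is chosen sufficiently large, but care is required because the router state is correlated with past expert updates; we handle this by freezing $\Theta^{T_2}$ at the end of Phase II so that the router indicator is a deterministic function of $x_c$ and the only randomness left in each step is the sample $x_c$ itself.
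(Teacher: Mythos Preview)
Your approach is essentially the paper's: its proof of Lemma~\ref{lemma-appendix-moe-coefficient4} is literally one sentence observing that under adaptive top-$k$ routing the Phase~III per-step gradient differs from the Phase~I gradient of Lemma~\ref{lemma-appendix-moe-coefficient2} only by a $\mathrm{polylog}\,d$ factor (the $\tfrac{1}{CM^2}$ prefactor is replaced by the selection indicator, which equals $1$ with high probability by Lemma~\ref{lemma:phase3-adaptivetopk}), so the same calculation applies verbatim. You have written out those details, and your discussion of accumulating the rare misrouting events over $\mathrm{poly}(d)$ steps is extra care the paper leaves implicit in its high-probability convention.

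One quantitative slip to fix: your claim that the accumulated positive drift $\tilde{\Omega}(\eta_e\rho\,t\,d^{-k^*/2})$ reaches $\tilde{\Omega}(d^{-1/2})$ at $t=\tau$ is off by a factor of $d^{1/2}$; with $\eta_e\asymp d^{-k^*/2}$ and $\tau\asymp d^{k^*-1}$ you only get $\tilde{\Theta}(d^{-1})$. This is not how the bound is obtained in the paper's proof of Lemma~\ref{lemma-appendix-moe-coefficient2} either: there the conclusion $v_c^\top w^{t+1}_{m,j}=\tilde{\Omega}(d^{-1/2})$ comes from the initial value plus a \emph{nonnegative} drift (guaranteed by the sign hypothesis $\alpha_{m,j,i}\beta_{c,i}>0$) minus error terms that are each $\tilde{O}(d^{-1/2})$; the drift is not required to be large by itself. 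In your write-up you should restructure the second step accordingly, and be explicit that positivity of $v_c^\top w_{m,j}$ (needed so the $\ell=i+1$ correction reinforces $\alpha_{m,j,i}$) is maintained along the trajectory; the paper relies on this implicitly within the surrounding induction.
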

\begin{proof}
    Since the gradient in \cref{lemma-appendix-moe-coefficient2} changes only by an order of $\operatorname{polylog} d$, the proof follows in the same manner.
\end{proof}
\begin{remark}
    For the sake of conciseness in the exposition of the proof, we omit the superscript $t$ in $\tilde{\alpha}^t_{m,j,i, c}$. Based on \cref{lemma-appendix-moe-coefficient3}, \cref{lemma-appendix-moe-coefficient4}, and \cref{lemma-appendix-moe-coefficient5}, the bounds in the subsequent lemmas are properly justified, regardless of the variations in the coefficients $\tilde{\alpha}^t_{c,m,j,i}$.
\end{remark}
}
\begin{proof}[Proof of \cref{lemma:phase3-weakrecovery} ]
\rk{We begin by evaluating the stochastic updates of the experts $m \in \expertsforc$. Since the router learns to dispatch the data $x_c$ to the experts $m \in \expertsforc$ with high probability by ~\cref{lemma:phase3-adaptivetopk}, we have $\Prob{(m(x_c) \in \expertsforc)}=1-d^{-A}$, for some $A>0$.
For clarity, we will henceforth assume and write $\Prob{(m(x_c) = m)}=1$ throughout the remainder of this proof. Thus, we consider the probability conditioned on the event that the data is routed with high probability.}

By analyzing the gradient update, as in \cref{lemma:phase1-update}, we obtain that
\begin{align*}
    \kappa^{t+1}_{c,m,j} &\geq \kappa^{t}_{c,m,j} 
    + \eta^t {w^*_c}^\top (I_d - w^t_{m,j}{w^t_{m,j}}^\top) \nabla_{w_{m,j}}\mathbb{E}[\mathbf{1}(m(x_c) = m)  y_c a_{m,j}\sigma_m({w^t_{m,j}}^\top x_c + b_{m,j})] \\
    &\quad - \frac{\lvert{\kappa^{t}_{c,m,j}}\rvert{(\eta^t)}^2}{2} {\| \nabla_{w_{m,j}} \mathbf{1}(m(x_c) = m) y_c a_{m,j}\sigma_m({w^t_{m,j}}^\top x_c + b_{m,j})} \|^2 \\
    &\quad - \frac{{(\eta^t)}^3}{2} {\| \nabla_{w_{m,j}} \mathbf{1}(m(x_c) = m) y_c a_{m,j}\sigma_m({w^t_{m,j}}^\top x_c + b_{m,j})}\|^3 \\
    &\quad + {\eta^t}{w^*_c}^\top (I_d - {w^t_{m,j}}{w^t_{m,j}}^\top) \Xi^t_{w_{m,j}}\\
    &\geq \kappa^{t}_{c,m,j} + {\eta^t} {w^*_c}^\top \sum_{i=k^*}^{p^*} \Big[\sum_{l=i}^{\infty} \Big(\frac{l \alpha_{m,j,l}}{\sqrt{l!}} \binom{l-1}{l-i} {(\rho{w^t_{m,j}}^\top{v_c})}^{l-i} \Big) \frac{i \beta_{c,i}}{\sqrt{i!}} (i-1)! {({w^*_c}^\top w^{t}_{m,j})}^{i-1} \Big]  \\
    &\quad - \frac{\lvert{\kappa^t_{c,m,j}}\rvert{(\eta^{t})}^2 {A_1}^2 d}{2} - \frac{({\eta^{t})}^3 {A_1}^3 d^{\frac{3}{2}}}{2}  + {\eta^t}{w^*_c}^\top (I_d - {w^t_{m,j}}{w^t_{m,j}}^\top) \Xi^t_{w_{m,j}} \\
    &\geq \kappa^{t}_{c,m,j} + {\eta^t} \sum_{i=k^*}^{p^*}\Big[i \tilde{\alpha}_{m,j,i, c}\beta_{c, i} {(\kappa^t_{c,m,j})}^{i-1} (1 - {(\kappa^t_{c,m,j})}^2) \\
    &\quad + i s_c \tilde{\alpha}_{m,j,i, c}\gamma_{i} {(\kappa^t_{g,m,j})}^{i-1} ({w^*_c}^\top{w^*_g} - \kappa^{t}_{c,m,j}\kappa^{t}_{g,m,j}) \Big] \\
    &\quad - \frac{\lvert{\kappa^t_{c,m,j}}\rvert{(\eta^{t})}^2 {A_1}^2 d}{2} - \frac{{(\eta^{t})}^3 {A_1}^3 d^{\frac{3}{2}}}{2}  + {\eta^t}{w^*_c}^\top (I_d - {w^t_{m,j}}{w^t_{m,j}}^\top) \Xi^t_{w_{m,j}},
\end{align*}
\rk{
where we introduced a mean-zero random variable $\Xi^t_{w_{m,j}} = - \nabla_{w_{m,j}} \mathbb{E}[\mathbf{1}(m(x_c) = m)  y_c a_{m,j}\sigma_m({w^t_{m,j}}^\top x_c + b_{m,j})] + \nabla_{w_{m,j}}\mathbf{1}(m(x_c) = m)  y_c a_{m,j}\sigma_m({w^t_{m,j}}^\top x_c + b_{m,j})$ in the first inequality. We used similar decomposition to ~\cref{lemma:phase1-update} in the second inequality and adopt the notation $\frac{i \alpha_{m,j,i}}{\sqrt{i!}} + \sum_{l=i+1}^{\infty} \Big(\frac{l \alpha_{m,j,l}}{\sqrt{l!}} \binom{l-1}{l-i} {(\rho {w^t_{m,j}}\top v_{c'})}^{l-i}\Big) = \frac{i {\tilde{\alpha}}^t_{m,j,i, c'}}{\sqrt{i!}}$ in the third inequality.
}

In the same way, we obtain an upper bound of $\kappa^t_{c,m,j}$.
\begin{align*}
    \kappa^{t+1}_{c,m,j} &\leq \kappa^{t}_{c,m,j} 
    + \eta^{t} {w^*_c}^\top (I_d - w^t_{m,j} {w^t_{m,j}}^\top) \nabla_{w_{m,j}} \mathbb{E}[\mathbf{1}(m(x_c) = m) \pi_{m(x_c)} y_c a_{m,j}\sigma_m({w^t_{m,j}}^\top x_c + b_{m,j})] \\
    &\quad + \frac{\lvert{\kappa^t_{c,m,j}}\rvert{(\eta^{t})}^2 {A_1}^2 d}{2} + \frac{({\eta^{t})}^3 {A_1}^3 d^{\frac{3}{2}}}{2}  + {\eta^t}{w^*_c}^\top (I_d - {w^t_{m,j}}{w^t_{m,j}}^\top) \Xi^t_{w_{m,j}} \\
    &\leq \kappa^{t}_{c,m,j} + {\eta^t}  \sum_{i=k^*}^{p^*}\Big[i \tilde{\alpha}_{m,j,i, c}\beta_{c, i} {(\kappa^t_{c,m,j})}^{i-1} (1 - {(\kappa^t_{c,m,j})}^2) + i s_c \tilde{\alpha}_{m,j,i, c}\gamma_{i} {(\kappa^t_{g,m,j})}^{i-1} ({w^*_c}{w^*_g} - \kappa^{t}_{c,m,j}\kappa^{t}_{g,m,j}) \Big] \\
    &\quad + \frac{\lvert{\kappa^t_{c,m,j}}\rvert{(\eta^{t})}^2 {A_1}^2 d}{2} + \frac{({\eta^{t})}^3 {A_1}^3 d^{\frac{3}{2}}}{2}  + {\eta^t}{w^*_c}^\top (I_d - {w^t_{m,j}}{w^t_{m,j}}^\top) \Xi^t_{w_{m,j}}.
\end{align*}
Similarly, we carry out the corresponding calculations for $\kappa^{t}_{g,m,j}$.
\begin{align*}
    \kappa^{t+1}_{g,m,j} &\geq \kappa^{t}_{c,m,j} + \eta^t\sum_{i=k^*}^{p^*}\Big[i \tilde{\alpha}_{m,j,i, c}\beta_{c, i} {(\kappa^t_{c,m,j})}^{i-1} ({w^*_c}^\top{w^*_g} - {\kappa^t_{c,m,j}}{\kappa^t_{g,m,j}}) \\
    &\quad + i s_c \tilde{\alpha}_{m,j,i, c}\gamma_{i} {(\kappa^t_{g,m,j})}^{i-1} (1 - {(\kappa^{t}_{g,m,j})}^2 \Big] \\
    &\quad - \frac{\lvert{\kappa^t_{g,m,j}}\rvert{\eta^{t}}^2 {A_1}^2 d}{2} - \frac{({\eta^{t})}^3 {A_1}^3 d^{\frac{3}{2}}}{2}  + {\eta^t}{w^*_g}^\top (I_d - {w^t_{m,j}}{w^t_{m,j}}^\top) \Xi^t_{w_{m,j}}
\end{align*}
and
\begin{align*}
    \kappa^{t+1}_{g,m,j} &\leq \kappa^{t}_{c,m,j} + \eta^t \sum_{i=k^*}^{p^*}\Big[(i \tilde{\alpha}_{m,j,i, c}\beta_{c, i} {(\kappa^t_{c,m,j})}^{i-1} ({w^*_c}^\top{w^*_g} - {\kappa^t_{c,m,j}}{\kappa^t_{g,m,j}}) \\
    &\quad + i s_c \tilde{\alpha}_{m,j,i, c}\gamma_{i} {(\kappa^t_{g,m,j})}^{i-1} (1 - {(\kappa^{t}_{g,m,j})}^2) \Big] \\
    &+ \frac{\lvert{\kappa^t_{g,m,j}}\rvert{(\eta^{t})}^2 {A_1}^2 d}{2} + \frac{({\eta^{t})}^3 {A_1}^3 d^{\frac{3}{2}}}{2}  + {\eta^t}{w^*_g}^\top (I_d - {w^t_{m,j}}{w^t_{m,j}}^\top) \Xi^t_{w_{m,j}}.
\end{align*}

Next, we introduce auxiliary sequences to establish the following bounds for $\kappa^{t}_{c,m,j}$ and $\kappa^{t}_{g,m,j}$. The derivation follows the same approach as the proof of Lemma~\ref{lemma:phase1-auxilirarysequence} .
\begin{itemize}
    \item Consider one neuron $j \in \mathcal{J}_c$ and suppose that $\kappa^s_{c,m,j} \leq a_2$, $\lvert{\kappa^s_{g,m,j}}\rvert \leq \kappa^s_{c,m,j}$, and  $\kappa^s_{g,m,j} \leq A_2A_3d^{-\frac{1}{2}}$ for all $s = 0,1, \ldots, t$.
Then,  $\kappa^{s}_{c,m,j}$ is lower bounded by $P^{s}_{\text{III}, c}$ for all $s = 0,1, \ldots, t+1$, \rk{where the sequence $\left(P^s_{\text{III},c}\right)_{s=0}^{t+1}$ is defined recursively as follows:} 
\begin{align*}
    &P^{0}_{\text{III}, c} = (1-a_2)\kappa^{s}_{c,m,j}, \text{ and}\\
    &P^{s+1}_{\text{III}, c} =  P^{s}_{\text{III}, c} + {\eta^s}a_2 k^* \tilde{\alpha}_{m,j,k^* for , c}\beta_{c, k^*} {(P^{s}_{\text{III}, c})}^{k^*-1} \text{ for } s \geq 0,
\end{align*}
with high probability.

While, $\kappa^{s}_{g,m,j}$ is upper bounded by $Q^s_{\text{III}, g}$ for all $s = 0,1, \ldots, t+1$, \rk{where the sequence $\left(Q^s_{\text{III},g}\right)_{s=0}^{t+1}$ is defined recursively as follows:} 
\begin{align*}
    &Q^{0}_{\text{III}, g} = (1+a_2) \max \left\{ \lvert \kappa^{s}_{g,m,j} \rvert, \frac{1}{2} d^{-\frac{1}{2}} \right\}, \text{ and}\\
    &Q^{s+1}_{\text{III}, g} = Q^{s}_{\text{III}, g} + {(1+a_2){\eta^s}} k^* \lvert s_c \tilde{\alpha}_{m,j,k^*, c} \gamma_{k^*} \rvert {(Q^{s}_{\text{III}, g})}^{k^*-1} \\
    &\quad + A_3 {\eta^s} k^* \tilde{\alpha}_{m,j,k^*, c}\beta_{c, k^*} {(\kappa^{s}_{c,m,j})}^{k^*-1} d^{-\frac{1}{2}} \text{ for } s \geq 0,
\end{align*}
with high probability. 

    \item Consider one neuron $j \in \mathcal{J}_g$ \rk{and suppose that $\kappa^s_{g,m,j} \leq a_2$, $\lvert{\kappa^s_{c,m,j}}\rvert \leq {\kappa^s_{g,m,j}}$, and  $\kappa^s_{c,m,j} \leq A_2A_3d^{-\frac{1}{2}}$ for all $s = 0,1, \ldots, t$.}
Then, $\kappa^{s}_{g,m,j}$ is lower bounded by $P^{s}_{\text{III}, g}$ for all $s = 0,1, \ldots, t+1$, \rk{where the $\left(P^s_{\text{III},g}\right)_{s=0}^{t+1}$ is defined recursively as follows:}

\begin{align*}
    &P^{0}_{\text{III}, g} = (1-a_2)\kappa^{s}_{g,m,j}, \text{ and} \\
    &P^{s+1}_{\text{III}, g} =  P^{s}_{\text{III}, g} + {\eta^s}a_2 k^* s_c\tilde{\alpha}_{m,j,k^*, c} \gamma_{k^*} {(P^{s}_{\text{III}, g})}^{k^*-1}, \text{ for } s\geq 0, \\
\end{align*}
with high probability.

While, $\kappa^{s}_{c,m,j}$ is upper bounded by $Q^s_{\text{III}, c}$ for all $s = 0,1, \ldots, t+1$, \rk{where the $\left(Q^s_{\text{III},c}\right)_{s=0}^{t+1}$ is defined recursively as follows:}
\begin{align*}
    &Q^{0}_{\text{III}, c} = (1+a_2) \max \left\{ \lvert \kappa^{s}_{c,m,j} \rvert, \frac{1}{2} d^{-\frac{1}{2}} \right\}, \text{ and} \\
    &Q^{s+1}_{\text{III}, c} = Q^{s}_{\text{III}, c} + {(1+a_2){\eta^s}} k^* \lvert \tilde{\alpha}_{m,j,k^*, c}\beta_{c, k^*} \rvert {(Q^{s}_{\text{III}, c})}^{k^*-1} \\
    &\quad + A_3 {\eta^s} k^*{s_c\tilde{\alpha}_{m,j,k^*, c} \gamma_{k^*}} {(\kappa^{s}_{g,m,j})}^{k^*-1} d^{-\frac{1}{2}} \text{ for } s \geq 0, \\
\end{align*}
 with high probability.
\end{itemize}

Using these auxiliary sequences, we can deduce the following. The derivation is the same as that of Lemma~\ref{lemma:phase1-induction}.

\begin{itemize}
    \item Consider one neuron $j \in \mathcal{J}_c$ and take $\eta^{t} = \eta_e \leq a_4 d^{-\frac{k^*}{2}}$. Suppose that $\kappa^{s}_{c,m,j} \leq a_2$, $\lvert{\kappa^{s}_{g,m,j}}\rvert \leq \kappa^{s}_{c,m,j}$, and $\lvert{\kappa^{s}_{g,m,j}}\rvert \leq 4A_3 d^{-\frac{1}{2}}$ hold for $0,1, \ldots, t$.. Then, if $\kappa^{t+1}_{c,m,j} \leq a_2$, $\lvert{\kappa^{t+1}_{g,m,j}}\rvert \leq \kappa^{t+1}_{c,m,j}$, and $\lvert{\kappa^{t+1}_{g,m,j}}\rvert \leq 4a_3 d^{-\frac{1}{2}}$ with high probability.
    \item Consider one neuron $j \in \mathcal{J}_g$ and take $\eta^{t} = \eta_e \leq a_4 d^{-\frac{k^*}{2}}$. Suppose that $\kappa^{s}_{g,m,j} \leq a_2$, $\lvert{\kappa^{s}_{c,m,j}}\rvert \leq \kappa^{s}_{g,m,j}$, and $\lvert{\kappa^{s}_{c,m,j}}\rvert \leq 4A_3 d^{-\frac{1}{2}}$ hold for $0,1, \ldots, t$.. Then, if $\kappa^{t+1}_{g,m,j} \leq a_2$, $\lvert{\kappa^{t+1}_{c,m,j}}\rvert \leq \kappa^{t+1}_{g,m,j}$, and $\lvert{\kappa^{t+1}_{c,m,j}}\rvert \leq 4A_3 d^{-\frac{1}{2}}$ with high probability.
\end{itemize}

Suppose that \rk{$T_{3,1} = \lfloor {(\eta_e k^* (k^*-2) (1-5k^*)(\tilde{\alpha}_{m,j,k^*,c} \beta_{c,k^*}){(P^0_{\text{III},c})}^{k^*-2})}^{-1} \rfloor$, $\kappa^{s}_{c,m,j} \leq a_2$ with $j \in \mathcal{J}_c$, and $\lvert{\kappa^{s}_{g,m,j}}\rvert \leq \kappa^{s}_{c,m,j}$ and $\lvert{\kappa^{s}_{g,m,j}}\rvert \leq A_3d^{-\frac{1}{2}}$ with $j\notin \mathcal{J}_c$  for all $s=0,1,...,T_{3,1}$. Then, \rk{the above bounds} holds for all $s=0,1,...,T_{3,1}$} with high probability.

Thus, by ~\cref{lemma:biharigronwall}, 
\begin{align*}
    \kappa^t_{c, m,j} \geq P^t_{\text{III},c} \geq \frac{P^0_{\text{III},c}}{(1-\eta_e k^* (k^*-2)(1-a_2)\tilde{\alpha}_{m,j,k^*,} \beta_{c,k^*}{(P^0_{\text{I}})}s)^{k^*-2}}
\end{align*}
However, when $t=T_1$,
\begin{align*}
    \kappa^t_{c, m,j} \geq P^t_{\text{III},c} &\geq \frac{P^0_{\text{III},c}}{(1-\eta_e k^* (k^*-2)(1-a_2)\tilde{\alpha}_{m,j,k^*,c} \beta_{c,k^*}{(P^0_{\text{III},c})}s)^{k^*-2}} \\
    &\geq \frac{1}{(\eta_e k^* (k^*-2)(1-a_2)(\tilde{\alpha}_{m,j,k^*,c}\beta_{c,k^*}))^{\frac{1}{k^*-2}}} > 1.
\end{align*}

This leads to a contradiction as $\kappa^{T_1}_{c, m,j} \leq 1$.
Since $\lvert{\kappa^{s}_{g,m,j}}\rvert\ \leq 4A_3d^{-\frac{1}{2}}$,

\begin{align*}
    \lvert{\kappa^{t_{3,1}}_{c,m,j} - \kappa^{t_{3,1}-1}_{c,m,j}}\rvert \leq A_1 \eta_e \leq A_1a_4d^{-\frac{k^*}{2}} \leq A_3 d^{-\frac{1}{2}}.
\end{align*}
Thus, 
\begin{align*}
    \lvert{\kappa^{t_{3,1}}_{c,m,j}}\rvert \leq \lvert{\kappa^{t_{3,1}-1}_{c,m,j}}\rvert + \lvert{\kappa^{t_{3,1}}_{c,m,j} - \kappa^{t_{3,1}-1}_{c,m,j}}\rvert \leq 5 A_3 d^{-\frac{1}{2}}.
\end{align*}
    
This concludes that there exists some time $t_{3,1} \leq T_{3,1} = \Theta({\eta_e}^{-1}d^{\frac{k^*-2}{2}})$ such that $\kappa^{t_{3,1}}_{c,m,j} > a_2$ and $\lvert{\kappa^{t_{3,1}}_{g,m,j}}\rvert \leq 5A_3 d^{-\frac{1}{2}}$ for $j \in \mathcal{J}_c$ with high probability.

In the same way, suppose that \rk{$T_{3,1} = \lfloor {(\eta_e k^* (k^*-2) (1-5k^*)(s_c\tilde{\alpha}_{m,j,k^*,c} \gamma_{k^*}){(P^0_{\text{III},g})}^{k^*-2})}^{-1} \rfloor$, $\kappa^{s}_{c,m,j} \leq a_2$ with $j \in \mathcal{J}_g$, and $\lvert{\kappa^{s}_{c,m,j}}\rvert \leq \kappa^{s}_{g,m,j}$ and $\lvert{\kappa^{s}_{c,m,j}}\rvert \leq A_3d^{-\frac{1}{2}}$ with $j\notin \mathcal{J}_g$  for all $s=0,1,...,T_{3,1}$, and then, there exists some time $t_{3,1} \leq T_{3,1} = \Theta({\eta_e}^{-1}d^{\frac{k^*-2}{2}})$ such that $\kappa^{t_{3,1}}_{g,m,j} > a_2$ and $\lvert{\kappa^{t_{3,1}}_{c,m,j}}\rvert \leq 5A_3 d^{-\frac{1}{2}}$ for $j \in \mathcal{J}_g$ with high probability.}

\end{proof}

\paragraph{\rk{Transition from weak to strong recovery.}}
Next, we show that the neuron for $j \in \mathcal{J}_c$ aligns with $w^*_c$ up to a large constant $1 - c_2$. We denote $t \gets t - t_{3,1}$.

\begin{lemma}\label{lemma:phase3-amplification} 
Consider the expert $m \in \rk{\expertsforc}$ and $j \in \mathcal{J}_c$, where j satisfies Lemma~\ref{lemma:phase3-weakrecovery}. Let ${w_{c}^*}^\top w_{g}^* \leq A_4 d^{- \frac{1}{2}} = \tilde{O}(d^{-1/2})$ and ${\eta}^{t} = \eta_{e} \leq a_4 d^{- \frac{k^*}{2}}$. Then, with high probability, there exists some time $t_{3,2} \leq T_{3,2} = \tilde{\Theta}{({\eta_{e}}^{-1})}$ such that \rk{${\kappa^{t_{3,2}}_{c,m,j}} \geq 1 - a_2$. The same argument applies symmetrically when exchanging $c$ and $g$.}
\end{lemma}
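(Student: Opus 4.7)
Starting from the output of \cref{lemma:phase3-weakrecovery}, we have $\kappa^{t_{3,1}}_{c,m,j} \geq a_2$ and $|\kappa^{t_{3,1}}_{g,m,j}| \leq 5A_3 d^{-1/2}$ for $j \in \mathcal{J}_c$, and, by \cref{lemma:phase3-adaptivetopk}, the data $x_c$ is routed to $m \in \mathcal{M}_c$ with high probability. Using the time shift $t \gets t-t_{3,1}$ already adopted above, the plan is to control the dynamics of $(\kappa_{c,m,j}^t,\kappa_{g,m,j}^t)$ through the same stochastic update derived in the proof of \cref{lemma:phase3-weakrecovery}:
\begin{equation*}
    \kappa_{c,m,j}^{t+1} \geq \kappa_{c,m,j}^{t} + \eta^t k^* \tilde{\alpha}_{m,j,k^*,c}\beta_{c,k^*}(\kappa_{c,m,j}^{t})^{k^*-1}(1-(\kappa_{c,m,j}^{t})^2) + \Xi^{t}_{\mathrm{cross}} + \Xi^{t}_{\mathrm{disc}} + \Xi^{t}_{\mathrm{noise}},
\end{equation*}
where $\Xi^{t}_{\mathrm{cross}}, \Xi^{t}_{\mathrm{disc}}, \Xi^{t}_{\mathrm{noise}}$ collect the cross terms coming from $w_g^*$ and $w_c^{*\top}w_g^*$, the Taylor discretization errors $\tilde{O}(\eta_e^2 d + \eta_e^3 d^{3/2})$, and the martingale noise ${w_c^*}^\top(I_d-w^t_{m,j}{w^t_{m,j}}^\top)\Xi^t_{w_{m,j}}$ respectively.

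The key observation is that, once $\kappa_{c,m,j}^t \geq a_2 = \tilde{\Omega}(1)$, the leading drift is at least $\tilde{\Omega}(\eta_e)$, whereas (i) the cross term is at most $\tilde{O}(\eta_e d^{-(k^*-1)/2})$ as long as $|\kappa^t_{g,m,j}|$ stays at the $\tilde{O}(d^{-1/2})$ scale, (ii) the discretization error is suppressed by $\eta_e \leq a_4 d^{-k^*/2}$, and (iii) the stochastic noise accumulates to $\tilde{O}(\eta_e\sqrt{T_{3,2}})=\tilde{O}(\eta_e^{1/2})=o(1)$ with high probability by sub-Weibull concentration. Defining the deterministic lower-bound sequence
\begin{equation*}
    P^{t+1}_{\mathrm{amp}} = P^{t}_{\mathrm{amp}} + (1-a_2)\eta_e k^*\tilde{\alpha}_{m,j,k^*,c}\beta_{c,k^*}(P^{t}_{\mathrm{amp}})^{k^*-1}(1-(P^{t}_{\mathrm{amp}})^2),\qquad P^{0}_{\mathrm{amp}} = (1-a_2)a_2,
\end{equation*}
and arguing inductively as in \cref{lemma:phase1-induction}, we plan to show $\kappa^t_{c,m,j} \geq P^t_{\mathrm{amp}}$. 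Splitting the trajectory into a ``polynomial'' regime $P \in [a_2,1/2]$ (where $1-P^2 \geq 3/4$ and each step grows by $\Theta(\eta_e)$) and a ``contracting'' regime $P \in [1/2, 1-a_2]$ (where $P^{k^*-1}=\Omega(1)$ and $1-P^2 \geq a_2$), a discrete Bihari–LaSalle bound analogous to \cref{lemma:biharigronwall} yields an exit time $\tilde{O}(\eta_e^{-1} a_2^{-k^*}) = \tilde{\Theta}(\eta_e^{-1})$, matching the target $T_{3,2}$.

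In parallel, an upper-bound sequence $Q^t_{\mathrm{amp},g}$ for $|\kappa^t_{g,m,j}|$ is built just as $Q^t_{\mathrm{III},g}$ in \cref{lemma:phase3-weakrecovery}: its self-growth term $(Q^t_{\mathrm{amp},g})^{k^*-1}$ is negligible while $Q \ll 1$, and its forcing $A_3 \eta_e k^*\tilde{\alpha}_{m,j,k^*,c}\beta_{c,k^*}(\kappa^t_{c,m,j})^{k^*-1}d^{-1/2}$ telescopes along the monotone trajectory of $\kappa^t_{c,m,j}$ from $a_2$ to $1-a_2$, contributing in total $\tilde{O}(d^{-1/2})$. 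Consequently $|\kappa^{t_{3,2}}_{g,m,j}| \leq \tilde{O}(d^{-1/2})$ is preserved, which in turn validates the cross-term bound used above, closing the induction. The case $j\in\mathcal{J}_g$ follows by swapping the roles of $c$ and $g$ verbatim.

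The main obstacle is that, throughout this phase, $|v_c^\top w^t_{m,j}|$ is no longer forced to stay at $\tilde{O}(d^{-1/2})$: each step contributes $\Theta(\eta_e(\kappa_{c,m,j}^t)^{k^*}) = \Theta(\eta_e)$, so $v_c^\top w^t_{m,j}$ may accumulate to $\tilde{\Theta}(1)$ over $\tilde{\Theta}(\eta_e^{-1})$ steps; consequently the hypothesis $|v_c^\top w^s_{m,j}| = \tilde{O}(d^{-1/2})$ underlying \cref{lemma-appendix-moe-coefficient3,lemma-appendix-moe-coefficient4} fails, and $\tilde{\alpha}^t_{m,j,k^*,c}$ can no longer be compared to $\alpha_{m,j,k^*}$ up to $d^{-1/2}$. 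The resolution is to replace those lemmas by \cref{lemma-appendix-moe-coefficient5}, which gives $|\tilde{\alpha}^t_{m,j,k^*,c}| = \tilde{O}(1)$ \emph{uniformly} in $\rho v_c^\top w^t_{m,j} = \tilde{O}(1)$, combined with the sign-invariance postulated in \cref{assumption:student activation functions}; hence $\tilde{\alpha}^t_{m,j,k^*,c}\beta_{c,k^*} \geq \tilde{\Omega}(1)$ persists, and the lower-bound sequence $P^t_{\mathrm{amp}}$ remains valid. With this refined book-keeping, the argument parallels the amplification step of \citet{oko24learning}.
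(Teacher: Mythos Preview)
Your proposal is correct and follows essentially the same route as the paper: build lower/upper auxiliary sequences for $\kappa_{c,m,j}^t$ and $|\kappa_{g,m,j}^t|$, control the cross, discretization and martingale terms, and obtain the exit time by a Bihari--LaSalle contradiction. The one cosmetic difference is that the paper immediately lower-bounds $1-\kappa^2 \geq c\,a_2$ on $\{\kappa \leq 1-a_2\}$, so its auxiliary recursion is the pure power $P'^{s+1}=P'^s+a_2\eta_e k^*\tilde{\alpha}\beta (P'^s)^{k^*-1}$, which avoids your split into the ``polynomial'' and ``contracting'' regimes; both give $T_{3,2}=\tilde{\Theta}(\eta_e^{-1})$.

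There is one small inaccuracy in your last paragraph. You correctly observe that $v_c^\top w_{m,j}^t$ can grow to $\tilde{\Theta}(1)$ over this phase, but the fix you propose---\cref{lemma-appendix-moe-coefficient5} plus the sign-invariance in \cref{assumption:student activation functions}---only yields $|\tilde{\alpha}^t|=\tilde{O}(1)$ and a fixed sign; it does \emph{not} give the lower bound $\tilde{\alpha}^t_{m,j,k^*,c}\beta_{c,k^*}\geq \tilde{\Omega}(1)$ your $P_{\mathrm{amp}}$ recursion needs. In fact \cref{lemma-appendix-moe-coefficient4} does not fail here: its hypotheses are $\kappa_{c,m,j}^s=\tilde{\Omega}(d^{-1/2})$, $|\kappa_{g,m,j}^s|=\tilde{O}(d^{-1/2})$, $|v_c^\top w_{m,j}^s|=\tilde{\Omega}(d^{-1/2})$ (a \emph{lower} bound, not an upper one) and $t\leq \tilde{O}(d^{k^*-1})$, all of which continue to hold after weak recovery and throughout this phase (note $T_{3,1}+T_{3,2}=\tilde{O}(d^{k^*-1}+\eta_e^{-1})=\tilde{O}(d^{k^*-1})$). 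That lemma's monotonicity argument in $v_c^\top w_{m,j}$---using that all $\alpha_{m,j,l}$ share a sign for $j\in\mathcal{J}_c$---gives $|\tilde{\alpha}^t|\geq|\alpha_{m,j,k^*}|=\Theta(1)$, which is exactly the missing lower bound. With this correction, your argument is complete and matches the paper's.
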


\begin{proof}
Once the alignment reaches a constant level, the projection onto the spherical constraint via $(I_d - w_{m,j} w_{m,j}^\top)$ weakens the signal, requiring the reconstruction of the auxiliary sequences discussed in Lemma~\ref{lemma:phase3-weakrecovery}.

\rk{Consider experts $m \in \expertsforc$.}

Suppose that, for $j \in \mathcal{J}_c$, $\kappa^s_{c,m,j} \leq 1-a_2$, $\lvert{\kappa^s_{g,m,j}}\rvert \leq \kappa^s_{c,m,j}$, and  $\kappa^s_{g,m,j} \leq A_2A_3d^{-\frac{1}{2}}$ for all $s = 0,1, \ldots, t$. \rk{Then, by using a similar inequality evaluation as in ~\cref{lemma:phase1-update} and introducing a mean-zero random variable $\Xi^t_{w_{m,j}} = - \nabla_{w_{m,j}} \mathbb{E}[\mathbf{1}(m(x_c) = m)  y_c a_{m,j}\sigma_m({w^t_{m,j}}^\top x_c + b_{m,j})] + \nabla_{w_{m,j}}\mathbf{1}(m(x_c) = m)  y_c a_{m,j}\sigma_m({w^t_{m,j}}^\top x_c + b_{m,j})$,}
\begin{align*}
    \kappa^{s+1}_{c,m,j} &\geq \kappa^{s}_{c,m,j} + {\eta^s} \sum_{i=k^*}^{p^*}\Big[i \tilde{\alpha}_{m,j,i, c}\beta_{c, i} {(\kappa^s_{c,m,j})}^{i-1} (1 - {(\kappa^s_{c,m,j})}^2) \\
    &\quad + i s_c \tilde{\alpha}_{m,j,i, c}\gamma_{i} {(\kappa^s_{g,m,j})}^{i-1} ({w^*_c}^\top{w^*_g} - \kappa^{s}_{c,m,j}\kappa^{s}_{g,m,j}) \Big] \\
    &\quad - \frac{\lvert{\kappa^s_{c,m,j}}\rvert{(\eta^{s})}^2 {A_1}^2 d}{2} - \frac{({\eta^{s})}^3 {A_1}^3 d^{\frac{3}{2}}}{2}  + {\eta^s}{w^*_c}^\top (I_d - {w^s_{m,j}}{w^s_{m,j}}^\top) \Xi^s_{w_{m,j}} \\
    &\quad \geq \kappa^{s}_{c,m,j} + {\eta^s} k^* \tilde{\alpha}_{m,j,k^*, c}\beta_{c, k^*}{(\kappa^s_{c,m,j})}^{k^*-1}(1-{(1-\kappa^{s}_{c,m,j}})^{2}) - {p^*}^2{\eta^s}\max_{i}{\lvert{s_c\tilde{\alpha}_{m,j,i, c}\gamma_{i}}\rvert}{(\kappa^s_{g,m,j})}^{k^*-1}{\lvert{{w^*_c}^\top{w^*_g}}\rvert} \\
    &\quad - {p^*}^2{\eta^s}\max_{i}{\lvert{s_c\tilde{\alpha}_{m,j,i, c}\gamma_{i}}\rvert}{(\kappa^s_{g,m,j})}^{k^*} - {(\eta^s)}^2\kappa^s_{c,m,j}{A_1}^2d + {\eta^s}{w^*_c}^\top (I_d - {w^s_{m,j}}{w^s_{m,j}}^\top) \Xi^s_{w_{m,j}}.
\end{align*}
Since ${w_{c}^*}^\top w_{g}^* \leq A_4 d^{- \frac{1}{2}}$, $\lvert{\kappa^s_{g,m,j}}\rvert \leq A_2A_3 d^{-\frac{1}{2}}$ and $\kappa^s_{c,m,j} \geq \frac{1}{2}d^{-\frac{1}{2}}$, ${p^*}^2{\eta^s}\max_{i}{\lvert{s_c\tilde{\alpha}_{m,j,i, c}\gamma_{i}}\rvert}{(\kappa^s_{g,m,j})}^{k^*-1}{\lvert{{w^*_c}^\top{w^*_g}}\rvert} \leq \frac{1}{4}a_2{\eta^s} k^* \tilde{\alpha}_{m,j,k^*, c}\beta_{c, k^*} {(\kappa^s_{c,m,j})}^{k^*-1}$, ${p^*}^2{\eta^s}\max_{i}{\lvert{s_c\tilde{\alpha}_{m,j,i, c}\gamma_{i}}\rvert}{(\kappa^s_{g,m,j})}^{k^*} \leq \frac{1}{4}a_2{\eta^s} k^* \tilde{\alpha}_{m,j,k^*, c}\beta_{c, k^*} {(\kappa^s_{c,m,j})}^{k^*-1}$ and ${(\eta^s)}^2\kappa^s_{c,m,j}{A_1}^2d \leq \frac{1}{4}a_2{\eta^s} k^* \tilde{\alpha}_{m,j,k^*, c}\beta_{c, k^*} {(\kappa^s_{c,m,j})}^{k^*-1}$.

\rk{For the noise term,
\begin{align*}
    \Big\lvert{\sum_{s'=0}^{s} {\eta}^{s'} {w^*_c}^\top (I_d - w^{s'}_{m,j}{w^{s'}_{m,j}}^\top) {{\Xi}^{s'}}_{w_{m,j}}}\Big\rvert \leq \begin{cases}
        a_2\kappa^0_{c,m,j}, \quad \text{if } s \leq A_2 d^{-k^*+1}, \\
        \frac{1}{4}a_2{\eta^s} k^* \tilde{\alpha}_{m,j,k^*, c}\beta_{c, k^*} {(\kappa^s_{c,m,j})}^{k^*-1}, \quad \text{if } s > A_2d^{k^*-1}
    \end{cases}
\end{align*}
with high probability.
}

\rk{Therefore, by noting that $(1-{(1-\kappa^{s}_{c,m,j})}^{2}) \leq \frac{7}{4}a_2$, $\kappa^s_{c,m,j}$ can be lower bounded as
\begin{align*}
    \kappa^s_{c,m,j} \geq (1-a_2)\kappa^0_{c,m,j} + a_2 \sum_{s'=0}^{s} \eta^{s'}k^*\tilde{\alpha}_{m,j,k^*, c}\beta_{c, k^*}{(\kappa^{s'}_{c,m,j})}^{k^*-1}.
\end{align*}
}

\rk{By introducing an auxiliary sequence $\left(P'^s_{\text{III},c}\right)_{s=0}^{t+1}$, where
\begin{align*}
    &P'^{0}_{\text{III}, c} = (1-a_2)\kappa^{s}_{c,m,j}, \text{ and} \\
    &P'^{s+1}_{\text{III}, c} =  P'^{s}_{\text{III}, c} + {a_2{\eta^s}} k^* \tilde{\alpha}_{m,j,k^*, c}\beta_{c, k^*} {(P'^{s}_{\text{III}, c})}^{k^*-1} \text{ for } s\geq 0,
\end{align*}
}
then $\kappa^{s}_{c,m,j}$ is lower bounded by $P'^{s}_{\text{III}, c}$ for all $s = 0,1, \ldots, t+1$ with high probability.

\rk{
In addition, with the same proof as ~\cref{lemma:phase3-weakrecovery},
by introducing an auxiliary sequence $\left(Q'^s_{\text{III},g}\right)_{s=0}^{t+1}$, where
\begin{align*}
    &Q'^{0}_{\text{III}, g} = 6A_3d^{-\frac{1}{2}}, \text{ and} \\
    &Q'^{s+1}_{\text{III}, g} = Q'^{s}_{\text{III}, g} + {(1+a_2){\eta^s}} k^* \lvert {s_c\tilde{\alpha}_{m,j,k^*, c} \gamma_{k^*}} \rvert {(Q'^{s}_{\text{III}, g})}^{k^*-1} \\
    &\quad + A_3 {\eta^s} k^*\tilde{\alpha}_{m,j,k^*, c}\beta_{c, k^*} {(\kappa^{s}_{c,m,j})}^{k^*-1} d^{-\frac{1}{2}} \text{ for } s \geq 0,
\end{align*}
}
then $\kappa^{s}_{g,m,j}$ is upper bounded by \rk{$Q'^s_{\text{III}, g}$} for all $s = 0,1, \ldots, t+1$ with high probability. 

\rk{Similarly, for a neuron} $j \in \mathcal{J}_g$, $\kappa^{s}_{g,m,j}$ is lower bounded by \rk{$P'^{s}_{\text{III}, g}$} for all $s = 0,1, \ldots, t+1$ and $\kappa^{s}_{c,m,j}$ is upper bounded by \rk{$Q'^s_{\text{III}, c}$} for all $s = 0,1, \ldots, t+1$ with high probability.
\rk{$\left(P'^s_{\text{III},g}\right)_{s=0}^{t+1}$ and $\left(Q'^s_{\text{III},c}\right)_{s=0}^{t+1}$ are defined as follows:
\begin{align*}
    &P'^{0}_{\text{III}, g} = (1-a_2)\kappa^{s}_{g,m,j}, \text{and}\\
    &P'^{s+1}_{\text{III}, g} =  P'^{s}_{\text{III}, g} + {a_2{\eta^s}} k^* s_c\tilde{\alpha}_{m,j,k^*, c} \gamma_{k^*} {(P'^{s}_{\text{III}, g})}^{k^*-1} \text{ for } s \geq 0.
\end{align*}
\begin{align*}
    &Q'^{0}_{\text{III}, c} = 6A_3d^{-\frac{1}{2}}, \text{ and} \\
    &Q'^{s+1}_{\text{III}, c} = Q'^{s}_{\text{III}, c} + {(1+a_2){\eta^s}} k^* \lvert \tilde{\alpha}_{m,j,k^*, c}\beta_{c, k^*} \rvert {(Q'^{s}_{\text{III}, c})}^{k^*-1} \\
    &\quad + A_3 {\eta^s} k^*{s_c\tilde{\alpha}_{m,j,k^*, c} \gamma_{k^*}} {(\kappa^{s}_{g,m,j})}^{k^*-1} d^{-\frac{1}{2}} \text{ for } s \geq 0.
\end{align*}
Note that the periods in the notation of the auxiliary sequences$\left(P'^s_{\text{III},c}\right)_{s=0}^{t+1}$, $\left(P'^s_{\text{III},g}\right)_{s=0}^{t+1}$, $\left(Q'^s_{\text{III},g}\right)_{s=0}^{t+1}$, and $\left(Q'^s_{\text{III},c}\right)_{s=0}^{t+1}$ are intentionally used to distinguish them from the auxiliary sequences in ~\cref{lemma:phase3-weakrecovery}.
}

We prove the following arguments by induction using this auxiliary sequence, in the same manner as ~\cref{lemma:phase1-main}.

Consider one neuron $j \in \mathcal{J}_c$ and take $\eta^{t} = \eta_e \leq a_4 d^{-\frac{k^*}{2}}$. Suppose that $\kappa^{s}_{c,m,j} \leq 1-a_2$, $\lvert{\kappa^{s}_{g,m,j}}\rvert \leq \kappa^{s}_{c,m,j}$, and $\lvert{\kappa^{s}_{g,m,j}}\rvert \leq A_2A_3 d^{-\frac{1}{2}}$ hold for all $0,1, \ldots, t$. Then, if $\kappa^{t+1}_{c,m,j} \leq 1-a_2$, $\lvert{\kappa^{t+1}_{g,m,j}}\rvert \leq \kappa^{t+1}_{c,m,j}$, and $\lvert{\kappa^{t+1}_{g,m,j}}\rvert \leq A_2A_3 d^{-\frac{1}{2}}$ with high probability.

\rk{In the same way,} 
consider one neuron $j \in \mathcal{J}_g$ and take $\eta^{t} = \eta_e \leq a_4 d^{-\frac{k^*}{2}}$. Suppose that $\kappa^{s}_{g,m,j} \leq 1-a_2$, $\lvert{\kappa^{s}_{c,m,j}}\rvert \leq \kappa^{s}_{g,m,j}$, and $\lvert{\kappa^{s}_{c,m,j}}\rvert \leq A_2A_3 d^{-\frac{1}{2}}$ hold for all $0,1, \ldots, t$. Then, if $\kappa^{t+1}_{g,m,j} \leq 1-a_2$, $\lvert{\kappa^{t+1}_{c,m,j}}\rvert \leq \kappa^{t+1}_{g,m,j}$, and $\lvert{\kappa^{t+1}_{c,m,j}}\rvert \leq A_2A_3 d^{-\frac{1}{2}}$ with high probability.

Consider one neuron $j \in \mathcal{J}_c$.
Suppose that $\kappa^{s}_{c,m,j} \leq 1-a_2$ hold for all $s = 0,1, \ldots, T_{3,2}$, where 
\rk{
\begin{align*}
T_{3,2} = \lfloor{(\eta_{e}k^*(k^*-2)a_2\tilde{\alpha}_{m,j,k^*, c}\beta_{c, k^*}{(P'^0_{\text{III},c})}^{k^*-2})}^{-1}\rfloor.
\end{align*}
}
However, at $t=T_{3,2}$,
\begin{align*}
\kappa^{s}_{c,m,j} &\geq P'^s_{\text{III},c} \geq \frac{P'^0_{\text{III},c}}{{(1-\eta_{e}k^*(k^*-2)a_2\tilde{\alpha}_{m,j,k^*, c}\beta_{c, k^*}{(P'^0_{\text{III},c})}^{k^*-2})s)}^{\frac{1}{k^*-2}}} \\ 
&\quad \geq \frac{P'^0_{\text{III},c}}{{(\eta_{e}k^*(k^*-2)a_2\tilde{\alpha}_{m,j,k^*, c}\beta_{c, k^*}{(P'^0_{\text{III},c})}^{{k^*-2}})}^{\frac{1}{k^*-2}}} > 1.
\end{align*}
This leads to contradiction. Thus, there exists some time $t_{3,2} \leq T_{3,2}$ such that $\kappa^{t_{3,2}}_{c,m,j} > 1-a_2$.
The same proof applies to  $\kappa^{t_{3,2}}_{g, m,j}$ for $j \in \mathcal{J}_g$ \rk{by taking $T_{3,2} = \lfloor{(\eta_{e}k^*(k^*-2)a_2{s_c\tilde{\alpha}_{m,j,k^*, c} \gamma_{k^*}}{(P'^0_{\text{III},c})}^{k^*-2})}^{-1}\rfloor$}.

\end{proof}

\paragraph{\rk{Strong recovery.}}
Finally, we show that the neuron for $j \in \mathcal{J}_c$ amplifies the alignment with $w^*_c$, and the neuron for $j \in \mathcal{J}_g$ amplifies the alignment with $w^*_g$, and we establish strong recovery ($\kappa^t_{c,m,j} \geq 1 - \epsilon > 1 - a_2$ for $j \in \mathcal{J}_c$, and $\kappa^t_{g,m,j} \geq 1 - \epsilon > 1 - a_2$ for $j \in \mathcal{J}_g$).

\begin{lemma}\label{lemma:phase3-strongrecovery} 
\rk{Consider the expert $m \in \expertsforc$ and $j \in \mathcal{J}_c$, where j satisfies \cref{lemma:phase3-amplification}. Let ${w_{c}^*}^\top w_{g}^* \leq A_4 d^{- \frac{1}{2}} = \tilde{O}(d^{-1/2})$, ${\eta}^{t} = \eta_{e'} \leq a_4 d^{- \frac{k^*}{2}}$ for $0 \leq t \leq (T_{3,1} - t_{3,1}) + (T_{3,2} - t_{3,2}) -1$ and ${\eta}^{t} = \eta_{e'} \leq \min\{\frac{a_4}{3}\epsilon d^{-1}, \frac{a_4}{9}{\epsilon}^2\}$ for $(T_{3,1} - t_{3,1}) + (T_{3,2} - t_{3,2}) \leq t \leq (T_{3,1} - t_{3,1}) + (T_{3,2} - t_{3,2}) + T_{3,3} -1$. Then, $\kappa^{(T_{3,1} - t_{3,1}) + (T_{3,2} - t_{3,2}) + T_{3,3}}_{c^*_m, m,j^*_m} > 1 - \epsilon$ where $T_{3,3} = \tilde{\Theta}({\epsilon}^{-1}{\eta_{e'}}^{-1})$ holds with high probability.}

\rk{The same argument applies symmetrically when exchanging $c$ and $g$.}
\end{lemma}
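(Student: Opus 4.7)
The plan is to split the analysis into two regimes: a \emph{coarse} regime that drives the alignment from its re-initialization value up to $\kappa^{t}_{c,m,j}\geq 1-a_2$, and a \emph{fine} regime that pushes it further to $1-\epsilon$ under a drastically smaller step size. For the coarse regime I would simply re-invoke the weak recovery of Lemma~\ref{lemma:phase3-weakrecovery} and the amplification of Lemma~\ref{lemma:phase3-amplification} verbatim; with $\eta^{t}\leq a_{4}d^{-k^*/2}$ this consumes at most $(T_{3,1}-t_{3,1})+(T_{3,2}-t_{3,2})$ steps and leaves me, at time $t_{0}\coloneq(T_{3,1}-t_{3,1})+(T_{3,2}-t_{3,2})$, with $\kappa^{t_{0}}_{c,m,j}\geq 1-a_{2}$ and $|\kappa^{t_{0}}_{g,m,j}|=\tilde{O}(d^{-1/2})$ for $j\in\mathcal{J}_{c}$ (symmetrically for $j\in\mathcal{J}_{g}$). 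After this point I reset the time index and focus on the remaining $T_{3,3}$ steps.

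For the fine regime I would switch to $\eta^{t}=\eta_{e'}\leq\min\{\tfrac{a_{4}}{3}\epsilon d^{-1},\tfrac{a_{4}}{9}\epsilon^{2}\}$. The point of this two-fold constraint is that it crushes the two deterministic error terms in the one-step update: $(\eta^{t})^{2}A_{1}^{2}d\leq\eta_{e'}\cdot\tfrac{a_{4}}{3}A_{1}^{2}\epsilon$ and $(\eta^{t})^{3}A_{1}^{3}d^{3/2}\leq\eta_{e'}\cdot\tilde{O}(\epsilon^{2}d^{-1/2})$, so both collapse into a common $O(\eta_{e'}\epsilon)$ slack. Using the Hermite expansion of the gradient exactly as in Lemma~\ref{lemma:phase3-weakrecovery}, together with the bound $|\kappa^{t}_{g,m,j}|=\tilde{O}(d^{-1/2})$ maintained by the usual auxiliary-sequence comparison (the interference contribution is at most ${p^{*}}^{2}\eta_{e'}\max_{i}|s_{c}\tilde{\alpha}_{m,j,i,c}\gamma_{i}|\cdot|\kappa^{t}_{g,m,j}|^{k^{*}-1}(|{w^{*}_{c}}^{\top}w^{*}_{g}|+|\kappa^{t}_{g,m,j}|)=\tilde{O}(\eta_{e'}d^{-k^{*}/2})$, already absorbed into the slack), the one-step update reduces to
\begin{align*}
\kappa^{t+1}_{c,m,j}&\geq\kappa^{t}_{c,m,j}+\eta_{e'}\,k^{*}\tilde{\alpha}_{m,j,k^{*},c}\beta_{c,k^{*}}\,(\kappa^{t}_{c,m,j})^{k^{*}-1}\bigl(1-(\kappa^{t}_{c,m,j})^{2}\bigr)\\
&\quad-O(\eta_{e'}\epsilon)+\zeta^{t},
\end{align*}
where $\zeta^{t}\coloneq\eta_{e'}{w^{*}_{c}}^{\top}(I_{d}-w^{t}_{m,j}{w^{t}_{m,j}}^{\top})\Xi^{t}_{w_{m,j}}$ is a mean-zero sub-Weibull increment of scale $\tilde{O}(\eta_{e'})$.

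Writing $\delta^{t}\coloneq 1-\kappa^{t}_{c,m,j}$, this becomes the contraction $\delta^{t+1}\leq\delta^{t}-c_{0}\eta_{e'}(1-(\kappa^{t}_{c,m,j})^{2})+O(\eta_{e'}\epsilon)-\zeta^{t}$ for some absolute $c_{0}>0$. Since $\kappa^{t}_{c,m,j}\geq 1-a_{2}$ implies $(\kappa^{t}_{c,m,j})^{k^{*}-1}=\Theta(1)$, and since $1-(\kappa^{t}_{c,m,j})^{2}=\delta^{t}(2-\delta^{t})\geq\delta^{t}$, whenever $\delta^{t}\geq\epsilon$ the deterministic drift is at least $(c_{0}/2)\eta_{e'}\epsilon$. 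Consequently, in at most $T_{3,3}=\tilde{\Theta}(\epsilon^{-1}\eta_{e'}^{-1})$ steps I must hit $\delta^{t}<\epsilon$; and once there, the same recurrence shows $\delta^{t}$ cannot escape this $\epsilon$-band over the remaining horizon. For the noise I would apply a sub-Weibull concentration bound to the partial sum $\sum_{s=0}^{T_{3,3}}\zeta^{s}$, which is $\tilde{O}(\eta_{e'}\sqrt{T_{3,3}})=\tilde{O}(\sqrt{\eta_{e'}/\epsilon})$; the two bounds on $\eta_{e'}$ were precisely designed to make this $\tilde{O}(\epsilon)$, so it can be absorbed into the $O(\eta_{e'}\epsilon)$ slack by shrinking $c_{0}$.

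The hard part will be making the noise and the interference direction compatible with the $\tilde{\Theta}(\epsilon^{-1}\eta_{e'}^{-1})$ horizon simultaneously: the naive $\eta_{e'}\sqrt{T_{3,3}}$ noise bound is exactly of order $\sqrt{\eta_{e'}/\epsilon}$, which only becomes $o(\epsilon)$ under the tight pair $\eta_{e'}\leq\epsilon d^{-1}$ and $\eta_{e'}\leq\epsilon^{2}$—a relaxation in either direction breaks the argument, which is exactly why both bounds appear in the hypothesis. In parallel, I would need to re-run the auxiliary-sequence comparison of Lemma~\ref{lemma:phase3-weakrecovery}, now with the near-saturation factor $1-(\kappa^{t}_{c,m,j})^{2}=O(\epsilon)$ replacing the $\Theta(1)$ factor used before, to verify that the upper auxiliary sequence tracking $|\kappa^{t}_{g,m,j}|$ does not overshoot $\tilde{O}(d^{-1/2})$ over the extended horizon. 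The symmetric claim for $j\in\mathcal{J}_{g}$ follows by exchanging the roles of $w^{*}_{c}$ and $w^{*}_{g}$, together with $\beta_{c,k^{*}}$ and $s_{c}\gamma_{k^{*}}$, in every step above.
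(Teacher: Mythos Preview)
Your contraction argument on $\delta^t = 1-\kappa^t_{c,m,j}$ is the paper's scheme, but your control of the cross-direction $\kappa^t_{g,m,j}$ departs from the paper and, as written, does not close. You propose to maintain $|\kappa^t_{g,m,j}| = \tilde{O}(d^{-1/2})$ by ``re-running the auxiliary-sequence comparison of Lemma~\ref{lemma:phase3-weakrecovery}''. But the upper auxiliary sequence $Q$ in that comparison carries a source term $A_3\,\eta_{e'}k^*\tilde{\alpha}_{m,j,k^*,c}\beta_{c,k^*}\,(\kappa^s_{c,m,j})^{k^*-1}d^{-1/2}$, arising from the piece $({w^*_g}^\top w^*_c)(\kappa^s_{c,m,j})^{k^*-1}$ in the $\kappa_{g,m,j}$-update. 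Once $\kappa^s_{c,m,j}\approx 1$ this is $\Theta(\eta_{e'}d^{-1/2})$ per step, and over $T_{3,3}=\tilde{\Theta}(\epsilon^{-1}\eta_{e'}^{-1})$ steps it accumulates to $\Theta(\epsilon^{-1}d^{-1/2})$, which is $\gg d^{-1/2}$ whenever $\epsilon=o(1)$. The ``near-saturation factor $1-(\kappa^t_{c,m,j})^2$'' you invoke does not multiply this source term---it sits in front of the $\kappa_{c,m,j}$-signal, not the $\kappa_{g,m,j}$-cross term---so it does not help. Your route could be salvaged by keeping the contraction $-\eta_{e'}\tilde{\alpha}_{m,j,k^*,c}\beta_{c,k^*}(\kappa^s_{c,m,j})^{k^*}\kappa^s_{g,m,j}$ in the $\kappa_{g,m,j}$-recursion and arguing it stabilizes $|\kappa^t_{g,m,j}|$ near $|{w^*_c}^\top w^*_g|$, but that is a different argument from ``re-run Lemma~\ref{lemma:phase3-weakrecovery}'', whose $Q$-sequence discards exactly this term.

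The paper bypasses the issue with a purely geometric bound: from $\|w^t_{m,j}\|=1$ and $\kappa^t_{c,m,j}\geq 1-3a_2$, Cauchy--Schwarz gives $\|(I_d-w^*_c{w^*_c}^\top)w^t_{m,j}\|\leq\sqrt{6a_2}$, so $|\kappa^t_{g,m,j}|\leq |{w^*_g}^\top w^*_c|+\sqrt{6a_2}$ uniformly in $t$, with no inductive maintenance whatsoever. The interference is then split via the identity
\[
{w^*_c}^\top w^*_g - \kappa^t_{c,m,j}\kappa^t_{g,m,j}
= (1-(\kappa^t_{c,m,j})^2)\,{w^*_c}^\top w^*_g \;-\; \kappa^t_{c,m,j}\,{w^*_g}^\top(I_d-w^*_c{w^*_c}^\top)w^t_{m,j},
\]
so the first piece already carries the factor $(1-(\kappa^t_{c,m,j})^2)$ and is absorbed directly into the drift, while the second is bounded using $|\kappa^t_{g,m,j}|^{k^*-1}\cdot\sqrt{6a_2}\lesssim a_2^{k^*/2}$.

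A minor arithmetic slip: under $\eta_{e'}\leq\tfrac{a_4}{9}\epsilon^2$ your noise estimate $\sqrt{\eta_{e'}/\epsilon}$ evaluates to $\tilde{O}(\sqrt{\epsilon})$, not $\tilde{O}(\epsilon)$ as you claim.
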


\begin{proof}
\rk{Consider experts $m \in \expertsforc$.}
Suppose that, for $j \in \mathcal{J}_c$, we have $1 - 2a_2 \leq \kappa^s_{c,m,j} \leq 1-\frac{\epsilon}{3}$ for all $s=0,1, \ldots, t$.
Then, we have
\begin{align*}
    \kappa^{s+1}_{c,m,j} &\geq  \kappa^{s}_{c,m,j} + \eta^s k^* \tilde{\alpha}_{m,j,k^*,c}\beta_{c,k^*}(1-{{(\kappa^s_{c,m,j})}^2}){(\kappa^s_{c,m,j})}^{k^*-1} \\
    &\quad + \eta^s \sum_{i=k^*}^{p^*}i s_c \tilde{\alpha}_{m,j,i,c}\gamma_i {(\kappa^s_{g,m,j})}^{i-1}({w^*_c}^\top w^*_g - \kappa^s_{c,m,j}\kappa^s_{g,m,j}) - \kappa^s_{c,m,j}{\eta_{e'}}^2{A_1}^2d + \eta_{e'}{w^*_c}^\top (I_d - w^s_{m,j}{w^s_{m,j}}^\top)\Xi^s_{w_{m,j}}\\
    &\geq \kappa^{s}_{c,m,j} + \eta^s k^* \tilde{\alpha}_{m,j,k^*,c}\beta_{c,k^*}(1-{{(\kappa^s_{c,m,j})}^2}){(\kappa^s_{c,m,j})}^{k^*-1} \\
    &\quad - \eta_{e'} {p^*}^2 \max_{i}{\lvert{s_c\tilde{\alpha}_{m,j,i,c}\gamma_i}\rvert}{\lvert{{w^*_c}^\top w^*_g}\rvert} (1 - {(\kappa^s_{c,m,j})}^2) \\
    &\quad - \eta_{e'}{p^*}^2\max_{i}{\lvert{s_c \tilde{\alpha}_{m,j,i,c}\gamma_i}\rvert}{\lvert{\kappa^s_{g,m,j}}\rvert}^{k^*-1}{w^*_g}^\top (I_d - w^*_c {w^*_c}^\top) w^s_{m,j} \kappa^s_{c,m,j} \\
    &\quad - \kappa^s_{c,m,j}{\eta_{e'}}^2{A_1}^2d + \eta_{e'}{w^*_c}^\top (I_d - w^s_{m,j}{w^s_{m,j}}^\top)\Xi^s_{w_{m,j}} \\
    &\geq \kappa^{s}_{c,m,j} + \sum_{s'=0}^{s}\frac{6}{5}\eta^{s'} k^* \tilde{\alpha}_{m,j,k^*,c}\beta_{c,k^*}(1-{\kappa^{s'}_{c,m,j}}) + \sum_{s'=0}^{s} \eta_{e'}{w^*_c}^\top (I_d - w^{s'}_{m,j}{w^{s'}_{m,j}}^\top)\Xi^{s'}_{w_{m,j}}.
\end{align*}

Since $\kappa^s_{c,m,j}\geq 1-3a_2$, we have ${w^*_g}^\top (I_d - w^*_c {w^*_c}^\top) w^s_{m,j} \leq \sqrt{6a_2}$ and $\kappa^s_{g,m,j} = ({w^*_g}^\top w^*_c)({w^*_c}^\top w^s_{m,j}) + {w^*_g}^\top (I_d - w^*_c {w^*_c}^\top) w^s_{m,j} \leq \sqrt{6 a_2} +\tilde{O}(d^{-\frac{1}{2}})$. Hence, we obtain $\eta^s k^* \tilde{\alpha}_{m,j,k^*,c}\beta_{c,k^*}(1-{({\kappa^s_{c,m,j}}^2)}){(\kappa^s_{c,m,j})}^{k^*-1} \geq \frac{9}{5}\eta^s k^* \tilde{\alpha}_{m,j,k^*,c}\beta_{c,k^*}(1-{\kappa^s_{c,m,j}})$ by $\kappa^s_{c,m,j}\geq 1-3a_2$ ,$\eta_{e'} {p^*}^2 \max_{i}{\lvert{s_c\tilde{\alpha_{m,j,i,c}}\gamma_i}\rvert}{\lvert{{w^*_c}^\top w^*_g}\rvert} (1 - {(\kappa^s_{c,m,j})}^2) \leq \frac{1}{5} \eta^s k^* \tilde{\alpha}_{m,j,k^*,c}\beta_{c,k^*}(1-{\kappa^s_{c,m,j}})$ by ${w^*_c}^\top w^*_g = \tilde{O}(d^{-\frac{1}{2}})$, $\eta_{e'}{p^*}^2\max_{i}{\lvert{s_c \tilde{\alpha}_{m,j,i,c}\gamma_i}\rvert}{\lvert{\kappa^s_{g,m,j}}\rvert}^{k^*-1}{w^*_g}^\top (I_d - w^*_c {w^*_c}^\top) w^s_{m,j} \kappa^s_{c,m,j} \leq \frac{1}{5} \eta^s k^* \tilde{\alpha}_{m,j,k^*,c}\beta_{c,k^*}(1-{\kappa^s_{c,m,j}})$ by ${\max_{i}{\lvert s_c \tilde{\alpha}_{m,j,i,c} \gamma_i \rvert}/\tilde{\alpha}_{m,j,k^*,c}\beta_{c,k^*}} \lesssim \frac{1}{\sqrt{a_2}}$ when $\alpha_{m,j,i}, \, \rho {w_{m,j}}^\top v_c > 0$, and $\kappa^s_{c,m,j}{\eta_{e'}}^2{A_1}^2 d \leq \frac{1}{5} \eta^s k^* \tilde{\alpha}_{m,j,k^*,c}\beta_{c,k^*}(1-{\kappa^s_{c,m,j}})$ by $\eta_{e'} \leq \frac{a_4}{3}\epsilon d^{-1}$. In addition, we have $\lvert{\sum_{s'=0}^{s}\eta_{e'}{w^*_c}^\top (I_d - {w^{s'}_{m,j}}^\top{w^{s'}_{m,j}})\Xi^{s'}_{w_{m,j}}}\rvert \leq a_2\epsilon + \frac{1}{5}\eta_{e'} k^* \tilde{\alpha}_{m,j,k^*,c}\beta_{c,k^*}(1-{\kappa^{s'}_{c,m,j}})$ with high probability.

Therefore, if $1-3a_2 \leq \kappa^{s}_{c,m,j} \leq 1 - \frac{\epsilon}{3}$, for all $s=0,1, \ldots, t$,
\begin{align*}
    \kappa^{s+1}_{c,m,j} &\geq \kappa^0_{c,m,j} - \frac{a_2}{3}\epsilon + \sum_{s'=0}^{s}\eta_{e'}k^*\tilde{\alpha}_{m,j,k^*,c}\beta_{c,k^*} (1 - \kappa^{s'}_{c,m,j}) \\
    &\geq \kappa^{0}_{c,m,j} - \frac{a_2}{3}\epsilon + \frac{1}{3}s\eta_{e'}k^*\tilde{\alpha}_{m,j,k^*,c}\beta_{c,k^*} \epsilon.
\end{align*}
and $1-3a_2 \leq \kappa^{t+1}_{c,m,j}$ hold.

For $0 \leq t \leq (T_{3,1} - t_{3,1}) + (T_{3,2} - t_{3,2}) -1$, it holds that $\kappa^{t+1}_{c,m,j} - \frac{\epsilon}{3} \geq 1-2a_2$ by taking $\eta^t = \eta_e \leq a_4d^{-\frac{k^*}{2}}$. Take ${\eta}^{t} = \eta_{e'} \leq \min\{\frac{a_4}{3}\epsilon d^{-1}, \frac{a_4}{9}{\epsilon}^2\}$ and suppose that $\kappa^{t}_{c,m,j} \leq 1 - \frac{\epsilon}{3}$ and  for all $(T_{3,1} - t_{3,1}) + (T_{3,2} - t_{3,2}) \leq t \leq (T_{3,1} - t_{3,1}) + (T_{3,2} - t_{3,2}) + T_{3,3} -1$, where
\begin{align*}
    T_{3,3} = \lfloor{a_2{(\eta_{e'}\epsilon k^*\tilde{\alpha}_{m,j,k^*,c}\beta_{c,k^*})}^{-1}}\rfloor + 1.
\end{align*}
Then, it holds that $\kappa^{t}_{c,m,j} \geq 1 - 3a_2 + \frac{1}{3}t \epsilon \eta_{e'}k^*\tilde{\alpha}_{m,j,k^*,c}\beta_{c,k^*}$ for all $(T_{3,1} - t_{3,1}) + (T_{3,2} - t_{3,2}) \leq t \leq (T_{3,1} - t_{3,1}) + (T_{3,2} - t_{3,2}) + T_{3,3} -1$. However, at $t=(T_{3,1} - t_{3,1}) + (T_{3,2} - t_{3,2}) + T_{3,3}$, $1 - 3a_2 + \frac{1}{3}t \epsilon \eta_{e'}k^*\tilde{\alpha}_{m,j,k^*,c}\beta_{c,k^*} \geq 1$, which leads to contradiction. Thus, there exists some $t_{3,3} \leq (T_{3,1} - t_{3,1}) + (T_{3,2} -t_{3,2}) + T_{3,3}$ such that $\kappa^t_{c,m,j} \geq 1 - \frac{\epsilon}{3}$.
When there exists some time $\kappa^{t}_{c,m,j} < 1 - \frac{\epsilon}{3}$ for $t > t_{3,3}$, $\kappa^t_{c,m,j}\geq 1-2a_2$ since $\lvert{\kappa^{t+1}_{c,m,j} - \kappa^{t}_{c,m,j}}\rvert \leq A_1 \eta_{e'}$. Thus, $\kappa^t_{c,m,j} > 1 - \epsilon$ holds for all $t_{3,3} \leq t \leq (T_{3,1} - t_{3,1}) + (T_{3,2} -t_{3,2}) + T_{3,3}$ until $\kappa^t_{c,m,j} > 1 - \frac{\epsilon}{3}$ holds. By recursively applying this step, we obtain the desired result.
\end{proof}

\subsection{Second Layer Optimization Stage}\label{subsection:secondlayeroptimizationstage}

We have i.i.d. test-time inputs $X^T = (x^1,\dots,x^T)$ and we extract $T_c$ inputs $X_c^{T_c} = (x_c^1,\dots,x_c^{T_c})$ in the cluster $c$ from $X^T$. Note that $\sum_c T_c = T$. We know that each $X_c^{T_c}$ is successfully routed to the expert $m\in \mathcal{M}_c$ with high probability over the randomness of $X^T$.

\subsubsection{Approximation of Single Index Polynomials}

We suppose $\sigma_m$ are ReLU functions.

\begin{lemma}[Following \citet{damian22,oko24learning}]
    \label{lemma-second-layer-approximation-relu}
    Fix $c$ and the corresponding expert $m \in \mathcal{M}_c$ such that for all $t_c$, $h_m(x^{t_c})\geq 0$ with high probability.
    Suppose that $b_j \sim \mathrm{Unif}([-C_b,C_b])$ with $C_b=\tilde{O}(1)$. Let $h_c(z)$ be a polynomial with degree $q = O(1)$, $w \in \operatorname{Unif}{(\mathbb{S}^{d-1}(1))}$, $w^{-} = -w$. Then there exists $a_1,\dots,a_{2N}\in\mathbb{R}$ such that
    \begin{equation}
        \underset{t_c=1,\dots,T_c}{\sup}
        \left|
            \frac{1}{2N}\sum_{j=1}^N a_j\sigma_m (w^\top x^{t_c}_c + b_j)
            -\frac{1}{2N}\sum_{j=1}^N a_j\sigma_m ({w^-}^\top x^{t_c}_c + b_j)
            - h_c(w^\top x^{t_c}_c)
        \right|
        = \tilde{O}(N^{-1}).
    \end{equation}
    Moreover, we have $\sum_{j=1}^{2N}a_j^2 = \tilde{O}(N)$ and $\sum_{j=1}^{2N}|a_j| = \tilde{O}(N)$.
\end{lemma}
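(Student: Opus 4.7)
The plan is to follow the ridge-function approximation strategy of \citet{damian22} and \citet{oko24learning}: realize the univariate polynomial $h_c$ of constant degree $q$ as a superposition of ReLU activations with biases drawn from $[-C_b, C_b]$, and then discretize this integral representation using the $N$ random biases. The role of the sign-flipped weights $w^{-} = -w$ is to produce the antisymmetric combination $\sigma(u + b_j) - \sigma(-u + b_j)$ in $u = w^\top x_c^{t_c}$, which together with its symmetric counterpart gives access to every monomial $u^k$ on a bounded interval and hence to any polynomial $h_c$ of constant degree.

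First, I would uniformly control the inputs. Since $x_c^{t_c} = \rho v_c + z^{t_c}$ with $\rho = \tilde{O}(1)$, $v_c \in \mathbb{S}^{d-1}$, and $z^{t_c} \sim \mathcal{N}(0, I_d)$ independent of the fixed $w$, a Gaussian tail bound and a union bound over $t_c \le T_c = \mathrm{poly}(d)$ give $|w^\top x_c^{t_c}| \le R$ for some $R = \tilde{O}(1)$ with high probability. Choosing $C_b > R$, I would then invoke the integral representation: there is a bounded weight $g_c : [-C_b, C_b] \to \mathbb{R}$ with $\|g_c\|_\infty = \tilde{O}(1)$ such that $h_c(u) = \int_{-C_b}^{C_b} g_c(b)\, [\sigma(u+b) - \sigma(-u+b)]\, db$ for all $|u| \le R$. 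This is obtained, as in \citet{damian22}, by integrating by parts twice against a smooth extension of $h_c$ and using $\sigma''(u+b) = \delta_{b=-u}$; the constants depend on $q$ but are absorbed into $\tilde{O}(\cdot)$.

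Next, I would discretize the integral by setting $a_j = 2 C_b g_c(b_j)$ for $j = 1, \ldots, N$ (with the corresponding coefficients on the $w^{-}$ neurons producing the minus-sign term in the statement). For any fixed $u$ this is an unbiased estimator of $h_c(u)$, so naive Bernstein/Hoeffding already yields the $\tilde{O}(N^{-1/2})$ rate uniformly over $t_c \le T_c$ after a union bound. The improvement to $\tilde{O}(N^{-1})$ uses the fact that $b \mapsto g_c(b)\sigma(u+b)$ is piecewise smooth on $[-C_b, C_b]$ with a single kink at $b = -u$: passing from the Monte-Carlo estimator to the matched quadrature rule built from the order statistics of the sampled $b_j$'s (equivalently, adjusting the coefficients $a_j$ by the $\Theta(1/N)$-sized gap of each sub-interval) gives a deterministic per-point error $O(N^{-1})$, and a final union bound over the $T_c = \mathrm{poly}(d)$ values of $u$ preserves the uniformity. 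The norm bounds $\sum_{j=1}^{2N}|a_j|, \sum_{j=1}^{2N} a_j^2 = \tilde{O}(N)$ then follow immediately from $|a_j| = \tilde{O}(1)$.

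The main obstacle is obtaining the rate $\tilde{O}(N^{-1})$ rather than the $\tilde{O}(N^{-1/2})$ produced by plain i.i.d.\ Monte Carlo: this hinges on treating the unique non-smoothness of the integrand at $b = -u$ as a signal to use a quadrature adapted to the realized biases, and on verifying that the adaptation is uniform as $u$ ranges over the $\mathrm{poly}(d)$ test inputs. This step is essentially extracted verbatim from the corresponding lemma in \citet{oko24learning}, and my verification would amount to checking that our bounded range $R = \tilde{O}(1)$ and our bias distribution $\mathrm{Unif}([-C_b, C_b])$ with $C_b = \tilde{O}(1) > R$ are compatible with their hypotheses.
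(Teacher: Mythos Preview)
The paper does not give its own proof of this lemma: it is stated with the attribution ``Following \citet{damian22,oko24learning}'' and no argument is supplied beyond a one-line remark that analogous results hold for polynomial activations, with a pointer to \citet{oko24learning} for details. Your proposal is therefore not competing against an in-paper proof but rather reconstructing the argument that the paper imports wholesale from those references.

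Your sketch is faithful to that imported argument: bounding $|w^\top x_c^{t_c}|$ uniformly by $\tilde O(1)$ via Gaussian tails and a union bound over $\mathrm{poly}(d)$ test points, writing $h_c$ on the resulting bounded interval as an integral against the ReLU kernel using $\sigma''=\delta$, and then discretizing with the $N$ random biases. You also correctly isolate the only nontrivial point, namely that naive i.i.d.\ Monte Carlo gives $\tilde O(N^{-1/2})$ and the stated $\tilde O(N^{-1})$ requires exploiting the piecewise-linear structure of the integrand via a quadrature adapted to the realized order statistics of the $b_j$. Since you explicitly defer that step to the corresponding lemma in \citet{oko24learning} and reduce your task to checking compatibility of hypotheses (bounded range, uniform bias law with $C_b>R$), your proposal and the paper's treatment coincide: both appeal to the same external lemma, and your write-up simply unpacks what that appeal entails.
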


We obtain similar approximation results for polynomial activations (see \citep{oko24learning} for details).
Using \cref{lemma-second-layer-approximation-relu}, we show that ,for all $c$, there exists some $m\in\mathcal{M}_c$ and $a^*_m$ such that $f_m$ can approximate $f^*_c + s_c g^*$.

\begin{lemma}[Following \citet{oko24learning}]
    Let $\sigma_m$, $m=1,\dots,M$ be ReLU activations or polynomial activations.
    Fix $c$ and the corresponding expert $m \in \mathcal{M}_c$ such that for all $t_c$, $h_m(x^{t_c})\geq 0$ with high probability.
    Assume $J\gtrsim J_{\mathrm{min}} \mathrm{poly}\log d$.
    There exists some parameters $a^*_m = (a^*_{mj})_j$ such that
    \begin{equation}
        \frac{1}{T_c}\sum_{t_c}
        \left(
            \sum_{m'=1}^M \frac{\mathbbm{1}\left[h_{m'}(x_c^{t_c})\geq 0\right]}{J}\sum_j a_{m'j}^* \sigma_{m'}(\hat{w}_{m'j}^\top x_c^{t_c}+ b_j) - f_c^*({w^*_{c}}^\top x_c^{t_c})  - s_c g^*({w^*_g}^\top x_c^{t_c})
        \right)^2
        \lesssim \tilde{O}(|J_\mathrm{min}|^{-2} + \epsilon^{2}).
    \end{equation}
    where $\|a^*_m\|^2_2 = \tilde{O}(J^2|J_\mathrm{min}|^{-1})$, $\|a^*_m\|_1 = \tilde{O}(J\sqrt{C})$ and $a^*_{m'j} = 0$ for $m'\in \mathcal{M}_c\setminus \{m\}$.
\end{lemma}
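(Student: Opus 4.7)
The plan is to decompose the approximation task into three subtasks performed inside the single expert $m\in\mathcal{M}_c$ identified by the routing guarantee: approximating the local link $f_c^*$ using the neurons that strongly align with $w_c^*$, approximating the global link $s_c g^*$ using the neurons that strongly align with $w_g^*$, and zeroing out the coefficients for every other expert in $\mathcal{M}_c$. The contribution of experts outside $\mathcal{M}_c$ is already silenced by the indicator $\mathbbm{1}[h_{m'}(x_c^{t_c})\geq 0]$ thanks to \cref{lemma:phase3-adaptivetopk}, so it is enough to design $\{a_{m,j}^*\}_{j\in[J]}$ for this single expert.

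First I would combine \cref{lemma:phase3-initialization} and \cref{lemma:phase3-strongrecovery} to obtain two disjoint subsets $\mathcal{J}_c,\mathcal{J}_g\subset[J]$ with $|\mathcal{J}_c|,|\mathcal{J}_g|\gtrsim J_{\min}$, on which the trained directions satisfy $\hat{w}_{m,j}^\top w_c^*\geq 1-\epsilon$ and $\hat{w}_{m,j}^\top w_g^*\geq 1-\epsilon$ respectively. Since $J\gtrsim J_{\min}\,\mathrm{poly}\log d$, both sets are nonempty with high probability. Applying \cref{lemma-second-layer-approximation-relu} separately to each subset (with the random biases $b_j\sim\mathrm{Unif}([-C_b,C_b])$) produces coefficients so that
\begin{equation}
\sup_{t_c}\Bigl|\tfrac{1}{J}\sum_{j\in\mathcal{J}_c} a_{m,j}^{*}\,\sigma_m({w_c^*}^\top x_c^{t_c}+b_j)-f_c^*({w_c^*}^\top x_c^{t_c})\Bigr|=\tilde{O}(|\mathcal{J}_c|^{-1}),
\end{equation}
and an analogous bound for $s_c g^*$ using $\mathcal{J}_g$, together with $\ell_1$ and $\ell_2$ norm controls of the claimed orders $\tilde O(J\sqrt{C})$ and $\tilde O(J^2/J_{\min})$. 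Setting $a_{m,j}^*=0$ outside $\mathcal{J}_c\cup\mathcal{J}_g$ and $a_{m',j}^*=0$ for every $m'\in\mathcal{M}_c\setminus\{m\}$ gives the candidate parameter vector.

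Next I would replace the ideal directions $w_c^*,w_g^*$ by the trained ones $\hat{w}_{m,j}$ and bound the perturbation error. Because $\|\hat{w}_{m,j}-w_c^*\|^2=2(1-\hat{w}_{m,j}^\top w_c^*)\leq 2\epsilon$, Lipschitz continuity of $\sigma_m$ gives $|\sigma_m(\hat{w}_{m,j}^\top x+b_j)-\sigma_m({w_c^*}^\top x+b_j)|\lesssim \sqrt{\epsilon}\,\|x\|$ pointwise, and the concentration $\|x_c^{t_c}\|=\tilde\Theta(\sqrt d)$ is absorbed into polylog factors. Combining this neuron-wise bound with Cauchy--Schwarz against $\|a_m^*\|_2^2=\tilde O(J^2/J_{\min})$ and rescaling $\epsilon$ by an appropriate polylog factor yields an extra squared error of order $\tilde O(\epsilon^2)$. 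Together with the in-sample approximation error $\tilde O(|J_{\min}|^{-2})$, this produces the stated bound.

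The main obstacle is keeping the perturbation term quadratic in $\epsilon$ rather than linear or $\sqrt{\epsilon}$: a naive triangle inequality would give $\|a_m^*\|_1\cdot\sqrt{\epsilon}$, which is too large. The resolution is to exploit the $\ell_2$ norm bound on $a_m^*$ together with the Gaussian concentration of ${(\hat w_{m,j}-w_c^*)}^\top x$, which yields a per-neuron perturbation of order $\epsilon$ (not $\sqrt\epsilon$) in mean-square, and then to apply Cauchy--Schwarz in the form $\bigl(\sum_j a_{m,j}^*\Delta_j\bigr)^2\leq \|a_m^*\|_2^2\sum_j \Delta_j^2$. A secondary delicate point is that $\mathcal{J}_c$ and $\mathcal{J}_g$ live in the same expert, so one must verify that their contributions do not interfere; this follows from the near-orthogonality ${w_c^*}^\top w_g^*=\tilde O(d^{-1/2})$ guaranteed by \cref{assumption:task correlation}, which makes the cross terms lower order.
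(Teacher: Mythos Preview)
Your high-level decomposition matches the paper's approach exactly: apply \cref{lemma-second-layer-approximation-relu} separately on the neuron subsets $\mathcal{J}_c$ and $\mathcal{J}_g$ delivered by \cref{lemma:phase3-strongrecovery}, and set $a^*_{m',j}=0$ for every other expert $m'\in\mathcal{M}_c\setminus\{m\}$. The paper's own proof is a one-line deferral to \citet{oko24learning} together with precisely this remark about zeroing out superfluous experts, so your sketch is already more explicit than what the paper writes.

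The gap is in the perturbation step. Your Cauchy--Schwarz argument only delivers squared error $\tilde O(\epsilon)$, not the claimed $\tilde O(\epsilon^2)$: with $\|a_m^*\|_2^2=\tilde O(J^2/J_{\min})$ and per-neuron $\Delta_j^2=\tilde O(\epsilon)$ over $\Theta(J_{\min})$ active neurons,
\[
\Bigl(\tfrac{1}{J}\textstyle\sum_j a_{m,j}^*\Delta_j\Bigr)^2\le J^{-2}\|a_m^*\|_2^2\textstyle\sum_j\Delta_j^2\lesssim J^{-2}\cdot\tilde O(J^2/J_{\min})\cdot J_{\min}\cdot\tilde O(\epsilon)=\tilde O(\epsilon).
\]
The underlying obstruction is that $\kappa\ge 1-\epsilon$ only yields $\|\hat w_{m,j}-w_c^*\|=O(\sqrt\epsilon)$, so $(\hat w_{m,j}-w_c^*)^\top x$ is generically $\tilde\Theta(\sqrt\epsilon)$; ``rescaling $\epsilon$ by a polylog factor'' cannot convert a $\sqrt\epsilon$ rate into an $\epsilon$ rate. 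Reaching the stated $\epsilon^2$ would require either strong recovery to level $1-\epsilon^2$ (at a higher Phase~III cost) or an argument that the first-order perturbation terms cancel across neurons, and you supply neither. Incidentally, the paper's downstream use in \cref{lemma:phase4-main} records $\hat{\mathbb E}_{x_c}[(\hat f_c-y_c)^2]=\tilde O(|J_{\min}|^{-1}+\epsilon+T_c^{-1/2})$, which is consistent with the $\tilde O(\epsilon)$ your computation actually produces, so the mismatch may sit in the lemma's stated exponent rather than in your route. Finally, your remark on $\mathcal{J}_c$--$\mathcal{J}_g$ cross terms is unnecessary: the two approximation errors are additive, and $2AB\le A^2+B^2$ controls the square of their sum without any appeal to near-orthogonality of $w_c^*$ and $w_g^*$.
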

\begin{proof}
    The main difference between the proof in \citep{oko24learning} is that we may have superfluous experts $m'\in \mathcal{M}_c\setminus \{m\}$. However, we only need to put $a^*_{m'j} = 0$ for all $m'\in \mathcal{M}_c\setminus \{m\}$ and $j=1,\dots,J$.
\end{proof}

\subsubsection{Optimizing the Second Layer}

We present the result of optimization of the second layer:
\begin{lemma}\label{lemma:phase4-main}
    \label{lemma-appendix-second-layer-decomposition}
    Suppose that $J = \Theta(J_\mathrm{min}\mathrm{poly}\log d)$.
    There exists $\lambda > 0$ such that the ridge estimator $\hat{a}_m$ satisfies
    \begin{equation}
        \E_{x_c,c}\left[\left|\sum_{m}\mathbbm{1}\left[h_m(x_c^{t_c})\geq 0\right] f_{m,\hat{a}_m}(x_c) - f_c^*({w^*_{c}}^\top x_c) - s_c g^*({w^*_g}^\top x_c)\right|\right]
        \lesssim \tilde{O}\left((|J_\mathrm{min}|^{-1} + \epsilon) + \sqrt{\frac{\mathrm{poly}\log d}{T}}\right).
    \end{equation}
    with probability at least $1 - o_d(1)$. Therefore, by taking $|J_\mathrm{min}| = \tilde{O}(\epsilon^{-1})$ and $T = \tilde{O}(\epsilon^{-2})$, we have $\tilde{O}(\epsilon)$ loss.
\end{lemma}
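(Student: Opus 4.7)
}
The plan is to reduce the global second-layer ridge regression to $C$ essentially independent per-cluster regression problems, exploit the feasibility solution from the preceding approximation lemma to control the bias, and then apply a standard Rademacher/norm-based generalization bound to handle the variance. First, I would condition on the high-probability event that every test sample $x_c$ is routed exclusively to experts in $\mathcal{M}_c$ and that all the first-layer alignments from Phase III satisfy the strong recovery bounds of Lemma~\ref{lemma:phase3-strongrecovery}. Under this event, $\sum_m \mathbbm{1}[h_m(x_c)\geq 0] f_{m,a_m}(x_c) = \sum_{m\in\mathcal{M}_c} \mathbbm{1}[h_m(x_c)\geq 0] f_{m,a_m}(x_c)$ almost surely, so the empirical ridge objective decouples across $c$ because $\mathcal{M}_c\cap\mathcal{M}_{c'}=\emptyset$ for $c\neq c'$.

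Next I would use the preceding approximation lemma: for each $c$ there is a feasible vector $a_c^*=(a^*_m)_{m\in\mathcal{M}_c}$ with $\|a_c^*\|_2^2 = \tilde O(J^2|J_{\min}|^{-1})$ and training error $\tilde O(|J_{\min}|^{-2}+\epsilon^2)$ on the cluster-$c$ subsample. Since $\hat a_m$ minimizes the regularized empirical risk, comparing the objective at $\hat a=(\hat a_m)$ with that at $a^* = (a_c^*)_c$ yields
\begin{equation}
\frac{1}{T}\sum_t \bigl(\hat F_M(x^t;\hat a)-y^t\bigr)^2 + \bar\lambda \sum_m \|\hat a_m\|_2^2 \;\le\; \tilde O(|J_{\min}|^{-2}+\epsilon^2) + \bar\lambda \sum_c \|a_c^*\|_2^2 + \zeta^2 + \tilde O(T^{-1/2}),
\end{equation}
so that choosing $\bar\lambda \asymp (|J_{\min}|^{-2}+\epsilon^2)/(J^2|J_{\min}|^{-1})$ forces both the empirical squared error and $\sum_m\|\hat a_m\|_2^2$ to be of the same order as $a^*$, up to constants.

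Then I would turn the empirical control into a population bound via a uniform-convergence argument over the norm-constrained class $\mathcal{F}=\{\sum_{m\in\mathcal{M}_c}\mathbbm{1}[h_m(\cdot)\ge0]f_{m,a_m}:\sum_m\|a_m\|^2\le \tilde O(J^2|J_{\min}|^{-1})\}$. Because $\sigma_m$ is bounded on the (sub-Gaussian) input with polylog-clip, each $f_{m,a_m}$ is Lipschitz in $a_m$ and the covering/Rademacher complexity of $\mathcal{F}$ scales like $\tilde O(\sqrt{J^2|J_{\min}|^{-1}/T})$. Combined with the noise variance $\zeta^2$, this yields, after a sub-Gaussian concentration plus a Cauchy--Schwarz pass from $L^2$ to $L^1$, the desired
\begin{equation}
\mathbb{E}_{c,x_c}\Bigl[\bigl|\textstyle\sum_m\mathbbm{1}[h_m(x_c)\ge0]f_{m,\hat a_m}(x_c)-f_c^*-s_c g^*\bigr|\Bigr] \lesssim \tilde O\!\bigl(|J_{\min}|^{-1}+\epsilon\bigr)+\tilde O\!\bigl(\sqrt{\mathrm{poly}\log d/T}\bigr).
\end{equation}

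The main obstacle will be justifying the per-cluster decoupling in the presence of the small but nonzero probability that a single $x^t$ is misrouted, since the indicator $\mathbbm{1}[h_m(x)\ge0]$ is discontinuous and couples across clusters in the ridge objective. I would handle this by absorbing the $d^{-A}$-probability misrouting event into the concentration error via a union bound over the $\mathrm{poly}(d)$ samples (so it contributes at most $\tilde O(d^{-A+O(1)})$ to both the empirical and population errors), and by noting that on the good event the experts in $\mathcal{M}_c$ see an i.i.d.\ subsample of size $T_c = \Theta(T/C)$, so the cluster-wise Rademacher bound applies verbatim. A secondary technical point is ensuring that the ridge estimator's norm bound $\sum_m\|\hat a_m\|_2^2 = \tilde O(J^2|J_{\min}|^{-1})$ survives the transition from the regularized empirical minimizer to the uniform-convergence class, which follows from the inequality displayed above together with the nonnegativity of the empirical risk.
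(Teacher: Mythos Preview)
Your proposal is correct and follows essentially the same approach as the paper: both decouple the global ridge problem into per-cluster subproblems using the routing guarantee and the disjointness $\mathcal{M}_c\cap\mathcal{M}_{c'}=\emptyset$, invoke the feasible $a^*$ from the preceding approximation lemma to control the bias, and then apply a uniform-convergence/Rademacher bound (the paper outsources this last step to Lemma~14 of \citet{oko24learning}) to obtain the variance term. Your explicit handling of the ridge penalty via the Lagrangian and of the misrouting probability via a union bound are minor presentational differences from the paper's norm-constrained formulation, but the underlying argument is the same.
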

\begin{proof}
    Let $\mathcal{A}^*_{\mathcal{M}} = \{ \{\hat{a}_{m}\}_{m\in\mathcal{M}} \mid \sum_{m\in\mathcal{M}} \|\hat{a}_{m}\|_r \leq \sum_{m\in\mathcal{M}} \|a^*_{m}\|_r\}$.
    We know that the router $h$ exclusively route $X_c^{T_c}$ to the subset of experts $\subset \mathcal{M}_c$ with high probability. Therefore, the minimization problem of the empirical $L^2$ loss is decomposed as
    \begin{align}
        &\underset{\{\hat{a}_m\}_m\in\mathcal{A}^*_{\{1,\dots,M\}}}{\mathrm{min}}\;\; \frac{1}{T}\sum_{t} \left(\sum_{m=1}^M\mathbbm{1}\left[h_m(x^{t})\geq 0\right]f_{m,\hat{a}_m}(x^t) - y^t \right)^2\\
        =& \underset{\{\hat{a}_m\}_m\in\mathcal{A}^*_{\{1,\dots,M\}}}{\mathrm{min}}\;\; \sum_c \frac{T_c}{T}\left[\frac{1}{T_c}\sum_{t_c} \left(\sum_{m'\in\mathcal{M}_c}\mathbbm{1}\left[h_{m'}(x^{t_c}_c)\geq 0\right]f_{m',\hat{a}_{m'}}(x^{t_c}_c) - y^{t_c}_c \right)^2\right]\\
        \leq& \underset{\otimes_c \{\hat{a}_m\}_{m\in\mathcal{M}_c}, \;\text{where}\; \{\hat{a}_m\}_{m\in\mathcal{M}_c} \in\mathcal{A}^*_{\mathcal{M}_c}}{\mathrm{min}}\;\; \sum_c \frac{T_c}{T}\left[\frac{1}{T_c}\sum_{t_c} \left(\sum_{m'\in\mathcal{M}_c}\mathbbm{1}\left[h_{m'}(x^{t_c}_c)\geq 0\right]f_{m',\hat{a}_{m'}}(x^{t_c}_c) - y^{t_c}_c \right)^2\right]\\
        =& \sum_c \frac{T_c}{T}\; \underset{\{\hat{a}_{m'}\}_m \in\mathcal{A}^*_{\mathcal{M}_c}}{\mathrm{min}}\;\; \underbrace{\left[\frac{1}{T_c}\sum_{t_c} \left(\underbrace{\sum_{m'\in\mathcal{M}_c}\mathbbm{1}\left[h_{m'}(x^{t_c}_c)\geq 0\right]f_{m',\hat{a}_{m'}}(x^{t_c}_c)}_{\eqqcolon \hat{f}_c} - y^{t_c}_c \right)^2\right]}_{\eqqcolon \hat{\E}_{x_c}[\hat{f}_{c} - y_c]},
    \end{align}
    where we used $h_{m'}(x^{t_c}_c) \geq 0 \Rightarrow m'\in\mathcal{M}_c$ with high probability, $\mathcal{M}_c \cap \mathcal{M}_{c'} = \emptyset$ for all $c\neq c'$, and $\forall c, \, \{\hat{a}_{m'}\}_m \in\mathcal{A}^*_{\mathcal{M}_c} \Rightarrow \{\hat{a}_m\}_m\in\mathcal{A}^*_{\{1,\dots,M\}}$. Therefore, the optimization is performed in parallel for each subset of parameters $\{\hat{a}_m, m\in\mathcal{M}_c\}$ and we can bound the population loss as
    \begin{align}
        &\E_c[\E_{x_c}[|\hat{f}_{c} - y_c|]]\\
        \leq& \sum_c \frac{1}{C}\; \left( \underset{\hat{a}_{m'}\in\mathcal{A}^*_{\mathcal{M}_c}}{\sup}\left\{ \E_{x_c}[|\hat{f}_{c} - y_c|] -  \hat{\E}_{x_c}[|\hat{f}_{c} - y_c|] \right\}
         + \sqrt{\hat{\E}_{x_c}[(\hat{f}_{c} - y_c)^2]}
        \right).
    \end{align}
    Applying Lemma 14 of \citep{oko24learning}, we have
    \begin{equation}
        \hat{\E}_{x_c}[(\hat{f}_{c} - y_c)^2] = \tilde{O}\left(\underbrace{|J_\mathrm{min}|^{-1} + \epsilon}_{\text{Approximation}} + \underbrace{\frac{1}{\sqrt{T_c}}}_{\text{Concentration}}\right)
    \end{equation}
    and the generalization error is bounded as
    \begin{equation}
        \underset{\hat{a}_{m'}\in\mathcal{A}^*_{\{m'\}},\;m'\in\mathcal{M}_c}{\sup}\left\{ \E_{x_c}[|\hat{f}_{c} - y_c|] -  \hat{\E}_{x_c}[|\hat{f}_{c} - y_c|] \right\} \leq \tilde{O}\left(|J_\mathrm{min}|^{-1} + \epsilon + \frac{\mathrm{poly}\log d}{\sqrt{T_c}}\right)
    \end{equation}
    with probability at least $1-o_d(1)$ for the ridge estimator.
\end{proof}

\end{document}